\pgfplotsset{compat=1.14}
\definecolor{cadmiumgreen}{rgb}{0.0, 0.42, 0.24}
\definecolor{hotpink}{rgb}{1.0, 0.41, 0.71}
\definecolor{cgreen}{RGB}{0, 180, 100}
\definecolor{revc}{RGB}{180, 0, 100}
\newcommand{\srev}[1]{%
%\color{revc}
#1\color{black}}
\newcommand{\rev}[1]{%
%{%
%\hypersetup{linkcolor=revc}{}{}%
%\color{revc}%
#1%
%\color{black}%
%}%
}
\newcommand{\R}{\mathbb{R}}
\newcommand{\N}{\mathbb{N}}
\DeclareMathOperator*{\esssup}{ess\,sup}
\DeclareMathOperator*{\argmin}{arg\,min}
\DeclareMathOperator*{\argmax}{arg\,max}
\newcommand{\norm}[1]{\left\|#1\right\|}
\newcommand{\abs}[1]{\left|#1\right|}
\newcommand{\set}[1]{\left\{#1\right\}}
\newcommand{\st}{\,:\,}
\DeclareMathOperator{\supp}{supp}
\newtheorem{theorem}{Theorem}[section]
\newtheorem*{theorem*}{Theorem}
\newtheorem{proposition}[theorem]{Proposition}
\newtheorem{lemma}[theorem]{Lemma}
\newtheorem{corollary}[theorem]{Corollary}
\newtheorem*{corollary*}{Corollary}
\theoremstyle{definition}
\newtheorem{definition}[theorem]{Definition}
\newcounter{asscount}
\newtheorem{assume}{Assumption}[asscount]
\Crefname{assume}{Assumption}{Assumptions}
\newtheoremstyle{bfnoteonly}%
{}{}%
{\itshape}{}%
{\bfseries}{}%
{ }%
{\thmnote{#3}}
\theoremstyle{bfnoteonly}
\newtheorem*{nothm}{}
\theoremstyle{plain}
\newcommand{\thistheoremname}{}
\newtheorem{genericthm}[section]{\thistheoremname}
\newtheorem*{genericthm*}{\thistheoremname}
\newenvironment{namedtheorem*}[1]
  {\renewcommand{\thistheoremname}{#1}%
   \begin{genericthm*}}
  {\end{genericthm*}}
\newtheorem{example}{Example}
\theoremstyle{remark}
\newtheorem{remark}[theorem]{Remark}
\newlist{propenum}{enumerate}{1} % a dedicated enum. env.
\setlist[propenum]{label=(\roman*),
                    ref=\theproposition~(\roman*)}
\numberwithin{equation}{section}
\renewcommand{\vec}[1]{\mathbf{#1}}
\newcommand{\closure}[1]{\overline{#1}}
\newcommand{\cl}[1]{\closure{#1}}
\newcommand{\CMS}{\mathcal{S}}
\DeclareMathOperator{\Lip}{Lip}
\newcommand{\sign}{\operatorname{sign}}
\newcommand{\dom}{\operatorname{dom}}
\newcommand{\loss}{\ell}
\newcommand{\Inp}{\mathcal{X}}
\newcommand{\Inpp}{\mathcal{Z}}
\newcommand{\Oup}{\mathcal{Y}}
\newcommand{\hyp}{h}
\newcommand{\Hyp}{\mathcal{H}}
\newcommand{\budget}{\varepsilon}
\newcommand{\x}{x}
\newcommand{\xatt}{\tilde{\x}}
\newcommand{\xx}{z}
\newcommand{\func}{\mathcal{E}}
\newcommand{\funccp}{\rev{E}}
\newcommand{\ffunc}{\rev{\func}}
\newcommand{\nimap}{\psi}
\newcommand{\mto}{\rightrightarrows}
\newcommand{\Ban}{\mathcal{X}}
\newcommand{\WBan}{\Ban}
\newcommand{\chara}{\chi}
\newcommand{\dual}{\mathcal{J}}
\newcommand{\funccpc}{\funccp^{\mathrm{c}}}
\newcommand{\funccpd}{\funccp^{\mathrm{d}}}
\newcommand{\funccps}{\funccp^{\mathrm{sl}}}
\newcommand{\unitb}{B_1}
\newcommand{\Prob}{\mathcal{P}}
\newcommand{\seq}[1]{\left(#1^{n}\right)_{n\in\N}}
\renewcommand{\div}{\mathrm{div}}
\newcommand{\EC}{\mathrm{EC}}
\newcommand{\AC}{\mathrm{AC}}
\newcommand{\ddim}{d}
\newcommand{\Id}{Id}
\newcommand{\pot}{E}
\newcommand{\potc}{\pot^\text{c}}
\newcommand{\potd}{\pot^\text{d}}
\newcommand{\FG}{\mathrm{FGS}}
\newcommand{\IFG}{\mathrm{IFGS}}
\newcommand{\Clip}{\mathrm{Clip}}
\newcommand{\actfun}{\alpha}
\NewDocumentCommand{\xm}{e{^_}}{%
  x^{\IfValueT{#1}{#1}}_{\IfValueT{#2}{#2}}%
}%
\NewDocumentCommand{\sx}{e{^_}}{%
  \bar{x}^{\IfValueT{#1}{#1}}_{\IfValueT{#2}{#2}}%
}%
\NewDocumentCommand{\vx}{e{^_}}{%
  \tilde{x}^{\IfValueT{#1}{#1}}_{\IfValueT{#2}{#2}}%
}%
\NewDocumentCommand{\xfg}{e{^_}}{%
  x^{\IfValueT{#1}{#1}}_{\FG\IfValueT{#2}{,#2}}%
}%
\NewDocumentCommand{\xifg}{e{^_}}{%
  x^{\IfValueT{#1}{#1}}_{\IFG\IfValueT{#2}{,#2}}%
}%
\NewDocumentCommand{\xs}{e{^_}}{%
  x^{\IfValueT{#1}{#1}}_{\mathrm{si}\IfValueT{#2}{,#2}}%
}%
\NewDocumentCommand{\sxs}{e{^_}}{%
  \bar{x}^{\IfValueT{#1}{#1}}_{\rev{\mathrm{si}}\IfValueT{#2}{,#2}}%
}%
\NewDocumentCommand{\vxs}{e{^_}}{%
  \tilde{x}^{\IfValueT{#1}{#1}}_{\rev{\mathrm{si}}\IfValueT{#2}{,#2}}%
}%
\newcommand{\velo}{\rev{\bm{v}}}
\definecolor{mediumelectricblue}{rgb}{0.21, 0.61, 0.99}
\newcommand{\nc}{\color{black}}
\newcommand{\muatt}{\tilde{\mu}}
\definecolor{hotpink}{RGB}{255, 2, 141}
\definecolor{darkbeige}{RGB}{172, 147, 98}
\definecolor{emerald}{RGB}{1, 160, 73}
\definecolor{apple}{RGB}{118, 205, 38}
\definecolor{sky}{RGB}{130, 202, 252}
\title{Adversarial flows: A gradient flow characterization of adversarial attacks}
\author[1]{Lukas Weigand}
\author[1,*]{Tim Roith}
\author[1,2]{Martin Burger}
\affil[1]{Helmholtz Imaging, Deutsches Elektronen-Synchrotron DESY, Notkestr. 85, 22607 Hamburg, Germany}
\affil[2]{Fachbereich Mathematik, Universit\"at Hamburg, Bundesstr. 55, 20146 Hamburg, Germany}
\affil[*]{Corresponding author: tim.roith@desy.de}
\begin{document}

\maketitle
\begin{abstract}
A popular method to perform adversarial attacks on neuronal networks is the so-called fast gradient sign method and its iterative variant. In this paper, we interpret this method as an explicit Euler discretization of a differential inclusion, where we also show convergence of the discretization to the associated gradient flow. To do so, we consider the concept of $p$-curves of maximal slope in the case $p=\infty$. We prove existence of $\infty$-curves of maximum slope and derive an alternative characterization via differential inclusions. Furthermore, we also consider Wasserstein gradient flows for potential energies, where we show that curves in the Wasserstein space can be characterized by a representing measure on the space of curves in the underlying Banach space, which fulfill the differential inclusion. The application of our theory to the finite-dimensional setting is twofold: On the one hand, we show that a whole class of normalized gradient descent methods (in particular signed gradient descent) converge, up to subsequences, to the flow, when sending the step size to zero. On the other hand, in the distributional setting, we show that the inner optimization task of adversarial training objective can be characterized via $\infty$-curves of maximum slope on an appropriate optimal transport space.

\par\vskip\baselineskip\noindent
\textbf{Keywords}: adversarial attacks, adversarial training, metric gradient flows, Wasserstein gradient flows\\
\textbf{AMS Subject Classification}: 49Q20, 34A60, 68Q32, 65K15
\end{abstract}

\tableofcontents

\section{Introduction}\label{ch: intro}

This paper considers gradient flows in metric spaces, following the seminal work by \textcite{ambrosio05}. 
There, the authors introduce the concept of $p$-curves of maximal slope, with its origins dating back to \cite{DeGiorgi80}. This concept is further generalized in \cite{rossi2008metric}.  As for our main contribution, we study the less known limit case $p=\infty$ and adapt current theory to this setting.
The main incentive for our work is the adversarial attack problem as introduced in \cite{goodfellow2014explaining, szegedy2013intriguing}. Here one considers a classification task, where a classifier $\hyp:\Inp\to\Oup$---typically parametrized as a neural network---is given an input $\x\in\Inp$, which it \textit{correctly} classifies as $y\in\Oup$, where $\Oup$ is assumed to be a subset of a finite dimensional vector space. The goal is to obtain a perturbed input $\xatt\in\Inp$, the \textit{adversarial example}, which is misclassified, while its difference to $\x$ is \enquote{imperceptible}. In practice, the latter condition is enforced by requiring that $\xatt$ has at most distance $\budget$ to $\x$ in an $\ell^p$ distance, where $\budget>0$ is called the \textit{adversarial budget}. Given some loss function $\loss:\Oup\times\Oup\to\R$, one then formulates the \textit{adversarial attack} problem \cite{goodfellow2014explaining, szegedy2013intriguing},
\begin{align}\label{eq: adver}\tag{AdvAtt}
\sup_{\xatt \in \cl{B_\budget}(\x)} \loss(\hyp(\xatt),y).
\end{align}
The above problem is also called an \textit{untargeted} attack, since we are solely interested in the misclassification. This is opposed to \textit{targeted} attacks, where one prescribes $y_{\text{target}}\in\Oup$ and wants to obtain an adversarial example, s.t. $\hyp(\xatt) = y_{\text{target}}$. This basically amounts to changing the loss function in \labelcref{eq: adver}, namely to $-\loss(\cdot,y_{\text{target}})$, without changing the inherent structure of the problem, which is why we do not consider it separately in the following. Methods for generating adversarial examples include \rev{first-order} attacks \cite{moosavi2016deepfool,brendel2019accurate,pintor2021fast}, momentum-variants \cite{dong2018boosting}, second order attacks \cite{jang2017objective} or even zero order attacks, not employing the gradient of the classifier \cite{brendel2017decision,ilyas2018black}. Especially for classifiers induced by neural networks, it was noticed \rev{in \cite{szegedy2013intriguing}} that approximate maximizers of \labelcref{eq: adver} completely corrupt the classification performance, even for a very small budget $\budget$. %
\rev{%
This observation created severe concerns about the robustness and reliability of neural networks (see e.g. \cite{kurakin2018adversarial}) and has sparked a general interest in both the adversarial attack and also the defense problem. The connection between the attack and defense task, was already introduced in \cite{goodfellow2014explaining}, where the authors
}
%In order to obtain classifiers that are more robust against such attacks, the authors in \cite{goodfellow2014explaining} 
propose \textit{adversarial training} (similarly derived in \cite{madry2017towards,kurakin2016adversarial})\rev{. Here,} the standard empirical risk minimization is modified to
\begin{align}\label{eq:advtrain}\tag{AdvTrain}
\inf_{h\in \mathcal{H}} \sum_{(\x,y)\in \mathcal{T}}
\sup_{\xatt\in \cl{B_\budget}(\x)} \loss(h(\xatt), y)
\end{align}
for a training set $\mathcal{T}\subset \Inp\times \Oup$ and a \rev{hypothesis class} $\mathcal{H}\subset \{\hyp|\hyp\st\Inp\to\Oup\}$. Since this requires solving \labelcref{eq: adver} for every data point $x$, the authors then propose an efficient one-step method, called \textbf{F}ast \textbf{G}radient \textbf{S}ign \textbf{M}ethod (FGSM),
\begin{align}\label{eq: FGSM}\tag{FGSM}
\xfg=\x+\budget\, \sign(\nabla_\x \ell(h(\x),y)).
\end{align}
The motivation, as provided in \cite{goodfellow2014explaining}, was to consider a linear model $\x\mapsto \langle w, \x\rangle$, with weights $w$. The maximum over the input $x$ constrained to the budget ball $\cl{B^\infty_\budget}(x)$ is then attained in a corner of the hypercube, which validates the use of the sign. From a practical perspective, also for more complicated models, the sign operation ensures, that $\xfg \in \partial B_\budget^\infty(\x)$, i.e., $\xfg$ uses all the given budget in the $\ell^\infty$ distance, after just one update step. This adversarial training setup was similarly employed in \cite{wong2020fast, shafahi2019adversarial,madry2017towards,Roth2020} and analyzed as regularization of the empirical risk in \cite{bungert2023geometry,bungert2024}. For other strategies to obtain robust classifiers, we refer, e.g., to \cite{bungert2021clip,gouk2020regularisation,Krishnan2020,pauli2021training}. In situations, where only the attack problem is of interest, multistep methods are feasible, which led to the iterative FGS method \cite{kurakin2018adversarial,kurakin2016adversarial}
\begin{align}\label{eq: IFGSM}\tag{IFGSM}
\xifg^{k+1}=\Pi_{\cl{B^p_\budget}(\x)}(\xifg^k+\tau\, \sign(\nabla_\x \loss(\hyp(\xifg^k),y)) ),
\end{align}
where $\tau>0$ now defines a step size and $\Pi_{\cl{B^p_\budget}(x)}$ denotes the orthogonal \rev{projection to the $\budget$-ball in the $\ell^p$-norm} around the original image. Originally, the case $p=\infty$ was employed, where the projection is then a simple clipping operation. Other choices of $p$ are usually limited to $\{0,1,2\}$, which is also due to the computational effort of computing the projection (see \cite{pintor2021fast} for $p=0$ and \cite{duchi2008efficient} for $p=1$). Signed gradient descent can also be interpreted as a form of normalized gradient descent in the $\ell^\infty$ topology as in \cite{cortes2006finite}, where our framework allows considering a general $\ell^q$ norm.
\begin{figure}[t]
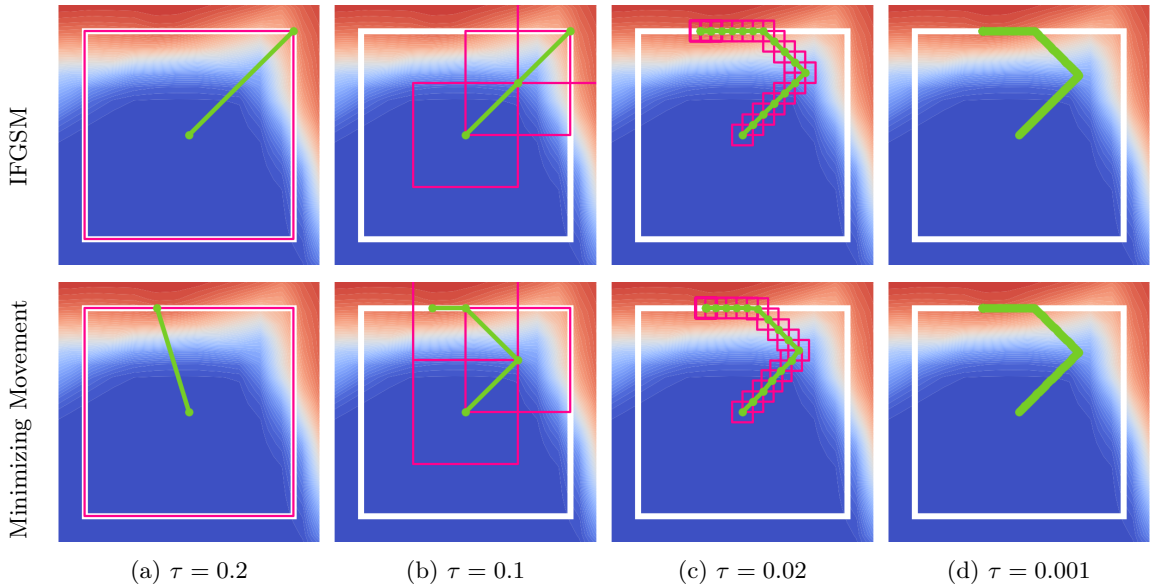

\rotatebox[origin=c]{90}{\makebox[2cm]{\small IFGSM \phantom{g}}}\quad%
\foreach \n in {0.2,0.1,0.02,0.001}{%
\begin{subfigure}[c]{.23\textwidth}%
\includegraphics[width=\textwidth, trim={2cm 0cm 2cm 0cm},clip]{results/Flow_fgsm\n.png}%
\end{subfigure}\hfill%
}%

\vspace{.2cm}

\rotatebox[origin=c]{90}{\makebox[3.3cm]{\small \phantom{----} Minimizing Movement}
}\quad%
\foreach \n in {0.2,0.1,0.02,0.001}{%
\begin{subfigure}[c]{.23\textwidth}%
\includegraphics[width=\textwidth, trim={2cm 0cm 2cm 0cm},clip]{results/Flow_MinMov\n.png}%
\caption{$\tau=\n$}%
\end{subfigure}\hfill%
}%
\caption{Behavior of \labelcref{eq: IFGSM} (top) and the minimizing movement scheme \labelcref{eq: MinMov} (bottom), for a binary classifier---parametrized as a neural network---on $\R^2$, a budget of $\budget=0.2$, and $\tau\in\{0.2, 0.1, 0.02, 0.001\}$. The white box indicates the maximal distance to the initial value, the pink boxes indicate the step size $\tau$ of the scheme. Details on this experiment can be found in \cref{sec:num}.}%
\label{fig:IFGSM}%
\end{figure}%
Apart from the adversarial setting, signed gradient descent, without the projection step, is an established optimization algorithm itself, see e.g., \cite{zhang2020sign,mohammadi2023sign} for other applications. The idea of using signed gradients can also be found in the RPROP algorithm \cite{Riedmiller93}. The convergence to minimizers of signed gradient descent and its variants was analyzed in \cite{li2023faster,balles2020geometry,MOULAY201929,chzhen2023signsvrg}. A slightly different kind of projected version, using linear constraints, was considered in \cite{CHEN2020109156}, where the authors also considered a continuous time version, however, the results therein and the considered flow are not directly connected to our work here.
We consider the limit $\tau\to 0$ of signed gradient descent and the projected variant \labelcref{eq: IFGSM}, \rev{for which} we derive a gradient flow characterization\rev{. This} is visualized in \cref{fig:IFGSM}.
In the Euclidean setting with a differentiable energy $\func:\R^\ddim\to\R$ and $p\in(1,\infty)$, a differentiable curve $u:[0,T]\to\R^\ddim$ is a $p$-curve of maximum slope, if it solves the $p$-gradient flow equation
\begin{align*}
\left(\abs{u'}(t)\right)^{p-2}\, u'(t) = -\nabla \func(u(t)).
\end{align*}
Here, we also refer to \cite{bungert2021nonlinear,bungert2020asymptotic} for a study of gradient-flow type equations in Hilbert spaces, for non-differentiable functionals. Following the approach in \cite{DeGiorgi80,ambrosio05,degiovanni1985evolution,marino1989curves}, the above equation is equivalent to 
\begin{align*}%\label{eq:pmaxslope}
\frac{d}{dt} (\func\circ u)\leq 
-\frac{1}{p}\abs{u'}^p - \frac{1}{q}\abs{\nabla\func(u)}^q,
\end{align*}
where $1/p + 1/q = 1$. The strength of this approach, is that all derivatives in the above inequality, have meaningful generalizations to the metric space setting, which we repeat in the next section. Motivated by signed gradient descent, in this paper we draw the connection to the case $p=\infty$. In the Euclidean setting, with a differentiable functional $\func$, the energy dissipation inequality we derive for $p=\infty$, reads
\begin{align*}
\abs{u'}&\leq 1,\\
\frac{d}{dt}(\func\circ u)&\leq -\abs{\nabla\func(u)}.
\end{align*}
Intuitively, a $\infty$-curve of maximal slope minimizes 
the energy $\func$ as fast as possible under the restriction that its velocity $\abs{u'}$ is bounded by $1$. Like in \cite{ambrosio05}, our results consider general metric spaces, Banach spaces and Wasserstein spaces, which are further detailed in the following sections. 
Typically, curves of maximum slope can be approximated via a minimizing movement scheme, which in our case translates to 
\begin{align*}
\xm_\tau^{k+1}\in \argmin_{\x\in\WBan} \{ \func(\x)\st \norm{\x - \xm_\tau^k}\leq \tau \},
\end{align*}
where $\xm^0_\tau = \x^0$ is a given initial value.
A main insight, explored in \cref{ch: AdAt}, is that under certain assumption \labelcref{eq: FGSM,eq: IFGSM} fulfill this scheme, if we replace the energy by a semi-implicit version.

A further aspect, is the characterization of adversarial attacks in the distributional setting, where the sum is replaced by an integral over the data distribution $\mu$. Interchanging the integral and the supremum (see \cref{cor:maxsup}), yields the characterization of adversarial training \labelcref{eq:advtrain} as a distributionally robust optimization (DRO) problem,
\begin{align}\label{eq:DRO}\tag{DRO}
\inf_{\hyp\in\Hyp}\sup_{\tilde \mu: D(\mu, \tilde \mu)\leq \budget} \int\loss(\hyp(\x), y) d\tilde\mu(\x,y),
\end{align}
where $D$ denotes a distance on the space of distributions. This formulation of adversarial training was the \rev{subject} of many studies in recent years, see, e.g., \cite{bungert2023geometry,zheng2019distributionally,bungert2024,bungert2024gamma,bungert2023begins}.
Typically, the distance $D$ is chosen an optimal transport distance, 
\begin{align*}
D(\mu,\Tilde{\mu})\coloneqq \inf_{\gamma\in \Gamma(\mu,\Tilde{\mu} )} \int_\gamma c((x,y),(\Tilde{x},\Tilde{y}))^2 d\gamma,
\end{align*}
with \rev{$\Gamma(\mu, \muatt)$} denoting the set of all couplings and the cost
\begin{align}\label{eq:advdistance}
c((x,y),(\xatt,\Tilde{y}))\coloneqq \begin{cases}
\rev{\|x-\xatt\|} &\text{if } y=\Tilde{y},\\
+\infty &\text{if } y\neq \Tilde{y}.
\end{cases}
\end{align}
The goal here is then to derive a characterization of curves $\mu:[0,T]\to \mathcal{W}_p$, where $\mathcal{W}_p$ denotes the $p$-Wasserstein space. In this regard, we mention the related work \cite{zheng2019distributionally}, where the authors proposed to solve the inner optimization problem
\begin{align*}
    \sup_{\tilde \mu: D(\mu, \tilde \mu)\leq \budget} \int\loss(\hyp(\x), y) d\tilde\mu(\x,y)
\end{align*}
by disintegrating the data distribution $d\mu(x,y)=d\mu_y(x)d\nu(y)$ (see \cref{app:help}), and calculating for $\nu$-a.e. $y\in\Oup$ the corresponding $2$-gradient flow in $\mathcal{W}_2$ with initial condition $\mu_y^0$. As shown in \cite{ambrosio05} solving this gradient flow is equivalent to solving the partial differential equation 
\begin{align}\label{eq: 2GradFlow}
\begin{aligned}
    \partial_t (\mu_y)_t=\nabla \cdot((\mu_y)_t \nabla_x \loss(\hyp(\x), y)) &\quad \text{on }  (0,T)\\
    (\mu_y)_0=\mu_y^0,&
\end{aligned}
\end{align}
which is to be understood in the distributional sense. The authors in \cite{zheng2019distributionally} then approximate a maximizer by  $d\tilde{\mu}(x,y)\approx d(\mu_y)_T (x)d\nu(y) $, where $T$ has to be chosen small enough such that the approximation is still within the $\budget$ ball around $\mu$.

In the following, we first provide the necessary notions for gradient flows in metric spaces and then proceed to discuss the main contributions and the outline of this paper.
\subsection{Setup}

We give a brief recap on classical notation and preliminaries on evolution in
metric spaces. More details can be found in \cite{ambrosio05, Mielke2012}. In the following, we denote by $(\CMS,d)$ a complete metric space, while $\WBan$ denotes a Banach space. We consider a proper functional $\func: \CMS\rightarrow (-\infty,+\infty]$, i.e., the effective domain 
$\dom(\func) \coloneqq \{ \x\in \CMS\st \func(\x) < \infty\}$ is assumed to be nonempty. Throughout this paper we denote by
\begin{align*}
B_\tau(\x) := \{\xatt\in\CMS\st d(\x, \xatt) < \tau\}, \qquad
\cl{B_\tau}(\x) := \{\xatt\in\CMS: d(\x, \xatt) \leq \tau\}
\end{align*}
the ball and its closed variant, induced by the given metric $d$, where we employ the abbreviation $B_\tau(0) = B_\tau$. In the finite dimensional case, we write $B^p_\tau$ to denote the ball induced by the $\ell^p$ norm on $\R^d$. Note, that there is a notation conflict with $d$ denoting both the distance and the dimension of the finite-dimensional space $\R^d$. However, the concrete meaning is always clear from the context.

\paragraph{Metric derivative}
We consider curves $u:[0,T]\rightarrow \CMS$ with $T>0$ for which we want to have a notion of velocity. For this sake, we need a generalization of the absolute value of the derivatives, which is provided by the \emph{metric derivative} as introduced by \textcite{ambrosio90}. Here, one usually considers $p$-absolutely continuous curves \cite{ambrosio05}, i.e., for $p\in[1,\infty]$ there exists $m\in L^p(0,T)$ such that%
\begin{align}\label{eq:pcont}
d(u(t), u(s)) \leq \int_s^t m(r) dr
\end{align}
for all $0\leq s<t \leq T$. The set of all $p$-absolutely continuous curves is denoted by $AC^p(0,T; \CMS)$. We are especially interested in the case $p=\infty$, where the condition in \cref{eq:pcont} is equivalent to the Lipschitzness of the curve, i.e., the existence of a constant $L\geq 0$ such that
\begin{align*}
d(u(t), u(s)) \leq L\ (t-s) 
\end{align*}
for all $0\leq s<t \leq T$. For the special case $p=\infty$, we have the following result as a special case of \cite[Theorem 1.1.2]{ambrosio05}.
\begin{lemma}[Metric Derivative]\label{lem:metricderiv}
Let $u:[0,T]\rightarrow \CMS$ be a Lipschitz curve with Lipschitz constant $L$, then the limit
\begin{align*}
|u'|(t)\coloneqq\lim_{s\rightarrow t}\frac{d(u(s),u(t))}{|s-t|}
\end{align*}
exists for a.e. $t\in [0,T]$ and is referred to as the metric derivative. Moreover, the function $t\mapsto |u'|(t)$ belongs to $L^\infty(0,T)$ with $\||u'|\|_{L^\infty(0,T)}\leq L$, and
\begin{align*}
    d(u(s),u(t))\leq \int_s^t |u'|(r) dr \quad \ \text{for all } 0\leq s\leq t \leq T.
\end{align*}
\end{lemma}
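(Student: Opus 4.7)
The plan is to adapt the proof of the general metric derivative theorem (\cite[Theorem 1.1.2]{ambrosio05}) to the Lipschitz setting, where things simplify because every candidate dominating function is automatically in $L^\infty$. First I would exploit separability: the image $u([0,T])$ is compact (continuous image of a compact set) and therefore separable, so I can pick a countable dense subset $\{\x_n\}_{n\in\N}\subset u([0,T])$. For each $n$, define the real-valued function
\begin{equation*}
\phi_n(t) \coloneqq d(u(t), \x_n),
\end{equation*}
which by the reverse triangle inequality is $L$-Lipschitz on $[0,T]$. Hence by the classical differentiation theorem for Lipschitz functions on an interval, each $\phi_n$ is differentiable a.e., and $|\phi_n'|\leq L$ a.e. Setting
\begin{equation*}
m(t)\coloneqq \sup_{n\in\N}|\phi_n'(t)|,
\end{equation*}
I get a measurable function in $L^\infty(0,T)$ with $\|m\|_{L^\infty}\leq L$.

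Next I would prove the integral bound $d(u(s),u(t))\leq \int_s^t m(r)\,dr$. The key observation is that density of $\{\x_n\}$ in $u([0,T])$ and the triangle inequality give
\begin{equation*}
d(u(s),u(t)) = \sup_{n\in\N} |\phi_n(s)-\phi_n(t)|;
\end{equation*}
indeed, each term is bounded above by $d(u(s),u(t))$, while choosing $\x_n\to u(t)$ recovers the distance in the limit. Combining this identity with the fundamental theorem of calculus for the Lipschitz functions $\phi_n$ yields
\begin{equation*}
d(u(s),u(t)) \leq \sup_n \int_s^t|\phi_n'(r)|\,dr \leq \int_s^t m(r)\,dr.
\end{equation*}
In particular $u\in AC^\infty(0,T;\CMS)$ with Lipschitz constant at most $\|m\|_{L^\infty}\leq L$.

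Finally, I would establish existence of the metric derivative and the identification $|u'|=m$ a.e. Dividing the integral bound by $|s-t|$ and using the Lebesgue differentiation theorem at Lebesgue points of $m$ gives
\begin{equation*}
\limsup_{s\to t}\frac{d(u(s),u(t))}{|s-t|}\leq m(t)\quad\text{for a.e.\ }t.
\end{equation*}
For the matching lower bound, note that for every $n$ and every $t$ at which $\phi_n$ is differentiable,
\begin{equation*}
\frac{d(u(s),u(t))}{|s-t|}\geq \frac{|\phi_n(s)-\phi_n(t)|}{|s-t|}\xrightarrow[s\to t]{}|\phi_n'(t)|,
\end{equation*}
so taking $\liminf$ and then the supremum over the countable family gives $\liminf_{s\to t} d(u(s),u(t))/|s-t|\geq m(t)$ a.e. Hence the limit exists a.e.\ and equals $m(t)$, which in particular shows $\||u'|\|_{L^\infty}\leq L$ and that the integral bound of the lemma holds with $m$ replaced by $|u'|$.

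The main conceptual obstacle is the passage from the integral bound in terms of the auxiliary function $m$ to the pointwise identification $|u'|=m$; everything else is a routine application of density plus the one-dimensional Lebesgue theory. Separability of the image (rather than of $\CMS$ itself) is what makes the countable-supremum argument work in full generality.
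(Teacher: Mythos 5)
Your proof is correct and is essentially the argument the paper relies on: the paper gives no proof of its own but cites \cite[Theorem 1.1.2]{ambrosio05}, whose standard proof is exactly your construction (countable dense set in the separable image, the auxiliary Lipschitz functions $\phi_n(t)=d(u(t),\x_n)$, the identity $d(u(s),u(t))=\sup_n|\phi_n(s)-\phi_n(t)|$, and the two-sided comparison of the difference quotient with $m=\sup_n|\phi_n'|$ via Lebesgue differentiation). The specialization to the Lipschitz case, with the uniform bound $\|m\|_{L^\infty}\leq L$, is handled correctly.
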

\begin{remark}\label{rem:minmetric}
The metric derivative $|u'|$ is actually minimal in the sense that for every $m$ satisfying \labelcref{eq:pcont},
$$ |u'|(t)\leq m(t) \quad \text{for a.e. }t\in (0,T).$$
\end{remark}
%\todo[TR]{Maybe remark on inequality(remark with smallest)???}
\begin{remark}\label{rm:RadDer}
If $\CMS=\WBan$ is a Banach space and satisfies the Radon--Nikodým property (c.f. \cite[p. 106]{ryan2002introduction}), e.g., if
it is reflexive, then $u\in AC^p(0,T;\WBan)$ if and only if
\begin{itemize}
    \item $u$ is differentiable a.e. on $(0,T)$
    \item $u'(t)\in L^p(0,T;\WBan)$
    \item $u(t)-u(s)=\int_s^t u'(t) dr$ for $0\leq s\leq t\leq T$.
\end{itemize}
\end{remark}

\paragraph{Upper gradients}
We consider \emph{upper gradients} as a generalization of the absolute value of the gradient in the metric setting. Namely, 
we employ the following definitions from 
\cite[Definition 1.2.1]{ambrosio05} and \cite[Definition 1.2.2]{ambrosio05}.
\begin{definition}\label{def: strGrad}
A function $g:\CMS\to [0, +\infty]$ is called a \emph{strong upper gradient} for $\func$, if for every absolutely continuous curve $u:[0,T]\to\CMS$ the function $g\circ u$ is Borel and 
\begin{align}\label{eq: StUpGrad}
    |\func(u(t)) -\func(u(s))|\leq \int_s^t g(u(r)) |u'|(r) dr \quad \forall\ 0\leq s\leq t\leq T
\end{align}
If $(g\circ u)\, |u'| \in L^1(0,T)$ then $\func\circ u$ is absolutely continuous and 
\begin{align}\label{eq:strgradprop}
|(\func\circ u)'(t)|\leq g(u(t))|u'|(t) \quad \text{for a.e. } t\in(0,T).   
\end{align}
\end{definition}

\begin{definition}
A function $g:\CMS\to [0, +\infty]$ is called a \emph{weak upper gradient} for $\func$, if for every absolutely continuous curve $u:[0,T]\to \CMS$ that fulfills
\begin{enumerate}[label=(\roman*)]
\item $(g\circ u)\ \abs{u^\prime}\in L^1(0,T)$,
\item $\func \circ u$ is a.e. in $(0,T)$ equal to a function $\psi:(0,T)\to\R$ with bounded variation, 
\end{enumerate}
it follows that
\begin{align*}
\abs{\psi^\prime} \leq (g\circ u)\ \abs{u^\prime} \text{ a.e. in } (0,T).
\end{align*}
\end{definition}
\begin{remark}
We note that for a  function $\psi$ with bounded variation, i.e.,
\begin{align*}
\sup\left\{
\sum_{i=0}^{N-1} \abs{\psi(t_{i+1}) - \psi(t_i)}\st 0=t_0<\ldots <t_N=T
\right\}
< \infty,
\end{align*}
we have that the derivative $\psi^\prime$ exists a.e. in the interval $(0,T)$, see \cite[Theorem 9.6, Chapter IV]{saks37}.
\end{remark}
\begin{remark}
Admissible curves $u$ in the above definition fulfill that {$u^{-1}(\CMS\setminus \dom(\func))$} is a null set, because of (ii). Therefore, the behavior of $g$ outside of $\dom(\func)$ is negligible.
\end{remark}

\paragraph{Metric slope} We now consider the \emph{metric slope}, as defined in \cite{DeGiorgi80}, as a special realization of a weak upper gradient. Intuitively, the slope gives the value of the maximal descent at a point $u$ at a infinitesimal small distance.
\begin{definition}
For a proper functional $\func:\CMS\rightarrow (-\infty,+\infty]$, the \emph{local slope} of $\func$ at $\x\in\dom(\func)$ is defined as
\begin{align*}
|\partial \func|(\x) \coloneqq \limsup_{\xx\rightarrow \x} \frac{(\func(\x)-\func(\xx))^+}{d(\x,\xx)}.
\end{align*}
\end{definition}
The definition of the slope does in fact yield an upper gradient, which is provided by the following statement from \cite{ambrosio05}.
\begin{theorem}[{\cite[Theorem 1.2.5]{ambrosio05}}]
Let $\func$ be a proper functional, then the function $\abs{\partial \func}$ is a weak upper gradient.
\end{theorem}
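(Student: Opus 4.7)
The plan is to verify the weak upper gradient inequality pointwise at almost every $t$, exploiting the freedom to choose the direction of approach so that the slope always sees $u(t)$ as the base point. Let $\psi$ be the BV representative of $\func\circ u$ agreeing with $\func\circ u$ a.e., and let $N\subset(0,T)$ be the full-measure set of $t$ at which $\psi$ is differentiable, $u$ admits a metric derivative, and $\psi(t)=\func(u(t))$. I will fix $t\in N$ and bound $|\psi'(t)|$ by $|\partial\func|(u(t))\,|u'|(t)$, treating the cases $\psi'(t)>0$ and $\psi'(t)<0$ separately (the case $\psi'(t)=0$ being trivial).

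Suppose $\psi'(t)>0$. Since $N$ has full measure, I can pick $s_n\uparrow t$ with $s_n\in N$. For $n$ large, $\psi(s_n)<\psi(t)$, and by the defining property of $N$ this reads $\func(u(s_n))<\func(u(t))$; in particular $u(s_n)\neq u(t)$, and by continuity of $u$ (which follows from absolute continuity) $u(s_n)\to u(t)$. The key step is the factorization
\begin{align*}
\frac{\psi(t)-\psi(s_n)}{t-s_n}
= \frac{\bigl(\func(u(t))-\func(u(s_n))\bigr)^{+}}{d(u(t),u(s_n))}\cdot\frac{d(u(t),u(s_n))}{t-s_n}.
\end{align*}
Taking $n\to\infty$, the first factor has $\limsup$ at most $|\partial\func|(u(t))$ by the very definition of the local slope (since $u(s_n)\to u(t)$), and the second factor converges to $|u'|(t)$ by \cref{lem:metricderiv}. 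This yields $\psi'(t)\leq |\partial\func|(u(t))\,|u'|(t)$.

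For $\psi'(t)<0$, I instead pick $s_n\downarrow t$ in $N$; then for large $n$ one again has $\psi(s_n)<\psi(t)$ and hence $\func(u(s_n))<\func(u(t))$, so the analogous factorization of $(\psi(t)-\psi(s_n))/(s_n-t)$ gives $-\psi'(t)\leq |\partial\func|(u(t))\,|u'|(t)$. Combining both cases establishes $|\psi'(t)|\leq |\partial\func|(u(t))\,|u'|(t)$ for every $t\in N$, which is the claim.

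The main subtlety is that the local slope is inherently one-sided: it only measures descent of $\func$ from the base point, whereas $|\psi'|$ sees both ascent and descent along the curve. The trick above resolves this by using the sign of $\psi'(t)$ to choose the side of approach, so that after evaluation $u(t)$ always plays the role of the base point in the slope, and $u(s_n)$ the role of the competitor. No upper semicontinuity of $|\partial\func|$ is needed, which is fortunate since the slope is in general only lower semicontinuous at best; this is exactly why the statement produces only a weak upper gradient rather than a strong one.
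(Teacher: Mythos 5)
Your argument is correct and is essentially the proof of \cite[Theorem 1.2.5]{ambrosio05}, which this paper cites rather than reproduces: the one-sided choice of the approach direction according to the sign of $\psi'(t)$, so that $u(t)$ is always the base point of the descending slope, is exactly the device used there. The only cosmetic difference is that you insist on $s_n$ lying in the full-measure set $N$ (so that $\psi(s_n)=\func(u(s_n))$), which is also required, and implicitly arranged, in the reference.
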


\paragraph{Curves of maximal slope}
Curves of maximal slope were introduced in \textcite{DeGiorgi80} and are a possible generalization of a gradient evolution in metric spaces. They are usually formulated for the case $p\in(1,\infty)$ as follows, see, e.g., \cite{ambrosio05}.
\begin{definition}[$p$-Curves of maximal slope]\label{def:maxslope}
For $p\in(1,\infty)$ we say that an absolutely continuous curve $u:[0,T]\to\CMS$ is a $p$-curve of maximal slope, for the functional $\func$ with respect to an upper gradient $g$, if $\func\circ u$ is a.e. equal to a non-increasing map $\psi$ and
\begin{align}\label{eq:curvmax}
\psi^\prime(t)\leq 
-\frac{1}{p} \abs{u^\prime}^p(t)
-\frac{1}{q} g^q(u(t))
\end{align}
for almost every $t\in(0,T)$ and $1=\frac{1}{p}+\frac{1}{q}$.
\end{definition}
For $p\in (1,\infty)$, the existence of such curves is provided, see for example \cite{ambrosio05}.

\subsection{Main results}\label{sec:mainresults}
\rev{
Here, we summarize the main contributions of this paper. The most important one is the development and application of a gradient flow framework that allows for a theoretical study of adversarial attacks. Concerning the theory of metric gradient flows, we introduce notions tailored to this application and also provide adapted proofs, as detailed below. Here, it should be noted however that many of our results in metric and Banach spaces can be obtained from the theory of doubly nonlinear equations \cite{rossi2008metric,mielke2013nonsmooth}. Therefore, the main contribution from this side, is to draw the connection between the previously mentioned works and the field of adversarial attacks. On top of that, the proofs that are adapted to our scenario, allow for additional insights into the concrete application we consider. 
Beyond single adversarial examples we also treat distributional adversaries which we link to curves of maximal slope in the $\infty$ -Wasserstein space. For potential energies we derive a (to our knowledge novel) characterization of 
curves of maximal slope via the superposition principle, which highlights the connection between single adversarial attacks and the distributional adversary. We give more details on the results below.
}

In \cref{ch: existence}, we \rev{extend} the notion of $p$-curves of maximal slope to the case $p=\infty$, for Lipschitz curves $u$. As hinted in the introduction, in the limit $p\to\infty$ of \cref{def:maxslope}, we replace \labelcref{eq:curvmax} by the following  condition,
\begin{align*}
\abs{u'}(t)&\leq 1,\\
\quad\psi'(t)&\leq -g(u(t)).
\end{align*}
Such curves are then called $\infty$-curves of maximal slope. \rev{%
We want to highlight that similar considerations already appeared in the early works of De Giorgi, see for example, \cite[Definition 1.2]{DeGiorgi80} and \cite[Ex. 1.3]{degiorgi1993new}. For our concrete setup here, we dedicate 
\cref{ch: existence} to an existence proof of such curves. We note that this can also be obtained as a corollary of a more general existence result in \cite[Thm. 3.5]{rossi2008metric}. Therein, the authors prove existence of curves of maximal slope fulfilling
\begin{align*}
\nimap'(t) \leq -f^*(g(u(t))) - f(\abs{u'}(t))
\end{align*}
for a convex and lower semicontinuous function $f:[0,\infty)\to[0,\infty\rev{]}$. When choosing $f=\chara_{[0,1]}$, we recover our notion of $\infty$-curves of maximal slope. Although, the existence proof in \cref{ch: existence} employs similar concepts, we choose to include it here. On the one hand, the treatment of this specific case allows for certain arguments, that are not directly possible in the general case. On the other hand, this already introduces the main steps for the \rev{convergence proof in \cref{sec: Banach}}, which can not directly be deduced from \cite{rossi2008metric}. The existence result in \cref{thm: exist}, is summarized below.
\begin{nothm}[\rev{Existence:}]
Under the assumptions specified in \cref{ch: existence}, for every $\func:\mathcal{S}\to (-\infty,+\infty]$ and
for every $\x^0 \in \dom(\func)$ there exists a \rev{1-}Lipschitz curve $u:[0,T]\to\mathcal{S}$ with $u(0)=\x^0$ which is an $\infty$-curve of maximum slope for $\func$ with respect to its \rev{strong} upper gradient $|\partial\func |$.
\end{nothm}
}
In \cref{sec: Banach}, we consider the specific case of $\infty$-curves of maximal slope in a Banach space $\Ban$, and an \rev{energy $\funccp$ that is a $C^1$ perturbation of a convex function. Note, that here and in the following, when the functional takes the role of a $C^1$-perturbation as in \cref{sec: Banach}, we use the symbol $\funccp$ instead of $\func$.} %
We derive an equivalent characterization of $\infty$-curves of maximal slope via a differential inclusion. \rev{We note that this differential inclusion can be obtained from \cite[Prop. 8.2]{rossi2008metric}, with the same choice of $f$ as for the existence result above. The statement in our setting can be found in \cref{thm: GradFlowForm} and is summarized below.}

\begin{nothm}[\rev{Differential inclusion:}]
Let $\funccp: \Ban \rightarrow (-\infty,+\infty]$ satisfy \labelcref{eq: slope}  and $u:[0,1] \rightarrow \Ban$ be an a.e. differentiable Lipschitz curve. Let further $\funccp\circ u$ be a.e. equal to a non-increasing function $\nimap$, then the following are equivalent:
\begin{enumerate}[label=(\roman*)]
\item     
%\begin{align}
$|u'|(t)\leq 1$ and $\nimap'(t)\leq -|\partial \funccp|(u(t))$ 
for a.e. $t\in[0,1]$,
\item 
$u'(t)\ \rev{\in}\ \partial \|\cdot\|_*(-\xi) \quad \forall \xi \in \partial^\circ \funccp(u(t)) \not= \emptyset,$ for a.e. $t\in[0,1],$
\end{enumerate}
where $\partial^\circ \funccp(u(t))$ denotes \rev{the elements of minimal norm} of $\partial \funccp(u(t))$.
\end{nothm}
%
%
%Regarding the application to \labelcref{eq: IFGSM}, 
\rev{
For an energy $\funccp=\funccpd+\funccpc$ consisting of a differentiable part $\funccpd$ and a convex part $\funccpc$ we consider the linearization in the differentiable part around a point $z$,
\begin{align*}
\funccps(\x; \xx)\coloneqq \funccpd(\xx)+\langle D \funccpd(\xx),\x-\xx\rangle+\funccpc(\x).
\end{align*}
This then leads us to the semi-implicit minimizing movement scheme in \cref{def: SeLiSch}
\begin{align*}
\xs_{\tau}^{k+1}\in\argmin_{\x\in\cl{B_\tau}(\xs_\tau^k)} \funccps(\x;\xs_\tau^k),
\end{align*}
which we also employ to approximate curves of maximal slope.
}
%
%\rev{Regarding application, we show that \labelcref{eq: IFGSM} correspond to precisely to this minimizing movement scheme.}
%
In the case of $p=2$\rev{,} we refer to \cite{fleissner2019gamma,stefanelli2022new} for other works that also consider approximate minimizing movement schemes. This semi-implicit scheme is useful, since in the finite dimensional adversarial setting it allows us to choose $-\loss(\hyp(\cdot),y)$ as the differentiable part, and additionally to incorporate the budget constraint via the indicator function $\chara_{\cl{B_\budget}(\x)}$. \rev{We denote by $\sxs_\tau$ the step function associated to the iterates $\xs_{\tau}^{k}$, see \cref{def: SeLiSch}}. We can show that up to a subsequence, this scheme also \rev{converges} to \rev{an} $\infty$-curve of maximum slope, in the topology $\sigma$ as specified in \cref{asm: 01}. The result can be found in \cref{thm:semilincvg}, which we hint at below.
\begin{nothm}[\rev{Convergence to curves of maximal slope:}] Under the assumptions specified in \cref{sec: Banach}, there exists a $\infty$-curve of maximal slope $u$ and a
subsequence of $\tau_n=T/n$ such that
\begin{align*}
\sxs_{\tau_n}(t) \stackrel{\sigma}{\rightharpoonup}u(t) \text{ as } n\rightarrow \infty \quad \forall t\in [0,T].
\end{align*}
\end{nothm}
In order to better understand the connection between the differential inclusion and \labelcref{eq: IFGSM}, we want to highlight, that $\infty$-curves of maximum slope yield a general concept, which is not directly tied to signed gradient descent and the choice of the projection. The intuition behind $\infty$-curves is rather connected to employing normalized gradient descent (NGD) \cite{cortes2006finite}. Choosing $(\R^d, \norm{\cdot}_p)$ as the underlying Banach space, in \cref{ch: AdAt} we see that for $1/p + 1/q=1$ the following iteration fulfills the semi-implicit minimizing movement scheme,
\begin{align*}
\x^{k+1} = 
\x^k + \tau\
\sign(\nabla_\x \funccp(\x^k))\cdot \left(\frac{\abs{\nabla_\x \funccp(\x^k)}}{\norm{\nabla_\x \funccp(\x^k)}_q}\right)^{q-1},
\end{align*}
where the absolute value and multiplication is understood entrywise. Choosing $p=2$ or $p=\infty$ recovers the notion of NGD as in \cite{cortes2006finite}. \rev{Normalized gradient methods have gained significant attention outside the adversarial context. For example, in the context of saddle point evasion \cite{hazan2015beyond,murray2019revisiting,levy2016power}, subgradient corruption \cite{turan2021robustness}, machine learning \cite{cutkosky2020momentum} and even variational quantum algorithms \cite{suzuki2021normalized}.} In the setting of adversarial attacks, \rev{normalization} means, that we want to ensure that the iterates exploit the maximum allowed budget (locally on $\cl{B_\tau}(\x^k)$ ball) in each step. This was similarly observed in \cite{dong2018boosting}.
As long as the iterates stay within the given budget $\budget$, \rev{one can directly show} that \labelcref{eq: IFGSM} is an explicit solution to the semi-implicit scheme and therefore converge to $\infty$-curves of maximum slope. In the more interesting case, where the projection has an effect, we need to ensure that minimizing on $\cl{B_\tau^p}(\x)$ and then projecting to $\cl{B_\budget^p}(\x^0)$ is equivalent to directly minimizing on $\cl{B_\budget^p}(\x^0)\cap \cl{B_\tau^p}(\x)$. We show this property for the case $p=\infty$ \rev{in \cref{lem:projtrick}}. Employing the convergence result for the semi-implicit minimizing movement scheme, then yields the convergence up to subsequences of \labelcref{eq: IFGSM}, employing the $\ell^\infty$ norm. \rev{Denoting by $\xifg_\tau^k$ the $k$-th iterate obtained in \labelcref{eq: IFGSM} with stepsize $\tau$, \cref{cor:budgettime} then presents the following result.}
\begin{nothm}[\rev{Convergence of IFGSM:}]
Under the assumptions specified in \cref{ch: AdAt}, for $T>0$, there exists a $\infty$-curve of maximal slope $u:[0,T]\to \R^d$, with respect to $\funccp$, and a subsequence of $\tau_n:=T/n$ such that
\begin{align*}
\norm{\xifg_{ \tau_{n_i}}^{\lceil t/\tau_{n_i} \rceil} - u(t)}\xrightarrow{i\to\infty} 0\qquad\text{ for all } t\in[0,T].
\end{align*}
\end{nothm}

In \cref{sec: Wass} we consider potential energies 
$$ \func: W_\infty(\Ban)\ni\mu\mapsto \int  \pot(x) d \mu(x),$$
where in our context, the potential $\pot: \Ban\rightarrow (-\infty,+\infty]$ has the form $\pot(\x) = -\loss(\hyp(\x),y)$. The basis for our main result in this section is given by \textcite[Thm. 3.1]{lisini2014absolutely}, which is repeated as \cref{thm: curveDe} in this paper. Namely, we characterize absolutely continuous curves $\mu\in\AC^p(0,T; \rev{\mathcal{W}}_p)$ by a measure $\eta$ on the space of curves $u:[0,T]\to \WBan$, which is concentrated on $\AC^p(0,T;\WBan$). Using this representation, in \cref{thm: WInfCurve} we show that being a $\infty$-curve of maximum slope in the Wasserstein space is equivalent to the differential inclusion on the underlying Banach space, for $\eta$-a.e. curve.

\begin{nothm}[\rev{Characterization of curves in Wasserstein space:}]
Under the assumptions specified in \cref{thm: WInfCurve}, for a curve $\mu \in \AC^\infty(0,T\rev{;} \rev{\mathcal{W}}_\infty)$ with $\eta$ from \cref{thm: curveDe}, the following statements are equivalent:
\begin{enumerate}[label=(\roman*)]
\item%
The curve $\mu$ is $\infty$-curve of maximal slope w.r.t. to the weak upper gradient $\abs{\partial\func}$.
\item For $\eta$-a.e. curve $u\in C(0,T;\WBan)$ it holds, that $E\circ u$ is for a.e. $t\in(0,T)$ equal to a non-increasing map $\psi_u$ and
$$ u'(t)\ \rev{\in}\  \partial \|\cdot\|_*(-\xi) \quad \forall \xi \in \partial^\circ \pot(u(t)) \not= \emptyset, \quad \text{for a.e. } t\in(0,T).$$
\end{enumerate} 
\end{nothm}
When applying this result to adversarial training, we slightly deviate from the Wasserstein setting, by choosing the extended distance in \labelcref{eq:advdistance} and the associated \rev{transport} distance, in order to prohibit mass transport into the label direction. \rev{%
Here, we want to refer to other works considering distributional adversarial attacks, e.g.,
\cite{staib2017distributionally, zheng2019distributionally, pydi2020adversarial, bungert2023begins,bungert2023geometry,pydi2021many,sinha2017certifying,mehrabi2021fundamental}. We }%
can adjust the arguments in \cref{sec: Wass} to derive an analogous result for the energy $\ffunc(\mu)\coloneqq
\int -\loss(\hyp(x),y) d\mu(x,y)$, which we state in \cref{thm: WInfCurveAdv}. Here, we only enforce the budget constraint, by setting the end time of the flow to $T=\budget$. 
\subsection{Outline}

The paper is organized as follows: In \cref{ch: existence}, we start by introducing $\infty$-curves of maximal slope, as the limit case of $p$-curves of maximal slope. \cref{ch: ExPr} then provides an existence result for those curves in a general metric setting. The underlying assumptions for its proof are stated in \cref{ch: assum}.

In \cref{sec: Banach}, we consider $\infty$-curves of maximal slope when the underlying metric space is a Banach space. \cref{ch: C1Per} introduces $C^1$-perturbations of convex functions as a convenient class of functionals, that covers 
most of the energies we consider in this paper.
In \cref{ch: DifIn}, we derive equivalent characterizations of $\infty$-curves of maximal slope via a doubly nonlinear differential inclusion. This section is concluded by investigating \rev{first-order} approximation techniques of those differential inclusions in \cref{ch: LinTi}.

\cref{sec: Wass} is devoted to $\infty$-curves of maximal slope, when the underlying space is the $\infty$-Wasserstein space. For potential energies we give an equivalent characterization of $\infty$-curves  of maximal slope via a probability measures $\eta$ on the space $C(0,T;\WBan)$ which is concentrated on $\infty$-curves of maximal slope on the underlying Banach space $\WBan$. From $\eta$, we can then derive a corresponding continuity equation for those curves of maximal slope.

In \cref{ch: AdAt}, we discuss the application of differential inclusions derived in \cref{sec: Banach} to generate adversarial examples. We show that the popular \textit{fast gradient sign method} (FGSM) and its iterative variant (IFGSM) are simple \rev{first-order} approximations of $\infty$-curves of maximal slope. In \cref{ch: DisAd}, we rewrite adversarial training as a distributional robust optimization problem and discuss the usage of \rev{$\infty$-curves} in the corresponding probability space, to generate distributional adversaries.
\section{Infinity Flows in metric spaces}\label{ch: existence}

In this section, we generalize the notion of $p$-curves of maximal slope to the case $p=\infty$. We consider the convex function $f(x)=\frac{1}{p}|x|^p$, which allows us to express the energy dissipation inequality \labelcref{eq:curvmax} in \cref{def:maxslope} as follows, 
\begin{align}\label{eq:curvmaxgeneral}
\psi'(t)\leq-f(|u'|(t))-f^*(g(u(t))),
\end{align}
where $f^*(x^*)=\frac{1}{q} |x^*|^q$ denotes the convex conjugate of $f$. Considering the above inequality for arbitrary convex functions $f$ leads to the general framework as introduced in \textcite{rossi2008metric}. For our setting, we consider the \rev{indicator} function, which is obtained as the following pointwise limit,
\begin{align*}
\frac{1}{p} \abs{x}^p \xrightarrow[]{p\to\infty}\chi_{[-1,1]}(x) = 
\begin{cases}
0 \text{ if } |x|\leq 1,\\
+\infty \text{ else},
\end{cases} 
\end{align*}
where $\chi_{[-1,1]}$ is a convex function with conjugate $\chi_{[-1,1]}^*(x^*) = x^*$. Using $f=\chi_{[-1,1]}$ in \cref{eq:curvmaxgeneral} forces the curves of maximal slope to obey $\abs{u^\prime}\leq 1$ almost everywhere and the energy dissipation inequality becomes 
\begin{align*}
    \psi'(t)\leq -\rev{(g\circ u)}(t),
\end{align*}
which motivates the following definition.
\begin{definition}[$\infty$-Curve of maximal slope]\label{def: CMS}
We say an absolutely continuous curve $u:[0,T] \rightarrow \CMS$ is an $\infty$-curve of maximal slope for the functional $\func$ with respect to an upper gradient $g$, if $\func \circ u$ is a.e. equal to a non-increasing map $\psi$ and
\begin{align}\label{eq:InfFlow}\tag{InfFlow}
\begin{split}
|u'|(t)&\leq 1,\\
\quad\psi'(t)&\leq -\rev{(g\circ u)(t)},
\end{split}
\end{align}
holds for a.e. $t\in(0,T)$.
\end{definition}

\begin{remark}
We note that the condition $\abs{u^\prime}\leq 1$ a.e., implies that $u$ is a Lipschitz curve with Lipschitz constant $1$, see \cref{lem:metricderiv}.
\end{remark}

\rev{
\begin{remark}(Dissipation equality)\label{rm:DisE}
If $g$ is a strong upper gradient of $\func$ and $\psi: [0,T] \rightarrow \R$ is finite  then by \cref{def: strGrad} and \labelcref{eq:InfFlow} 
$$ | \func(u(t))-\func(u(s))| \leq \int_s^t g(u(r))|u'|(r)dr\leq \int_s^t g(u(r)) dr\leq \int_s^t -\psi'(r) dr\leq \psi(s)-\psi(t)<+\infty,  $$
where in the last inequality we use that non-increasing functions are differentiable a.e. and an upper bound on the second fundamental theorem of calculus holds \cite[Proposition 1.6.37]{tao2011introduction}. This in particular implies that $\func\circ u$ is absolutely continuous and $\psi(t)=(\func\circ u)(t)$ for all $t\in(0,T)$  (see \cref{lm: help1}).
Furthermore, \cref{rm:RadDer} implies
 \begin{align*}
     \func(u(t))-\func(u(s))=\int_s^t (\func\circ u)'(r) dr \quad \text{for }0\leq s\leq t \leq T
 \end{align*}
 and we can estimate 
 $$ \func(u(t))-\func(u(s))=\int_s^t (\func\circ u)'(r) dr\leq \int_s^t -g(u(r)) dr \quad \text{for }0\leq s\leq t \leq T $$
 and on the other hand using \labelcref{eq:strgradprop}, we obtain
 \begin{align*}
 \func(u(t))-\func(u(s))&=\int_s^t (\func \circ u)'(r) dr\geq
  \int_s^t -|(\func \circ u)'(r)| dr  \\
  &\geq \int_s^t -g(u(r)) |u'|(r) dr \geq \int_s^t -g(u(r)) dr
 \end{align*}
 for $0\leq s\leq t \leq T$.
 Therefore, the energy dissipation equality 
\begin{align}\label{eq:EnergyDisEqual}\tag{EnDisEq}
     \func(u(t))-\func(u(s))= \int_s^t -g(u(r)) dr 
 \end{align}
 holds for every $0\leq s\leq t \leq T$ .
\end{remark}
}

\begin{example}
As an easy example, let us look at the quadratic energy $\func: \x\mapsto \frac{1}{2} \x^2$ on the space $(\CMS, d) = (\mathbb{R},|\cdot - \cdot|)$. Its metric slope and thus weak upper gradient is given by $\abs{\partial\func}(\x) = \abs{\frac{d}{dx}\func}(\x)=\abs{\x}$. We choose $\x^0=1$ as the starting point, then the corresponding $\infty$-curve of maximal slope is
$$u(t)=\begin{cases}
    1-t &\text{if } 0\leq t\leq 1,\\
    0 &\text{if }t>1
\end {cases}.$$
We directly observe that $\abs{u'}\leq 1$ and
\begin{align*}
\func(u(t)) = 
\begin{cases}
\frac{1}{2} (1-t)^2 &\text{if } 0\leq t\leq 1,\\
0 &\text{if }t>1,
\end{cases}
\end{align*}
is a non-increasing map with 
\begin{align*}
\frac{d}{dt} \func(u(t)) = 
\begin{cases}
t-1 &\text{if } 0\leq t < 1,\\
0 &\text{if }t>1,
\end{cases}
= -\abs{u(t)} = - \abs{\partial \func}(u(t)),
\end{align*}
and therefore the conditions \labelcref{eq:InfFlow} are fulfilled. Here we can already observe a typical behavior of $\infty$-curves of maximal slope. They have a constant velocity of $1$ until they hit a local minimum where they stop abruptly.
\end{example}

The rest of this section is devoted to an existence proof for $\infty$-curves of maximal slope. 

\subsection{Assumptions for existence}\label{ch: assum}
Here, we state the assumptions needed for the proof of existence. Approximations of curves of maximal slope are constructed via a minimizing movement scheme. To guarantee convergence of those approximations, a form of relative compactness is essential. This is \rev{guaranteed} by \cref{asm: 02}. Furthermore, relative compactness with respect to the topology induced by the metric $d(\cdot,\cdot)$ may not be given. However, relative compactness with respect to a weaker topology $\sigma$ is sufficient, as long as it is compatible with the topology induced by the metric $d(\cdot,\cdot)$, \cref{asm: 01}. These assumptions were also employed in \cite{ambrosio05}.
\begin{mdframed}
\begin{assume}[Weak topology] \label{asm: 01}
In addition to the metric topology, $(\CMS,d)$ is assumed to be endowed with a Hausdorff topology $\sigma$. We assume that $\sigma$ is compatible with the metric $d$, in the sense that $\sigma$ is weaker than the topology induced by $d$ and $d$ is sequentially $\sigma$-lower semicontinuous, i.e.,
\begin{align*}
(\x^n,\xx^n)\stackrel{\sigma}{\rightharpoonup}(\x,\xx) \Longrightarrow \liminf_{n\rightarrow \infty} d(\x^n,\xx^n)\geq d(\x,\xx).
\end{align*}
\end{assume}
\end{mdframed}

\begin{mdframed}
\begin{assume}[Relative compactness] \label{asm: 02}
Every $d$-bounded set contained in sublevels of $\func$ is relatively $\sigma$-sequentially compact, i.e.,
\begin{gather*}
\text{if}\quad \{\x^n\}_{n\in\N} \subset \CMS \quad\text{with}\quad \sup_{n\in\N} \func(\x^n) < +\infty,\quad \sup_{n,m} d(\x^n,\x^m)< +\infty,\\
\text{then }\seq{\x} \text{ admits a } \sigma \text{-convergent subsequence.}
\end{gather*} 
\end{assume}
\end{mdframed}
\cref{asm: lc,asm: 03b} ensure the \rev{lower semicontinuity of the} energy functional and the lower semicontinuity of its metric slope. These regularity assumptions are required for the energy dissipation inequality  during the limiting process in the proof of \cref{thm: exist}.

\begin{mdframed}
\addtocounter{asscount}{1}
\setcounter{assume}{0}
\rev{
\begin{assume}[Lower semicontinuity]\label{asm: lc} We assume sequential $\sigma$-lower semicontinuity of $\func$ for bounded sequences, namely,
\begin{align} \label{eq: adAss}
\left.
\begin{aligned}
\sup_{n,m\in \mathbb{N}}\left\{  d(\x^n,\x^m) \right\} <+\infty,\\
\x^n \stackrel{\sigma}{\rightharpoonup}\x
\end{aligned}\right\}
\Longrightarrow \func(\x)\leq  \liminf_{n\to\infty} \func(\x^n).
\end{align}
\end{assume}
}
\end{mdframed}

\begin{mdframed}
\begin{assume}[Lower semicontinuity of Slope] \label{asm: 03b}
In addition, we ask that $|\partial \func|$ \rev{is a strong upper gradient and it} is sequentially 
$\sigma$-lower semicontinuous on  $d$-bounded sublevels of $\func$.
\end{assume}  
\end{mdframed}

\begin{remark}
    The proof of existence is possible with a wide variety of regularity assumptions on the energy $\func$, which can be tailored to a variety of different situations. For example, if the sequentially \rev{$\sigma$-lower} semicontinuous envelope of $|\partial \func|$ 
    \begin{align*}
    |\partial^- \func|\coloneqq\left\{ \liminf_{n\rightarrow \infty}|\partial \func|(\x^\rev{n})\st\x^\rev{n}\stackrel{\sigma}{\rightharpoonup}\x,\sup_n{d(\x^\rev{n},\x),\func(\x^\rev{n})}<+\infty\right\}
\end{align*}
    is a strong upper gradient, one can drop \cref{asm: 03b} and instead prove existence of curves of maximal slope with respect to $|\partial^- \func|$. 
    Further, if $|\partial \func |$ (or $|\partial^- \func|$ respectively) is only a weak upper gradient (compare \cite[Theorem 2.3.3]{ambrosio05}) then \cref{asm: lc}  has to be replaced by continuity of the energy.
\end{remark}

\subsection{Minimizing movement for $p=\infty$}\label{sec:minmov}
The minimizing movement scheme is an implicit time discretization of curves of maximal slope. The existence of curves of maximal slope is proven by sending the discrete time \rev{step} $\tau$ of the minimizing movement scheme to $0$. 
For the time interval $[0,T]$ and some $n\in\N$, we use an equidistant time discretization $t^k=k \cdot \tau$ for $k\in\{0,...,n\}$ with $\tau=T/n$. Starting with $\x_\tau^0=\x^0$ the classical minimizing movement scheme to approximate $p$-curves of maximum slope reads
\begin{align*}
\x^{k+1}_\tau \in \argmin_{\xatt\in\CMS} 
\left\{
\frac{1}{p \tau^{p-1}}d^p(\xatt,\x_\tau^k)+\func(\xatt)
\right\}.
\end{align*}
Taking formally the limit $p\rightarrow \infty$ under the constraint $d(\xatt,\x_\tau^k)\leq \tau$, we arrive at the corresponding minimizing movement scheme for $p=\infty$, which we define in the following.
\begin{definition}[Minimizing Movement Scheme for $p=\infty$]\label{def: MinMov}
For $\tau=T/n$ and $\x^0_\tau = \x^0$ we consider the iteration defined for $k\in\N_0$ as
\begin{align}\label{eq: MinMov}\tag{MinMove}
\x_\tau^{k+1}\in \rev{\argmin_{\xatt\in\CMS}} \{ \func(\xatt)\st d(\xatt,\x_\tau^k)\leq \tau \}.
\end{align}
We define the step function $\sx_\tau$ by
\begin{align*}
\sx_\tau(0)=\x^0, \quad \sx_\tau(t)=\x^k_\tau \text{ if } t\in (t_\tau^{k-1},t^k_\tau], k\geq1.
\end{align*}
Furthermore we define 
\begin{align*}
|\x'_\tau|(t)\coloneqq \frac{d(\x^k_\tau,\x_\tau^{k-1})}{t_\tau^k-t_\tau^{k-1}} \text{ if } t \in(t_\tau^{k-1},t_\tau^k),
\end{align*}
as the metric derivative of the corresponding piecewise affine linear interpolation. 
\end{definition}
\cref{asm: 02,asm: lc} guarantee the existence of minimizers in \labelcref{eq: MinMov} via the direct method in the calculus of variations \cite{dacorogna2007direct}, which ensures that the minimizing movement scheme can be defined.
Now for all $\x\in\CMS$ we set
\begin{align} \label{eq: MinTau}
\func_\tau(\x)\coloneqq\min_{\xatt\in \rev{\cl{B_\tau}}(\x)} \func(\xatt).
\end{align}
\begin{remark}\label{rem:infconv}
The function defined in \cref{eq: MinTau} is similarly employed in \cite{armstrong2010easy,bungert2023uniform,bungert2024ratio} and the proof strategy as displayed in \cref{fig: DerSlop} resembles the max-ball arguments as in the previously mentioned works. 
The expression in \labelcref{eq: MinTau} can also be seen as the infimal convolution \cite{hausdorff1919halbstetige,fenchel1953convex} of $\func$ and $\chara_{\cl{B_\tau(0)}}$, i.e., $\func_\tau = \chara_{\cl{B_\tau}}\, \square\, \func$ and can also be considered as the limit $p\to\infty$ of the Moreau envelope \cite{moreau1965proximite},
\begin{align*}
\inf_{\xatt} \left\{\func(\xatt) + \frac{1}{p}\norm{\x - \xatt}^p\right\}
\end{align*}
which is typically defined for $p=2$.
\end{remark}
\rev{%
\begin{remark}
More recently, similar schemes to the one defined in \labelcref{eq: MinMov} have been introduced in an optimization context in \cite{gruntkowska2025ball}. Here, the operation on the right-hand side of \labelcref{eq: MinMov} was labelled the \enquote{ball-proximal} or \enquote{brox} operator.
\end{remark}%
}
\rev{%
The next lemma gives an equivalent 
characterization of the metric slope and provides its relation to the minimizing movement scheme. In fact, it is a special case of \cite[Lem. 3.1.5, Rem. 3.1.7]{ambrosio05}. For completeness, we provide an adapted proof in \cref{app:help}.
}

\begin{lemma}\label{lm: EqiSlope}
For all  $\x \in \dom(\func)$ we have that
\begin{align}\label{eq:AltSlop}
|\partial \func|(\x)=\limsup_{\tau \rightarrow 0^+} \frac{\func(\x)-\func_\tau(\x)}{\tau}.
\end{align}
\end{lemma}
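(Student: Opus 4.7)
The identity rewrites as
\[
\func(\x)-\func_\tau(\x)=\max_{\xx\in\cl{B_\tau}(\x)}\bigl(\func(\x)-\func(\xx)\bigr),
\]
which is non-negative because $\x\in\cl{B_\tau}(\x)$; moreover \Cref{asm: 02,asm: 03a} guarantee the minimum in \labelcref{eq: MinTau} is attained, so we may pick $\xx_\tau\in\cl{B_\tau}(\x)$ with $\func(\xx_\tau)=\func_\tau(\x)$. The plan is to prove the two inequalities separately by converting each side into the other via the scale $\tau\leftrightarrow d(\x,\xx)$.

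For the lower bound $\limsup_{\tau\to 0^+}\frac{\func(\x)-\func_\tau(\x)}{\tau}\geq |\partial\func|(\x)$, I would take the trivial case $|\partial\func|(\x)=0$ first, where the claim is immediate from $\func_\tau(\x)\leq\func(\x)$. Otherwise there exists a sequence $\xx_n\to\x$ with $\xx_n\neq\x$ such that $\frac{(\func(\x)-\func(\xx_n))^+}{d(\x,\xx_n)}\to|\partial\func|(\x)>0$; for large $n$ the numerator is strictly positive so the positive part drops. Setting $\tau_n:=d(\x,\xx_n)\to 0^+$ one has $\xx_n\in\cl{B_{\tau_n}}(\x)$, hence $\func_{\tau_n}(\x)\leq\func(\xx_n)$, and
\[
\frac{\func(\x)-\func_{\tau_n}(\x)}{\tau_n}\geq\frac{\func(\x)-\func(\xx_n)}{d(\x,\xx_n)}\xrightarrow{n\to\infty}|\partial\func|(\x),
\]
from which the $\limsup$ inequality follows.

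For the upper bound, fix a minimizer $\xx_\tau$ as above. If $\xx_\tau=\x$ then $\func_\tau(\x)=\func(\x)$ and the quotient vanishes. Otherwise $0<d(\x,\xx_\tau)\leq\tau$ and $\func(\x)-\func(\xx_\tau)\geq 0$, so
\[
\frac{\func(\x)-\func_\tau(\x)}{\tau}=\frac{\func(\x)-\func(\xx_\tau)}{\tau}\leq\frac{(\func(\x)-\func(\xx_\tau))^+}{d(\x,\xx_\tau)}.
\]
Since $d(\x,\xx_\tau)\leq\tau\to 0$, any sequence $\tau_n\to 0$ realizing the $\limsup$ yields $\xx_{\tau_n}\to\x$ (or eventually equals $\x$, covered by the first case), so the right-hand quotients are tested along an admissible sequence for the local slope, giving $\limsup_{\tau\to 0^+}\frac{\func(\x)-\func_\tau(\x)}{\tau}\leq|\partial\func|(\x)$.

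The only real subtlety is the bookkeeping of two degenerate possibilities: that the chosen minimizer coincides with $\x$, and that the positive part in the definition of $|\partial\func|$ is active. Both are handled by a case split, as above. Everything else is a direct comparison of the two quotients along sequences linked by $\tau=d(\x,\xx)$, relying only on the attainment of minimizers granted by \Cref{asm: 02,asm: 03a}.
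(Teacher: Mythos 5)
Your proof is correct. It reaches the identity by a route that is organized differently from the paper's: there, \labelcref{eq:AltSlop} is proved as a single chain of equalities, first rewriting $\func(\x)-\func_\tau(\x)$ as a supremum over the punctured ball (after disposing of the trivial case $|\partial\func|(\x)=0$), then expressing the $\limsup$ as an $\inf$--$\sup$, exchanging the order of the two suprema, and observing that for the relevant non-negative numerators the inner supremum over $\tau\in[d(\x,\xx),\epsilon]$ is attained at $\tau=d(\x,\xx)$. You instead split the identity into two inequalities and verify each with explicit sequences: near-optimal points for the slope reparametrized via $\tau_n=d(\x,\xx_n)$ for the lower bound, and attained minimizers $\xx_\tau$ of $\func_\tau(\x)$ for the upper bound. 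The underlying observation is the same in both arguments — replacing the denominator $\tau$ by the smaller quantity $d(\x,\xx)$ can only increase a quotient with non-negative numerator — so the two proofs deliver the same content; yours is more pedestrian and avoids the sup-swap bookkeeping, at the mild cost of invoking attainment of the minimum in \labelcref{eq: MinTau} (justified by \cref{asm: 02,asm: 03a}, as you note), which the paper's equality chain does not strictly require. Your case splits for the degenerate situations (zero slope, minimizer coinciding with $\x$, activity of the positive part) are handled correctly and play the same role as the paper's single reduction to the case $\func_\tau(\x)<\func(\x)$ for all $\tau>0$.
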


Further, we are interested in the behavior of the mapping $\tau\mapsto\func_\tau(\x)$ when varying $\tau$. By definition, it is monotone decreasing in $\tau$ and thus differentiable a.e. This allows us to derive an integral inequality that gives an upper bound to $\func_\tau(\x)$ as $\tau$ increases.

\begin{lemma}[Differentiability of $\func_\tau(\x)$]\label{lem: Dif}
For $\x\in \dom(\func)$ the derivative $\frac{d}{d\tau} \func_\tau(\x)$ exists for a.e. $\tau \in (0,+\infty)$ and
\begin{align} \label{eq: IntIn}
    \func_{\tau_1}(\x)+\int_{\tau_1}^{\tau_2} \frac{d}{d\tilde{\tau}} \func_{\tilde{\tau}}(\x) d\tilde{\tau}\geq \func_{\tau_2}(\x) \quad \text{for }  0 \leq \tau_1\leq \tau_2< +\infty.
\end{align}
Furthermore 
\begin{align}\label{eq: DerSlop}
    \frac{d}{d\tau} \func_\tau(\x)\leq -|\partial \func|(\x_{\mathrm{min},\tau}) \quad \text{ for a.e. }  \tau \in (0,+\infty),
\end{align}
where 
\begin{align} \label{eq: MinMiz}
\x_{\mathrm{min},\tau}\in \argmin_{\xatt}\{\func(\xatt)\st d(\x,\xatt)\leq \tau\}.
\end{align}
\end{lemma}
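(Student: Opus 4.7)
The plan is to split the two assertions of the lemma: first establish \labelcref{eq: IntIn} via monotonicity, then use a ball-containment argument to get the pointwise slope bound \labelcref{eq: DerSlop}.

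\textbf{Step 1: Monotonicity and the integral inequality.} I would first observe that $\tau \mapsto \func_\tau(x)$ is non-increasing because enlarging the radius of the closed ball in the minimization problem \labelcref{eq: MinTau} enlarges the feasible set. Hence $\tau \mapsto \func_\tau(x)$ is of bounded variation on any bounded interval, and in particular differentiable almost everywhere with $\frac{d}{d\tau}\func_\tau(x) \leq 0$. The integral inequality \labelcref{eq: IntIn} is then a standard property of non-increasing functions: writing $f(\tau) := \func_\tau(x)$ and $g := -f$, applied to $g$ as a non-decreasing function one has $g(\tau_2) - g(\tau_1) \geq \int_{\tau_1}^{\tau_2} g'(\tilde\tau) d\tilde\tau$, which rearranges to the claimed bound (see, e.g., \cite[Theorem 7.21]{saks37}).

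\textbf{Step 2: Ball-containment inequality.} For a fixed differentiability point $\tau$, pick $x_{\mathrm{min},\tau}$ as in \labelcref{eq: MinMiz}, so that $\func_\tau(x) = \func(x_{\mathrm{min},\tau})$. For any $h > 0$ and any $\tilde x \in \cl{B_h(x_{\mathrm{min},\tau})}$, the triangle inequality gives $d(x,\tilde x) \leq d(x, x_{\mathrm{min},\tau}) + d(x_{\mathrm{min},\tau}, \tilde x) \leq \tau + h$, so $\cl{B_h(x_{\mathrm{min},\tau})} \subseteq \cl{B_{\tau+h}(x)}$. Passing to the minimum yields the key comparison
\begin{align*}
\func_{\tau+h}(x) \;\leq\; \func_h(x_{\mathrm{min},\tau}).
\end{align*}
This geometric observation is the analogue of the ``max-ball'' argument alluded to in \cref{fig: DerSlop}.

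\textbf{Step 3: Passing to the limit to obtain \labelcref{eq: DerSlop}.} Subtracting $\func_\tau(x) = \func(x_{\mathrm{min},\tau})$ from both sides and dividing by $h > 0$ gives
\begin{align*}
\frac{\func_{\tau+h}(x) - \func_\tau(x)}{h} \;\leq\; -\,\frac{\func(x_{\mathrm{min},\tau}) - \func_h(x_{\mathrm{min},\tau})}{h}.
\end{align*}
By \cref{lm: EqiSlope}, the right-hand side satisfies $\limsup_{h \to 0^+}\frac{\func(x_{\mathrm{min},\tau}) - \func_h(x_{\mathrm{min},\tau})}{h} = |\partial\func|(x_{\mathrm{min},\tau})$, so I can choose a subsequence $h_n \downarrow 0$ along which the quotient $\frac{\func(x_{\mathrm{min},\tau}) - \func_{h_n}(x_{\mathrm{min},\tau})}{h_n}$ converges to $|\partial\func|(x_{\mathrm{min},\tau})$. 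At a point of differentiability of $\func_\tau(x)$, the left-hand side converges (along any sequence) to $\frac{d}{d\tau}\func_\tau(x)$, and passing to the limit along this particular $h_n$ yields
\begin{align*}
\frac{d}{d\tau}\func_\tau(x) \;\leq\; -|\partial\func|(x_{\mathrm{min},\tau}),
\end{align*}
as required.

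\textbf{Main obstacle.} The delicate point is the last step: since the metric slope is defined via a $\limsup$ rather than a $\liminf$, one cannot simply take $h \to 0^+$ on the comparison inequality and hope the slope appears on the right. The trick is that the limit on the left exists at a.e.\ $\tau$ by Step~1, which allows us the freedom to select a specific sequence $h_n$ realizing the $\limsup$ on the right. Existence of a minimizer $x_{\mathrm{min},\tau}$ in \labelcref{eq: MinMiz} is guaranteed by \cref{asm: 02,asm: 03a}, exactly as for the minimizing movement scheme itself, so no additional work is needed there.
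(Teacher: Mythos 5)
Your proposal is correct and follows essentially the same route as the paper's proof: monotonicity of $\tau\mapsto\func_\tau(\x)$ gives a.e.\ differentiability and the integral inequality via \cite[Theorem 9.6, Chapter IV]{saks37}, and the ball inclusion $\cl{B_h}(\x_{\mathrm{min},\tau})\subset\cl{B_{\tau+h}}(\x)$ combined with \cref{lm: EqiSlope} yields \labelcref{eq: DerSlop}. Your remark about selecting a sequence $h_n$ realizing the $\limsup$ is just an explicit spelling-out of the paper's inequality $\lim_{r\to 0} a_r \leq -\limsup_{r\to 0} c_r$, so there is no substantive difference.
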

\begin{proof}
Let $\x\in\dom(\func)$, for any $\tau^*<\infty$ we know that the mapping $\tau\mapsto \func_\tau(\x)$ 
is monotone decreasing on $[0,\tau^*]$ and 
thus its variation can be bounded,
$$ 
\func_0(\x) - \func_{\tau^*}(\x)=
\func(\x)-\func(\x_{\mathrm{min},\tau^*})<\infty.$$
Employing \cite[Theorem 9.6, Chapter IV]{saks37}, this yields that the derivative exists for almost every $t\in(0,\tau^*)$ and that \cref{eq: IntIn} holds. 
%Since $\tau^*$ can be chosen arbitrary everything still holds for $\tau^* \rightarrow \infty$ .
 %
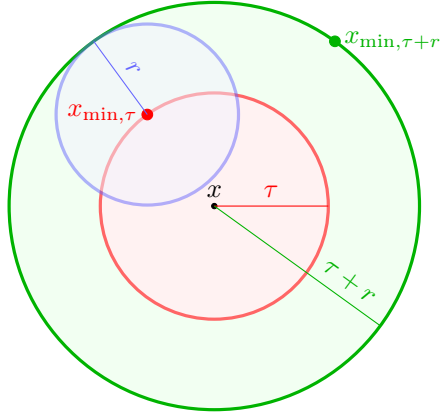
\begin{figure}%
\centering% 
\begin{tikzpicture}
\filldraw[color=green!70!black, fill=green!5, very thick](0,0) circle (2.7);
\filldraw[color=red!60, fill=red!5, very thick](0,0) circle (1.5);
\filldraw[color=blue!60, fill=blue!5, very thick,opacity=0.5] (0,0) ++(0.7*180:1.5) circle (1.2);
\filldraw[](0,0) circle (1 pt) node [above] {$\x$};
\filldraw[red] (0,0) ++(0.7*180:1.5) circle[radius=2pt] node [left,] {$\x_{\text{min},\tau}$};
\draw[blue!60]++(.7*180:1.5) -- ++(.7*180:1.2) node [midway,above, sloped] {$r$};
\filldraw[green!70!black] (0,0) ++(0.3*180:2.7) circle[radius=2pt] node [right,] {$\x_{\text{min},\tau+r}$};
\draw[red](0,0) -- (1.5,0) node [midway,above] {$\tau$};
\draw[green!70!black](0,0) -- ++(1.8*180:2.7) node [near end,above, sloped] {$\tau + r$};
\end{tikzpicture}
\caption{Visualization of the ball inclusion used for the proof of \cref{eq: DerSlop}.}\label{fig: DerSlop}
\end{figure}%
To show \cref{eq: DerSlop}, we observe that  $$B_r(\x_{\mathrm{min},\tau})\subset B_{\tau+r}(\x) \text{ and thus }\func_{\tau+r}(\x)\leq \func_r(\x_{\mathrm{min},\tau}),$$ see \cref{fig: DerSlop}, which yields
\begin{align*}
-\left(\frac{\func(\x_{\mathrm{min},\tau})-\func_{\tau+r}(\x)}{r}\right)
\leq 
-\left(\frac{\func(\x_{\mathrm{min},\tau})-\func_{r}(\x_{\mathrm{min},\tau})}{r}\right).
\end{align*}
It follows that
\begin{align*}
    \frac{d}{d\tau} \func_\tau(\x)&=\lim_{r\rightarrow 0} \frac{\func_{\tau+r}(\x)-\func_{\tau}(\x)}{r}\\
    &= -\lim_{r\rightarrow 0} \frac{\func(\x_{\mathrm{min},\tau})-\func_{\tau+r}(\x)}{r}\\
    &\leq -\limsup_{r\rightarrow 0}  \frac{\func(\x_{\mathrm{min},\tau})-\func_{r}(\x_{\mathrm{min},\tau})}{r}=-|\partial \func|(\x_{\mathrm{min},\tau}),
\end{align*}
where we used the characterization of the slope from \cref{lm: EqiSlope}. 
\end{proof}
\subsection{Proof of existence}\label{ch: ExPr}
Together with the previous lemmas, we are now able to prove the existence of $\infty$-curves of maximal slope. Besides the piecewise constant interpolation $\sx$, we use a variational interpolation. This interpolation, combined with estimate in \labelcref{eq: D}, later yields the differential inequality \labelcref{eq:InfFlow}.

\begin{definition}[De Giorgi variational interpolation]\label{def: DeGiorgi}
We denote by $\vx_\tau:[0,T]\rightarrow \mathcal{S}$ any interpolation of the discrete values satisfying
\begin{align*}
\vx_\tau(t)
\in \argmin_{\xatt} \{ \func(\x)\st d(\xatt,\x_\tau^{k-1})\leq t- t^{k-1}_\tau \}
\end{align*}
if $t\in (t_\tau^{k-1},t^k_\tau]$ and $ k\geq1$.
Furthermore, we define
\begin{align}\label{eq: DDef}
D_\tau(t)\coloneqq \frac{d}{dt} \func_{(t-t_\tau^{k-1})}(\x^{k-1}_\tau)\quad \text{ if } t\in(t_\tau^{k-1},t_\tau^k].
\end{align}
\end{definition}
Employing \cref{lem: Dif}, the above definition directly yields
\begin{align}\label{eq: D}
    \func(\vx_\tau(s))+\int_s^t D_\tau(r) dr\geq \func(\vx_\tau(t)) \quad\forall\  0\leq s\leq t\leq T,
\end{align}
which is used in the following existence proof, \cref{thm: exist}. We employ the arguments of \cite[Ch. 3]{ambrosio05} and transfer them to our setting, where a crucial statement is the refined version of Ascoli--Arzelà in \cite[Prop. 3.3.1]{ambrosio05}, which is repeated for convenience, in the appendix, see \cref{prop:AA}.
\rev{As detailed in \cref{ch: intro} this can also be obtained via the results in \cite{rossi2008metric}. Nevertheless, we include a proof here, since this introduces the main arguments for the proof of \cref{thm:semilincvg}.
}
\begin{theorem}[Existence of $\infty$-curves of maximal slope]\label{thm: exist}
Under the \cref{asm: 01,asm: 02,asm: lc,asm: 03b}  
for every $\x^0 \in \dom(\func)$ there exists a \rev{1-}Lipschitz curve $u:[0,T]\to\mathcal{S}$ with $u(0)=\x^0$ which is an $\infty$-curve of maximum slope for $\func$ with respect to its \rev{strong} upper gradient $|\partial\func|$ \rev{and $u$ satisfies the energy dissipation equality 
\begin{align}\label{eq: EnergyDissip}
\func(u(0))=\func(u(t))+\int_0^t |\partial\func|(u(r))dr \quad \text{for all } t\in[0,T].    
\end{align}
}
\end{theorem}

\begin{proof}
We consider the set of all possible iterates in the minimizing movement scheme $K=\{\x^i_{\tau_n}\st 0\leq i\leq n, n\in \N\}\subset\CMS$. Recalling \cref{def: MinMov}, for every $n\in\N$ and $i,j \in\{0,\ldots,n\}$, we have the estimate
\begin{align*}
d(\x_{\tau_n}^i,\x_{\tau_n}^j)\leq\sum_{k=i}^{j-1} d(\x_{\tau_n}^k,\x_{\tau_n}^{k+1})
\leq (j-i)\ {\tau_n} \leq 
T,
\end{align*}
and therefore for every $n,m\in\N$ and $0\leq i\leq n,\ 0\leq j\leq m$, we have
\begin{align*}
d(\x_{\tau_n}^i,\x_{\tau_m}^j)\leq d(\x_{\tau_n}^i,\x^0) + d(\x^0,\x_{\tau_m}^j)\leq 2T.
\end{align*}
Furthermore, since $\x^0\in\dom(\func)$ we also know that 
$\func(\x_{\tau_n}^i) \leq \func(\x^0) < \infty$ and thus $K$ is a $d$-bounded set, contained in sublevels of $\func$. Using relative compactness, i.e., \cref{asm: 02}, this ensures that $\cl{K}$ is a $\sigma$-sequentially compact set and therefore fulfills \ref{def:AA1} of \cref{prop:AA}. In order to apply the latter, it remains to choose a function $\omega$ that fulfills \ref{def:AA2}. For this we consider the sequence of curves $\abs{\x'_{\tau_n}}:[0,T]\to\R$, which is by definition bounded in $L^\infty(0,T)$, i.e.,
$$\norm{\abs{\x'_{\tau_n}}}_{L^\infty(0,T)}\leq 1,
\qquad \text{for every }n\in\N.$$ 
\rev{For fixed $0\leq s \leq t \leq T$ let us define 
\begin{align}
s(n)\coloneqq \min_{k\in\{0,\ldots, n\}} \{ k\cdot \tau_n \st s \leq k\cdot \tau_n   \},\qquad% 
t(n)\coloneqq \min_{k\in\{0,\ldots, n\}} \{ k\cdot \tau_n \st t \leq k\cdot \tau_n \}.
\end{align}
Using the triangle inequality and the fact that the distance between two consecutive iterates is bounded by $\tau$, we obtain
\begin{align}\label{eq:DiEst}
\limsup_{n\rightarrow +\infty} d(\Bar{x}_{\tau_n}(s),\Bar{x}_{\tau_n}(t))
&\leq \limsup_{n\rightarrow +\infty} \sum_{i=1}^{\frac{t(n)-s(n)}{\tau_n}} d(\Bar{x}_{\tau_n}(s(n)+(i-1)\tau),\Bar{x}_{\tau_n}(s(n)+i\tau_n))\\
&\leq \lim_{n\rightarrow +\infty} (t(n)-s(n))=|t-s |\eqqcolon \omega(s,t).
\end{align}%
}%
Therefore, \ref{def:AA2} in \cref{prop:AA} is fulfilled, allowing us to apply \cite[Proposition 3.3.1]{ambrosio05} to extract another subsequence such that 
\begin{align*}
\sx_{\tau_n}(t) \stackrel{\sigma}{\rightharpoonup}u(t) \text{ as } n\rightarrow \infty \quad \forall t\in [0,T], \quad u \text{ is } d\text{-continuous in } [0,T].
\end{align*}
This in particular ensures $u(0)=\x^0$ and \labelcref{eq:DiEst} together with \cref{asm: 01} yields 1-Lipschitzness of $u$, since for $s\leq t$ we have
\begin{align*}
d(u(s), u(t)) \leq \liminf_{n\to\infty} 
d(\sx_{\tau_n}(s),\sx_{\tau_n}(t))\leq t-s.
\end{align*}
By construction it holds that $d(\sx_{\tau_n},\vx_{\tau_n})\leq \tau$, which also yields
\begin{align*}
\vx_{\tau_n}(t) \stackrel{\sigma}{\rightharpoonup}u(t) \text{ as } n\rightarrow \infty \quad \forall t\in [0,T].
\end{align*}
\rev{
Observing that $\Tilde{x}_{\tau_n}(0)=x^0=u(0)$ independent of $n$, we take the limes inferior for \cref{eq: D} and use \cref{asm: lc} and Fatou's lemma to obtain for all $t\in[0,T]$
\begin{align*}
\func(u(0))&\geq \liminf_{n\rightarrow\infty} 
\left\{
\func(\Tilde{x}_{\tau_n}(t))-\int_0^t D_{\tau_n}(r)dr
\right\}
\geq \func(u(t))+\int_0^t \liminf_{n\rightarrow\infty} -D_{\tau_n}(r)dr\\
&\geq \func(u(t)) +\int_0^t |\partial \func(u(r))|dr.
\end{align*}
The last inequality follows by the estimate
\begin{align*}
    |\partial \func|(u(t))\leq \liminf_{n\rightarrow \infty} |\partial \func|(\Tilde{x}_{\tau_n}(t))\leq \liminf_{n\rightarrow \infty} -D_{\tau_n}(t)\quad\text{ for a.e.}\quad t\in(0,T),
\end{align*}
which is a consequence of \labelcref{eq: DDef} and \labelcref{eq: DerSlop} and the $\sigma$-lower semicontinuity of the slope.
On the other hand we know that $|\partial \func|$ is a strong upper gradient and $|u'|(r)\leq 1$ for a.e. $r \in[0,T]$, such that
\begin{align*}
    \func(u(0))\leq \func(u(t))+\int_0^t |\partial \func |(u(r)) | u'|(r)dr\leq \func(u(t)) +\int_0^t |\partial \func |(u(r))dr.
\end{align*}
In particular the equality 
\begin{align*}
    \func(u(0))= \func(u(t))+\int_0^t |\partial \func |(u(r)) | u'|(r)dr= \func(u(t)) +\int_0^t |\partial \func |(u(r))dr
\end{align*}
must hold.
It follows that $t\mapsto \func(u(t))$ is locally absolutely continuous and
\begin{align*}
    \frac{d}{dt}\func(u(t))=-|\partial \func|(u(t)) |u'|(t)=-|\partial \func|(u(t)) \text{ for a.e. }t\in(0,T).
\end{align*}
}

\end{proof}

\section{Banach space setting}\label{sec: Banach}
In this section, we consider the Banach space setting, i.e., we assume that $\mathcal{S}=\Ban$, where $\Ban$ is a 
Banach space with norm $\|\cdot\|$ and $(\Ban^*,\|\cdot\|_*)$ denoting its dual. \rev{%
In this section, we assume the functional to be a $C^1$-perturbation (see \cref{ch: C1Per}) and use the symbol $\funccp$ to distinguish it from general functionals $\func$ in the previous section.} We want to give an equivalent characterization of curves of maximum slope in terms of differential inclusions. Following \cite[Ch. 1]{ambrosio05}, for a functional $\funccp:\Ban\to(-\infty,\infty]$ we employ the 
Fréchet subdifferential $\partial \funccp \subset \Ban^*$, where for $\x\in\operatorname{dom}(\funccp)$ we define
\begin{align}\label{eq: SubDif}
\xi\in \partial \funccp(\x) \Leftrightarrow 
\liminf_{\xx\to \x} \frac{\funccp(\xx) - \funccp(\x) - \langle \xi, \xx-\x\rangle}{\|\xx-\x\|}\geq 0
\end{align}
with $\operatorname{dom}(\partial \funccp) = \{\x\in \Ban\st \partial\funccp (\x)\neq \emptyset\}$. 
Assuming that $\partial \funccp(\x)$ is weakly$^*$ closed for every $\x\in\operatorname{\dom}(\partial\funccp)$---which holds true in particular, if $\Ban$ is reflexive or $\funccp$ is a so called $C^1$-perturbation of a convex function (see \cref{pro:C1})---we furthermore define
\begin{align*}
\partial^\circ \funccp (\x) := 
\argmin_{\xi \in \partial \funccp(\x)} \|\xi\|_*\subset \partial\funccp(\x).
\end{align*}
\rev{Note, that $\partial^\circ\funccp(x)$ is still potentially multivalued, however all elements have the same dual norm. This justifies using the notation $\norm{\partial^\circ \funccp(x)}_*=\min\{\|\xi\|_*\st\xi\in\partial\funccp(\x)\}$ in the following.}
\subsection{On $C^1$-perturbations of convex functions}\label{ch: C1Per}
Functions that can be split into a convex function $\funccpc$ and a differentiable part $\funccpd$, i.e., $\funccp=\funccpc+\funccpd$, are called $C^1$-perturbations of convex functions. 
This particular class of functions exhibits a variety of useful properties. We collect the ones, that are relevant for our setting in the following proposition, which is a combination of Corollary 1.4.5 and Lemma 2.3.6 in \cite{ambrosio05}. 

\begin{proposition}[$C^1$-perturbations of convex functions] \label{pro:C1}
If $\funccp:\Ban\rightarrow (-\infty,+\infty]$ admits a decomposition $\funccp=\funccpc+\funccpd$, into a proper, lower semicontinuous convex function $\funccpc$ and a $C^1$-function $\funccpd$, then 
\begin{propenum}
\item\label{pro:C1:i} $\partial \funccp=\partial\funccpc+D\funccpd$,
\item\label{pro:C1:ii} 

$$\left.
\begin{aligned}
\xi^{\rev{n}}\in \partial \funccp(\x^{\rev{n}}),\\
\x^{\rev{n}}\to \x\in \operatorname{dom}(\partial\funccp),\\
\xi^{\rev{n}} \rightharpoonup^* \xi
\end{aligned}\right\}
\Rightarrow
\left\{
\begin{aligned}
\xi \in \partial \funccp(\x),\\
\funccp(\x^{\rev{n}})\to \funccp(\x),
\end{aligned}\right.
$$
\item\label{pro:C1:iii} \rev{$|\partial \funccp|(\x)=\|\partial^\circ \funccp(\x)\|_* \quad \forall \x\in \Ban,
$}
\item \label{pro:C1:iv} $|\partial \funccp|$ is $\|\cdot\|$-lower semicontinuous,
\item \label{pro:C1:v}$|\partial \funccp|$ is a strong upper gradient of $\funccp$.
\end{propenum}
\end{proposition}
%
% \begin{remark}
% Note, that in the above proposition $\partial^\circ \funccp(\x)$ is potentially also multivalued. However, by definition all elements have the same dual norm, which justifies the above notation.
% \end{remark}

\rev{Considering Banach spaces that fulfill \cref{asm: 01,asm: 02} with their strong topology and energies that are $C^1$ perturbations, the existence of $\infty$-curves of maximum slope follows directly by \cref{thm: exist}.} 
An important example of such a Banach space $\Ban$, is the Euclidean space, since our motivating application, namely adversarial attacks, usually employs a finite dimensional image space. We formulate this result in the following corollary.
\begin{corollary}[Existence for $C^1$-perturbations in finite dimensions]\label{cor: FinC1}
Let $\Ban=(\mathbb{R}^d,\|\cdot\|)$ and $\funccp:\mathbb{R}^d\rightarrow (-\infty,+\infty]$ %satisfy \cref{asm: 03a} and 
admits a decomposition $\funccp=\funccpc+\funccpd$, into a proper, lower semicontinuous convex function $\funccpc$ and a $C^1$-function $\funccpd$. For every $\x^0\in \dom(\funccp)$, there exists at least one curve of maximal slope in the sense of \cref{def: CMS} with $u(0)=\x^0$. \rev{ Further this curve  satisfies the energy dissipation equality \cref{eq: EnergyDissip}. }
\end{corollary}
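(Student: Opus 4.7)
The plan is to verify that the four hypotheses of \cref{thm: exist} hold in the present setting and then invoke that theorem directly; no new machinery is required. The only real choice to make is the Hausdorff topology $\sigma$ appearing in \cref{asm: 01}, and in finite dimensions the natural and simplest choice is to take $\sigma$ equal to the norm topology on $\mathbb{R}^d$ itself. With this choice $\sigma$ coincides with the metric topology induced by $\|\cdot\|$, so it is (trivially) weaker than the metric topology, and the distance function $d(\x,\xx) = \|\x - \xx\|$ is jointly continuous, hence in particular sequentially $\sigma$-lower semicontinuous. Thus \cref{asm: 01} is immediate.

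For \cref{asm: 02}, in $\mathbb{R}^d$ every $d$-bounded sequence admits a convergent subsequence by the Heine--Borel theorem, so the required sequential $\sigma$-compactness holds for all $d$-bounded sets, whether or not they are contained in sublevels of $\func$. \cref{asm: 03a} is explicitly part of the hypothesis of the corollary, so nothing is to be shown. Finally, for \cref{asm: 03b}, the sequential $\sigma$-lower semicontinuity of $|\partial \func|$ on $d$-bounded sublevels of $\func$ is an immediate consequence of \cref{pro:C1} (iv): because $\func = \funcc + \funcd$ is a $C^1$-perturbation of a convex function, $|\partial \func|$ is $\|\cdot\|$-lower semicontinuous on all of $\mathbb{R}^d$, and with the norm topology as our choice of $\sigma$, $\|\cdot\|$-lower semicontinuity and sequential $\sigma$-lower semicontinuity coincide.

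Having verified all four assumptions, \cref{thm: exist} applies and yields a Lipschitz curve $u:[0,T]\to\mathbb{R}^d$ with $u(0)=\x^0$ which is an $\infty$-curve of maximal slope for $\func$ with respect to the weak upper gradient $|\partial\func|$ (here in fact a strong upper gradient by \cref{pro:C1} (v)). There is no real obstacle in the argument: the entire content of the corollary is that in the finite-dimensional $C^1$-perturbation setting, \cref{pro:C1} furnishes exactly the regularity of $|\partial \func|$ needed to satisfy the hypotheses of the general existence result, while Heine--Borel takes care of compactness for free.
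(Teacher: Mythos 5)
Your proposal is correct and follows essentially the same route as the paper's proof: choose $\sigma$ to be the norm topology so that \cref{asm: 01,asm: 02} hold (via Heine--Borel), note that \cref{asm: 03a} is assumed, obtain \cref{asm: 03b} from the $\|\cdot\|$-lower semicontinuity of $|\partial\func|$ in \cref{pro:C1}, and apply \cref{thm: exist}. The only difference is that you spell out the verification of each assumption in more detail than the paper does.
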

\begin{proof}
We choose $\sigma$ to be the norm topology, such that \cref{asm: 01,asm: 02} are fulfilled \rev{and $\funccp$ fulfills \cref{asm: lc}.} By \cref{pro:C1} $\abs{\partial \funccp}$ is lower semicontinuous \rev{and a strong upper gradient}. Therefore also \cref{asm: 03b} is fulfilled and the application of \cref{thm: exist} yields the desired result.
\end{proof}
%
%\begin{remark}\label{rem:fin3}
%In finite dimensions, convex functions are continuous on the interior of their domain, see \cite[Theorem 10.1]{UBHD-66628091}. So \cref{asm: 03a} only needs to be checked at the boundary of the domain.
%\end{remark}

In the infinite dimensional case, existence is harder to prove. Usually $\sigma$ is chosen as the weak or weak* topology, such that when $\Ban$ is reflexive or a dual space, the Banach--Alaoglu theorem yields compactness and that \cref{asm: 01,asm: 02} are fulfilled. 
A desirable property for the energy functional is the so-called $\sigma$-weak* closure property
\begin{align*}
\left.
\begin{aligned}
\xi^{\rev{n}}\in \partial \funccp(\x^{\rev{n}}),\\
\x^{\rev{n}} \stackrel{\sigma}{\to} \x\in \operatorname{dom}(\partial\funccp),\\
\xi^{\rev{n}} \rightharpoonup^* \xi
\end{aligned}\right\}
\Rightarrow
\left\{
\begin{aligned}
\xi \in \partial \funccp(\x),\\
\funccp(\x^{\rev{n}})\to \funccp(\x)
\end{aligned}\right.
\end{align*}
of its subdifferential, c.f. \cref{pro:C1:ii}. The $\sigma$-lower semicontinuity of the slope \cref{asm: 03b} and \labelcref{eq: slope} are almost immediate consequences of the closure property, as was shown in \cite[Lemma  2.3.6,Theorem  2.3.8]{ambrosio05}.

\begin{example}
As an application of \cref{cor: FinC1}, we consider the finite dimensional adversarial setting introduced in \cref{ch: intro}, i.e., we choose $\Ban=\R^\ddim$. Let 
\begin{align*}
\funccp(\x)\coloneqq \underbrace{-\loss(\hyp(\x),y)}_{=\funccpd} + \underbrace{\chara_{\cl{B_\budget}(\x^0)}}_{\funccpc} \nc,
\end{align*}
then by the chain rule $\funccpd\in C^1(\Ban)$, if $h\in C^1(\Ban;\Oup)$ and $\ell\in C^1(\Oup\times\Oup)$. We consider a neural network $\hyp = \phi^L\circ\ldots\circ\phi^1$ with the $l$th layer being given as
\begin{align*}
\phi^l:\R^{d^l}\to\R^{d^{l+1}},  \phi^l(\xx):= \actfun(W \xx + b),
\end{align*}
for a weight matrix $W\in\R^{d^{l+1}, d^l}$, bias $b\in\R^{d^{l+1}}$ and activation function $\actfun:\R\to\R$, which is applied entry-wise. Therefore, the network $\hyp$ is $C^1$ if its activation function is in $C^1(\R)$. Typical examples that fulfill this assumption are the Sigmoid function and smooth approximations to ReLU \cite{fukushima1980neocognitron}, such as GeLU \cite{hendrycks2023gaussian}, see also \cref{sec:num} for more details on such activation functions. Furthermore, many popular loss functions are in $C^1(\Oup\times\Oup)$,  like the \textit{Mean Squared Error} (MSE) or \textit{Cross-Entropy}  paired with a \textit{Softmax} layer \cite{boltzmann1868studien,good1952rational,cybenko1998mathematics}. On the other hand, the \textit{Root Mean Squared Error} (RMSE) is not differentiable whenever a component is $0$. 
\end{example}

\cref{lm: EqiSlope} provides an alternative characterization of the metric slope, employing a $\limsup$ formulation. The next two lemmas show that $C^1$-perturbations are regular enough, such that the limes superior can be replaced by a standard limit. This is used in \cref{lm:PotSlope}. The first lemma establishes the fact, that for convex functionals, there is a minimizing sequence for the value of $\funccp_\tau(\x)$ that lies on the boundary $\partial B_\tau(\x)$.
\rev{
\begin{remark}
Similar to \cite[Sec. 3.1]{ambrosio05}, we remark some properties of $\funccp_\tau(x)=\inf_{\xatt\in \cl{B_\tau}(x)}\funccp(\xatt)$ in the case, when $\funccp$ is convex. Since $\funccp_\tau(x)$ is defined via the infimal convolution, see \cref{rem:infconv}, we can directly infer convexity in the $x$ argument, if $\funccp$ was already convex. Furthermore, we also have convexity in $\tau$, which can be seen as follows. Let $\tau_1,\tau_2\geq 0$ be arbitrary, where we also allow them to attain $0$. For any $z_1\in \cl{B_{\tau_1}}(x), z_2\in \cl{B_{\tau_2}}(x)$ we have that $\lambda z_1 + (1-\lambda) z_2 \in \cl{B_{\tilde{\tau}}}(x)$ with $\tilde{\tau}=\lambda \tau_1 + (1-\lambda)\tau_2$ for any $\lambda\in [0,1]$. The definition of $\funccp_\tau$ and the convexity of $\funccp$ yields
\begin{align*}
\funccp_{\tilde{\tau}}(x)\leq
\funccp(\lambda z_1 + (1-\lambda) z_2) \leq \lambda\funccp(z_1) + (1-\lambda)\funccp(z_2)
\end{align*}
and since $z_1\in \cl{B_{\tau_1}}(x), z_2\in \cl{B_{\tau_2}}(x)$ were arbitrary, we obtain
\begin{align*}
\funccp_{\tilde{\tau}}(x)\leq \lambda \funccp_{\tau_1}(x) + (1-\lambda)\funccp_{\tau_2}(x).  
\end{align*}
If $x\in\dom(\funccp))$, we have that $\dom(\tau\mapsto \funccp_\tau(x))=[0,\infty)$ and thus $\tau\to\funccp_\tau$ is continous on $(0,\infty)$. If $\funccp$ is lower semicontinous, we also obtain continoutiy at $0$.
\end{remark}
}
\rev{\begin{lemma}\label{lm: conBound}
If $\funccp$ is a proper, convex, lower semicontinuous function then for all $x\in \dom(\funccp)$ with $|\partial\funccp(\x)|\not = 0$ there is an $\epsilon>0$ such that for all $0<\tau<\epsilon$ there exists a sequence $\seq{\x}$ with
\begin{align}\label{eq:approx}
\funccp(\x^{\rev{n}})\rightarrow \funccp_{\tau}(\x)\quad and \quad  \|\x-\x^{\rev{n}}\|=\tau \ \forall n\in\N.
\end{align}
If in addition the Banach space $\Ban$ is reflexive then there exists $\x_\tau\in\Ban$ with
\begin{align*}
\funccp(\x_\tau)=\funccp_{\tau}(\x)\quad and \quad  \|\x-\x_\tau\|=\tau.
\end{align*}
\end{lemma}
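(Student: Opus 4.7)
The plan is to separate the two assertions and identify a threshold $\epsilon>0$ beyond which the infimum on $\cl{B_\tau(\x)}$ is forced onto the sphere. Concretely, I would take $\epsilon := \dist(\x, \argmin \func)$ with the convention $\dist(\x,\emptyset):=+\infty$. Since $\func$ is convex, $|\partial \func|(\x)$ coincides with the dual-norm distance of $0$ from $\partial \func(\x)$, so $|\partial \func|(\x) \neq 0$ gives $0 \notin \partial \func(\x)$, equivalently $\x \notin \argmin \func$. As $\argmin \func$ is a closed convex set (being the sublevel set of the lsc $\func$ at level $\inf \func$), we have $\epsilon > 0$, and for every $0 < \tau < \epsilon$, $\cl{B_\tau(\x)} \cap \argmin \func = \emptyset$.

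For part (ii), where $\Ban$ is reflexive, the ball $\cl{B_\tau(\x)}$ is weakly compact by Banach--Alaoglu, and $\func$ is weakly lower semicontinuous by Mazur's lemma. Hence $\func$ attains its infimum $\func_\tau(\x)$ at some $\x_\tau \in \cl{B_\tau(\x)}$. If $\|\x_\tau - \x\| < \tau$, the Fermat rule at the interior point $\x_\tau$ gives $0 \in \partial \func(\x_\tau)$, placing $\x_\tau \in \argmin \func$---contradicting $\tau < \epsilon = \dist(\x, \argmin \func)$. Therefore $\|\x_\tau - \x\| = \tau$, which is the claim of (ii).

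For part (i), in general $\Ban$ the infimum need not be attained, so I would work with the value function $g(t) := \func_t(\x) = \inf_{\cl{B_t(\x)}} \func$. A convex-combination of approximate minimizers on balls of radii $t_1$ and $t_2$ shows $g$ is convex (and clearly non-increasing). Combining this with \cref{lm: EqiSlope} yields $g_+'(0) = -|\partial \func|(\x) < 0$, and by right-continuity of the right derivative of a convex function, $g_+'(t) < 0$ on $[0,\epsilon')$ for some $\epsilon' > 0$---so after possibly shrinking $\epsilon$ we may assume $g$ is strictly decreasing on $[0,\epsilon)$. Strict decrease at $\tau$ forces any minimizing sequence $\{y_n\} \subset \cl{B_\tau(\x)}$ with $\func(y_n) \to g(\tau) = \func_\tau(\x)$ to satisfy $\|y_n - \x\| \to \tau$: otherwise a subsequence would lie in $\cl{B_{\tau-\delta}(\x)}$ for some $\delta > 0$, giving $g(\tau - \delta) \leq g(\tau)$, a contradiction. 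I would then project radially onto the sphere via $\x_n := \x + \tau(y_n - \x)/\|y_n - \x\|$, so that $\|\x_n - \x\| = \tau$ exactly, and invoke local Lipschitzness of the convex $\func$ on a neighborhood of $\x \in \operatorname{int}\dom\func$ (with constant $L$) to bound $|\func(\x_n) - \func(y_n)| \leq L(\tau - \|y_n - \x\|) \to 0$, whence $\func(\x_n) \to \func_\tau(\x)$.

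The hard part is the push-to-sphere step in (i). A purely convex argument along the ray from $\x$ through $y_n$ only yields a lower bound $\func(\x_n) \geq \func(y_n) + o(1)$, so the matching upper bound must come from continuity. Local Lipschitzness of a convex function is automatic in the interior of its effective domain---the natural setting for the paper's applications---but would need extra care if $\x$ lay on the relative boundary of $\dom\func$.
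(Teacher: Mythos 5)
Your argument for the reflexive case is correct and is a genuinely different (and in one respect cleaner) route than the paper's: the paper extracts a weak limit of the sphere sequence and does not address why that limit stays \emph{on} the sphere rather than merely in the closed ball, whereas your Fermat-rule contradiction ($\norm{\x_\tau-\x}<\tau$ would force $0\in\partial\func(\x_\tau)$, hence $\x_\tau\in\argmin\func$, contradicting $\tau<\dist(\x,\argmin\func)$) pins the minimizer to the sphere directly. Your preliminary observations for part (i) — convexity and strict monotonicity of $t\mapsto\func_t(\x)$ near $0$, and the consequence that any minimizing sequence satisfies $\norm{y_n-\x}\to\tau$ — are also sound.

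The gap is the push-to-sphere step in part (i), and it is not merely a matter of ``extra care.'' The lemma is stated for arbitrary $\x\in\dom(\func)$, and the paper applies it (through \cref{lm:Lim} and \cref{lm:PotSlope}) to energies whose convex part is an indicator function such as $\chara_{\cl{B_\budget}(\x^0)}$; there the relevant points $\x$ lie on the boundary of $\dom(\func)$, local Lipschitz continuity fails, and worse, your radial projection $\x_n=\x+\tau(y_n-\x)/\norm{y_n-\x}$ pushes $y_n$ \emph{outward} along the ray from $\x$, which can exit $\dom(\func)$ entirely and give $\func(\x_n)=+\infty$. So the upper bound $\func(\x_n)\le\func(y_n)+o(1)$ genuinely breaks, not just quantitatively. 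The paper's device that closes this gap is to exploit $\func_\tau(\x)>\func_\epsilon(\x)$ to pick auxiliary points $\hat\x_n$ \emph{outside} $\cl{B_\tau}(\x)$ but with $\func(\hat\x_n)<\func(\xatt_n)$ (hence in $\dom(\func)$), and to take $\x_n$ as the intersection of the chord $[\xatt_n,\hat\x_n]$ with the sphere $\partial B_\tau(\x)$; convexity along that chord then yields $\func_\tau(\x)\le\func(\x_n)\le\func(\xatt_n)$ with no continuity assumption on $\func$ whatsoever. Replacing your radial projection by this chord-intersection argument repairs part (i) and makes the preceding monotonicity analysis unnecessary.
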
%
}
\rev{%
\begin{proof}
Let $\x\in \dom(\funccp)$ with $|\partial\funccp(\x)|\not = 0$, then the mapping $\tau\mapsto\funccp_\tau(\x)$ is non-increasing and not constant. Therefore, we can find an $\epsilon> 0$ such that $\funccp_\tau(\x)>\funccp_{\epsilon}(\x)$ for all $0<\tau<\epsilon$. Let $\seq{\xatt}$ be a sequence such that
\begin{align*}
\lim_{n\to\infty} \funccp(\xatt_n) = \funccp_\tau(\x).
\end{align*}
Since $\funccp_\tau(\x)>\funccp_{\epsilon}(\x)$ we can find an element $\hat{\x}$ that fulfills 
\begin{align*}
\funccp(\xatt_n)>\funccp(\hat{\x})\qquad\text{for every } n\in\N\qquad\text{and}\qquad
\tau < \norm{\x- \hat{\x}} \leq {\epsilon}.
\end{align*}
Since $\hat{\x}\notin \cl{B_\tau}(\x)$ and $\xatt_n\in \cl{B_\tau}(\x)$, the line between each pair $(\hat{\x}, \xatt_n)$, 
\begin{align*}
c_n:t\in[0,1]\mapsto t\hat{\x}+(1-t)\xatt_n    
\end{align*}
has to intersect the sphere $\partial B_\tau(\x)$ at some point $t_n\in [0,1)$, where we define the intersection point as $\x_n=c_n(t_n)\in \partial B_\tau(\x)$. Due to convexity, we obtain 
\begin{align*}
\funccp_\tau(\x) \leq \funccp(\x_n) = \funccp(t_n\hat{\x}+(1-t_n)\xatt_n)\leq t_n \funccp(\hat{\x}) + (1-t_n)\funccp(\xatt_n) \leq  \funccp(\xatt_n).
\end{align*}
Note, that the last inequality would only be strict, if $t_n\neq 0$, however, since $\xatt_n$ might already be lying on the sphere, we only obtain the weak inequality. The sequence $\x_n$ now is the desired sequence in \labelcref{eq:approx}.\par

In the reflexive case, the weak compactness of the unit ball guarantees weak convergence of a subsequence of $\seq{\x}$ to some $\x_\tau\in \cl{B_\tau}(\x)$. Lower semicontinuity and convexity imply weak lower semicontinuity of $\funccp$ and thus 
\begin{align*}
\funccp_\tau(\x)\leq\funccp(\x_\tau) \leq \liminf_{n\to\infty} \funccp(\x_n) = \funccp_\tau(\x).
\end{align*}
As above, we can choose an element $\hat{x}$ with $\norm{\hat{x} - x}>\tau$ with $\funccp(x_\tau) > \funccp(\hat{x})$. Applying the same argument as above, there is some $t\in[0,1)$ such that $t\hat{x} + (1-t) x_\tau$ intersects $\partial B_\tau(x)$. As above, if $t\neq 0$, convexity yields
\begin{align}
\funccp_\tau(\x)\leq \funccp(t\hat{x} + (1-t) x_\tau) < 
\funccp(x_\tau),
\end{align}
which contradicts the fact that $\funccp_\tau(\x)=\funccp(x_\tau)$ and thus $x_\tau$ must have already been on the boundary.
\end{proof}
}

Using the previous lemma, we can now show that for $C^1$-perturbations of convex functions, we can replace the $\limsup$ in \cref{lm: EqiSlope} by a normal limit.
\begin{lemma}\label{lm:Lim}
Let $\funccp:\Ban\rightarrow (-\infty,+\infty]$ admit a decomposition $\funccp=\funccpc+\funccpd$, into a proper, lower semicontinuous convex function $\funccpc$ and a $C^1$-function $\funccpd$, then for all $x\in \dom(\funccp)$ we have
\begin{align}\label{eq: LimSlop1}
|\partial\funccp|(\x)=\lim_{\tau\rightarrow 0^+}\frac{\funccp(\x)-\funccp_\tau(\x)}{\tau}.
\end{align}
\end{lemma}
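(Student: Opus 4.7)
The plan is to reduce the claim to the purely convex case by linearizing the $C^1$-part around $\x$. Concretely, I would introduce the auxiliary function
\[
\tilde{\funcc}(\xx)\coloneqq\funcc(\xx)+\langle D\funcd(\x),\xx-\x\rangle,
\]
which is proper, lower semicontinuous and convex, as the sum of a proper lsc convex function and a continuous affine functional. By construction $\tilde{\funcc}(\x)=\funcc(\x)$, and since $\langle D\funcd(\x),\cdot-\x\rangle$ is smooth with gradient $D\funcd(\x)$, \cref{pro:C1:i} gives $\partial\tilde{\funcc}(\x)=\partial\funcc(\x)+D\funcd(\x)=\partial\func(\x)$. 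Applying \cref{pro:C1:iii} to both $\tilde{\funcc}$ and $\func$ then yields $|\partial\tilde{\funcc}|(\x)=|\partial\func|(\x)$, so it suffices to prove the limit formula for $\tilde{\funcc}$.

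Next I would compare $\func_\tau(\x)$ and $\tilde{\funcc}_\tau(\x)$ via a uniform first-order expansion of $\funcd$. For $\xx\in\cl{B_\tau}(\x)$ the fundamental theorem of calculus gives
\[
\funcd(\xx)-\funcd(\x)-\langle D\funcd(\x),\xx-\x\rangle
=\int_0^1 \langle D\funcd(\x+t(\xx-\x))-D\funcd(\x),\xx-\x\rangle \de t
\eqqcolon r_\tau(\xx),
\]
with $|r_\tau(\xx)|\leq \eta(\tau)\,\tau$ and $\eta(\tau)\coloneqq\sup_{y\in\cl{B_\tau}(\x)}\norm{D\funcd(y)-D\funcd(\x)}_*$, which tends to $0$ as $\tau\to 0^+$ by continuity of $D\funcd$. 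Substituting yields $\func(\x)-\func(\xx)=[\tilde{\funcc}(\x)-\tilde{\funcc}(\xx)]-r_\tau(\xx)$, and taking the supremum over $\cl{B_\tau}(\x)$ gives
\[
\left|\frac{\func(\x)-\func_\tau(\x)}{\tau}-\frac{\tilde{\funcc}(\x)-\tilde{\funcc}_\tau(\x)}{\tau}\right|\leq \eta(\tau)\xrightarrow{\tau\to 0^+}0.
\]

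It remains to treat $\tilde{\funcc}$. I would observe that $\tau\mapsto\tilde{\funcc}_\tau(\x)$ is convex: for $\tau_\lambda=\lambda\tau_1+(1-\lambda)\tau_2$, any near-minimizers $\xx_i\in\cl{B_{\tau_i}}(\x)$ satisfy $\lambda\xx_1+(1-\lambda)\xx_2\in\cl{B_{\tau_\lambda}}(\x)$ by the triangle inequality, and convexity of $\tilde{\funcc}$ yields $\tilde{\funcc}_{\tau_\lambda}(\x)\leq\lambda\tilde{\funcc}_{\tau_1}(\x)+(1-\lambda)\tilde{\funcc}_{\tau_2}(\x)$. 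Hence $\tau\mapsto\tilde{\funcc}(\x)-\tilde{\funcc}_\tau(\x)$ is concave, nonnegative and vanishes at $\tau=0$, so its difference quotient $\tau\mapsto(\tilde{\funcc}(\x)-\tilde{\funcc}_\tau(\x))/\tau$ is monotone non-increasing on $(0,\infty)$. The $\limsup$ from \cref{lm: EqiSlope} is therefore a genuine limit and equals $|\partial\tilde{\funcc}|(\x)$. Combining with the comparison estimate above gives existence of the limit in \labelcref{eq: LimSlop1} and its identification with $|\partial\func|(\x)$.

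The main obstacle is the uniformity of the Taylor remainder $r_\tau$ over the whole ball $\cl{B_\tau}(\x)$: only this uniform $o(\tau)$-bound allows one to pull the $C^1$-perturbation out of the infimum defining $\func_\tau$ and thereby inherit existence of the limit from the convex case. The boundary case $\partial\func(\x)=\emptyset$ (i.e.\ $|\partial\func|(\x)=+\infty$) poses no extra trouble, since then also $\partial\tilde\funcc(\x)=\emptyset$ and the concave difference quotient for $\tilde\funcc$ diverges to $+\infty$ as $\tau\to 0^+$, so that the comparison estimate forces the same for $\func$.
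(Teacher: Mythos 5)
Your proposal is correct, and its overall architecture coincides with the paper's: linearize $\funcd$ at $\x$ (your $\tilde{\funcc}$ is the paper's auxiliary convex function $\mathcal{C}$ up to the additive constant $\funcd(\x)$), compare the two difference quotients via a uniform $o(\tau)$ remainder bound, and settle the convex case by monotonicity of the quotient. Where you genuinely diverge is in the convex step: the paper invokes \cref{lm: conBound} to produce a minimizing sequence on the sphere $\partial B_\tau(\x)$ and then runs the convexity argument along segments from $\x$ to those boundary points, whereas you prove directly that $\tau\mapsto\tilde{\funcc}_\tau(\x)$ is convex (via $\lambda\cl{B_{\tau_1}}(\x)+(1-\lambda)\cl{B_{\tau_2}}(\x)\subset\cl{B_{\tau_\lambda}}(\x)$), so that $\tau\mapsto(\tilde{\funcc}(\x)-\tilde{\funcc}_\tau(\x))/\tau$ is non-increasing as the difference quotient of a concave function vanishing at $0$. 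Your route is arguably cleaner: it bypasses \cref{lm: conBound} entirely (and hence its case distinction on $|\partial\func|(\x)\neq 0$), and your integral-remainder bound $|r_\tau(\xx)|\leq\eta(\tau)\tau$ uniformly over $\cl{B_\tau}(\x)$ is more transparent than the paper's handling of the Fréchet remainder at quasi-minimizers on the sphere. The paper's version, on the other hand, reuses \cref{lm: conBound} elsewhere (it is needed independently, e.g.\ in the reflexive attainment statement), which is presumably why it is organized that way. Your final identification $|\partial\tilde{\funcc}|(\x)=|\partial\func|(\x)$ via \cref{pro:C1:i,pro:C1:iii} is fine, though strictly redundant: once the limit for $\func$ is known to exist, \cref{lm: EqiSlope} already identifies it with $|\partial\func|(\x)$.
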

\begin{proof}\phantom{-}\\
\textbf{Step 1: The convex case.}\\
We first assume that $\funccp$ is convex. We choose $\tau$ small enough such that by \cref{lm: conBound} we obtain a sequence $\{\x_n\}_n$ with $\|\x-\x_n\|=\tau$ and $\lim_{n\to\infty} \funccp(\x_n) = \funccp_\tau(\x)$. For each $n\in\N$, we consider the line 
\begin{align*}
c_n(t):= t\, \x_n + (1-t)\, \x
\end{align*}
evaluated at $\tilde{t}=\tilde{\tau}/\tau$ for some  $0<\tilde{\tau}<\tau$, which yields $\norm{\x-c_n(\tilde{t})}=\tilde{\tau}/\tau\norm{\x-\x_n} = \tilde{\tau}$. Due to convexity we obtain
\begin{align*}
\funccp(c_n(\tilde{t}))
\leq \tilde{t}\, \funccp(\x_n)+\left(1-\tilde{t}\right)\, \funccp(\x)\qquad \Rightarrow\qquad
\funccp(\x) - \funccp(c_n(\tilde{t})) \geq \tilde{t}\, \left(\funccp(\x) - \funccp(\x_n)\right).
\end{align*}
Using the fact that $\funccp_{\tilde{\tau}}(\x) \leq \funccp(c_n(\tilde{t}))$ and dividing by $\tilde{\tau}$ in the above inequality, yields
\begin{align*}
\frac{\funccp(\x)-\funccp_{\Tilde{\tau}}(\x)}{\Tilde{\tau}}\geq \frac{\funccp(\x)-\funccp(c_n(\tilde{t}))}{\Tilde{\tau}}
\geq
\frac{\funccp(\x)-\funccp(\x_n)}{\tau}.
\end{align*}
Considering the limit $n\to\infty$ we obtain the following inequality,
\begin{align*}
\frac{\funccp(\x)-\funccp_{\Tilde{\tau}}(\x)}{\Tilde{\tau}}\geq \limsup_{n\rightarrow \infty}\frac{\funccp(\x)-\funccp(c_n(\tilde{t}))}{\Tilde{\tau}}\geq \lim_{n\rightarrow \infty}\frac{\funccp(\x)-\funccp(\x_n)}{\tau}=\frac{\funccp(\x)-\funccp_\tau(\x)}{\tau}.
\end{align*}
This shows that $\tau\mapsto Q(\tau) := \frac{\funccp(\x)-\funccp_\tau(\x)}{\tau}$ is decreasing in $\tau$ and therefore, for a null sequence $\tau_n\to 0$, $Q(\tau_n)$ is an increasing sequence. The monotone convergence theorem together with \cref{lm: EqiSlope} shows \cref{eq: LimSlop1}.\\
\textbf{Step 2: Extension to $C^1$-perturbations.}\\ 
We now assume that $\funccp$ is a $C^1$-perturbation of a convex function. By the definition of differentiability, we can write
\def\convexfun{\rev{F}}
\begin{align*}
\funccp(\xx)=\underbrace{\funccpc(\xx)+\funccpd(\x)-\langle D\funccpd(\x),\x-\xx\rangle }_{\eqqcolon  \convexfun(\x)}+R(\x,\x-\xx),
\end{align*}
with $R(\x,\x-\xx)\in o(|\x-\xx|)$ for every $\xx\in\dom(\funccp)$. We observe, that $\convexfun$ is again a convex function. Let $\epsilon>0$, then we denote by $\rev{\x}^{\funccp}_{\tau,\epsilon},\x^{\convexfun}_{\tau,\epsilon}\in \rev{\cl{B_\tau}(\x)}$ the quasi-minimizers that fulfill
\begin{align*}
\funccp(\x^{\funccp}_{\tau,\epsilon})-\funccp_{\tau}(\x)\leq \tau \epsilon\quad \text{and}
\quad \convexfun(\x^{\convexfun}_{\tau,\epsilon})-\convexfun_{\tau}(\x)\leq \tau \epsilon\quad\text{respectively}.
\end{align*}
We use the estimate
\begin{gather*}
\funccp_{\tau}(\x)-\convexfun_{\tau}(\x)\leq
\funccp_{\tau}(\x) - \convexfun(\x^{\convexfun}_{\tau,\epsilon}) + \tau\epsilon
\\
=\underbrace{\funccp_{\tau}(\x)-\funccp(\x^{\convexfun}_{\tau,\epsilon})}_{\leq 0} + R(\x,\x-\x^{\convexfun}_{\tau,\epsilon})+ \tau\epsilon\leq |R(\x,\x-\x^{\convexfun}_{\tau,\epsilon})|+\tau\epsilon
\end{gather*}
and analogously
\begin{gather*}
\convexfun_{\tau}(\x)-\funccp_{\tau}(\x)\leq 
\convexfun_{\tau}(\x)- \funccp(\x^{\funccp}_{\tau,\epsilon}) + \tau\epsilon
\\
=\underbrace{\convexfun_{\tau}(\x)-\convexfun(\x^{\funccp}_{\tau,\epsilon})}_{\leq 0}-R(\x,\x-\x^{\funccp}_{\tau,\epsilon})+\tau \epsilon \leq |R(\x,\x-\x^{\funccp}_{\tau,\epsilon})|+\tau\epsilon,
\end{gather*}
to obtain
\begin{align}\label{eq:ecineq}
|\funccp_{\tau}(\x)-\convexfun_{\tau}(\x)|\leq \max\left\{ |R(\x,\x-\x^{\funccp}_{\tau,\epsilon})|, |R(\x,\x-\x^{\convexfun}_{\tau,\epsilon})|\right\}+\tau \epsilon.
\end{align}
Using that $\funccp(\x) = \convexfun(\x)$ and dividing by $\tau$ in \labelcref{eq:ecineq} yields the inequality
\begin{gather}\label{eq:dif}
\abs{\frac{\funccp(\x)-\funccp_{\tau}(\x)}{\tau}-\frac{\convexfun(\x)-\convexfun_{\tau}(\x)}{\tau} }\leq 
\underbrace{\frac{\max\{ |R(\x,\x-\x^{\funccp}_{\tau}(\epsilon))|, |R(\x,\x-\x^{\convexfun}_{\tau}(\epsilon))|\}}{\tau}}_{:=r(\tau)}+\epsilon.
\end{gather}
\rev{Since $\abs{\x-\x^{\funccp}_\tau(\epsilon)}= \abs{\x-\x^{\convexfun}_\tau(\epsilon)}\leq\tau$ it holds $\lim_{\tau\to 0} r(\tau) = 0$. Taking the $\limsup$ of \eqref{eq:dif} and sending $\epsilon$ to zero then yields,} 
\begin{align*}
\lim_{\tau\to 0^+} \abs{\frac{\funccp(\x)-\funccp_{\tau}(\x)}{\tau}-\frac{\convexfun(\x)-\convexfun_{\tau}(\x)}{\tau} } =  0.
\end{align*}
Therefore, the limit in \labelcref{eq: LimSlop1} exists,
\begin{align*}
\lim_{\tau\to 0^+} \frac{\funccp(\x)-\funccp_{\tau}(\x)}{\tau} &= 
\lim_{\tau\to 0^+} \frac{\funccp(\x)-\funccp_{\tau}(\x)}{\tau}-\frac{\convexfun(\x)-\convexfun_{\tau}(\x)}{\tau}  + \frac{\convexfun(\x)-\convexfun_{\tau}(\x)}{\tau}\\ &= 
\lim_{\tau\to 0^+} \frac{\convexfun(\x)-\convexfun_{\tau}(\x)}{\tau}=\abs{\partial \convexfun}(\x),
\end{align*}
where in the last step we used that $\convexfun$ is convex together with \textbf{Step 1}.
\end{proof}
%}
%
\subsection{Differential inclusions}\label{ch: DifIn}
Similar to \cite[Proposition 1.4.1]{ambrosio05} for finite $p$, we now give a characterization of $\infty$-curves of maximal slope via differential inclusions, whenever the slope of the energy $\funccp$ can be written as
\begin{align}\label{eq: slope}
    |\partial \funccp|(\x)=\min\{\|\xi\|_*\st\xi\in\partial\funccp(\x)\}=\|\partial^\circ \funccp(\x)\|_* \quad \forall \x\in \Ban.
\end{align}
By \cref{pro:C1} this is, e.g., the case for $C^1$-perturbations. Let us start by defining a degenerate duality mapping $\mathcal{J}_\infty: \Ban \rightarrow 2^{\Ban^*}$,

\begin{align*}
\mathcal{J}_\infty(\x)\coloneqq \begin{cases}
 \{\xi \in \Ban^*\st \langle \xi,u\rangle=\| \xi\|_*\} &\text{if } \|\x\|=1,\\
 \{0\} &\text{if } \|\x\|<1,\\
 \emptyset &\text{if } \|\x\|>1,
\end{cases}
\end{align*}
as the limit case of the classical $p$-duality mapping \cite[Definition 2.27]{SchusterKaltenbacherHofmannKazimierski+2012}
\begin{align*}
    \mathcal{J}_p(\x)\coloneqq \left\{ \zeta \in \Ban^*\st \  \langle \zeta , u \rangle =\|\x\| \|\zeta\|_*,\ \|\zeta\|_*=\|x\|^{p-1}\right\}.
\end{align*}
This definition allows us to extend the classical Asplund theorem \cite[Theorem 2.28]{SchusterKaltenbacherHofmannKazimierski+2012} to the limit case. 
\begin{theorem}[Asplund Theorem for $p=\infty$]\label{thm: Asp}
The following identity holds true, 
 \begin{align*}
\mathcal{J}_\infty=\partial \chara_{\cl{\unitb}}.
 \end{align*}  
 \begin{proof}
 For $\x\in \Ban$ with $\|\x\| \neq 1$ the equality holds trivially. Therefore, we consider $\|\x\|=1$.\\
 \textbf{Step 1:}
$\mathcal{J}_\infty(\x)\subset\partial \chara_{\cl{\unitb}}(\x)$.\\
Let $\xi\in \mathcal{J}_\infty(\x)$, which means $\langle \xi, \x\rangle = \norm{\xi}_*$, and consider an arbitrary $\xx\in \Ban$. If $\norm{\xx}\leq 1$ we obtain
 \begin{align*}
 \rev{
\chara_{\cl{\unitb}}(\xx)-\langle \xi, \xx-\x\rangle
=-\langle \xi, \xx\rangle +\|\xi\|_* 
\geq
\|\xi\|_*\,( 1-\|\xx\|)
\geq 0
=\chara_{\cl{\unitb}}(\x),}
 \end{align*}
while for $\|\xx\|> 1$ the inequality holds trivially, thus we have $\xi\in \partial\chara_{\cl{\unitb}}(\x)$.\\
\textbf{Step 2:} $\mathcal{J}_\infty(\x)\supset\partial\chara_{\cl{\unitb}}(\x)$.\\
Let $\xi \in \partial\chara_{\cl{\unitb}}(\x)$, then for all $\xx\in \cl{\unitb}$ we get
\begin{align*}
\underbrace{\partial\chara_{\cl{\unitb}}(\xx)}_{=0}\geq \underbrace{\partial\chara_{\cl{\unitb}}(\x)}_{=0}+\langle \xi,\xx-\x\rangle \Longleftrightarrow \langle \xi, \xx\rangle\leq \langle \xi, \x\rangle\leq \|\xi\|_*.
\end{align*}
Taking the supremum over all $\xx\in \cl{\unitb}$ yields the equality $\langle \xi, \x\rangle= \|\xi\|_*$ and thus $\xi\in\dual_\infty(\x)$.
\end{proof}
\end{theorem}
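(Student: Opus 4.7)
The plan is to prove the set equality $\mathcal{J}_\infty=\partial\chara_{\cl{\unitb}}$ by case analysis on $\|\x\|$, with the only real work occurring on the unit sphere.

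First I would dispose of the two easy regimes. When $\|\x\|>1$, the indicator takes value $+\infty$ at $\x$, so $\x\notin\dom(\chara_{\cl{\unitb}})$ and the Fréchet subdifferential is empty by \eqref{eq: SubDif}, matching the definition $\mathcal{J}_\infty(\x)=\emptyset$. When $\|\x\|<1$, the point lies in the norm-interior of $\cl{\unitb}$, so perturbations $\xx=\x+t v$ for small $t\in\R$ of either sign and arbitrary $v\in\Ban$ remain admissible; the subgradient inequality then forces any $\xi\in\partial\chara_{\cl{\unitb}}(\x)$ to satisfy $\langle\xi,v\rangle=0$ for all $v$, hence $\xi=0$, matching $\mathcal{J}_\infty(\x)=\{0\}$.

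The interesting case is $\|\x\|=1$, which I would treat by double inclusion. For $\mathcal{J}_\infty(\x)\subset\partial\chara_{\cl{\unitb}}(\x)$, given $\xi$ with $\langle\xi,\x\rangle=\|\xi\|_*$, I would verify the subgradient inequality by splitting on $\xx\in\cl{\unitb}$ (where it reduces to $\langle\xi,\xx-\x\rangle\leq 0$ via the duality bound $\langle\xi,\xx\rangle\leq\|\xi\|_*\|\xx\|\leq\|\xi\|_*=\langle\xi,\x\rangle$) and $\xx\notin\cl{\unitb}$ (where it is automatic since the right-hand side is $+\infty$). For the reverse inclusion, a subgradient $\xi\in\partial\chara_{\cl{\unitb}}(\x)$ satisfies $\langle\xi,\xx\rangle\leq\langle\xi,\x\rangle$ for all $\xx\in\cl{\unitb}$; taking the supremum on the left over the closed unit ball yields $\|\xi\|_*\leq\langle\xi,\x\rangle$, and the reverse inequality $\langle\xi,\x\rangle\leq\|\xi\|_*\|\x\|=\|\xi\|_*$ is automatic, forcing equality and thus $\xi\in\mathcal{J}_\infty(\x)$.

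No serious obstacle is anticipated. The one bookkeeping point to keep clear is that the Fréchet subdifferential from \eqref{eq: SubDif} coincides with the convex subdifferential here, since $\chara_{\cl{\unitb}}$ is proper, convex, and lower semicontinuous; this lets the global subgradient inequality be used rather than only its asymptotic form. Beyond that observation, both inclusions reduce to a short manipulation of the dual pairing.
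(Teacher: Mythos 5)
Your proposal is correct and follows essentially the same route as the paper: the same duality estimate $\langle\xi,\xx\rangle\leq\|\xi\|_*\|\xx\|\leq\|\xi\|_*=\langle\xi,\x\rangle$ for the forward inclusion and the same supremum-over-$\cl{\unitb}$ argument for the reverse inclusion on the unit sphere. The only difference is that you spell out the cases $\|\x\|<1$ and $\|\x\|>1$, which the paper dismisses as trivial, and you make explicit the (correct) identification of the Fréchet and convex subdifferentials for this proper, convex, lower semicontinuous indicator.
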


Next, we are interested in the behavior of the energy along curves of maximal slope. We 
derive a more general chain rule for 
subdifferentiable energies, that only requires differentiability along curves. 

\begin{lemma}[Chain rule]\label{rm:ChainR}
Let $u:[0,T] \rightarrow \operatorname{dom}(\funccp)$ be a curve, then at each point $t$ where $u$ and $\funccp \circ u$ are differentiable and $\partial \funccp(u(t)) \neq \emptyset$, we have
\begin{align}
\frac{d}{dt} \funccp(u(t))=\langle \xi, u'(t) \rangle \quad \forall \xi \in \partial \funccp(u(t)) \label{eq: Chain}.
\end{align}
\end{lemma}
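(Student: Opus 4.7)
The strategy is to unpack the definition of the Fréchet subdifferential from \labelcref{eq: SubDif} and evaluate it along the curve, separately pushing through from the right and from the left.

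Fix $t\in(0,T)$ at which $u$ and $\func\circ u$ are differentiable and pick an arbitrary $\xi\in\partial\func(u(t))$. By \labelcref{eq: SubDif} we have, for every $\epsilon>0$, a neighborhood of $u(t)$ on which
\begin{align*}
\func(\xx)-\func(u(t))-\langle\xi,\xx-u(t)\rangle\geq -\epsilon\,\|\xx-u(t)\|.
\end{align*}
Since $u$ is continuous at $t$, this inequality applies to $\xx=u(t+h)$ for all sufficiently small $|h|$.

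First I would take $h>0$, divide by $h$, and use that $\bigl(u(t+h)-u(t)\bigr)/h \to u'(t)$ together with $\|u(t+h)-u(t)\|/h \to \|u'(t)\|$ by differentiability of $u$, and that the left difference quotient of $\func\circ u$ tends to $(\func\circ u)'(t)$. Sending $h\to 0^+$ and then $\epsilon\to 0$ yields
\begin{align*}
\tfrac{d}{dt}\func(u(t))-\langle\xi,u'(t)\rangle\geq 0.
\end{align*}
Next I would repeat the argument with $\xx=u(t-h)$ for $h>0$ and divide by $-h$, which flips the inequality; the same two difference quotients converge (now from the left) to $(\func\circ u)'(t)$ and to $u'(t)$ respectively, while $\|u(t-h)-u(t)\|/h$ stays bounded by $\|u'(t)\|+o(1)$. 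Passing to the limit gives the reverse inequality
\begin{align*}
\tfrac{d}{dt}\func(u(t))-\langle\xi,u'(t)\rangle\leq 0,
\end{align*}
and combining the two bounds produces the claimed equality \labelcref{eq: Chain}.

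The only mildly delicate point is the degenerate case $u'(t)=0$: the definition of $\partial\func$ involves division by $\|\xx-u(t)\|$, so one must verify that the error term $\epsilon\,\|u(t\pm h)-u(t)\|/h$ still vanishes. This is not a real obstacle, because $\|u(t\pm h)-u(t)\|/h\to \|u'(t)\|=0$, so the right-hand side in both passes to the limit tends to $0$, and the argument goes through uniformly for every $\xi\in\partial\func(u(t))$, which is exactly the quantified form of the claim.
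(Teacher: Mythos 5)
Your proof is correct and follows essentially the same route as the paper: both arguments evaluate the Fréchet subdifferential inequality along the curve and exploit the sign of the increment $h$ to obtain the two opposite inequalities whose combination gives \labelcref{eq: Chain}. The only (cosmetic) difference is that you phrase the $\liminf$ in \labelcref{eq: SubDif} as an $\epsilon$-neighborhood bound, which absorbs the degenerate situation $u(t\pm h)=u(t)$ automatically, whereas the paper extracts subsequences, splits the difference quotient into a product of two factors, and treats that degenerate case separately.
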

\begin{proof}
Let $t\in[0,T]$ be a point, where $u$ and $\funccp\circ u$ are differentiable, then we use the definition of the derivative, to obtain
\begin{align*}
\frac{d}{dt}\funccp(u(t))-\langle \xi,u'(t)\rangle&=\lim_{n\to\infty} \frac{\funccp(u(t+h_n))-\funccp(u(t))-\langle \xi , u(t+h_n)-u(t)\rangle}{h_n} =: (\spadesuit),
\end{align*}
where $\{h_n\}_n$ is a null sequence. We first consider only positive null sequences $h_n>0$, where we want to ensure that $u(t+h_n)\neq u(t)$. If such a sequence does not exist, we infer that
$$\frac{d}{dt}\funccp(u(t))= 0 =u'(t)$$ and \labelcref{eq: Chain} holds. Now assuming that there exists a sequence with $u(t+h_n)\neq u(t)$ we continue,
\begin{align*}
(\spadesuit)=\lim_{n\rightarrow \infty} \underbrace{\frac{\funccp(u(t+h_n))-\funccp(u(t))-\langle \xi , u(t+h_n)-u(t)\rangle}{\|\rev{u}(t+h_n)-u(t)\|}}_{=:l_n}\cdot \underbrace{\frac{\|\rev{u}(t+h_n)-u(t)\|}{h_n}}_{r_n}.
\end{align*}
Note, that $r_n\geq 0$ for all $n\in\N$ since we only allowed positive null sequences. Since $u$ is differentiable and in particular continuous at $t$ and since $\xi\in\partial\funccp(u(t))$, \cref{eq: SubDif} yields
\begin{align*}
\liminf_{n\to\infty} l_n \geq 0,
\end{align*}
i.e., for every null sequence $\{h_n\}_n$ we can find a subsequence $\{h_n\}_n$ such that $l_n$ either converges to some limit $l\geq 0$ or diverges to $+\infty$. In the convergent case, we obtain
\begin{align*}
(\spadesuit) = l \cdot \|u'(t)\| \geq 0.
\end{align*}
In the divergent case we also have $(\spadesuit)\geq 0$, since we can find a $n_0$ such that $l_n$ is non-negative for all $n\geq N$. Using the same arguments as above, but only allowing negative null sequences $h_n < 0$, we instead obtain 
$(\spadesuit)\leq 0.$
This finally yields
\begin{align*}
\frac{d}{dt}\funccp(u(t))-\langle \xi,u'(t)\rangle =  0.
\end{align*}
\end{proof}

The chain rule from \cref{rm:ChainR}, together with the characterization of the metric slope \labelcref{eq: slope} enables us to show, that energy dissipation inequality \labelcref{eq:InfFlow} can be equivalently characterized via a differential inclusion.
\begin{theorem}\label{thm: GradFlowForm}
Let $\funccp: \Ban \rightarrow (-\infty,+\infty]$ satisfy \labelcref{eq: slope}  and $u:[0,1] \rightarrow \Ban$ be an a.e.~differentiable Lipschitz curve. Let further $\funccp\circ u$ be a.e. equal to a non-increasing function $\nimap$, then the following are equivalent:
\begin{enumerate}[label=(\roman*)]
\item     
$|u'|(t)\leq 1$ and $\nimap'(t)\leq -|\partial \funccp|(u(t))$ 
for a.e. $t\in[0,T]$,
\item
$\mathcal{J}_\infty(u'(t)) \supset -\partial^\circ \funccp(u(t)) \neq \emptyset$ for a.e. $t\in[0,1]$,
\item 
$u'(t) \in \partial \|\cdot\|_*(-\xi)\cap\Ban=-\argmax_{\x\in \cl{B_1}}\langle \xi, \x\rangle$ for all $\xi \in \partial^\circ \funccp(u(t)) \not= \emptyset,$ and a.e. $t\in (0,T)$.
\end{enumerate}
\end{theorem}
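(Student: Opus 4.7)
The plan is to establish the equivalences by first showing $(ii) \Leftrightarrow (iii)$, which is essentially a duality statement about the indicator of the unit ball and its conjugate, and then $(i) \Leftrightarrow (iii)$, which is where the condition \labelcref{eq: slope} together with the chain rule enters.

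For $(ii) \Leftrightarrow (iii)$, the key tool is \cref{thm: Asp}, which identifies $\mathcal{J}_\infty = \partial \chara_{\cl{\unitb}}$. Since $\chara_{\cl{\unitb}}$ is proper, convex, and lower semicontinuous with Fenchel conjugate $(\chara_{\cl{\unitb}})^* = \|\cdot\|_*$, Fenchel--Young duality yields the equivalence $-\xi \in \partial\chara_{\cl{\unitb}}(u'(t)) \iff u'(t) \in \partial\|\cdot\|_*(-\xi)$. The explicit description $\partial\|\cdot\|_*(-\xi)\cap \Ban = -\argmax_{\x\in\cl{\unitb}}\langle\xi,\x\rangle$ then follows from the variational characterization of the norm's subdifferential: an element $\x^*\in\Ban$ lies in $\partial\|\cdot\|_*(\eta)$ iff $\|\x^*\|\leq 1$ and $\langle\x^*,\eta\rangle = \|\eta\|_*$. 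The non-emptiness $\partial^\circ\func(u(t))\neq\emptyset$ appears identically in both (ii) and (iii), so this equivalence is purely a duality statement and requires no hypothesis on $\func$ beyond \labelcref{eq: slope}.

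For $(iii) \Rightarrow (i)$, fix a full-measure set of $t$ where $u$ is differentiable and $\func\circ u$ is differentiable. Picking any $\xi\in \partial^\circ\func(u(t))$, the argmax condition on $u'(t)$ gives $\|u'(t)\|\leq 1$ and $\langle\xi, u'(t)\rangle = -\|\xi\|_* = -|\partial\func|(u(t))$, the last equality by \labelcref{eq: slope}. The chain rule of \cref{rm:ChainR} then yields $\tfrac{d}{dt}(\func\circ u)(t) = \langle\xi, u'(t)\rangle = -|\partial\func|(u(t))$, which we transfer to $\nimap'(t) = -|\partial\func|(u(t))$ since $\nimap$ coincides a.e.~with $\func\circ u$. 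Conversely, for $(i) \Rightarrow (iii)$: at a.e.~$t$, $\nimap'(t)$ is finite because $\nimap$ is non-increasing, hence $|\partial\func|(u(t))<\infty$ and \labelcref{eq: slope} forces $\partial^\circ\func(u(t))\neq\emptyset$. For any $\xi \in \partial^\circ\func(u(t))$, the chain rule gives $\nimap'(t) = \langle\xi,u'(t)\rangle$, and combining the hypothesis $\nimap'(t)\leq -\|\xi\|_*$ with the elementary lower bound $\langle\xi,u'(t)\rangle \geq -\|\xi\|_*\,|u'|(t)\geq -\|\xi\|_*$ (using $|u'|(t)\leq 1$) forces the equality $\langle -\xi,u'(t)\rangle = \|-\xi\|_*$. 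Together with $\|u'(t)\|\leq 1$, this is exactly the condition $u'(t)\in -\argmax_{\x\in\cl{\unitb}}\langle\xi,\x\rangle$.

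The main technical obstacle lies in transferring the differentiability of $\nimap$ to that of $\func\circ u$, as \cref{rm:ChainR} requires the latter, while the theorem only assumes $\func\circ u = \nimap$ a.e.~without continuity of $\func\circ u$. A clean way to secure this is to combine $|u'|\leq 1$ with the weak upper gradient property of $|\partial\func|$ to deduce the absolute continuity of $\func\circ u$ (along the lines of \cref{rm: DisEqual}), so that after modifying $\nimap$ on a null set as in \cref{rm: boundVar} the identity $\nimap = \func\circ u$ holds everywhere and $\nimap' = (\func\circ u)'$ a.e.; after this reduction the computations above go through verbatim.
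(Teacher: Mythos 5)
Your decomposition and all of the core computations match the paper's: the paper likewise combines the chain rule of \cref{rm:ChainR} with the slope identity \labelcref{eq: slope} and the Fenchel--Young equivalences of \cref{pro:Young} to obtain $(i)\Leftrightarrow(iii)$, and then uses Asplund's theorem (\cref{thm: Asp}) to pass between $\partial\chara_{\cl{\unitb}}$ and $\mathcal{J}_\infty$; that you route the latter as $(ii)\Leftrightarrow(iii)$ while the paper proves $(i)\Leftrightarrow(ii)$ is immaterial. Both pointwise implications in your $(i)\Leftrightarrow(iii)$ argument are correct as computations.

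The one genuine problem is the patch you propose for what you yourself flag as the main technical obstacle. The \emph{weak} upper gradient property cannot deliver absolute continuity of $\func\circ u$: by its definition it only controls $\abs{\psi'}$ for the BV representative $\psi$ and says nothing about the values of $\func\circ u$ on the null set where the two functions differ. Absolute continuity of $\func\circ u$ is exactly what the \emph{strong} upper gradient property provides (\cref{def: strGrad}; this is also the standing assumption in \cref{rm: DisEqual}, which you cite), but the theorem only hypothesizes \labelcref{eq: slope}, which does not make $\abs{\partial\func}$ a strong upper gradient; moreover, in the direction $(iii)\Rightarrow(i)$ you would additionally need the integrability $(\abs{\partial\func}\circ u)\,\abs{u'}\in L^1(0,T)$ before that property could be invoked, and this is not available a priori. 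The paper sidesteps the issue entirely: it fixes the null set $N$ outside of which $u$ and $\psi$ are differentiable and $\func(u(t))=\psi(t)$, and applies the chain rule with $\psi'(t)$ in place of $(\func\circ u)'(t)$. This is legitimate because the argument proving \cref{rm:ChainR} only evaluates difference quotients along null sequences $h_n$, and since $N$ is Lebesgue-null one may choose these sequences with $t+h_n\notin N$, so that the quotients of $\func\circ u$ coincide with those of $\psi$ and converge to $\psi'(t)$. Replacing your absolute-continuity reduction by this pointwise selection argument closes the gap; the remainder of your proof then goes through as written.
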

\begin{proof}\phantom{-}\\
\textbf{Step 1}: $(i)\Leftrightarrow (iii)$.\\
Since $\psi$ is a monotone function it is differentiable a.e., and thus we can find a Lebesgue null set $N\subset[0,T]$, such that $u$ and $\psi$ are differentiable and  $\funccp(u(t))=\psi(t)$ for every $t\in[0,T]\setminus N$. Using \cref{rm:ChainR} and \labelcref{eq: slope}, for $t\in[0,1]\setminus N$ we obtain, 
\begin{align*}
\begin{aligned}
\left.\begin{array}{cc}
\psi'(t) \leq -|\partial \funccp|(u(t))  \\
|u'|(t)\leq 1
\end{array}\right\}
&\Leftrightarrow
\left\{
\begin{array}{cc}
\langle \xi, u'(t)\rangle
=\psi'(t) \leq -\|\xi\|_*\quad\text{for all}\quad\xi\in\partial^\circ\funccp(u(t))\\
|u'|(t)\leq 1
\end{array} 
\right.\\[.5cm]
&\Leftrightarrow 
\langle \xi, u'(t)\rangle\leq - \|\xi\|_* - \chara_{\cl{\unitb}}(u'(t))\quad\text{for all}\quad\xi\in\partial^\circ\funccp(u(t)).
\end{aligned}
\end{align*}
For each $\xi\in\partial^\circ\funccp(u(t))$, the last statement is \cref{prop:youngii} with $f=\chara_{\cl{\unitb}}$ and $f^*=\|\cdot\|_*$, which is equivalent to \cref{prop:youngiv}, i.e.,
\begin{align}\label{eq: equival}%
\langle \xi, u'(t)\rangle\leq - \|\xi\|_* - \chara_{\cl{\unitb}}(u'(t))
\quad &\Leftrightarrow\quad 
u'(t) \in \partial \|\cdot \|_{*}(- \xi),
\end{align}
and thus we have shown $(i)\Leftrightarrow(iii)$. The set identity in $(iii)$,
\begin{align*}
\partial \norm{\cdot}_*(-\xi)\cap\Ban = -\argmax_{\x\in\cl{\unitb}} \langle \xi, \x\rangle,
\end{align*}
follows from \cref{cor:charainclusion}.\\
\textbf{Step 2}: $(i)\Leftrightarrow (ii)$.\\
Using the equivalence of \cref{prop:youngii} and \cref{prop:youngi} in \cref{eq: equival} we also obtain that for a.e. $t\in[0,T]$ and all $\xi\in\partial\funccp^\circ(u(t))$
\begin{align*}
(i) \quad&\Leftrightarrow \quad
-\xi \in \partial \chara_{\cl{\unitb}}(u'(t)).
\end{align*}
From Asplund's Theorem (\cref{thm: Asp}) we have that
\begin{align*}
-\xi \in \partial \chara_{\cl{\unitb}}(u'(t)) \Longleftrightarrow -\xi \in \mathcal{J}_\infty(u'(t))
\end{align*}
which thus implies $(i)\Leftrightarrow (ii)$.
\end{proof}

\subsection{Semi-implicit time stepping}\label{ch: LinTi}
The minimizing movement scheme in \labelcref{eq: MinMov} can be considered as an implicit time stepping scheme, which is often computationally intractable in practice. Therefore, one may want to instead employ an explicit scheme. In this regard, we are interested in minimizing movement schemes of the semi-implicit energy, which in many cases can be computed explicitly. We consider a Banach space $\Ban$ that fulfills \cref{asm: 01,asm: 02} and a $C^1$-perturbation of a convex function $\funccp=\funccpd+\funccpc$, fulfilling assumptions \cref{asm: lc,asm: 03b}. Furthermore, we assume:
\begin{mdframed}
\addtocounter{asscount}{1}
\setcounter{assume}{0}
\begin{assume}[Lipschitz continuous differentiability]\label{asm: LipDi}
The differentiable part $\funccpd$ has a Lipschitz continuous first derivative.
\end{assume}
\end{mdframed}

We can linearize the differentiable part of the energy around a point $z$  and define the linearized energy by
\begin{align*}
\funccps(\x; \xx)\coloneqq \funccpd(\xx)+\langle D \funccpd(\xx),\x-\xx\rangle+\funccpc(\x).
\end{align*}
To ensure that the minimizers in \labelcref{eq: SemiEx} are obtained, we assume:
\begin{mdframed}
\setcounter{assume}{1}
\begin{assume}[Lower semi-continuity]\label{asm: LoSe} The semi linearization 
$\x\mapsto\funccps(\x; \xx)$ is $\sigma$-lower semicontinuous for every $\xx\in\Ban$.
\end{assume}
\end{mdframed} 
\begin{remark}\label{rem:LoSe}
In reflexive spaces this is a very mild assumption, as the $\sigma$-topology is often chosen to be the weak topology. In this case, we only need an assumption on the convex part $\funccpc$, namely lower semicontinuity, which together with convexity implies weak lower semicontinuity. The linearized part $\x\mapsto\funccpd(\xx)+\langle D \funccpd(\xx),\x-\xx\rangle$ is even weakly continuous and therefore, we do not need additional assumptions.
\end{remark}

\begin{definition}[Semi-implicit Scheme]\label{def: SeLiSch}
For $\x^0\in\dom(\funccpc)$, we define the semi-implicit scheme as
\begin{align}\label{eq: SemiEx}
\xs_{\tau}^{k+1}\in\argmin_{\x\in\cl{B_\tau}(\xs_\tau^k\rev{)}} \funccps(\x;\xs_\tau^k),
\end{align}
for $k\in\N$ with $\xs^0_\tau=\x^0$.
We define the step function $\sxs_\tau$ by 
\begin{align*}
\sxs_\tau(0)=\x^0, \quad \sxs_\tau(t)=\xs^k_\tau \qquad\text{if}\qquad t\in (t_\tau^{k-1},t^k_\tau], k\geq1.
\end{align*}
Furthermore, we define 
\begin{align*}
|\xs^\prime_\tau|(t)\coloneqq \frac{d(\xs^k_\tau,\xs_\tau^{k-1})}{t_\tau^k-t_\tau^{k-1}} \text{ if } t \in(t_\tau^{k-1},t_\tau^k)
\end{align*}
as the metric derivative of the corresponding piecewise affine linear interpolation. 
\end{definition}
\rev{%
\begin{remark}
The above scheme can also be recovered via the theory of doubly non-linear equations developed in \cite{mielke2013nonsmooth}. Namely, by considering the state dependent dissipation potential
\begin{align*}
\Psi_z(v) := \chi_{\cl{B_1}}(v) + \funccpd(z) + \langle D\funccpd(z), v \rangle 
\end{align*}
the minimizing movement scheme defined in \cite[Eq. 4.9]{mielke2013nonsmooth} is given as
\begin{align*}
\xs_\tau^{k+1}\in \argmin_{x\in \Ban} \left\{\tau\Psi_{\xs^k_\tau}\left(\frac{x-\xs^k_\tau}{\tau}\right) + \funccpc(x)\right\}
\end{align*}
which exactly recovers the scheme defined in \cref{def: SeLiSch}. The authors show convergence of this scheme towards solution of the equation
\begin{align*}
\partial \Psi_{u(t)}(u'(t)) + \partial \funccpc(u(t)) \ni 0
\end{align*}
which corresponds to the inclusion derived in \cref{thm: GradFlowForm}. However, we cannot directly apply the results of \cite{mielke2013nonsmooth} since the choice of dissipation potential as above violates condition (2.$\Psi_1$), since $\dom(\Psi) \neq \Ban$, (2.$\Psi_2$) since in general $\Psi_u(0)\neq 0$ and the growth condition on the Fenchel conjugate $\Psi_z^*(\xi) = \norm{\xi - D\funccpd(z)}_* - \funccpd(z)$ is not fulfilled and also (2.$\Psi_3$). In fact, a more detailed study on how these assumptions could be relaxed would be very interesting, which we however leave for future work.
\end{remark}
}
An important special case, of the above scheme, is a reflexive Banach space $\Ban$ together with a $C^1$ energy $\funccp$, i.e., we can choose $\funccpc=0$. In this case, the scheme is fully explicit, as the following lemma shows.
\begin{lemma}\label{lem:semiexp}
If the Banach space $\Ban$ is reflexive, and $\funccp\in C^1(\Ban)$ then we can explicitly compute the iterates in \cref{def: SeLiSch} as
\begin{align*}
\xs_{\tau}^{k+1} \in  \xs_{\tau}^{k}-\tau\, \partial \|\cdot\|_*(D \funccp(\xs_{\tau}^{k})). 
\end{align*}
\end{lemma}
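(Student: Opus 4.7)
The plan is to simply unpack the semi-implicit minimization with $\funcc=0$ and $\funcd=\func$ and identify the resulting set with the subdifferential of the dual norm via the same duality used in \cref{thm: GradFlowForm}.

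First, substituting $\funcc=0$ and $\funcd=\func$ into $\funcs(\,\cdot\,;\xs_\tau^k)$, the constant term $\func(\xs_\tau^k)$ plays no role in the argmin, so the update becomes
\begin{align*}
\xs_\tau^{k+1}\in\argmin_{\x\in\cl{B_\tau}(\xs_\tau^k)}\langle D\func(\xs_\tau^k),\,\x-\xs_\tau^k\rangle.
\end{align*}
Next, with the change of variables $y=\x-\xs_\tau^k$, this is equivalent to
\begin{align*}
\xs_\tau^{k+1}-\xs_\tau^k\in\argmin_{y\in\cl{B_\tau}}\langle D\func(\xs_\tau^k),y\rangle
=-\tau\,\argmax_{v\in\cl{B_1}}\langle D\func(\xs_\tau^k),v\rangle,
\end{align*}
using positive homogeneity of the ball constraint and a sign flip to convert the min into a max.

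The remaining step is to identify this argmax with $\partial\|\cdot\|_*(D\func(\xs_\tau^k))$. Setting $\xi=D\func(\xs_\tau^k)\in\Ban^*$, the set $\argmax_{v\in\cl{B_1}}\langle\xi,v\rangle$ equals $\{v\in\cl{B_1}\st\langle\xi,v\rangle=\|\xi\|_*\}$; this is precisely the characterization of $\partial\|\cdot\|_*(\xi)$, which lives a priori in $\Ban^{**}$ but coincides with a subset of $\Ban$ by reflexivity. This is the same identification already employed in the statement of \cref{thm: GradFlowForm}(iii) via \cref{cor:charainclusion}, so I would simply cite those. Putting the three steps together gives $\xs_\tau^{k+1}\in\xs_\tau^k-\tau\,\partial\|\cdot\|_*(D\func(\xs_\tau^k))$, as claimed.

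Finally, non-emptiness of the argmin has to be justified in order for the scheme to be well defined: since $\Ban$ is reflexive, $\cl{B_\tau}(\xs_\tau^k)$ is weakly sequentially compact by Banach--Alaoglu, and the linear functional $\langle D\func(\xs_\tau^k),\,\cdot\,\rangle$ is weakly continuous, so the minimum is attained. No step here is truly hard; the only subtlety worth flagging is the reflexivity-based identification $\partial\|\cdot\|_*(\xi)\subset\Ban$, which is exactly where the assumption of reflexivity enters both for well-posedness and for the equality with the explicit update.
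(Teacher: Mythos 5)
Your proposal is correct and follows essentially the same route as the paper's proof: drop the constant terms, translate and rescale the ball constraint to reduce the update to $\xs_\tau^k - \tau\,\argmax_{v\in\cl{B_1}}\langle D\func(\xs_\tau^k),v\rangle$, and identify that argmax with $\partial\|\cdot\|_*(D\func(\xs_\tau^k))$ via \cref{cor:charainclusion}. Your additional remark on attainment of the minimum via weak compactness is a sensible supplement (the paper handles well-posedness of the scheme separately through its standing assumptions), but it does not change the argument.
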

\begin{proof}
We compute
\begin{align*}
\xs_{\tau}^{k+1}&
\in\argmin_{\x: \|\x-\xs_{\tau}^{k}\|\leq \tau} \funccp(\xs_{\tau}^{k})+\langle D \funccp(\xs_{\tau}^{k}) , \x-\xs_{\tau}^{k} \rangle\\ 
&=\argmin_{\x: \|\x-\xs_{\tau}^{k}\|\leq \tau} \langle D \funccp(\xs_{\tau}^{k}) , \x\rangle\\
&=-\argmax_{\x: \|\x-\xs_{\tau}^{k}\|\leq \tau} \langle D \funccp(\xs_{\tau}^{k}) , \x\rangle\\
&=
\xs^k_\tau
-\tau \argmax_{\x\in \cl{B_1}} \langle D \funccp(\xs_{\tau}^{k}), \x\rangle\\
&=\xs_{\tau}^{k}-\tau\, \partial \|\cdot\|_*(D \funccp(\xs_{\tau}^{k})),
\end{align*}
where for the last identity, we used \ref{cor:charainclusion}.
\end{proof}
In \cref{ch: AdAt} we consider a case, where $\funccpc\neq 0$, but the scheme can still be computed explicitly. In fact, the iteration then coincides with \labelcref{eq: IFGSM}, which ultimately yields the desired convergence result. 

It is easy to see that the metric slope of $ \funccp$ and its semi linearization $ \funccps(\cdot;z)$ coincide in the point of linearization $\xx$, i.e. $|\partial \funccp|(\xx)=|\partial \funccps(\cdot;\xx)|(\xx)$. The next lemma estimates the difference of their slope when $u$ is not the point of linearization.   

\begin{lemma}\label{lm: SlLinEs}
Let $\funccp$ be a $C^1$-perturbation of a convex function satisfying \cref{asm: LipDi}, then for each $z,\rev{x}\in\Ban$ we have the following estimate
\begin{align}\label{eq: LinSlop}
\abs{|\partial \funccp|(\x)-|\partial \funccps(\cdot;z)|(\x)}\leq \Lip(D \funccpd) \|z-\x\|.
\end{align}
\end{lemma}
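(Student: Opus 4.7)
The plan is to directly compute both slopes in terms of the common convex subdifferential $\partial \funcc(\x)$ and then control their difference by the Lipschitz constant of $D\funcd$. The key observation is that the semi-linearization $\funcs(\cdot;z)$ is itself a $C^1$-perturbation of the convex function $\funcc$, where the $C^1$-part is the affine map $\x\mapsto \funcd(z)+\langle D\funcd(z),\x-z\rangle$ with constant derivative $D\funcd(z)$. Therefore \cref{pro:C1:i} applies to both $\func$ and $\funcs(\cdot;z)$, yielding
\begin{align*}
\partial\func(\x) = \partial\funcc(\x)+D\funcd(\x),\qquad
\partial\funcs(\cdot;z)(\x) = \partial\funcc(\x)+D\funcd(z).
\end{align*}

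Next, I would invoke \cref{pro:C1:iii} to rewrite both slopes as minima of dual norms,
\begin{align*}
|\partial\func|(\x) = \min_{\zeta\in\partial\funcc(\x)}\norm{\zeta+D\funcd(\x)}_*,\qquad
|\partial\funcs(\cdot;z)|(\x) = \min_{\zeta\in\partial\funcc(\x)}\norm{\zeta+D\funcd(z)}_*.
\end{align*}
If $\partial\funcc(\x)=\emptyset$, both minima are $+\infty$ and \labelcref{eq: LinSlop} holds trivially, so I can assume $\partial\funcc(\x)\neq\emptyset$. The reverse triangle inequality gives, for every $\zeta\in\partial\funcc(\x)$,
\begin{align*}
\bigl|\norm{\zeta+D\funcd(\x)}_*-\norm{\zeta+D\funcd(z)}_*\bigr|\leq \norm{D\funcd(\x)-D\funcd(z)}_*\leq \Lip(D\funcd)\,\norm{\x-z}.
\end{align*}

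Finally I would pick, by \cref{pro:C1:iii}, an element $\zeta_1\in\partial\funcc(\x)$ that realizes the minimum for $|\partial\func|(\x)$ and an element $\zeta_2\in\partial\funcc(\x)$ that realizes the minimum for $|\partial\funcs(\cdot;z)|(\x)$. Applying the above inequality to $\zeta_1$ yields
\begin{align*}
|\partial\funcs(\cdot;z)|(\x)\leq \norm{\zeta_1+D\funcd(z)}_*\leq |\partial\func|(\x)+\Lip(D\funcd)\,\norm{\x-z},
\end{align*}
and the symmetric application to $\zeta_2$ yields the reverse bound. Combining the two inequalities gives \labelcref{eq: LinSlop}. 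I do not expect any serious obstacle here: the main subtle point is simply verifying that \cref{pro:C1} (in particular the minimum-norm characterization of the slope) applies to the semi-linearization, which is immediate because $\funcs(\cdot;z)$ has the required additive structure with trivially Lipschitz (constant) $C^1$-part.
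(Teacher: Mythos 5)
Your proof is correct and follows essentially the same route as the paper: decompose both subdifferentials via \cref{pro:C1:i}, use the minimum-dual-norm characterization of the slope from \cref{pro:C1:iii}, and apply the triangle inequality to the minimizer of one slope to bound the other, then symmetrize. Your two small additions -- explicitly noting that $\funcs(\cdot;z)$ is itself a $C^1$-perturbation with constant derivative so that \cref{pro:C1} applies to it, and handling the case $\partial\funcc(\x)=\emptyset$ -- are sensible refinements that the paper leaves implicit, but they do not change the argument.
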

\begin{proof}

Let $z,\rev{x}\in\Ban$, from \cref{pro:C1:i} we know
\begin{align*}
\partial \funccp(\x) &= \partial\funccpc(\x) + D\funccpd(\x),\\
\partial \funccps(\x; z) &= \partial\funccpc(\x) + D\funccpd(\xx),
\end{align*}
and then \cref{pro:C1:iii} implies that there exists $\xi_1,\xi_2\in\partial\funccpc(\x)$ such that
\begin{align*}
\abs{\partial \funccp}(\x) &= \min\set{
\norm{\xi + D\funccpd(\x)}_*:\xi\in\partial\funccpc(\x)} = \norm{\xi_1 + D\funccpd(\x)}_*,\\
\abs{\partial\funccps(\cdot; z)}(\x) &= \min\set{
\norm{\xi + D\funccpd(\xx)}_*:\xi\in\partial\funccpc(\x)} = \norm{\xi_2 + D\funccpd(\xx)}_*.
\end{align*}
We can then estimate
\begin{align*}
\abs{\partial\funccp}(\x)
&\leq 
\norm{D \funccpd(\x)+ \xi_2}_*
\leq 
\|D \funccpd(\x)-D \funccpd(\xx)\|_*+\|D \funccpd(\xx)+\xi_2\|_*\\
&\leq \Lip(D \funccpd)\|\x-z\|+\abs{\partial\funccps(\cdot; z)}(\x),
\end{align*}
and therefore
\begin{align*}
\abs{\partial\funccp}(\x) -  \abs{\partial\funccps(\cdot; z)}(\x)\leq  \Lip(D \funccpd)\|\x-z\|.
\end{align*}
Analogously, we estimate 
\begin{align*}
\abs{\partial \funccps(\cdot,\xx)}(\x)
&\leq \| D \funccpd(\xx) +\xi_1\|\leq \| D \funccpd(\xx)-D \funccpd(\x)\|+\|D \funccp(\x)+\xi_1\|\\
&\leq \Lip(D \funccpd)\|\x-z\|+|\partial\funccp| (\x).
\end{align*}
and therefore
\begin{align*}
\abs{\partial\funccps(\cdot; z)}(\x)-\abs{\partial\funccp}(\x)\leq  \Lip(D \funccpd)\|\x-z\|.
\end{align*}
This concludes the proof.
\end{proof}
In the following, we want to define 
a variational interpolation similar to \cref{def: DeGiorgi}. Therefore, we consider 
\begin{align*}
\funccps_\tau(\x;\xx) = \min_{\xatt\in \cl{B_\tau}(\x)} \funccps(\xatt;\xx).
\end{align*}
For better readability, if $\xx$ and $\x$ coincide above, we set
\begin{align*}
\funccps_\tau(\x)\coloneqq\funccps_\tau(\x;\x)= \min_{\xatt\in \cl{B_\tau}(\x)} \funccps(\xatt; \x).
\end{align*}
\begin{definition}
[Semi-implicit variational interpolation]\label{df: SemiLin}
We denote by $\vxs_\tau:[0,T]\rightarrow \Ban$ any interpolation of the discrete values satisfying
\begin{align*}
\vxs_\tau(t)
%=\vxs_\tau(t^{k-1}_\tau+\delta)
\in \arg\min_{\x} 
\set{\funccps(\x;\xs^{k-1}_\tau)\st d(\x,\xs_\tau^{k-1})\leq t-t^{k-1}_\tau}
\end{align*}
if $t \in (t_\tau^{k-1},t^k_\tau]$ and $ k\geq1$. Furthermore, we define
\begin{align}%\label{eq: LinDDef}
\mathcal{D}_\tau(t)\coloneqq 
\frac{d}{dt} \funccps_{(t-t_\tau^{k-1})}(\xs^{k-1}_\tau).
\end{align}
\end{definition}
The following Lemma shows that the variational interpolation of the semi-implicit minimizing movement scheme satisfies the same properties, \labelcref{eq: DDef} and \labelcref{eq: D}, as the \textit{De  Giorgi variational interpolation}, up to an error in $\mathcal{O}(\tau)$. 
\begin{lemma}\label{lm: SemiLin}
We have that
\begin{align}\label{eq: LinDDef}
\mathcal{D}_\tau(t)= 
\frac{d}{dt} \funccps_{(t-t_\tau^{k-1})}(\xs^{k-1}_\tau) 
\leq -|\partial \funccps(\cdot;\xs_\tau^{k-1})|(\vxs_\tau(t))=-|\partial \funccp |(\vxs_\tau(t))+\mathcal{O}(\tau) \text{ if } t\in(t_\tau^{k-1},t_\tau^k]
\end{align}
and
\begin{align}\label{eq: LinD}
\funccp(\vxs_\tau(s))+\int_s^t D_\tau(r) dr\geq \funccp(\vxs_\tau(t)) +\mathcal{O}(\tau)\quad\forall\  0\leq s\leq t\leq T.
\end{align}
\end{lemma}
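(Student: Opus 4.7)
The two displayed bounds decouple: \cref{eq: LinDDef} is a pointwise estimate obtained by applying \cref{lem: Dif} to the linearized functional and combining it with the slope-perturbation estimate \cref{lm: SlLinEs}, whereas \cref{eq: LinD} arises from a clean one-interval integral bound chained over at most $T/\tau$ intervals.

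For \cref{eq: LinDDef}, the first equality is the definition of $\mathcal{D}_\tau$ in \cref{df: SemiLin}. The plan for the inequality is to apply \cref{lem: Dif} to $\funcs(\cdot;\xs_\tau^{k-1})$, which is itself a $C^1$-perturbation of a convex function (its differentiable part is linear in $\x$) and is $\sigma$-lower semicontinuous by \cref{asm: LoSe}. The minimizer in \cref{eq: MinMiz} at radius $t - t_\tau^{k-1}$ is, by \cref{df: SemiLin}, exactly $\vxs_\tau(t)$, yielding $\mathcal{D}_\tau(t) \leq -|\partial\funcs(\cdot;\xs_\tau^{k-1})|(\vxs_\tau(t))$. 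To replace this slope by that of $\func$, I would invoke \cref{lm: SlLinEs} with $z = \xs_\tau^{k-1}$ and $\x = \vxs_\tau(t)$, using $\|\vxs_\tau(t)-\xs_\tau^{k-1}\|\leq t - t_\tau^{k-1}\leq \tau$ to extract the $\mathcal{O}(\tau)$ remainder.

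For \cref{eq: LinD}, I would first establish a one-interval version. Applying \cref{eq: IntIn} from \cref{lem: Dif} to $\funcs(\cdot;\xs_\tau^{k-1})$ with $\tilde\tau_1 = s - t_\tau^{k-1}$ and $\tilde\tau_2 = t - t_\tau^{k-1}$ for $s \leq t$ in $[t_\tau^{k-1}, t_\tau^k]$ gives
\begin{align*}
\funcs(\vxs_\tau(s); \xs_\tau^{k-1}) + \int_s^t \mathcal{D}_\tau(r)\, dr \geq \funcs(\vxs_\tau(t); \xs_\tau^{k-1}).
\end{align*}
By \cref{asm: LipDi} and the fundamental theorem of calculus, $|\func(\x) - \funcs(\x;z)| \leq \tfrac{1}{2}\Lip(D\funcd)\|\x-z\|^2$ for all $\x,z\in\Ban$; since $\vxs_\tau(s)$ and $\vxs_\tau(t)$ both lie within distance $\tau$ of $\xs_\tau^{k-1}$, this promotes the previous bound to
\begin{align*}
\func(\vxs_\tau(s)) + \int_s^t \mathcal{D}_\tau(r)\, dr + C\tau^2 \geq \func(\vxs_\tau(t))
\end{align*}
with $C = \Lip(D\funcd)$.

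Finally I would chain across intervals. Because $\vxs_\tau(t_\tau^k) = \xs_\tau^k$ (the minimizer attained on the ball of radius $\tau$ around $\xs_\tau^{k-1}$), the one-interval bound applied on $[s, t_\tau^{k_s}]$, on the full intervals $[t_\tau^{j-1}, t_\tau^{j}]$ for $j = k_s+1, \ldots, k_t-1$, and on $[t_\tau^{k_t-1}, t]$ telescopes to yield \cref{eq: LinD} with cumulative error at most $(k_t - k_s + 1)\,C\tau^2 \leq C(T+\tau)\tau = \mathcal{O}(\tau)$. The main obstacle I expect is keeping this bookkeeping uniform in $s$ and $t$; the reason it succeeds is that the per-interval defect is quadratic in $\tau$ while the number of intervals is bounded by $T/\tau + 1$, so their product is $\mathcal{O}(\tau)$ independently of the choice of $s, t$.
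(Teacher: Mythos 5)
Your proposal is correct and follows essentially the same route as the paper's proof: apply \cref{lem: Dif} to the linearized energy $\funcs(\cdot;\xs_\tau^{k-1})$ and use \cref{lm: SlLinEs} for \labelcref{eq: LinDDef}, then combine the one-interval integral inequality with the quadratic bound $|\func(\x)-\funcs(\x;z)|\leq \tfrac{1}{2}\Lip(D\funcd)\|\x-z\|^2$ (which the paper obtains via Taylor's formula with integral remainder) and telescope over at most $T/\tau$ intervals for \labelcref{eq: LinD}. Your explicit observation that $\vxs_\tau(t_\tau^k)=\xs_\tau^k$, which justifies the chaining, is a point the paper uses implicitly.
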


\begin{proof}
For \labelcref{eq: LinDDef}, we apply \cref{lem: Dif} to the mapping 
$\x\mapsto\funccps(\x; \xs_\tau^{k-1})$ to obtain 
\begin{align*}
\frac{d}{dt} \funccps_{(t-t_\tau^{k-1})}(\x;\xs_\tau^{k-1}) \leq |\partial \funccps(\cdot,\xs_\tau^{k-1})|\left(\x_{\text{min},t-t^{k-1}_\tau}\right),
\end{align*}
where $\x_{\mathrm{min},t-t^{k-1}_\tau}\in \argmin_{\xatt} \{\funccps(\xatt; \xs_\tau^{k-1})\st \xatt\in \cl{B_\tau}(\x)\}$. Choosing $v=\xs_\tau^{k-1}$ then yields
\begin{align*}
\frac{d}{dt} \funccps_{(t-t_\tau^{k-1})}(\xs_\tau^{k-1}) \leq -|\partial \funccps(\cdot;\xs_\tau^{k-1})|(\vxs_\tau(t)).
\end{align*}
The last equality of \labelcref{eq: LinDDef}, follows by \cref{lm: SlLinEs}. To show \labelcref{eq: LinD} we again use \cref{lem: Dif} and get
\begin{align*}
\funccps(\vxs_\tau(s);\xs^k_\tau)+\int_{s}^t \mathcal{D}_\tau(r) dr\geq  \funccps(\vxs_\tau(t),\xs^k_\tau) \quad\text{for all}\quad t_\tau^k\leq s\leq t\leq t_\tau^{k+1}.
\end{align*}
Due to \cref{thm:Taylor}
\begin{align*}
\abs{\funccps(\vxs_\tau(s);\xs^k_\tau)-\funccp(\vxs_\tau(s))}&=\abs{
\funccpd(\xs^k_\tau) + \langle D\funccpd(\xs^k_\tau), \vxs_\tau(s) - \xs^k_\tau\rangle - \funccpd(\vxs_\tau(s))
}
\\
&\leq \bigg|\int_0^1 
\left\langle D 
\funccpd\left(\xs^k_\tau+r
(\vxs_\tau(s)-\xs^k_\tau)\right),\xs^k_\tau-\vxs_\tau(s) \right\rangle\\
&\quad-\langle D \funccpd(\xs^k_\tau),\vxs_\tau(t)-\xs^k_\tau \rangle dr\bigg|\\
&\leq \int_0^1 r \Lip(D \funccpd)\| \vxs_\tau(s)-\xs^k_\tau\|^2dr \\
&\leq \frac{1}{2} \Lip(D \funccpd)\| \vxs_\tau(s)-\xs^k_\tau\|^2 \leq \frac{1}{2} \Lip(D \funccpd) \tau^2
\end{align*}
and analogusly
$|\funccps(\vxs_\tau(t),\xs^{k}_\tau)-\funccp(\vxs_\tau(t))|\leq\frac{1}{2} \Lip(D \funccpd) \tau^2$. Therefore, for all $t_\tau^k\leq s\leq t\leq t_\tau^{k+1}$, we have that
\begin{align} \label{eq: PartEDI}
\begin{aligned}
\funccp(\vxs_\tau(s))+\int_{s}^t \mathcal{D}_\tau(r) dr &\geq 
\funccps(\vxs_\tau(s); \x_\tau^k)
+\int_{s}^t \mathcal{D}_\tau(r) dr 
-\frac{1}{2} \Lip(D \funccpd) \tau^2\\
&\geq 
\funccps(\vxs_\tau(t); \x_\tau^k)
-\frac{1}{2} \Lip(D \funccpd) \tau^2\\
&\geq  \funccp(\vxs_\tau(t)) - \Lip(D \funccpd) \tau^2.
\end{aligned}
\end{align}
Now for $s\in [t_\tau^m,t_\tau^{m+1}]$ and $t\in [t_\tau^k,t_\tau^{k+1}]$ with $m\leq k$ we add up \labelcref{eq: PartEDI} to obtain
\begin{align*}
\funccp(\vxs_\tau(s))&+ \int_s^{t_\tau^{m+1}} \mathcal{D}_\tau(r) dr +\sum_{i=m+1}^{k-1} \int_{t_\tau^{i}}^{t_\tau^{i+1}} \mathcal{D}_\tau(r) dr+\int_{t_\tau^k}^t \mathcal{D}_\tau(r) dr \\
&\geq \funccp(\vxs_\tau(t))-\sum_{i=m}^k \Lip(D \funccpd) \tau^2\\
&=
\funccp(\vxs_\tau(t))- (k-m) \Lip(D \funccpd) \tau^2
\\
&\geq
\funccp(\vxs_\tau(t))- T \Lip(D \funccpd) \tau
\end{align*}
such that we finally obtain \labelcref{eq: LinD}.
\end{proof}

As an immediate consequence of \cref{lm: SemiLin}, we can replace the minimizing movement scheme in the proof of \cref{thm: exist} by the semi-implicit scheme, as the error terms are of order $\mathcal{O}(\tau)$ and vanish during the limiting process $\tau \rightarrow 0$.
Then $\sxs_\tau$ $\sigma$-converges up to a subsequence to a $\infty$-curve of maximal slope.

\begin{theorem}\label{thm:semilincvg} Let $\funccp$ be a $C^1$-perturbation of a convex function. Under \cref{asm: 01,asm: 02,asm: lc,asm: 03b,asm: LipDi,asm: LoSe} there exists a $\infty$-curve of maximal slope $u(t)$, with respect to the energy $\funccp$ and its upper gradient $|\partial\funccp|$, and a
subsequence of $\tau_n=T/n$ such that
\begin{align*}
\sxs_{\tau_n}(t) \stackrel{\sigma}{\rightharpoonup}u(t) \text{ as } n\rightarrow \infty \quad \forall t\in [0,T].
\end{align*}
\end{theorem}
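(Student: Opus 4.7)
The plan is to adapt the proof of \cref{thm: exist} (existence for the classical minimizing movement scheme) almost verbatim, using \cref{lm: SemiLin} to replace the role of \cref{lem: Dif}/\labelcref{eq: D} and \cref{lm: SlLinEs} to pass from the semi-linearized slope back to the true slope $|\partial\func|$. Since all discrepancies introduced by the semi-implicit linearization are of order $\mathcal{O}(\tau)$, they disappear under the limit $\tau_n\to 0$, and the skeleton of the original proof carries through.

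First I would show that the set of iterates $K=\{\xs^i_{\tau_n}\st 0\leq i\leq n,\,n\in\N\}$ is $d$-bounded and contained in a sublevel of $\func$. The $d$-boundedness is identical to the argument in \cref{thm: exist}, since consecutive iterates satisfy $d(\xs^{k+1}_{\tau_n},\xs^k_{\tau_n})\leq \tau_n$, hence $d(\xs^i_{\tau_n},\x^0)\leq T$. For the energy bound, one uses that $\xs^{k+1}_\tau$ minimizes $\funcs(\cdot;\xs^k_\tau)$ over $\cl{B_\tau}(\xs^k_\tau)$, hence $\funcs(\xs^{k+1}_\tau;\xs^k_\tau)\leq \funcs(\xs^k_\tau;\xs^k_\tau)=\func(\xs^k_\tau)$. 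Combined with the Taylor-type estimate $|\funcs(\x;\xx)-\func(\x)|\leq \tfrac12 \Lip(D\funcd)\|\x-\xx\|^2$ from the proof of \cref{lm: SemiLin}, this yields the telescoping bound $\func(\xs^k_\tau)\leq \func(\x^0)+\tfrac{T}{2}\Lip(D\funcd)\,\tau_n$, uniformly in $k$ and $n$. Together with \cref{asm: 02} this makes $K$ relatively $\sigma$-sequentially compact.

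Next I would run the refined Ascoli–Arzelà argument (\cref{prop:AA}) exactly as in \cref{thm: exist}: since $\||\xs'_{\tau_n}|\|_{L^\infty}\leq 1$, a subsequence of $|\xs'_{\tau_n}|$ converges weakly$^*$ to some $A\in L^\infty(0,T)$ with $\|A\|_{L^\infty}\leq 1$, and the symmetric modulus $\omega(s,t)=\int_s^t A(r)\de r$ provides the required equicontinuity. Extracting a further subsequence yields $\sxs_{\tau_n}(t)\stackrel{\sigma}{\rightharpoonup}u(t)$ for every $t\in[0,T]$ with $u$ a $1$-Lipschitz curve and $u(0)=\x^0$. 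Because $d(\sxs_{\tau_n}(t),\vxs_{\tau_n}(t))\leq \tau_n$ by \cref{df: SemiLin}, the variational interpolates $\vxs_{\tau_n}(t)$ converge to the same $u(t)$.

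Finally, I would derive the energy dissipation inequality. Inserting \labelcref{eq: LinD} in place of \labelcref{eq: D} and applying Fatou's lemma as in \cref{thm: exist} gives, for $\psi(t):=\liminf_{n\to\infty}\func(\vxs_{\tau_n}(t))$,
\begin{align*}
\psi(s)\geq \psi(t)+\int_s^t \liminf_{n\to\infty}\bigl(-\mathcal{D}_{\tau_n}(r)\bigr)\,\de r,\qquad 0\leq s\leq t\leq T,
\end{align*}
since the $\mathcal{O}(\tau_n)$ term vanishes. At a.e. $t$ where the right-hand integrand is finite, \labelcref{eq: LinDDef} together with \cref{lm: SlLinEs} yields
\begin{align*}
\liminf_{n\to\infty}\bigl(-\mathcal{D}_{\tau_n}(t)\bigr)\geq \liminf_{n\to\infty}\bigl(|\partial\func|(\vxs_{\tau_n}(t))-\Lip(D\funcd)\tau_n\bigr)\geq |\partial\func|(u(t)),
\end{align*}
where the last step uses \cref{asm: 03b} on the $d$-bounded sublevels containing $\{\vxs_{\tau_n}(t)\}$. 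The $\sigma$-continuity in \cref{asm: 03a} then upgrades the liminf to an equality $\psi(t)=\func(u(t))$ a.e., so $u$ satisfies \labelcref{eq:InfFlow} with respect to $|\partial\func|$.

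The main obstacle is the careful bookkeeping of the $\mathcal{O}(\tau)$ discrepancies between the semi-implicit scheme and the true energy: one must ensure both that the iterates remain in a uniform sublevel of $\func$ (for compactness and for applying \cref{asm: 03b}) and that these errors pass through Fatou's lemma cleanly. Once this is secured, the transfer from the semi-linearized slope $|\partial\funcs(\cdot;\xs_\tau^{k-1})|(\vxs_\tau(t))$ to $|\partial\func|(u(t))$ via \cref{lm: SlLinEs} is immediate, and the remainder of the argument parallels \cref{thm: exist}.
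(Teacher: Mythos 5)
Your proposal is correct and follows essentially the same route as the paper, which likewise proves the theorem by substituting the semi-implicit scheme and its variational interpolation into the proof of \cref{thm: exist} and letting the $\mathcal{O}(\tau)$ discrepancies vanish in the limit. In fact you supply one detail the paper leaves implicit, namely the uniform sublevel bound $\func(\xs^k_\tau)\leq\func(\x^0)+\tfrac{T}{2}\Lip(D\funcd)\,\tau_n$ obtained by telescoping the Taylor estimate, which is exactly what is needed to invoke \cref{asm: 02,asm: 03a,asm: 03b} for the semi-implicit iterates.
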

\begin{proof}
We simply replace the minimizing movement scheme in \cref{def: MinMov} and De Giorgis variational interpolation (see \cref{def: DeGiorgi}) by the semi-implicit scheme in \cref{def: SeLiSch} and its corresponding variational interpolation of \cref{lm: SemiLin}. Proceeding similarly as in the proof of \cref{thm: exist}, we use \cref{prop:AA} to show     
\begin{align*}
\sxs_{\tau_n}(t) \stackrel{\sigma}{\rightharpoonup}u(t) \text{ as } n\rightarrow \infty \quad \forall t\in [0,T]
\end{align*} 
for a subsequence $\tau_n$, where $u$ is a \rev{$1$-}Lipschitz curve with $u(0)=\x^0$. Then the same holds true for $\vxs_{\tau_n}(t)$. 
\rev{Taking for $\tau_n$ the limes inferior for $n\rightarrow \infty$ of \cref{eq: LinD} and using \cref{asm: lc}, \cref{asm: 03b} and \cref{eq: LinDDef} we again obtain
\begin{align*}
    \funccp(u(0))\geq \funccp(u(t))+\int_0^t |\partial\funccp|(u(r)) dr \quad \text{ for all }t\in[0,T].
\end{align*}
Since on the other hand $|\partial \funccp|$ is a strong upper gradient, equality in the above equation must hold.
}
%Due to \cref{asm: 03a}, we have that $\psi(t)=\lim_{n\rightarrow \infty} \funccp(\vxs_{\tau_n}(t))= \funccp(u(t))$ for all $t\in [0,T]$.
%Taking the limit $\tau_n \rightarrow 0$ of \ref{eq: LinD} and using \ref{eq: LinDDef} as well as \cref{asm: 03b}  we obtain the energy dissipation inequality 
%$$\psi(t)\leq -\int_s^t |\partial \mathcal{E}|(u(r)) dr + \psi(s)  \quad \text{for a.e. } 0\leq s\leq t\leq T.$$
\end{proof}

\begin{remark}
%\todo[MB]{Ausführlicher}
Let $\tau_n$ be any sequence such that $\tau_n \rightarrow 0$.
 If the $\infty$-curve of maximal slope $u$ is unique, we can apply \cref{thm:semilincvg} to every subsequence of $\tau_n$ and find a further subsequence $\Tilde{\tau}_n$ such that 
\begin{align*}
\sxs_{\Tilde{\tau}_n}(t) \stackrel{\sigma}{\rightharpoonup}u(t) \text{ as } n\rightarrow \infty \quad \forall t\in [0,T].
\end{align*}
This implies that already for $\tau_n$
\begin{align*}
\sxs_{\tau_n}(t) \stackrel{\sigma}{\rightharpoonup}u(t) \text{ as } n\rightarrow \infty \quad \forall t\in [0,T].
\end{align*}
and the semi-implicit scheme converges.     
\end{remark}

\section{Wasserstein infinity flows}  \label{sec: Wass}

The previous sections consider a \enquote{single particle}, $\x\in\Inp$, trying to minimize an energy $\func$, by following an $\infty$-curve of maximal slope. This single particle may be drawn from a probability distribution $\mu_0 \in \mathcal{P}(\WBan)$ which over time also minimizes an energy $\mathcal{E}$ defined on the space of probabilities. In this section, we choose the underlying metric space $\CMS$ to be the space of Borel probability measures with bounded support $\mathcal{P}_\infty(\WBan)$, and equip it with the $\infty$-Wasserstein distance. 
We show that for potential energies, $\infty$-curves of maximal slope can be expressed via a probability measures $\eta$ on the space $C(0,T;\WBan)$ which is concentrated on $\infty$-curves of maximal slope on the underlying Banach space $\WBan$. From $\eta$ we can then derive a corresponding continuity equation which those $\infty$-curves of maximal slope have to fulfill.
\rev{
This concept is commonly referred to as the \enquote{superposition principle}, where our approach directly follows the setup of \cite{ambrosio05,lisini2014absolutely,Lisini07}. We refer to \cite{stepanov2017three} for an overview of different works in this direction, as well as results that hold true in a much more general setting.
}
%\todo{4LW: check superposition principle}
%

\subsection{Preliminaries on Wasserstein spaces}
We give a brief introduction to the basic properties of Wasserstein spaces. For more details, we refer to \cite{ambrosio05, givens1984class, villani2009optimal}. 
In the following $(\WBan, \|\cdot\|)$ is a separable Banach space. We denote by $\Prob(\WBan)$ the space of Borel probability measures on $\WBan$. For $1\leq p<\infty$ $\Prob_p(\WBan) \subset \Prob(\WBan)$ is the subset of measures with finite $p$-momentum, while $\Prob_\infty(\WBan)\subset \Prob(\WBan)$ is the subset of measures with bounded support. For $1\leq p<\infty$ and $\mu,\nu \in \Prob_p(\WBan)$ we define the $p$-Wasserstein distance as
\begin{align*}
W^p_p(\mu,\nu)\coloneqq \inf_{\gamma\in\Gamma(\mu,\nu)} \int \| \x-\xx\|^p d\gamma(\x,\xx).
\end{align*}
Here,
\begin{align}\label{eq: pWass}
\Gamma(\mu,\nu)\coloneqq\{\gamma\in\Prob(\WBan\times \WBan): \pi^1_\# \gamma=\mu, \pi^2_\# \gamma=\nu\},
\end{align}
is the set of admissible transport plans and $\pi^1(\x,\xx)=\x$, $\pi^2(\x,\xx)=\xx$ denote the projection on the first and second component.
For $\mu,\nu \in \Prob_\infty(\WBan)$ the $\infty$-Wasserstein distance is given by
\begin{align}\label{eq: IWass}
W_\infty(\mu,\nu)\coloneqq \inf_{\gamma\in\Gamma(\mu,\nu)} \gamma-\esssup \| \x-\xx\|.
\end{align}
In both cases the minimum of \labelcref{eq: pWass} and \labelcref{eq: IWass} is obtained (see, e.g., \cite{ambrosio05, villani2009optimal} and \cite[Proposition 1]{givens1984class} for the case $p=\infty$) and $\Gamma_0(\mu,\nu)$ denotes the set of optimal transport plans where the minimum is reached.

\begin{proposition}[{\cite[Proposition 6.]{givens1984class}}]
 For $p\in[1,\infty]$,  $\mathcal{W}_p=(\Prob_p(\WBan), W_p)$, i.e., $\Prob_p(\WBan)$ equipped with the $p$-Wasserstein distance, is a complete metric space. For $p<\infty$, $\mathcal{W}_p$ is separable.
\end{proposition}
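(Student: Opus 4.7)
The plan is to verify the three claims in turn: $W_p$ defines a metric on $\Prob_p(\WBan)$, the resulting space is complete, and for $p<\infty$ it is separable. The finite-$p$ and $p=\infty$ cases run in parallel but diverge in the moment-vs-support control during completeness, and in the fact that separability genuinely fails for $W_\infty$ in general, which is why the last claim is restricted.

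For the metric axioms, non-negativity and symmetry are immediate from the symmetric definition of $\Gamma(\mu,\nu)$. Definiteness: $W_p(\mu,\nu)=0$ forces an optimal $\gamma\in\Gamma_0(\mu,\nu)$ to concentrate on the diagonal (by integrating $\|x-z\|^p$ in the finite case, or by vanishing essential supremum in the case $p=\infty$), whence $\mu=\pi^1_\#\gamma=\pi^2_\#\gamma=\nu$. The triangle inequality is the standard gluing lemma: given $\gamma_{12}\in\Gamma(\mu_1,\mu_2)$ and $\gamma_{23}\in\Gamma(\mu_2,\mu_3)$, disintegrate both against the common marginal $\mu_2$ and glue into $\gamma\in\Prob(\WBan^3)$ with $\pi^{12}_\#\gamma=\gamma_{12}$ and $\pi^{23}_\#\gamma=\gamma_{23}$. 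Applying Minkowski in $L^p(\gamma)$ (for $p<\infty$), respectively essential-supremum subadditivity (for $p=\infty$), to the pointwise bound $\|x_1-x_3\|\leq\|x_1-x_2\|+\|x_2-x_3\|$ and using $\pi^{13}_\#\gamma\in\Gamma(\mu_1,\mu_3)$ as an admissible coupling gives the estimate.

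For completeness, take a Cauchy sequence $(\mu_n)$ and fix $n_0$ with $W_p(\mu_n,\mu_{n_0})\leq 1$ for all $n\geq n_0$. For $p<\infty$, a uniform bound on the $p$-th moment of $\mu_n$ combined with Markov's inequality and tightness of the single measure $\mu_{n_0}$ on the separable Banach space $\WBan$ yields uniform tightness of $(\mu_n)$. For $p=\infty$, the supports stay inside a fixed bounded neighbourhood of $\supp\mu_{n_0}$, and tightness is even easier. Prokhorov then supplies a narrowly convergent subsequence $\mu_{n_k}\rightharpoonup\mu$, and extracting narrow limits of optimal couplings combined with lower semicontinuity of the cost yields $W_p$-lower semicontinuity along $(\mu_{n_k},\mu_{n_j})$. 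Together with the Cauchy property, this upgrades subsequential convergence to full $W_p$-convergence and shows $\mu\in\Prob_p(\WBan)$.

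For separability with $p<\infty$, fix a countable dense $\{x_i\}_{i\in\N}\subset\WBan$. Given $\mu\in\Prob_p(\WBan)$ and $\varepsilon>0$, first truncate mass onto a ball of large radius (controllable in $W_p$ via the finite $p$-th moment), then partition that ball into finitely many Borel pieces of diameter at most $\varepsilon$ each containing some $x_{i_k}$, push mass onto representatives, and finally perturb the weights to rationals; each step costs at most $O(\varepsilon)$ in $W_p$. This produces a countable dense family of rational-weight combinations $\sum_k q_k\delta_{x_{i_k}}$. I expect the main obstacle to be the $p=\infty$ completeness step: narrow convergence is strictly weaker than $W_\infty$-convergence, and upgrading one to the other forces one to work with the essential supremum of the cost along narrowly convergent couplings rather than merely with the measures themselves; this is the one place where the standard finite-$p$ toolbox (integration against $\|x-z\|^p$, uniform integrability) must be replaced by a more delicate support-containment argument.
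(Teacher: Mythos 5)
This proposition is imported verbatim from Givens--Shortt and the paper offers no proof of its own, so there is nothing to match your argument against; what you have written is the standard textbook route (metric axioms via the gluing lemma, completeness via tightness, Prokhorov, and narrow lower semicontinuity of $W_p$, separability via rational convex combinations of Dirac masses at a countable dense set), and in outline it is sound. Note that the attainment of the infimum in the definition of $W_p$ and $W_\infty$, which your definiteness and lower-semicontinuity steps quietly use, is also only cited by the paper, not proved.

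There is, however, one step that fails as written in the generality the paper actually needs, namely a separable (possibly infinite-dimensional) Banach space $\WBan$. You deduce uniform tightness of a Cauchy sequence from a uniform moment bound plus Markov (for $p<\infty$) and from ``the supports stay inside a fixed bounded neighbourhood of $\supp\mu_{n_0}$'' (for $p=\infty$). In infinite dimensions bounded sets are not relatively compact, and $\supp\mu_{n_0}$ is closed and bounded but in general not compact, so neither observation puts the sequence inside neighbourhoods of compact sets and Prokhorov does not apply. The repair is standard but must be made explicit: use inner regularity of the single measure $\mu_{n_0}$ on the Polish space $\WBan$ to find a compact $K$ carrying most of its mass, transfer this to the $\mu_n$ via the couplings to get that most of their mass lies in a $\delta$-neighbourhood of $K$, and invoke the characterization of uniform tightness by finite coverings with balls of arbitrarily small radius. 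Alternatively, for $p=\infty$ you can bypass Prokhorov entirely: a $W_\infty$-Cauchy sequence is $W_p$-Cauchy for every finite $p$ by \cref{lem:incrW}, hence has a common narrow limit $\mu$ with bounded support (portmanteau applied to the closed bounded set containing all supports), and the identity $W_\infty=\sup_p W_p$ together with the narrow lower semicontinuity of each $W_p$ recorded in \cref{lem: Comp} upgrades the Cauchy estimate to $W_\infty(\mu_n,\mu)\to 0$. Your closing worry about upgrading narrow convergence to $W_\infty$-convergence is thus misplaced: that upgrade works exactly as in the finite-$p$ case; the genuine delicacy sits in the tightness step you dismissed as ``even easier.''
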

The following lemma shows that Wasserstein distances are ordered in such a way that they get stronger by increasing $p$, see \cite[Proposition 3.]{givens1984class}.
\begin{lemma}[{\cite[Proposition 3.]{givens1984class}}]\label{lem:incrW}
    For $1\leq p\leq q \leq \infty$ and $\mu,\nu\in \Prob(\WBan)$
    \begin{align}\label{eq:WassOrder}
        W_p(\mu,\nu)\leq W_q(\mu,\nu)
    \end{align} 
    and in particular
    \begin{align*}
        W_\infty(\mu,\nu)=\sup_p W_p(\mu,\nu)=\lim_{p\rightarrow \infty}W_p(\mu,\nu).
    \end{align*}
\end{lemma}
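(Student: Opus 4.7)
The plan splits naturally into two parts: the monotonicity $W_p \le W_q$ for $1 \le p \le q \le \infty$, and the sup/limit characterization of $W_\infty$.

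For monotonicity, the key observation is that every coupling $\gamma \in \Gamma(\mu,\nu)$ is a probability measure, so the standard $L^p$-norm monotonicity on probability spaces applies to the cost $\|x-y\|$. Concretely, for $1\le p\le q<\infty$, Jensen's inequality with the convex function $t\mapsto t^{q/p}$ yields
\begin{align*}
\left(\int \|x-y\|^p d\gamma\right)^{1/p} \le \left(\int \|x-y\|^q d\gamma\right)^{1/q},
\end{align*}
while for $q=\infty$ the $\gamma$-a.e.\ pointwise bound $\|x-y\|^p\le(\gamma\text{-}\esssup\|x-y\|)^p$ suffices. Applying this to an optimal plan $\gamma^\ast$ for the $q$-problem (the existence of which was recorded just before the lemma) gives $W_p(\mu,\nu)\le W_q(\mu,\nu)$, and in particular $\sup_p W_p \le W_\infty$ with $\sup_p W_p = \lim_{p\to\infty} W_p$ by monotonicity.

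For the nontrivial inequality $W_\infty \le \lim_p W_p$, I would pick optimal plans $\gamma_p\in\Gamma_0(\mu,\nu)$ for each finite $p$. Since $\mu,\nu\in\mathcal{P}_\infty(\WBan)$ have bounded supports, every $\gamma_p$ is concentrated on the bounded set $\supp(\mu)\times\supp(\nu)$. Tightness of the Borel marginals $\mu,\nu$ on the separable (hence Polish) Banach space $\WBan$ transfers to joint tightness of $\{\gamma_p\}$, so Prokhorov yields a subsequence $p_k\to\infty$ with $\gamma_{p_k}\rightharpoonup\gamma_\infty\in\Gamma(\mu,\nu)$. For fixed $q<\infty$ and $p>q$, the monotonicity applied within $\gamma_p$ gives $\left(\int\|x-y\|^q d\gamma_p\right)^{1/q}\le W_p(\mu,\nu)\le \lim_p W_p$. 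Since $\|x-y\|^q$ is continuous and bounded on $\supp(\mu)\times\supp(\nu)$ (and the support of any weak limit with those marginals is contained in this set), truncating $\|x-y\|^q$ to a bounded continuous function off this set lets me pass to the weak limit to obtain $\|{\cdot}\|_{L^q(\gamma_\infty)}\le \lim_p W_p$. Letting $q\to\infty$ and using $\|f\|_{L^q(\gamma_\infty)}\to\|f\|_{L^\infty(\gamma_\infty)}$ for bounded measurable $f$ yields $W_\infty(\mu,\nu)\le \gamma_\infty\text{-}\esssup\|x-y\|\le \lim_p W_p$.

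The main obstacle is the weak-convergence step: $\|x-y\|^q$ is not globally bounded on $\WBan\times\WBan$, so one must exploit that all $\gamma_{p_k}$ and $\gamma_\infty$ live on the common bounded set $\supp(\mu)\times\supp(\nu)$ (the containment $\supp(\gamma_\infty)\subset\supp(\mu)\times\supp(\nu)$ follows since any point $(x,y)$ in $\supp(\gamma_\infty)$ has marginals in $\supp(\mu)$ and $\supp(\nu)$), and truncate appropriately. Once this is handled, the remaining steps are standard measure-theoretic limits.
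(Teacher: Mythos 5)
Your proof is correct. Note first that the paper itself gives no proof of this lemma; it is imported verbatim from \textcite[Proposition 3]{givens1984class}, so there is no in-paper argument to compare against, and your write-up is a self-contained reconstruction. The structure is the standard one: the monotonicity $W_p\leq W_q$ via Jensen's inequality (equivalently, monotonicity of $L^p$-norms on a probability space) applied to an optimal plan for the larger exponent, and the reverse inequality $W_\infty\leq\lim_p W_p$ via tightness of $\Gamma(\mu,\nu)$ (inherited from tightness of the two marginals on a Polish space), Prokhorov, passage to the limit in $\int\norm{x-y}^q\,d\gamma_{p_k}$ for fixed $q$, and finally $\norm{f}_{L^q(\gamma_\infty)}\to\norm{f}_{L^\infty(\gamma_\infty)}$. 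All of these steps are sound. The one point worth tightening: the lemma is stated for general $\mu,\nu\in\Prob(\WBan)$, whereas your limit-passage step relies on truncating $\norm{x-y}^q$ using that all plans are concentrated on the bounded set $\supp(\mu)\times\supp(\nu)$, which is only available for $\mu,\nu\in\Prob_\infty(\WBan)$. This is harmless where the paper uses the lemma, but you can remove the restriction entirely by replacing the truncation with the Portmanteau-type lower semicontinuity of $\gamma\mapsto\int\norm{x-y}^q\,d\gamma$ under narrow convergence (the integrand is nonnegative and continuous), which directly gives $\int\norm{x-y}^q\,d\gamma_\infty\leq\liminf_k\int\norm{x-y}^q\,d\gamma_{p_k}\leq(\lim_p W_p)^q$ without any boundedness assumption; the case $\lim_p W_p=+\infty$ is vacuous anyway.
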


Let now $\sigma$ denote the narrow topology, namely, $\mu^n\stackrel{\sigma}{\rightarrow}  \mu$ iff,
\begin{align}\label{eq: narrow}
\int_{\WBan} \varphi d\mu^n \rightarrow  \int_{\WBan} \varphi d\mu \quad \forall \varphi\in C_b(\WBan),
\end{align}
where $C_b(\WBan)$ denotes the space of bounded and continuous functions on $\WBan$. The next lemma is helpful, when we are considering limits in \labelcref{eq: narrow} with $\varphi$ being unbounded or only lower semicontinuous.
\begin{lemma}[{\cite[Lemma 5.1.7.]{ambrosio05}}]
Let $\seq{\mu}$ be a sequence in $\Prob(\WBan)$ narrowly converging to $\mu \in \Prob(\WBan)$. If $g: \WBan \rightarrow (-\infty,+\infty]$ is lower semicontinuous and \rev{its negative part $g^-=-\min\{g,0\}$} is uniformly integrable w.r.t. the set $\{\mu^n\}_{n\in \mathbb{N}}$, then
\begin{align*}
    \liminf_{n\rightarrow \infty}\int_\WBan g(x)d \mu^n(x)\geq \int_\WBan g(x) d\mu(x) >-\infty .
\end{align*}
\end{lemma}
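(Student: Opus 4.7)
The plan is to reduce to the case of nonnegative lower semicontinuous integrands via truncation, to handle that case by a Portmanteau-type approximation, and to control the error introduced by the reduction using the uniform integrability hypothesis on $g^-$; the delicate step will be passing from the truncated integrals back to $\int g\,d\mu$. As an auxiliary building block, for any lsc $h:\WBan\to[0,+\infty]$ I would first prove $\int h\,d\mu\le\liminf_n\int h\,d\mu^n$: such an $h$ is the monotone pointwise supremum of bounded Lipschitz nonnegative functions, for instance the Moreau--Yosida regularizations $\varphi_j(x):=\min\bigl(j,\,\inf_{y\in\WBan}\{h(y)+j\|x-y\|\}\bigr)$. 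The definition of narrow convergence then gives $\int\varphi_j\,d\mu^n\to\int\varphi_j\,d\mu$ for each fixed $j$; combined with $\varphi_j\le h$ this yields $\int\varphi_j\,d\mu\le\liminf_n\int h\,d\mu^n$, and monotone convergence as $j\to\infty$ closes the nonnegative case.

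For general $g$ I would truncate from below by setting $g_k:=g\vee(-k)$, which is lsc and bounded below by $-k$. Applying the auxiliary step to $g_k+k\ge 0$ gives $\int g_k\,d\mu\le\liminf_n\int g_k\,d\mu^n$. A direct computation yields $g_k-g=(g^--k)^+\le g^-\,\one_{\{g^->k\}}$, so uniform integrability of $g^-$ with respect to $\{\mu^n\}$ provides $\epsilon_k:=\sup_n\int g^-\,\one_{\{g^->k\}}\,d\mu^n\to 0$, and combining these estimates gives
\begin{align*}
    \int g_k\,d\mu\;\le\;\liminf_n\int g\,d\mu^n\;+\;\epsilon_k\qquad\text{for every }k\in\N.
\end{align*}

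The main obstacle is then sending $k\to\infty$ on the left-hand side and identifying the limit with $\int g\,d\mu>-\infty$. For this I would first establish $g^-\in L^1(\mu)$: since $\{g^->t\}=\{g<-t\}$ is open for every $t\ge 0$ (because $g$ is lsc), the Portmanteau inequality for open sets combined with the layer-cake representation and Fatou in $t$ yields
\begin{align*}
    \int g^-\,d\mu\;=\;\int_0^\infty\mu(\{g^->t\})\,dt\;\le\;\liminf_n\int g^-\,d\mu^n,
\end{align*}
which is finite by uniform integrability. The sequence $g_k\downarrow g$ is dominated below by $-g^-\in L^1(\mu)$, so monotone convergence for decreasing sequences gives $\int g_k\,d\mu\to\int g\,d\mu$ provided $\int g^+\,d\mu<\infty$; in the opposite case $\int g_k\,d\mu=+\infty$ for every $k$, and the displayed estimate already forces $\liminf_n\int g\,d\mu^n=+\infty$, so the conclusion becomes trivial. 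Passing $k\to\infty$ in the displayed estimate then yields $\int g\,d\mu\le\liminf_n\int g\,d\mu^n$, together with the bound $\int g\,d\mu\ge-\int g^-\,d\mu>-\infty$, completing the proof.
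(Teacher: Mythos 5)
The paper does not actually prove this lemma---it is quoted from \cite[Lemma 5.1.7]{ambrosio05} without proof---so I can only judge your argument on its own terms. Your overall architecture is the standard one: Moreau--Yosida approximation to handle nonnegative lower semicontinuous integrands, truncation $g_k=g\vee(-k)$, and the identity $g_k-g=(g^--k)^+\le g^-$ on $\{g^->k\}$ combined with uniform integrability. Those steps are all correct, including the displayed inequality $\int g_k\,d\mu\le\liminf_n\int g\,d\mu^n+\epsilon_k$.

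The genuine gap is your justification that $g^-\in L^1(\mu)$. You assert that $\{g^->t\}=\{g<-t\}$ is open because $g$ is lower semicontinuous; this is backwards. Lower semicontinuity makes the superlevel sets $\{g>c\}$ open and hence $\{g\le-t\}$ closed, while $\{g<-t\}$ is in general neither (it would be open if $g$ were \emph{upper} semicontinuous). Since $g^-=(-g)^+$ is upper semicontinuous, Portmanteau only ever yields \emph{lower} bounds on $\mu(\{g^-\ge t\})$ in terms of $\limsup_n\mu^n(\{g^-\ge t\})$, never the upper bounds your layer-cake estimate needs, so the inequality $\int g^-\,d\mu\le\liminf_n\int g^-\,d\mu^n$ does not follow. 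Moreover, the step cannot be repaired along these lines: take $X=\R$, $g(j)=-j^2$ for $j\in\N$ and $g=0$ elsewhere (this $g$ is l.s.c.), $\mu=\sum_j cj^{-2}\delta_j$ with $\sum_j cj^{-2}=1$, and $\mu^n=\sum_j cj^{-2}\delta_{j+1/n}$; then $\mu^n\to\mu$ narrowly and $g^-$ vanishes on the support of every $\mu^n$, so it is trivially uniformly integrable, yet $\int g^-\,d\mu=+\infty$. What your (correct) truncation estimate actually proves is $\liminf_n\int g\,d\mu^n\ge\lim_{k\to\infty}\int g_k\,d\mu$; identifying this limit with $\int g\,d\mu$ and obtaining the strict bound $>-\infty$ is exactly the point where additional input beyond the stated hypotheses is needed (for instance continuity of $g^-$, or uniform integrability with respect to a family containing $\mu$ itself, as happens in the applications in this paper where all supports are uniformly bounded), and your argument for that step is invalid as written.
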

When working with probability measures, Prokhorov's theorem (\cite[Theorem 5.1-5.2]{billingsley2013convergence}, repeated for convenience in the appendix, \cref{thm:prokh}) is useful, since it characterizes relatively compact sets with respect to the narrow topology.
In certain situations the assumption \labelcref{eq: tight} of this theorem, i.e.,
\begin{align*}
\forall \epsilon>0\quad \exists K_\epsilon \text{ compact in } \WBan \text{ such that } \mu(\WBan\setminus K_\epsilon)\leq \epsilon \quad \forall \mu\in \mathcal{K},
\end{align*}
can only be shown for bounded and not compact sets. There we use the observation in the following remark, to still obtain some sort of relative compactness. In the following, we denote by $\WBan_\omega$, the space $\WBan$ equipped with the weak topology $\sigma(\WBan,\WBan^*)$.
\begin{remark}\label{rm: renorm}
 If $\WBan$ is separable and reflexive, then so is its dual. For a countable dense subset $\{x^*_n\}_{n\in\N}$ of $\cl{B_1^{\WBan*}}$, we can define the norm
$$\| x \|_{\omega}=\sum_{n=1}^\infty \frac{1}{n^2} |\langle x^*_n,x\rangle|,$$
which induces the weak topology $\sigma(\WBan,\WBan^*)$ on bounded sets \cite[Lemma 3.2]{morrison2011functional}. This norm is a so-called Kadec norm. In particular, we have that the Borel sigma algebra $\mathcal{B}(\WBan)$, generated by the norm topology, and the one generated by the weak topology $\mathcal{B}(\WBan_{\omega})$ coincide and thus $\Prob(\WBan)=\Prob(\WBan_\omega)$, see \cite[Theorem 1.1]{Edg}.
Now, let us assume that for a set $\mathcal{K}\subset \Prob(\WBan)=\Prob(\WBan_\omega)$ we have that
\begin{align}
    \forall \epsilon>0\quad \exists K_\epsilon\  \|\cdot\|\text{-bounded in } \WBan, \text{ such that } \mu(\WBan\setminus K_\epsilon)\leq \epsilon \quad \forall \mu\in \mathcal{K}.
\end{align}  
Since bounded sets are subsets of $\cl{B_\epsilon}$ for $\epsilon$ large enough and $\cl{B_\epsilon}$ is compact in the weak topology $\sigma(\WBan,\WBan^*)$, Prokhorov's theorem can be applied for $\WBan_\omega$. We obtain that there exists a subsequence $\seq{\mu}\subset \mathcal{K}$ and a limit $\mu\in \Prob(\WBan)=\Prob(\WBan_\omega)$ such that 
\begin{align}
    \int_{\WBan} \varphi d\mu^n \rightarrow  \int_{\WBan} \varphi d\mu \quad \forall \varphi\in C^\omega_b(\WBan),
\end{align}
where $C^\omega_b(\WBan)$ now denotes the set of weakly continuous, bounded functions.
\end{remark}

The next lemma follows from \cite[Theorem  6.9, Corollary 6.11, Remark 6.12]{villani2009optimal} together \cref{lem:incrW}, i.e., \cite[Proposition 3.]{givens1984class}.
\begin{lemma}[Compatibility]\label{lem: Comp}
The \rev{narrow} topology $\sigma$ is weaker than the topology induced by $W_p(\cdot,\cdot)$ on $\Prob_p(\WBan)$, for every $p\in[1,\infty]$. Furthermore, $W_p$ is lower semicontinuous with respect to the narrow topology $\sigma$, i.e., for every $p\in[1,\infty]$:
\begin{align*}
\left. \begin{array}{ll}
  \mu^n\stackrel{\sigma}{\rightarrow}  \mu \\
  \nu^n\stackrel{\sigma}{\rightarrow}  \nu
\end{array}
\right\} \Longrightarrow W_p(\mu,\nu)\leq \liminf_{n\rightarrow \infty}W_p(\mu^n,\nu^n).
\end{align*}
\end{lemma}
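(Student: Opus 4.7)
My plan is to reduce to the finite-$p$ case and then use the ordering of Wasserstein distances from \cref{lem:incrW} to handle $p=\infty$. The finite-$p$ part is essentially classical, so my focus will be on making the passage $p\to\infty$ precise.

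\textbf{Finite $p$.} For $p\in[1,\infty)$ both claims are standard results on Wasserstein spaces. In particular, $W_p$-convergence is strictly stronger than narrow convergence plus convergence of $p$-th moments, so the narrow topology is indeed weaker than the $W_p$-topology, and $W_p$ is jointly lower semicontinuous under narrow convergence. I would simply invoke the cited references \cite[Theorem 6.9, Corollary 6.11, Remark 6.12]{villani2009optimal}, without replaying their proofs.

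\textbf{First claim for $p=\infty$.} To show $\sigma\subset \tau_{W_\infty}$ on $\Prob_\infty(\WBan)$, suppose $\mu^n\to\mu$ in $W_\infty$. By \cref{lem:incrW} we have $W_1(\mu^n,\mu)\leq W_\infty(\mu^n,\mu)\to 0$, so $\mu^n\to\mu$ narrowly by the finite-$p$ case applied with $p=1$.

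\textbf{Second claim for $p=\infty$.} Assume $\mu^n\xrightarrow{\sigma}\mu$ and $\nu^n\xrightarrow{\sigma}\nu$. For every finite $p\in[1,\infty)$, the finite-$p$ lower semicontinuity together with $W_p\leq W_\infty$ (\cref{lem:incrW}) gives
\begin{align*}
W_p(\mu,\nu)\leq \liminf_{n\to\infty} W_p(\mu^n,\nu^n)\leq \liminf_{n\to\infty} W_\infty(\mu^n,\nu^n).
\end{align*}
Taking the supremum over $p\in[1,\infty)$ on the left-hand side and invoking the identity $W_\infty=\sup_p W_p$ from \cref{lem:incrW} yields the desired inequality $W_\infty(\mu,\nu)\leq \liminf_{n\to\infty} W_\infty(\mu^n,\nu^n)$.

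\textbf{Main difficulty.} There is no serious obstacle: everything is a bookkeeping consequence of the two Wasserstein facts already available, the cited theorems from \cite{villani2009optimal} and the monotone/supremum identity \cref{lem:incrW} (originating from \cite[Propositions 3, 6]{givens1984class}). The only subtlety worth mentioning is that the sup–liminf interchange is used in the right direction (we take $\sup$ on the dominated side, not on the liminf side), so no extra argument such as tightness or uniform integrability is needed.
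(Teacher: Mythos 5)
Your proposal is correct and follows essentially the same route as the paper, which proves the lemma only by remarking that it follows from \cite[Theorem 6.9, Corollary 6.11, Remark 6.12]{villani2009optimal} for finite $p$ together with the ordering and supremum identity of \cref{lem:incrW}. Your write-up merely makes explicit the two reductions (using $W_1\leq W_\infty$ for the topology comparison and $W_\infty=\sup_p W_p$ on the dominated side for the lower semicontinuity), both of which are sound.
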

\begin{remark}
    For $1\leq p< \infty$ convergence in $W_p$ is equivalent to narrow convergence and convergence of the $p$-th moment \cite[Theorem  6.9]{villani2009optimal}. This equality is lost for the $\infty$-Wasserstein distance, as convergence in the narrow topology ($\mu^n\stackrel{\sigma}{\rightarrow} \mu$) together with $\bigcup_n \supp(\mu^n)$ being bounded or relatively compact no longer \rev{guarantees} convergence in $W_\infty$, as \cref{ex:winfty} demonstrates.
\end{remark}

\begin{example}\label{ex:winfty}
We consider the sequence $$\mu^n=\frac{n-1}{n}\, \delta_0+\frac{1}{n}\, \delta_1,$$ where $\delta_t$ denotes the Dirac measure at $t\in\R$. Then we have that
\begin{align*}
\int_\R \varphi d\mu^n = \frac{n-1}{n}\, \varphi(0)+\frac{1}{n}\, \varphi(1) \xrightarrow[]{n\to\infty} \varphi(0) = \int_\R \varphi d\delta_0
\end{align*}
for every $\varphi\in C_b(\R)$ and thus $\mu^n\stackrel{\sigma}{\rightarrow} \delta_0$. However, we see that
\begin{align*}
W_\infty(\mu^n,\delta_0) = \min_{\gamma\in\Gamma(\mu^n,\delta_0)} \gamma-\esssup \abs{\x-\xx} = \abs{1 - 0} = 1, 
\end{align*}
for every $n\in\N$ and thus we have no convergence in $W_\infty$.
\end{example}
\subsection{Absolutely continuous curves in Wasserstein spaces \srev{and the superposition principle}}
In this section, we \rev{employ the superposition principle to obtain alternative characterizations of absolutely continuous curves in Wasserstein spaces.}
In \cite{Lisini07}, Lisini shows that for $p\in (1,\infty)$, $p$-absolutely continuous curves $\mu:[0,T]\to W_p$ can be written as a push forward of a Borel probability measure over the space of continuous curves. Using this statement, the author was able to derive a \rev{well-known} characterization of absolutely continuous curves via solutions of continuity equations, when the underlying space of $W_p$ is Banach. In \cite{lisini2014absolutely} the first result was extended to Wasserstein--Orlicz spaces, which also covers the $W_\infty$ case.
\rev{%
In \cite[Sec. 4]{stepanov2017three} the authors were able to derive a refined version of the result obtained in \cite{Lisini07} that also includes the case $p=\infty$. For completness, we state the corresponding theorems in this section and provide the proofs that specifically adapt the arguments of \cite{Lisini07} to our setting in \cref{app:help}.
}%
Connected to this, we also refer to the discussion in the book by \textcite[Ch. 5.5.1]{Santambrogio15} and the associated paper \cite{BrLoSa2011}, where this topic was discussed as the limit $p\to\infty$, for $\Inp=\R^d$. We further discuss difficulties arising when the norm of underlying Banach space is not strictly convex.\par 

Let $\mathcal{P}(C(0,T;\WBan))$ denote the space of Borel probability measures on the Banach space of continuous functions on the interval $[0,T]$. 
We define the evaluation map $e_t: C(0,T;\WBan)\rightarrow \WBan$ by 
$$ e_t(u)=u(t).$$
Then absolutely continuous curves in Wasserstein spaces can be represented by a Borel probability
measure on $C(0,T;\WBan)$ concentrated on the set of absolutely continuous curves in $\WBan$, as the following theorem from \cite{lisini2014absolutely} shows. Here, $\AC^p(0,T\rev{;}W_p)$ denotes the set of $p$-absolutely continuous curves $\mu:[0,T]\to W_p$.
\begin{theorem}[{\cite[Thm. 3.1]{lisini2014absolutely}}]\label{thm: curveDe} Let $\WBan$ be separable.
    For $p\in(1,\infty]$, if $\mu\in \AC^p(0,T;\rev{\mathcal{W}}_p)$, then there exists $\eta \in\mathcal{P}(C(0,T;\WBan))$ such that
    \begin{itemize}
        \item $\eta$ is concentrated on $\AC^p(0,T;\WBan)$,
        \item ${e_t}_\# \eta=\mu_t\quad \forall t\in[0,T] $,
        \item for a.e. $t\in [0,T]$ the metric derivative $|u'|(t)$ exists for $\eta$-a.e. $u\in C(0,T;\WBan)$ and it holds the equality 
        $$ |\mu'|(t)=\| |u'|(t)\|_{L_p(\eta)}.$$
    \end{itemize}
\end{theorem}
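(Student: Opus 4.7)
The plan is to adapt Lisini's strategy from the finite $p$ case to $p=\infty$ by constructing $\eta$ as a narrow limit of measures supported on piecewise affine interpolations built from optimal $W_\infty$ transport plans between time discretizations of $\mu$. The three target properties are then extracted from the limit via compactness and lower semicontinuity.

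First, I would set up a discrete approximation. For each $n \in \N$ take the partition $t_k^n = kT/n$ and, for each $k$, an optimal plan $\gamma_k^n \in \Gamma_0(\mu_{t_k^n}, \mu_{t_{k+1}^n})$ for $W_\infty$ (existence is guaranteed by \cite{givens1984class} as recalled just before the theorem). Using the gluing lemma $n-1$ times, assemble these into a single measure $\sigma_n \in \mathcal{P}(\WBan^{n+1})$ whose $(k,k{+}1)$-marginal is $\gamma_k^n$. Then define $\Phi_n : \WBan^{n+1} \to C(0,T;\WBan)$ by piecewise affine interpolation and set $\eta_n := (\Phi_n)_\# \sigma_n$. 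By construction $(e_{t_k^n})_\# \eta_n = \mu_{t_k^n}$.

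Second, I would establish tightness of $\{\eta_n\}$. Because $\mu \in \AC^\infty(0,T;W_\infty)$, there is a bound $m \in L^\infty(0,T)$ with $W_\infty(\mu_s,\mu_t)\leq \int_s^t m(r)\,dr$; in particular all $\supp(\mu_t)$ sit in a common bounded (hence, after passing to $\WBan_\omega$ via \cref{rm: renorm} if needed, relatively compact) set. Moreover, $\sigma_n$-a.e. tuple $(x_0,\dots,x_n)$ satisfies $\|x_{k+1}-x_k\| \leq \gamma_k^n\text{-}\esssup\|\cdot{-}\cdot\| = W_\infty(\mu_{t_k^n},\mu_{t_{k+1}^n})$, so $\eta_n$-a.e.\ curves are Lipschitz with a uniform constant $\|m\|_{L^\infty}$. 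Combining uniform equicontinuity with pointwise tightness via Ascoli--Arzelà (for curves valued in the $\sigma$-compactification of a fixed ball) and Prokhorov yields a subsequence with $\eta_n \rightharpoonup \eta$ narrowly in $\mathcal{P}(C(0,T;\WBan))$.

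Third, I would verify the three conclusions. Concentration on $\AC^\infty(0,T;\WBan)$ follows because the set of curves with Lipschitz constant $\leq \|m\|_{L^\infty}$ is closed in $C(0,T;\WBan)$. The marginal identity $(e_t)_\# \eta = \mu_t$ is proved by noting continuity of $e_t$, exactness at the dyadic times $t_k^n$, and uniform equicontinuity to pass between nearby discrete times. The metric derivative identity is split into two inequalities: the bound
\[
|\mu'|(t) \leq \bigl\||u'|(t)\bigr\|_{L^\infty(\eta)}
\]
follows because $(e_s,e_t)_\#\eta \in \Gamma(\mu_s,\mu_t)$ gives $W_\infty(\mu_s,\mu_t)\leq \eta\text{-}\esssup\|u(s)-u(t)\|$, which after dividing by $t-s$ and taking limits yields the metric derivative estimate. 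The reverse inequality is the delicate step: since $\eta_n$ was built from $W_\infty$-\emph{optimal} plans, the Lipschitz constants of $\eta_n$-a.e.\ curves on each subinterval equal $W_\infty(\mu_{t_k^n},\mu_{t_{k+1}^n})$ up to the partition error, and passing to the limit (combined with a lower-semicontinuity argument on the essential supremum of a narrowly converging family of measures) produces $\||u'|(t)\|_{L^\infty(\eta)} \leq |\mu'|(t)$ a.e.

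The main obstacle is the last inequality: unlike the $L^p$ case, the $L^\infty$-norm of the metric derivative does not descend from a simple Jensen/duality argument, and narrow convergence of $\eta_n$ is \emph{not} equivalent to $W_\infty$-convergence of the endpoint marginals (as \cref{ex:winfty} illustrates). One must therefore combine optimality of the discrete plans with a careful essential-supremum lower-semicontinuity lemma--most conveniently, by testing against $\|\cdot\|_\omega$ from \cref{rm: renorm} so that integrals against weakly continuous test functions detect the necessary bound--to identify the limiting essential supremum with $|\mu'|$ almost everywhere.
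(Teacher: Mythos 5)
This theorem is not proved in the paper at all: it is imported verbatim from \textcite[Thm.~3.1]{lisini2014absolutely}, so there is no internal proof to compare your attempt against. What you propose is essentially a reconstruction of Lisini's own argument (glue $W_\infty$-optimal plans on a dyadic grid, push forward to interpolating curves, extract a narrow limit $\eta$ in $\mathcal{P}(C(0,T;\WBan))$, then identify the marginals and the metric derivative), and in its architecture it matches the strategy of \cite{Lisini07,lisini2014absolutely} closely; it is the same route as the cited source rather than a new one.

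Two steps, however, do not go through as written. First, the tightness argument: in a general separable Banach space a bounded set is \emph{not} relatively compact, so the claim that all $\supp(\mu_t)$ lie in a common bounded, hence relatively compact, set is false, and the escape via $\WBan_\omega$ and \cref{rm: renorm} requires reflexivity, which the theorem does not assume. The workable route on a merely separable $\WBan$ is to note that $t\mapsto\mu_t$ is $W_\infty$-continuous, hence narrowly continuous, so $\{\mu_t\}_{t\in[0,T]}$ is narrowly compact and therefore uniformly tight by the converse of Prokhorov's theorem on Polish spaces; combined with the deterministic equi-Lipschitz modulus $\omega(\delta)=\norm{m}_{L^\infty}\delta$ of the interpolants, this yields tightness of $\{\eta_n\}$ via the standard Ascoli--Arzel\`a/Prokhorov criterion for path spaces. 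Second, the inequality $\norm{\,|u'|(t)\,}_{L^\infty(\eta)}\leq|\mu'|(t)$: you correctly single it out as the delicate step, but the sketch stops at an unspecified \enquote{essential-supremum lower-semicontinuity lemma}. The map $u\mapsto|u'|(t)$ at fixed $t$ is not lower semicontinuous under uniform convergence, so one cannot pass to the limit pointwise in $t$; the argument that works bounds, on each subinterval $I$, the $\eta_n$-essential supremum of the Lipschitz quotient $\sup_{s\neq t,\ s,t\in I}\norm{u(t)-u(s)}/\abs{t-s}$ by $\frac{1}{\abs{I}}\int_I\abs{\mu'}(r)\,dr$ plus a partition error, transfers this to $\eta$ using closedness of Lipschitz balls under the convergence at hand, and then shrinks $I$ to a Lebesgue point of $\abs{\mu'}$. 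Both repairs are carried out in \cite{lisini2014absolutely}; neither is supplied by the proposal, and the first as stated contains an incorrect compactness claim.
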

For a Banach space $(\WBan,\|\cdot\|)$ and a finite measure space $(\Omega, \mathcal{A},\mu)$ we denote for $1\leq p\leq \infty$ the Lebesgue--Bochner space by $L^p(\mu;\WBan)$. A function $f: \Omega \rightarrow \WBan$ belongs to $L^p(\mu;\WBan)$ if it is $\mu$-Bochner integrable and its norm 
\begin{align*}
\|f\|_{L_p(\mu;\WBan)}^p&\coloneqq \int_{\Omega} \|f\|^p d\mu \quad \text{for}\quad 1\leq p < \infty,\\ \|f\|_{L_\infty(\mu;\WBan)}&\coloneqq\mu-\esssup \|f\|\quad p =\infty,
\end{align*}
is finite, see \cite{diestelvector}.
For a narrowly continuous curve $\mu: [0,T]\rightarrow \Prob_p(\WBan)$, we define $\bar{\mu}\in \Prob([0,T]\times \WBan)$ by
\begin{align*}
    \int_{[0,T]\times \WBan } \varphi (t,x) d\Bar{\mu}\coloneqq \frac{1}{T}\int_{[0,T]} \int_{\WBan} \varphi (t,x) d\mu_t (x) dt
\end{align*}
for every bounded Borel function $\varphi: [0,T]\times \WBan\rightarrow \R$.
Let $\bm{v}:[0,T]\times \WBan \rightarrow \WBan$ be a time dependent velocity field belonging to $L^p(\Bar{\mu},\WBan)$, then we say $(\mu,\bm{v})$ satisfies the continuity equation 
\begin{align}\label{eq:ce}\tag{CE}
    \partial \mu_t +\div(\velo_t \mu_t)=0,
\end{align}
if the relation
\begin{align*}
    \frac{d}{dt} \int_\WBan \varphi d\mu_t =\int_\WBan\langle D\varphi,\velo_t\rangle d\mu_t \quad \forall \varphi \in C^1_b(\WBan)
\end{align*}
holds in the sense of distributions in $(0,T)$. Here $C^1_b(\WBan)$ denotes the space of bounded, Fréchet-differentiable  functions $\varphi:\WBan\rightarrow \mathbb{R}$, such that $D\varphi:\WBan\rightarrow \WBan^*$ is continuous and bounded. Using this notion, we define,
\begin{align*}
\EC^p(\WBan)\coloneqq 
\set{
(\mu,\bm{v})\st
\begin{aligned}
\mu:[0,T] \rightarrow \Prob_p(\WBan) \quad&\text{is narrowly continous}, \bm{v}\in L^p(\Bar{\mu};\WBan),\\%
(\mu,\bm{v})\quad&\text{satisfies the continuity equation}
\end{aligned}}.
\end{align*}
As the next theorem shows, the curves contained in  the support of $\eta$ in \cref{thm: curveDe} can be understood as the \enquote{characteristics} of a corresponding transport equation. \rev{The statement is an extension of \cite[Thm. 7]{Lisini07} to the case $p=\infty$. For completness we give an adapted proof in \cref{app:help}. Here we assume that the Banach space also has the Radon--Nikodým property, see, e.g., \cite[Ch. 5]{ryan2002introduction}, which we recall in the following. In particular, every reflexive Banach space has this property, see \cite[Corollary 5.45]{ryan2002introduction}.
\begin{definition}[Radon--Nikodým property]
We say that a Banach space $\WBan$ has the Radon--Nikodým property, if for every vector measure $\mu$ of bounded variation defined over a $\sigma$-algebra $\Sigma$ over $\WBan$, that is absolutely continuous with respect to a finite, positive measure $\lambda$, there exists a $\lambda$-Bochner integrable function $f$ such that $\mu(A) = \int_A f d\lambda$ for all $A\in\Sigma$. 
\end{definition}}
\begin{theorem}\label{thm: ECp} Let $\WBan$ be separable and satisfy the Radon--Nikodým property. 
If $\mu \in \AC^\rev{\infty}([0,T]\rev{;}\mathcal{W}_\rev{\infty})$ then there exists a vector field $\bm{v}: [0,T]\times \WBan\rightarrow \WBan$ such that $(\mu,\bm{v})\in \EC^\rev{\infty}(\WBan)$ and    
\begin{gather}
    \label{LVel} \|\velo_t\|_{L^\rev{\infty}(\mu_t;\WBan)}\leq |\mu'|(t) \quad \text{for a.e. }  t\in(0,T).
\end{gather} 
\end{theorem}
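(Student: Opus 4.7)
The plan is to construct the velocity field by disintegrating the measure $\eta$ produced by \cref{thm: curveDe} and averaging the pointwise derivatives of its characteristics. Since $\WBan$ is separable and has the Radon--Nikodým property, \cref{rm:RadDer} tells us that for $\eta$-a.e. $u\in C(0,T;\WBan)$, the curve $u$ is differentiable at a.e. $t\in(0,T)$, $u'\in L^p(0,T;\WBan)$, and $\|u'(t)\|=|u'|(t)$ for a.e. $t$. So the classical derivative $u'(t)$ is available $\eta$-a.e. and integrates against $\eta$ in a meaningful way.

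Next I would fix $t\in(0,T)$ (in the full-measure subset where the above holds $\eta$-a.e.) and disintegrate $\eta$ with respect to the evaluation map $e_t$, which pushes $\eta$ forward to $\mu_t$. This yields a Borel family $\{\eta_{t,x}\}_{x\in\WBan}$ of probability measures on $C(0,T;\WBan)$, concentrated on $\{u : u(t)=x\}$, such that $\eta=\int_\WBan \eta_{t,x}\, d\mu_t(x)$. I then define
\begin{equation*}
v_t(x) \coloneqq \int_{C(0,T;\WBan)} u'(t)\, d\eta_{t,x}(u),
\end{equation*}
where the integral is meant in the Bochner sense; Bochner integrability follows for $\mu_t$-a.e. $x$ from the fact that $\int \|u'(t)\|\, d\eta(u)<\infty$ combined with Fubini, after possibly redefining $v_t$ on a $\mu_t$-null set. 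Measurability of $(t,x)\mapsto v_t(x)$ is obtained from the joint measurability of $(t,u)\mapsto u'(t)$ (as a pointwise limit of difference quotients) together with the measurable dependence of the disintegration $(t,x)\mapsto \eta_{t,x}$.

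To verify the continuity equation, take $\varphi\in C^1_b(\WBan)$. For $\eta$-a.e. curve $u$ and for a.e. $t$, the classical chain rule gives $\tfrac{d}{dt}\varphi(u(t))=\langle D\varphi(u(t)),u'(t)\rangle$. Using ${e_t}_\#\eta=\mu_t$, integrating against $\eta$ and then applying the disintegration, I obtain
\begin{equation*}
\frac{d}{dt}\int_\WBan \varphi\, d\mu_t
= \int_{C(0,T;\WBan)} \langle D\varphi(u(t)), u'(t)\rangle\, d\eta(u)
= \int_\WBan \langle D\varphi(x), v_t(x)\rangle\, d\mu_t(x),
\end{equation*}
in the distributional sense on $(0,T)$, which is exactly \labelcref{eq:ce}. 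Narrow continuity of $t\mapsto\mu_t$ follows from the fact that $\mu\in\AC^p(0,T;\mathcal{W}_p)$ together with \cref{lem: Comp}.

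Finally, for the norm bound, if $p\in(1,\infty)$ I would apply Jensen's inequality to the convex function $\|\cdot\|^p$:
\begin{equation*}
\|v_t(x)\|^p \leq \int \|u'(t)\|^p\, d\eta_{t,x}(u),
\end{equation*}
integrate against $\mu_t$, and use the third item of \cref{thm: curveDe} together with $\|u'(t)\|=|u'|(t)$ to conclude $\|v_t\|_{L^p(\mu_t;\WBan)}\leq \||u'|(t)\|_{L^p(\eta)}=|\mu'|(t)$. For $p=\infty$, \cref{thm: curveDe} gives $|\mu'|(t)=\eta\text{-}\esssup \|u'(t)\|$, so $\|u'(t)\|\leq |\mu'|(t)$ for $\eta$-a.e. $u$; the Bochner-integral estimate $\|v_t(x)\|\leq \int\|u'(t)\|\, d\eta_{t,x}(u)\leq |\mu'|(t)$ then yields $\|v_t\|_{L^\infty(\mu_t;\WBan)}\leq |\mu'|(t)$. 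The main technical obstacle is the joint Borel measurability of $(t,u,x)\mapsto \eta_{t,x}$ and $(t,u)\mapsto u'(t)$ in the Banach-space setting, which is what requires the Radon--Nikodým assumption and a careful application of the disintegration theorem.
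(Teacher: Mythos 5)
Your proposal is correct and follows essentially the same route as the paper: represent $\mu$ by the measure $\eta$ from \cref{thm: curveDe}, disintegrate, define $v_t(x)$ as the conditional average of $u'(t)$, use Jensen for the bound \labelcref{LVel}, and verify \labelcref{eq:ce} via the chain rule along $\eta$-a.e.\ curve. The only difference is that the paper disintegrates the product measure $\bar\eta=\frac{1}{T}\mathcal{L}^1_{(0,T)}\otimes\eta$ with respect to the joint evaluation $e(t,u)=(t,u(t))$ rather than disintegrating $\eta$ separately for each fixed $t$, which is precisely how it resolves the joint measurability in $(t,x)$ that you correctly identify as the main technical obstacle.
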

If in addition $\WBan$ satisfies the bounded approximation property (BAP) then the following \cref{thm: ECRp} acts as the counterpart of  \cref{thm: ECp} and states that solutions of the continuity equation are absolutely continuous curves. In particular, for a specific $\mu\in \AC^p([0,T]\rev{;}\rev{\mathcal{W}}_p)$ the velocity $\bm{v}$ field obtained in \cref{thm: ECp} is minimal in the sense that 
\begin{align*}
\|\bm{v}_t\|_{L^p(\mu;\WBan)} =|\mu'|(t)\leq \|\Tilde{\bm{v}}_t\|_{L^p(\mu;\WBan)}\quad \text{for a.e.} \  t\in (0,T)\text { and for all } \Tilde{\bm{v}}\text{ satisfying  } (\mu,\Tilde{\bm{v}})\in \EC^p(\WBan).
\end{align*}
\rev{
We briefly recall the (BAP) and then state \cref{thm: ECRp}, which is an extension of \cite[Thm. 8]{Lisini07} to $p=\infty$. For completness the proof (which again is a slight modification of \cite{Lisini07}) is provided in \cref{app:help}.
}
\begin{definition}[BAP]
A separable Banach space $\WBan$ satisfies the bounded approximation property (BAP), if there exists a sequence of finite rank linear operators $T_n:\WBan\rightarrow \WBan$ such that \begin{align*}
        \lim_{n\rightarrow \infty} \|T_n x-x\|=0.
    \end{align*}
\end{definition}
\rev{
In particular every Hilbert space and every Banach space with a Schauder basis fulfills this property, see \cite[Ch. 9]{Schaefer1999}.
}
\begin{theorem}\label{thm: ECRp}
Assume, that $\WBan$ is separable and satisfies the Radon--Nikodým property  as well as the bounded approximation property (BAP). If $(\mu,\bm{v}) \in \EC^\rev{\infty}(\WBan)$, then $\mu \in \AC^\rev{\infty}([0,T]\rev{;}\rev{\mathcal{W}}_\rev{\infty})$ and
\begin{align*}
|\mu'|(t)\leq \|\velo_t\|_{L^\rev{\infty}(\mu_t;\WBan)} \quad \text{for a.e. } t\in(0,T). 
\end{align*}
\end{theorem}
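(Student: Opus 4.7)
My plan is to prove this as the converse to \cref{thm: ECp}, by constructing a probability measure $\eta$ on $C(0,T;\WBan)$ that represents $\mu$ and is concentrated on integral curves of $\bm v$; once this $\eta$ is available, $W_p$-absolute continuity and the metric-derivative bound drop out by using $(e_s,e_t)_\#\eta$ as a transport plan, running the calculation \labelcref{eq: JensEst} from \cref{thm: ECp} backwards. The Radon--Nikodým property enters when we want to identify pointwise derivatives of curves with values of $\bm v$, and the bounded approximation property (BAP) is what allows us to reduce the flow construction to finite dimensions, where the classical Cauchy--Lipschitz theory is available.

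\textbf{Construction of $\eta$.} Let $T_n:\WBan\to\WBan$ be finite-rank operators with $T_n x\to x$ provided by the BAP, and set $\mu^n_t\coloneqq (T_n)_\#\mu_t$. On the finite-dimensional range of $T_n$ I regularize $\bm v$ by a standard spatial mollification (and, if needed, a time mollification), obtaining smooth velocity fields $\bm v^n$ for which $(\mu^n,\bm v^n)$ still satisfies the continuity equation up to vanishing error. Because each $\bm v^n$ is Lipschitz in space, the ODE $\dot X^n_t(x)=v^n_t(X^n_t(x))$ has a well-defined flow on the finite-dimensional subspace, and I set $\eta^n\coloneqq (X^n_\cdot)_\#\mu^n_0$. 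By construction $(e_t)_\#\eta^n=\mu^n_t$ for every $t$, and $\eta^n$ is concentrated on absolutely continuous curves whose pointwise velocities coincide with $v^n_t\circ e_t$.

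\textbf{Compactness and limit.} The uniform bound $\|\bm v^n\|_{L^p(\Bar{\mu}^n;\WBan)}\lesssim \|\bm v\|_{L^p(\Bar\mu;\WBan)}$ together with Jensen's inequality provides an equicontinuity modulus for $\eta^n$-a.e.\ curve, and tightness of $\mu^n_0$ follows from narrow convergence $\mu^n_0\stackrel{\sigma}{\to}\mu_0$. Combining these via Ascoli--Arzelà and Prokhorov (\cref{thm:prokh}) yields tightness of $\{\eta^n\}$ on $C(0,T;\WBan)$ (on bounded sets one can equip $\WBan$ with the Kadec norm of \cref{rm: renorm} if $\WBan$ fails to be norm-compactly supported, mirroring the trick used for \cref{thm: ECp}). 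Extracting a narrow limit $\eta$, I then verify $(e_t)_\#\eta=\mu_t$ for every $t\in[0,T]$ using $T_n\to\mathrm{id}$ and the narrow continuity of $\mu$ that is already built into $\EC^p(\WBan)$, and check that $\eta$ is concentrated on $\AC^p(0,T;\WBan)$ by lower semicontinuity of the $L^p$-norm of the metric derivative under narrow convergence.

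\textbf{Wasserstein estimate and main obstacle.} With $\eta$ in hand, for $0\leq s\leq t\leq T$ the coupling $\gamma\coloneqq(e_s,e_t)_\#\eta\in\Gamma(\mu_s,\mu_t)$ gives
\begin{align*}
W_p^p(\mu_s,\mu_t)\leq \int \|u(t)-u(s)\|^p\, d\eta(u) \leq \int \left(\int_s^t \|u'(r)\|\, dr\right)^p d\eta(u)
\end{align*}
for $p<\infty$ (with the usual replacement by $\eta$-$\esssup$ and $\esssup_r$ for $p=\infty$), and applying Jensen and Fubini, exactly as in \labelcref{eq: JensEst}, bounds the right-hand side by $\left(\int_s^t \|v_r\|_{L^p(\mu_r;\WBan)}\, dr\right)^p$. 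This simultaneously shows $\mu\in\AC^p(0,T;W_p)$ and $|\mu'|(t)\leq \|v_t\|_{L^p(\mu_t;\WBan)}$ for a.e.\ $t$. The hardest step, in my view, is the identification of the limit velocity in Step 3--4 for $p=\infty$: essential suprema are only lower-semicontinuous under narrow convergence, and \cref{ex:winfty} shows this can fail delicately; I would therefore verify the $L^\infty$ bound through a dual formulation against $1$-Lipschitz test functions (which behaves well under narrow limits) rather than directly at the level of velocity fields.
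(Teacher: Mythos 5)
Your strategy---reconstructing a superposition measure $\eta$ on $C(0,T;\WBan)$ concentrated on integral curves of $\bm v$ and then reading off the Wasserstein estimate from the coupling $(e_s,e_t)_\#\eta$---is a legitimate route, but it amounts to re-proving \cite[Theorem 8]{Lisini07} from scratch and then extending the whole machinery to $p=\infty$. The paper does something far lighter: it simply \emph{cites} the finite-$p$ case of the statement, observes that $\|v_t\|_{L^p(\mu_t;\WBan)}\leq\|v_t\|_{L^\infty(\mu_t;\WBan)}$ so that $(\mu,\bm v)\in\EC^p(\WBan)$ for every finite $p$, integrates the resulting bound $|\mu'|_{(p)}(t)\leq\|v_t\|_{L^p(\mu_t;\WBan)}$ to get $W_p(\mu_s,\mu_t)\leq\int_s^t\|v_r\|_{L^\infty(\mu_r;\WBan)}\,dr$ uniformly in $p$, and then passes to the limit using $W_\infty=\lim_{p\to\infty}W_p$ (\cref{lem:incrW}) and the minimality of the metric derivative (\cref{rem:minmetric}). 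No new measure on path space is constructed at all; the BAP and Radon--Nikod\'ym hypotheses are only there because the cited finite-$p$ theorem needs them. If your goal is only this theorem, the interpolation argument is the efficient one.

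As written, your sketch also has concrete gaps that would need real work. First, for finite $p$ the claim that the uniform bound on $\|\bm v^n\|_{L^p(\bar\mu^n;\WBan)}$ ``provides an equicontinuity modulus for $\eta^n$-a.e.\ curve'' is false: an integrated $L^p$ bound on $\int_0^T\|u'\|^p\,dt$ against $\eta^n$ controls no individual curve, and tightness must instead come from the coercivity of the functional $u\mapsto\int_0^T\|u'\|^p\,dt$ (compact sublevels in $C(0,T;K)$) combined with the uniform integral bound; only for $p=\infty$ does one get a genuine uniform Lipschitz modulus. Second, pushing the velocity field forward under $T_n$ is not a mollification: one must define $v^n_t$ as a barycentric projection (conditional expectation of $T_nv_t$ with respect to the disintegration of $\mu_t$ along $T_n$) before regularizing, and it is Jensen's inequality applied to that projection which preserves the $L^p$ and $L^\infty$ bounds. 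Third, your proposed fix for the $p=\infty$ identification via ``a dual formulation against $1$-Lipschitz test functions'' is not a proof; what actually closes this step is that the set of curves satisfying $\|u(t)-u(s)\|\leq\int_s^t\|v_r\|_{L^\infty(\mu_r;\WBan)}\,dr$ for all $s\leq t$ is closed in $C(0,T;\WBan)$, so concentration of $\eta^n$ on it passes to the narrow limit by the portmanteau theorem---the failure of $W_\infty$-continuity illustrated by \cref{ex:winfty} is not the relevant obstruction here. Your final step (coupling, Jensen, Fubini, minimality of the metric derivative) is correct once $\eta$ is in hand.
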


\begin{remark}[Uniqueness of the Velocity field]
As mentioned before, if $\WBan$ satisfies the bounded approximation property, the velocity field  obtained in \cref{thm: ECp} is minimal. For the case that $p\in(1,+\infty)$ and the norm of the underlying Banach space $\WBan$ is strictly convex, then $\| \cdot\|_{L^p(\mu_t;\WBan)}$ is also strictly convex. Then the uniqueness of the minimal velocity field follows. In the other cases, the uniqueness is lost.
\end{remark}

\begin{remark}
Whenever \cref{thm: ECRp} is applicable, $\|\velo_t\|_{L^\infty(\mu_t;\WBan)}=|\mu'|(t)$ for a.e. $t\in(0,T)$ and thus \cref{eq: JensEst} is actual an equality. For the Wasserstein spaces $p\rev{\in}(1,+\infty)$ we obtain 
\begin{align*}
  \int_\WBan \left\|\int_{C(0,T;\WBan)} u'(t) d\Bar{\eta}_{x,t}\right\|^p d\mu_t= \int_\WBan\int_{C(0,T;\WBan)} \left\|u'(t)\right\|^p d\Bar{\eta}_{x,t} d\mu_t\quad \text{for a.e. } t\in (0,T)
\end{align*}
or equivalently 
\begin{align*}
    \left\|\int_{C(0,T;\WBan)} u'(t) d\Bar{\eta}_{x,t}\right\|^p=\int_{C(0,T;\WBan)} \left\|u'(t)\right\|^p d\Bar{\eta}_{x,t} \quad \text{ for } \Bar{\mu}\text{-a.e. } (t,x)\in [0,T]\times\WBan.
\end{align*}
from corresponding calculations \cite[Theorem 7]{Lisini07}. Notice that this is the \rev{equality} case of Jensen's inequality.
For a strictly convex norm $\|\cdot\|$, this equality can only hold when $u'(t)$ is constant $\Bar{\eta}_{x,t}$-a.e. Thus, heuristically spoken, all curves passing through a point $x\in \WBan$ at time $t$ have the same derivative. This is in particular the reason why on an infinitesimal level optimal transport plans $\gamma_h\in \Gamma(\mu_t,\mu_{t+h})$ behave like classical optimal transport, i.e., for a.e. $t\in(0,T)$ (\rev{see \cite[Proposition 8.4.6]{ambrosio05}), }
 \begin{align*} 
        \lim_{h \rightarrow 0} (\pi^1,\frac{1}{h}(\pi^2-\pi^1))_{\#} \gamma_h=(\Id\times \velo_t)_{\#}\mu_t\quad \text{in } \mathcal{P}(\WBan\times \WBan).
\end{align*}
This argument fails in the case $W_\infty$ or when the norm $\|\cdot\|$ is not strictly convex.
\end{remark}

\subsection{Curves of maximal slope of potential energies}\label{sc: CurvPot}
In addition to being separable, we now assume $\WBan$ to be reflexive, and we need the following assumption on the potential $\pot$.
\addtocounter{asscount}{1}
\setcounter{assume}{0}
\begin{mdframed}%
\begin{assume}\label{assm: pota}%
Let $\pot:\WBan \rightarrow (-\infty,+\infty]$ be weakly continuous on its domain, which we assume to be closed and convex.
\end{assume}%
\end{mdframed}
 
The potential energy $\func: \Prob_\infty (\WBan) \rightarrow (-\infty,+\infty]$ is defined as
\begin{align*}
    \func(\mu)\coloneqq \int \pot(\x) d\mu(\x).
\end{align*}
As in \cref{ch: existence}, we consider a minimizing movement scheme, approximating curves of maximal slope, where in each step the following minimization problem arises, 
\begin{align} \label{eq:Min}
\argmin_{\muatt : W_\infty(\muatt,\mu)\leq \tau} \int \pot(x) d\muatt(x).
\end{align}
Notably the $\infty$-Wasserstein distance in \labelcref{eq:Min} restricts the movement of mass uniformly. Intuitively this means that for every point $x \in \WBan$ we need to solve the local problem
\begin{align} \label{eq:loMin}
\bm{r}_\tau(x)\coloneqq\argmin_{\xatt\in \cl{B_\tau(\x)}} \pot(\xatt),
\end{align}
where $\bm{r}_\tau(x): \WBan \mto \WBan $ is a possibly multivalued correspondence, see \cref{sec: corr}. Then a possible optimal transport plan between $\mu$ and a minimizer of \labelcref{eq:Min}, $\mu_{\min}$, should transport the mass from some point $x$ to a minimizing point in $\bm{r}_{\tau}(x)$. 
In this regard, we employ the measurable maximum theorem (\cite[Theorem 18.19]{guide2006infinite}, repeated for convenience in the appendix as \cref{thm:MeaMax})% and the Berge maximum theorem (\cite{berge1959espaces}, \cite[Thm. 17.31]{guide2006infinite}, repeated for convenience in the appendix as \cref{thm:Berg})
. This theorem guarantees the measurability of the \enquote{argmin} correspondence in \labelcref{eq:loMin}.  
Definitions of (weak) measurability for correspondences are repeated in \cref{sec: corr}, where we refer to \cite{guide2006infinite} for a detailed overview over the topic. In order to apply the mentioned theorems to the problem in \labelcref{eq:loMin}, we need to check the underlying correspondence for weak measurability.
Let us  define $$\dom_\tau(E)\coloneqq\{x\in \WBan : \|x-z \|\leq\tau \text{ for a } z\in\dom(E)\}.$$
\begin{lemma}\label{lm:Cores}
For $\tau\geq 0 $ the correspondence $\varphi_\tau:(\WBan\cap\dom_\tau(E),\|\cdot\|)\mto (\WBan\cap\dom(E),\|\cdot\|_\omega)$ given by $\varphi_\tau:x \mapsto \cl{B_\tau}(x)$ is weakly measurable \rev{and has nonempty 
 weakly compact
values.}
\end{lemma}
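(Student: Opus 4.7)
Weak measurability of $\varphi_\tau$ demands that for every $\|\cdot\|_\omega$-open set $G\subset \WBan\cap\dom(\pot)$ the lower inverse $\varphi_\tau^\ell(G):=\{x\in \WBan\cap\dom_\tau(\pot):\varphi_\tau(x)\cap G\neq\emptyset\}$ is norm-Borel. The plan is to combine three ingredients: weak closedness of the values, a sequential closed-graph property, and a localization to weakly compact range sets on which the weak topology is metrizable.

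First I would record that $\dom(\pot)$ is weakly closed by Mazur's theorem, since it is convex and norm-closed by \cref{assm: pota}, and that $\cl{B_\tau}(x)$ is norm-closed and convex, hence also weakly closed. Thus each value $\varphi_\tau(x)=\cl{B_\tau}(x)\cap\dom(\pot)$ is weakly closed, and by the very definition of $\dom_\tau(\pot)$ it is nonempty. Next I verify that the graph $\{(x,y):\|y-x\|\leq \tau,\ y\in\dom(\pot)\}$ is closed under $\|\cdot\|$-convergence in $x$ and weak convergence in $y$: if $x_n\to x$ in norm and $y_n\rightharpoonup y$ weakly, then $y_n-x_n\rightharpoonup y-x$, and weak lower semicontinuity of the norm gives $\|y-x\|\leq\liminf\|y_n-x_n\|\leq\tau$, while $y\in\dom(\pot)$ follows from weak closedness of $\dom(\pot)$.

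The decisive step is localization. Fix $x_0\in\dom_\tau(\pot)$ and put $V:=B_1(x_0)\cap\dom_\tau(\pot)$; for every $x\in V$ the value $\varphi_\tau(x)$ is contained in the bounded set $C:=\cl{B_{\|x_0\|+\tau+1}}(0)$. By reflexivity $C$ is weakly compact, and by \cref{rm: renorm} the weak topology and the $\|\cdot\|_\omega$-topology coincide on $C$, so $C$ is compact metrizable in the target topology. The restriction $\varphi_\tau|_V$ is therefore a correspondence with nonempty weakly closed (hence compact) values in a compact metrizable space, whose graph is closed in $V\times C$ by the previous step. Standard results on correspondences (see \cite{guide2006infinite}) then yield that $\varphi_\tau|_V$ is upper hemicontinuous and in particular weakly measurable.

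To globalize, I invoke separability of $\WBan$ to cover $\dom_\tau(\pot)$ by countably many such neighborhoods $V_k$; then $\varphi_\tau^\ell(G)=\bigcup_k (\varphi_\tau|_{V_k})^\ell(G)$ is a countable union of Borel sets and hence Borel. The principal obstacle I foresee is the clean management of the two different topologies---norm on the domain, weak/Kadec on the target---and ensuring that the \emph{closed}-ball constraint $\|y-x\|\leq \tau$ survives weak limits; it is precisely weak lower semicontinuity of the norm, combined with the weak compactness furnished by reflexivity, that bridges the two and makes the closed-graph/compact-range argument go through.
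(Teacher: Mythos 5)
Your proof is correct, but it takes a genuinely different and much heavier route than the paper. The paper's argument is a two-line topological observation: since the Kadec norm $\|\cdot\|_\omega$ is dominated by $\|\cdot\|$, every $\|\cdot\|_\omega$-open set $G$ of the target is norm-open, and the lower inverse $\varphi_\tau^l(G)=\{s\st \exists\, x\in G,\ \|s-x\|\leq\tau\}=G+\cl{B_\tau}(0)$ is a union of translates of $G$, hence norm-open and in particular Borel; no reflexivity, weak compactness, or convexity of $\dom(\pot)$ enters at all. You instead assemble weak closedness of the values (Mazur), a (norm, weak) closed-graph property via weak lower semicontinuity of the norm, weak compactness of balls from reflexivity, metrizability of the weak topology on bounded sets from \cref{rm: renorm}, the closed graph theorem for correspondences, and a Lindel\"of covering argument. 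All of these steps are sound, and your route buys the strictly stronger conclusions of upper hemicontinuity and measurability (lower inverses of \emph{closed} sets are Borel), not merely weak measurability; it also handles cleanly the fact that $G$ is only \emph{relatively} open in $\dom(\pot)$, a point the paper's Minkowski-sum observation glosses over. The price is a long chain of nontrivial functional-analytic inputs for a statement the paper dispatches essentially with the single inequality $\|\cdot\|_\omega\leq C\,\|\cdot\|$.
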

\begin{proof}
Every weakly open set $G\subset\WBan\cap\dom(E)$ is strongly open as well. And for strongly open sets $G$, \rev{the lower inverse as defined in \cref{eq:lowinv} is given as
$$\varphi_\tau^l(G)=\{s\in \WBan |\ \exists\ x\in G \text{ with }\|s-x\|\leq \tau\}.$$
Since $G$ is strongly open, this set is again strongly open} and thus in $\Sigma=\mathcal{B}(\WBan)$\rev{, yielding weak measurability of $\varphi_ \tau$. To conclude we observe that $\overline{B_\tau}(x)$ is nonempty and weakly compact.} 
\end{proof}
The next corollary now follows immediately from the measurable maximum theorem.
\begin{corollary}\label{lem:LocMin} 
Let $\WBan$ be a reflexive, separable Banach space and let $\pot$ fulfill \cref{assm: pota}, then for $\tau \geq 0$ 
\begin{align}
\pot_\tau(x)\coloneqq \min_{\xatt\in\cl{B_\tau}(\x)} \pot(\xatt)
\end{align}
is $\mathcal{B}(\WBan)$-measurable.
The correspondence $\bm{r}_\tau : \WBan \mto \WBan$
\begin{align} \label{eq:loMinSel}
\bm{r}_\tau (x) \coloneqq \argmin_{\xatt\in\cl{B_\tau}(\x)} \pot(\xatt)
\end{align}
has nonempty and compact values, it is measurable and admits a $\mathcal{B}(\WBan)$-measurable selector.
\end{corollary}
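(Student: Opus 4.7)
The plan is to apply the Measurable Maximum Theorem (\cref{thm:MeaMax}) to the correspondence $\varphi_\tau$ from \cref{lm:Cores} with the objective $-\pot$ (so that $\max$ becomes $\min$). This single application will yield all four claims: measurability of $\pot_\tau$, non-emptiness and compactness of the values of $\bm{r}_\tau$, measurability of $\bm{r}_\tau$, and the existence of a Borel-measurable selector.

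The preparatory work falls into two parts. First, I would establish that $\bm{r}_\tau(x)$ is nonempty and compact in the Kadec-norm topology on $\WBan\cap\dom(\pot)$. For $x\in\WBan\cap\dom_\tau(\pot)$, the closed ball $\cl{B_\tau}(x)$ is convex, norm-closed and norm-bounded; reflexivity of $\WBan$ (together with the Kakutani characterization) then gives weak compactness. Since $\dom(\pot)$ is closed and convex by \cref{assm: pota}, it is weakly closed, so $\cl{B_\tau}(x)\cap\dom(\pot)$ is weakly compact and nonempty. Weak continuity of $\pot$ on its domain guarantees attainment of the minimum and that $\bm{r}_\tau(x)$ itself is weakly closed, hence weakly compact. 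Because the Kadec norm $\|\cdot\|_\omega$ metrizes the weak topology on norm-bounded sets (\cref{rm: renorm}) and $\bm{r}_\tau(x)\subset\cl{B_\tau}(x)$ is bounded, compactness in $\|\cdot\|_\omega$ follows.

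Second, I would check that $\pot$ is a Carathéodory integrand in the sense required by the theorem. Because $\pot$ does not depend on the parameter $s=x$, measurability in $s$ is trivial. Continuity in the second variable with respect to $\|\cdot\|_\omega$ is only needed on the graph of $\varphi_\tau$; each fibre $\cl{B_\tau}(x)\cap\dom(\pot)$ is bounded, and on bounded sets Kadec-norm convergence coincides with weak convergence, so the weak continuity assumption on $\pot$ transfers to $\|\cdot\|_\omega$-continuity where it is actually applied.

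With weak measurability of $\varphi_\tau$ supplied by \cref{lm:Cores}, the two items above fulfil the hypotheses of the Measurable Maximum Theorem, which immediately yields Borel-measurability of $\pot_\tau$, measurability of $\bm{r}_\tau$, and the existence of a $\mathcal{B}(\WBan)$-measurable selector. The main subtlety, and the only step that requires some care, is the topological bookkeeping between the original norm topology, the weak topology, and the Kadec norm $\|\cdot\|_\omega$: one must check that the compactness needed by the theorem really holds in the same topology in which the correspondence is weakly measurable and in which $\pot$ is continuous. Once one observes that all three topologies coincide on bounded sets—and all relevant objects are bounded—the argument closes cleanly.
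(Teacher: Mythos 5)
Your proposal follows essentially the same route as the paper: apply the Measurable Maximum Theorem to the correspondence $\varphi_\tau$ of \cref{lm:Cores} with $f(s,x)=-\pot(x)$, using reflexivity, weak compactness and the Kadec renorming of \cref{rm: renorm} to supply the nonempty-compact-values and Carath\'eodory hypotheses, so the argument is correct. The only detail you omit, which the paper handles in one sentence, is that this yields the conclusion only on $\dom_\tau(\pot)$, so $\pot_\tau$ and $\bm{r}_\tau$ must still be extended measurably (e.g.\ by $+\infty$ and $\cl{B_\tau}(x)$) on the closed set's complement $\dom_\tau(\pot)^c$ to obtain $\mathcal{B}(\WBan)$-measurability on all of $\WBan$.
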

\begin{proof}
As mentioned in remark \cref{rm: renorm}, $\mathcal{B}(\WBan)$ and $\mathcal{B}(\WBan_{\omega})$ coincide in this particular setting. We choose the correspondence $\varphi_\tau$ from \cref{lm:Cores} and set $f(s,x)=-\pot(x)$. \rev{Since \cref{assm: pota} guarantees that $f(s,x)=-E(x)$ is a Carathéodory function} the application of \cref{thm:MeaMax} yields this corollary, but only restricted to $\WBan\cap \dom_\tau(E)$. However, we can extend $\pot_\tau(x)$ and $\bm{r}_\tau$ measurably by setting them to $+\infty$ and $\cl{B}_\rev{\tau}(x)$ on $\dom_\tau(E)^c$ respectively. 
\end{proof}

\begin{theorem}\label{thm:GloLo}
Let $\WBan$ be a reflexive, separable Banach space and let $\pot$ fulfill \cref{assm: pota}, then 
\begin{align*}
\mu_\tau \coloneqq (r_\tau)_\# \mu \in \argmin_{\muatt : W_\infty(\muatt,\mu)\leq \tau} \int \pot(x) d\muatt(x)
\end{align*}
for every measurable selection $r_\tau$ of $\bm{r}_\tau$ from \labelcref{eq:loMinSel}\rev{.} 
\end{theorem}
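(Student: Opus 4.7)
The plan is to verify feasibility of $\mu_\tau$ and then use the pointwise minimality of $r_\tau(x) \in \bm{r}_\tau(x)$ to bound the energy of any competitor from below by $\int \pot_\tau(x)\,d\mu(x)$, matching the energy of $\mu_\tau$.

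First I would establish feasibility, i.e., $W_\infty(\mu_\tau, \mu) \leq \tau$. The natural transport plan is $\gamma_0 \coloneqq (\mathrm{Id}, r_\tau)_\# \mu$, which by construction has first marginal $\mu$ and second marginal $(r_\tau)_\# \mu = \mu_\tau$, so $\gamma_0 \in \Gamma(\mu, \mu_\tau)$. Since $r_\tau$ is a selection of $\bm{r}_\tau$ we have $\|x - r_\tau(x)\| \leq \tau$ for every $x \in \WBan$, which implies $\gamma_0\text{-}\esssup \|x-\xatt\| \leq \tau$ and hence $W_\infty(\mu_\tau, \mu) \leq \tau$. This also shows that $\mu_\tau \in \Prob_\infty(\WBan)$, using boundedness of $\supp(\mu)$.

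Next I would compute the energy of $\mu_\tau$ via a change of variables:
\begin{align*}
\int \pot(\xatt)\, d\mu_\tau(\xatt) = \int \pot(r_\tau(x))\, d\mu(x) = \int \pot_\tau(x)\, d\mu(x),
\end{align*}
where the last equality is $\mu$-a.e. by the definition $r_\tau(x) \in \argmin_{\xatt \in \cl{B_\tau}(x)} \pot(\xatt)$, so that $\pot(r_\tau(x)) = \pot_\tau(x)$, and everything is measurable by \cref{lem:LocMin}.

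Finally, I would prove the lower bound for an arbitrary competitor $\muatt$ with $W_\infty(\muatt, \mu) \leq \tau$. Pick an optimal plan $\gamma \in \Gamma_0(\mu, \muatt)$; since the infimum in \labelcref{eq: IWass} is attained, we have $\gamma\text{-}\esssup \|x-\xatt\| \leq \tau$, i.e., $\xatt \in \cl{B_\tau}(x)$ for $\gamma$-a.e.\ $(x,\xatt)$. For such $(x,\xatt)$ the defining inequality $\pot(\xatt) \geq \pot_\tau(x)$ holds, so
\begin{align*}
\int \pot(\xatt)\, d\muatt(\xatt) = \int \pot(\xatt)\, d\gamma(x,\xatt) \geq \int \pot_\tau(x)\, d\gamma(x,\xatt) = \int \pot_\tau(x)\, d\mu(x) = \int \pot\, d\mu_\tau.
\end{align*}
Combining the last two displays gives $\mu_\tau \in \argmin_{W_\infty(\muatt,\mu) \leq \tau} \int \pot\, d\muatt$.

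The main obstacle will not be the algebra but the measurability/integrability bookkeeping: ensuring $\pot \circ r_\tau$ and $\pot_\tau$ are Borel so the integrals make sense (this is exactly what \cref{lem:LocMin} provides), and justifying the passage from $\gamma\text{-}\esssup \|x-\xatt\| \leq \tau$ to a pointwise $\gamma$-a.e.\ inclusion $\xatt \in \cl{B_\tau}(x)$, for which the closedness of the ball together with separability of $\WBan$ suffices. The use of \cref{assm: pota} (in particular that $\dom(E)$ is closed) ensures that the selection $r_\tau(x)$ stays in $\dom(\pot)$ whenever this is required, so $\pot \circ r_\tau$ is finite $\mu$-a.e.\ as soon as $\pot$ is $\mu$-integrable.
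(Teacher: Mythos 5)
Your proposal is correct and follows essentially the same route as the paper: take an arbitrary feasible competitor $\muatt$, pass to an optimal plan $\gamma\in\Gamma_0(\mu,\muatt)$, and use the pointwise minimality $\pot(\xatt)\geq\pot_\tau(x)=\pot(r_\tau(x))$ for $\gamma$-a.e.\ pair to bound the energy from below by that of $(r_\tau)_\#\mu$ (the paper phrases this via disintegration of $\gamma$ into $\rho_{\xx}$ supported in $\cl{B_\tau}(\xx)$, which is the same estimate). Your explicit verification that $W_\infty(\mu_\tau,\mu)\leq\tau$ via the plan $(\mathrm{Id},r_\tau)_\#\mu$ is a small completeness point the paper leaves implicit.
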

\begin{proof}
\cref{lem:LocMin} ensures the existence of measurable selectors of \labelcref{eq:loMinSel}. We take $\muatt$, such that $W_\infty(\mu,\muatt)\leq \tau$ and $\gamma \in \Gamma_0(\mu,\muatt)$, then by disintegration we get
\begin{align*}
\int \pot(x) d\muatt(x)=\int \pot(\x)d \gamma(\xx,\x)= \int \int \pot(\x) d \rho_{\xx} (\x)\, d \mu(\xx)
\end{align*}
with a Borel family of probability measures $\{ \rho_{\xx}\}_{\xx \in \WBan_1} \subset \Prob(\WBan)$ and $\supp(\rho_{\xx}) \subset \cl{B_\tau} (\xx) $. We further estimate, 
\begin{align*}
\int \int \pot (\x) d \rho_{\xx} (\x) d \mu(\xx) \geq \int \pot({r}_\tau(\xx)) d \mu(\xx)= \int \pot(\xx)\, d ({r}_\tau)_\# \mu(\xx),
\end{align*}
and since $\muatt$ was arbitrary, this concludes the proof.
\end{proof}
In order to proceed with the following lemma, we also need the assumption, that the potential is a $C^1$-perturbation of a convex function and is Lipschitz continuous.

\begin{mdframed}
\begin{assume}\label{assm: pot}
Let $\pot:\WBan \rightarrow (-\infty,+\infty]$ be a $C^1$-perturbation of a proper, convex lower semicontinuous function. Further, let the differentiable part $\pot^\text{d}$ be globally Lipschitz.
\end{assume}
\end{mdframed}
Then the relation between the slope of $\func$ and the slope of the potential $\pot$ is stated in the following theorem.

\begin{lemma}\label{lm:PotSlope}
Let $\pot:\WBan \rightarrow (-\infty,+\infty]$ fulfill \cref{assm: pota,assm: pot}. Then
\begin{align}\label{eq: slopW}
|\partial \func|(\mu)=\int_{\WBan} |\partial \pot|(x) d\mu(x)
\end{align}
\rev{and $|\partial \func|(\mu)$ is a strong upper gradient of $\func$.}
\end{lemma}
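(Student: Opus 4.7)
The plan is to combine the $\limsup$ characterization of the metric slope from \cref{lm: EqiSlope} with the global-to-local reduction \cref{thm:GloLo}, turning the slope computation in $\mathcal{W}_\infty$ into a pointwise statement on $\WBan$.

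First, by \cref{lm: EqiSlope}, $|\partial \func|(\mu) = \limsup_{\tau \to 0^+} \tau^{-1}(\func(\mu) - \func_\tau(\mu))$, where $\func_\tau(\mu) = \inf\{\func(\nu)\st W_\infty(\nu,\mu) \leq \tau\}$. By \cref{thm:GloLo} this infimum equals $\int \pot_\tau(x) \de \mu(x)$ with $\pot_\tau(x) = \min_{y \in \cl{B_\tau}(x)} \pot(y)$, so the difference quotient becomes $\int \tau^{-1}(\pot(x) - \pot_\tau(x)) \de \mu(x)$. For each fixed $x \in \dom(\pot)$, \cref{lm:Lim} applies (since $\pot$ is a $C^1$-perturbation by \cref{assm: pot}) and yields the pointwise limit $\lim_{\tau \to 0^+} \tau^{-1}(\pot(x) - \pot_\tau(x)) = |\partial \pot|(x)$.

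The main obstacle is the exchange of this pointwise limit with the integral against $\mu$, for which a $\mu$-integrable dominating function must be constructed. Splitting $\pot = \potc + \potd$ as in \cref{assm: pot}, Lipschitzness of $\potd$ gives $\pot_\tau(x) \geq \potc_\tau(x) + \potd(x) - \Lip(\potd)\,\tau$, and for the convex part the proof of \cref{lm:Lim} shows that $\tau \mapsto \tau^{-1}(\potc(x) - \potc_\tau(x))$ is monotonically decreasing in $\tau$, hence bounded above by its limit $|\partial \potc|(x)$. Combined with $|\partial \potc|(x) \leq |\partial \pot|(x) + \Lip(\potd)$, which follows from the subdifferential decomposition $\partial \pot = \partial \potc + D\potd$ in \cref{pro:C1:i}, this produces the $\tau$-independent estimate
\[
0 \leq \frac{\pot(x) - \pot_\tau(x)}{\tau} \leq |\partial \pot|(x) + 2\Lip(\potd).
\]

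To conclude, we distinguish two cases. If $\int |\partial \pot|\de \mu < +\infty$, the above bound is $\mu$-integrable (as $\mu$ is a probability measure) and dominated convergence yields $\lim_\tau \int \tau^{-1}(\pot - \pot_\tau) \de \mu = \int |\partial \pot|\de \mu$. Otherwise, Fatou's lemma applied to the nonnegative integrands gives $\liminf_\tau \int \tau^{-1}(\pot - \pot_\tau) \de \mu \geq \int |\partial \pot|\de \mu = +\infty$, and both sides agree. In either case, combining with the first step, $|\partial \func|(\mu) = \int |\partial \pot|(x) \de \mu(x)$, establishing \labelcref{eq: slopW}.
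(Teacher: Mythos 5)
Your proposal is correct and follows essentially the same route as the paper: reduce the Wasserstein slope to the pointwise difference quotients via \cref{lm: EqiSlope} and \cref{thm:GloLo}, pass to the limit with \cref{lm:Lim}, and exchange limit and integral using Fatou in the infinite case and dominated convergence with the bound $|\partial\pot|(x)+2\Lip(\potd)$ in the finite case. The only cosmetic difference is that you obtain the domination via the monotonicity of the convex difference quotient (implicit in the proof of \cref{lm:Lim}) where the paper cites the identity between local and global slope for convex functions, and you observe directly that the integrands are nonnegative, which slightly streamlines the Fatou step.
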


\begin{proof}

\rev{ Since $\frac{\pot(x)-\pot_\tau(x)}{\tau}\geq 0$ we can use Fatou's lemma to show}
\begin{align*}
\int_\WBan |\partial \pot|(x) d\mu(x)&=\int_\WBan \lim_{\tau \rightarrow 0} 
\frac{\pot(x)-\rev{\pot_\tau(x)}}{\tau} d\mu(x)%
\\&\leq
\liminf_{\tau \rightarrow 0} \int_\WBan \frac{\pot(x)-\rev{\pot_\tau(x)}}{\tau} d\mu(x)\\& \leq
\limsup_{\tau \rightarrow 0} \int_\WBan \frac{\pot(x)-\pot({r}_\tau(x))}{\tau} d\mu(x)
\\&
= \limsup_{\tau \rightarrow 0} \frac{\func(\mu) - \func(\mu_\tau)}{\tau}
= |\partial \func|(\mu),
\end{align*}
where in the last step, we employ \rev{\cref{lm: EqiSlope}}. 
This implies that when $\int_\WBan |\partial \pot|(x) d\mu(x)=+\infty$ then $|\partial \func|(\mu)=+\infty$. In the case $\int_\WBan |\partial \pot|(x) d\mu(x)<+\infty$, we use \cref{lm: EqiSlope} (for $\func$) and \cref{lm:Lim} (for $\pot$)  to calculate
\begin{align*}
    \int_\WBan |\partial \pot|(x) d\mu (x)&=\int_\WBan \lim_{\tau \rightarrow 0} \frac{\pot(x)-\rev{\pot_\tau(x)}}{\tau}d\mu(x) \\
    &=\lim_{\tau \rightarrow 0} \frac{\int_\WBan \pot(x)-\pot({r}_\tau(x)) d\mu(x)}{\tau}\\
    &=\lim_{\tau \rightarrow 0} \frac{\func(\mu)-\func(\mu_\tau)}{\tau}=|\partial \func|(\mu),
\end{align*}
where the dominated convergence theorem was used to draw the limit into the integral. For the upper bound we observe
\begin{align*}
|\partial \pot|(x)=&\limsup_{z\rightarrow x}\frac{(\potc(x)-\potc(z)+\potd(x)-\potd(z))^+}{\|x-z\|}\\ \rev{\geq}
    &\limsup_{z\rightarrow x} \frac{(\potc(x)-\potc(z))^+}{\|x-z\|} -\frac{|\potd(x)-\potd(z)|}{\|x-z\|}\\ \rev{\geq}
    &\limsup_{z\rightarrow x} \frac{(\potc(x)-\potc(z))^+}{\|x-z\|}-\Lip(\potd)=|\partial\potc|(x)-\Lip(\potd)
\end{align*}
and by \cite[Theorem 2.4.9]{ambrosio05}
$$ \sup_{\xx\neq \x}\frac{(\potc(\x)-\potc(\xx))^+}{\|\x-\xx\|} =|\rev{\partial}\potc|(\x).$$
Then we can give an upper bound by 
\begin{align*}
\frac{\pot(x)-\pot(r_\tau(x))}{\tau}&\leq \frac{\potc(x)-\potc(r_\tau(x))}{\|x-r_\tau(x)\|}+\frac{\potd(x)-\potd(r_\tau(x))}{\|x-r_\tau(x)\|} \\
&\leq \sup_{\xx\neq \x}\frac{(\potc(x)-\potc(\xx))^+}{\|x-\xx\|}  +\Lip(\potd)=|\partial \potc|(x)+\Lip(\potd)
\\&\leq |\partial \pot|(x)+2\Lip(\potd).
\end{align*}
\rev{
To prove that $|\partial \func|$ is a strong upper gradient let $\mu_t$ be an absolutely continuous curve in $\mathcal{W}_\infty(\WBan)$.
Since $\abs{\partial\func}(\mu)= \int_\WBan |\partial \pot|(x) d\mu(\x)$ and by \cref{pro:C1:iv} the slope $|\partial \pot|(x)$ is lower semicontinuous, it follows from \cite[Lemma 5.1.7.]{ambrosio05} that  $\abs{\partial\func}(\mu)$ is lower semicontinuous w.r.t. narrow convergence and in particular $t\mapsto|\partial \func|(\mu_t)$ is lower semicontinuous and thus Borel.
Assume that $\int_s^t \int_\WBan |\partial \pot|(x) d\mu_r(\x) |\mu'|(r) dr
= \int_s^t \abs{\partial\func}(\mu_r) |\mu'|(r)dr<+\infty$, otherwise \cref{eq: StUpGrad} holds trivially. 
We can estimate
\begin{align}\label{eq: UPGradE}\begin{split}
|\func(\mu_t)-\func(\mu_s)|&=\left|\int_\WBan \pot(x)d\mu_t(x)-\int_\WBan \pot(x)d\mu_s(x)\right|\\
&=\left|\int_\WBan \pot(\x)d{e_t}_\# \eta(\x)-\int_\WBan \pot(\x)d{e_s}_\# \eta(\x)\right|\\
&=\left|\int_{C(0,T;\WBan)} \pot(u(t))d\eta(u)-\int_{C(0,T;\WBan)} \pot(u(s))d \eta(u)\right|
\\&\leq \int_{C(0,T;\WBan)} 
|\pot(u(t))-\pot(u(s))| d\eta(u)
\\
&\overset{(i)}{\leq}\int_{C(0,T;\WBan)} \int_s^t | \partial \pot|(u(r))\, |u'|(r)dr d\eta(u)\\
&\overset{(ii)}{=}\int_s^t\int_{C(0,T;\WBan)}  | \partial \pot|(u(r))\, |u'|(r) d\eta(u)dr\\
&\overset{(iii)}{\leq} \int_s^t\int_{C(0,T;\WBan)}  | \partial \pot|(u(r)) d\eta(u) |\mu'|(r)dr\\
&= \int_s^t\int_{\WBan}  | \partial \pot|(x) d\mu_r |\mu'|(r)dr<+\infty,
\end{split}
\end{align}
where $(t,u)\mapsto | \partial\pot|(u(t))$ is $\Bar{\eta}$-measurable since it is lower semicontinuous on $[0,T]\times C(0,T,\Inp)$ and measurability of $\abs{u'}$ follows as in the proof of \cite[Thm. 7]{Lisini07} and \cref{thm: ECp}. For $(ii)$ we use the theorem of Fubini--Tonelli, while for $(i)$ we observe that $\eta$ from \cref{thm: curveDe} is concentrated on $\AC^\infty(0,T;\WBan)$ and $|\partial \pot|$ is a strong upper gradient (c.f. \cref{def: strGrad}) and for (iii) we use $|\mu'|(t)=\| |u'|(t)\|_{L_p(\eta)}$.
}
\end{proof}

The main result of this section now states, that $\infty$-curves of maximal slope on $W_\infty(\WBan)$ can be equivalently characterized, by the property that $\eta$-a.e. curve fulfills the differential inclusion w.r.t. the potential $E$ on the Banach space $\WBan$.
\rev{
\begin{theorem} \label{thm: WInfCurve}
Let $\func: W_\infty(\WBan)\rightarrow (-\infty,+\infty]$ be a potential energy with the potential $\pot$ satisfying \cref{assm: pota,assm: pot}, $\mu_t \in \dom(\func) $ for all $t\in[0,T]$ and $\mu \in \AC^\infty(0,T; \mathcal{W}_\infty)$ with $\eta$ from \cref{thm: curveDe}. Let further $\func\circ \mu$ be for a.e. $t\in[0,T]$ equal to a non-increasing map $\psi:[0,T] \rightarrow \R$.
Then the following statements are equivalent:
\begin{enumerate}[label=(\roman*)]
\item% 
$
    |\mu'|(t)\leq 1 \text{ and } \psi'(t)\leq -|\partial \func|(\mu(t)) \text{ for a.e. } t\in (0,T).
$
\item For $\eta$-a.e. curve $u\in C(0,T;\WBan)$ it holds, that $E\circ u$ is for a.e. $t\in(0,T)$ equal to a non-increasing map $\psi_u: [0,T]\rightarrow \R$ and
$$ u'(t)\ \rev{\in}\ \partial \|\cdot\|_*(-\xi) \quad \forall \xi \in \partial^\circ \pot(u(t)) \not= \emptyset, \quad \text{for a.e. } t\in(0,T).$$
\end{enumerate} 
\end{theorem}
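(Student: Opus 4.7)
The strategy is to transfer the Banach-level equivalence between the $\infty$-curve of maximal slope condition and the differential inclusion (Theorem~\ref{thm: GradFlowForm}) to the Wasserstein level, using the representation of $\mu$ via the measure $\eta$ on $C(0,T;\WBan)$ from Theorem~\ref{thm: curveDe}, combined with the slope identity $|\partial\func|(\mu) = \int |\partial\pot|(x)\, d\mu(x)$ from Lemma~\ref{lm:PotSlope} and the energy dissipation equality from Lemma~\ref{lm: DisEq}.

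For the direction (ii) $\Rightarrow$ (i), I would first invoke Theorem~\ref{thm: GradFlowForm} pointwise: the differential inclusion makes $\eta$-a.e. curve $u$ an $\infty$-curve of maximal slope in $\WBan$ w.r.t.\ $\pot$, so $|u'|(t)\leq 1$ and $\pot\circ u$ is a.e.\ equal to a non-increasing map with derivative bounded by $-|\partial\pot|(u(t))$. The bound $|\mu'|(t)=\||u'|(t)\|_{L^\infty(\eta)}\leq 1$ then follows from the third bullet of Theorem~\ref{thm: curveDe}. Integrating the pointwise dissipation estimate against $\eta$ and applying Fubini together with ${e_r}_\#\eta=\mu_r$ and Lemma~\ref{lm:PotSlope}, I obtain
\begin{align*}
\func(\mu_t)-\func(\mu_s) \;=\; \int [\pot(u(t))-\pot(u(s))]\,d\eta(u) \;\leq\; -\int_s^t\!\!\int |\partial\pot|(x)\,d\mu_r(x)\,dr \;=\; -\int_s^t |\partial\func|(\mu_r)\,dr,
\end{align*}
which yields both the monotonicity of $\func\circ\mu$ and the required slope bound in \labelcref{eq:InfFlow}.

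For the harder direction (i) $\Rightarrow$ (ii), Lemma~\ref{lm: DisEq} provides the energy dissipation \emph{equality} $\func(\mu_t)-\func(\mu_s) = -\int_s^t\!\int |\partial\pot|\,d\mu_r\,dr$. On the other hand, since $|\partial\pot|$ is a strong upper gradient for $\pot$ (Proposition~\ref{pro:C1}) and $|u'|(r)\leq 1$ for $\eta$-a.e.\ $u$,
\begin{align*}
\pot(u(t))-\pot(u(s)) \;\geq\; -\int_s^t |\partial\pot|(u(r))\,|u'|(r)\,dr \;\geq\; -\int_s^t |\partial\pot|(u(r))\,dr.
\end{align*}
Integrating these two inequalities against $\eta$, using Fubini and ${e_r}_\#\eta=\mu_r$, and comparing with the dissipation equality, the whole chain must collapse to equalities. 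Hence for each pair $s\leq t$ in a countable dense subset of $[0,T]^2$ and for $\eta$-a.e.\ $u$,
\begin{align*}
\pot(u(t))-\pot(u(s)) \;=\; -\int_s^t |\partial\pot|(u(r))\,dr.
\end{align*}
Taking a countable intersection of null sets and extending via continuity of $\pot\circ u$ (guaranteed, on the weakly closed domain of $\pot$, by the $C^1$-perturbation-of-convex structure and continuity of $u$), I conclude that for $\eta$-a.e.\ $u$ the map $\pot\circ u$ is non-increasing with $(\pot\circ u)'(r)=-|\partial\pot|(u(r))$ a.e.\ and $|u'|(r)\leq 1$ a.e. This makes $u$ an $\infty$-curve of maximal slope in $\WBan$ for $\pot$, and a final application of Theorem~\ref{thm: GradFlowForm} (equivalence (i)$\Leftrightarrow$(iii) there) delivers the differential inclusion.

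The main obstacle is the null-set bookkeeping in the direction (i) $\Rightarrow$ (ii): the pointwise equality between $\pot(u(t))-\pot(u(s))$ and $-\int_s^t|\partial\pot|(u(r))\,dr$ only comes out of the integral identity for $\eta$-a.e.\ $u$ at each fixed $(s,t)$, and must be promoted to a statement valid for $\eta$-a.e.\ $u$ simultaneously on a dense grid and then on all $(s,t)$. Measurability of $(t,u)\mapsto |\partial\pot|(u(t))$, $(t,u)\mapsto|u'|(t)$, and $(t,u)\mapsto\pot(u(t))$ is inherited from the constructions already used in the proofs of Theorem~\ref{thm: ECp} and Lemma~\ref{lm: DisEq}, and Fubini/Tonelli apply without extra assumptions because the integrands are nonnegative in the crucial applications.
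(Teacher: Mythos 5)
Your proposal is correct and follows essentially the same route as the paper: both directions rest on the representation via $\eta$ from \cref{thm: curveDe}, the slope identity of \cref{lm:PotSlope}, the dissipation equality of \cref{lm: DisEq} forcing the chain of inequalities to collapse $\eta$-a.e., and the transfer to the differential inclusion through the Fenchel--Young equality. The only cosmetic difference is that in (i)$\Rightarrow$(ii) the paper reruns the chain-rule/Young argument directly rather than citing \cref{thm: GradFlowForm} as a black box, which is the same content.
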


}

\begin{proof}
\rev{
\textbf{Step 1} $(i)\Longrightarrow (ii)$.\\
Because of \cref{rm:DisE} we know that $\mu_t$ satisfies the energy dissipation equality \cref{eq: EnergyDissip}. 
Making a similar estimate as in \cref{eq: UPGradE} we obtain
\begin{align}\label{eq: Equal}
\begin{split}
    \func(\mu_0)-\func(\mu_T)&=  \int_{C(0,T;\WBan)} E(u(0))-E(u(T)) d\eta(u)\\
   &\leq \int_{C(0,T;\WBan)}  \int_0^T |\partial E|(u(r)) |u'|(t) dr d\eta(u)\\
   &\leq \int_{C(0,T;\WBan)}  \int_0^T |\partial E|(u(r)) dr d\eta(u)\\
   &=\int_0^T |\partial\func|(\mu(r))dr=\func(\mu_0)-\func(\mu_T).
\end{split}
\end{align}
This implies that $|\partial E|(u(r))|u'|(t) \in L^1(0,T)$ for $\eta$-a.e. $u$.
Since $|\partial E|$ is a strong upper gradient $\psi_u:t\mapsto E(u(t))$ has to be absolutely continuous for $\eta$-a.e. $u$ and
\begin{align*}
    E(u(s))-E(u(t))\leq \int_s^t |\partial E|(u(r)) |u'|(r) dr \leq \int_s^t |\partial E|(u(r)) dr \quad \text{for }\eta\text{-a.e. }u  \text{ for all } 0 \leq s < t \leq T. 
\end{align*}
Equality in \cref{eq: Equal} can then only hold if for all $0\leq s < t \leq T$ we have
\begin{align*}
     E(u(s))-E(u(t))= \int_s^t |\partial E|(u(r)) |u'|(t) dr = \int_s^t |\partial E|(u(r)) dr \quad \text{for }\eta\text{-a.e. }u
\end{align*}
and thus $\psi_u \coloneqq E\circ u$ is a non-increasing map for $\eta$-a.e. $u$. \cref{rm:ChainR} and \cref{pro:C1:iii} imply that for every $\xi\in\partial^\circ\pot(u(t))$ we obtain
\begin{align*}
\langle \xi, u'(t)\rangle = 
(\pot\circ u)'(t) = -|\partial \pot|(u(t)) =-\norm{\xi}_*-\chara_{\cl{\unitb}}(u'(t)),
\end{align*}
where we use \cref{lem:timeesssup} and $ \||u'|(t)\|_{L^\infty(\eta)}=|\mu'|(t)\leq 1$ for a.e. $t\in(0,T)$ to infer that $\abs{u'}(t)\leq 1$ for a.e. $t\in(0,T)$. Using the equivalence of \cref{prop:youngiii} and \cref{prop:youngiv} yields
\begin{align*}
u'(t) \in \partial \norm{\cdot}_*(-\xi)
\end{align*}
for a.e. $t\in(0,T)$ and $\eta$-a.e. curve $u$.\\[1em]
\noindent\textbf{Step 2} $(ii)\Longrightarrow (i)$.\\
Due to \cref{rm:DisE}  $E\circ u$ is for $\eta$-a.e. curve $u\in C(0,T;\WBan)$ an absolutely continuous curve, and it satisfies the energy dissipation equality
\begin{align*}
E(u(t))-E(u(s))=\int_s^t -|\partial E|(u(r)) dr \quad \text{for }0\leq s\leq t\leq T.
\end{align*}
Therefore we obtain
\begin{align*}
\func(\mu_t) - \func(\mu_s) &= 
\int_{C(0,T;\WBan)} E(u(t))-E(u(s)) d\eta(u) = 
\int_{C(0,T;\WBan)}  \int_s^t -|\partial E|(u(r)) dr d\eta(u)\\ &= 
\int_s^t \int_{C(0,T;\WBan)}   -|\partial E|(u(r))  d\eta(u)dr = 
\int_s^t -\abs{\partial\func}(\mu_r) dr\leq 0,
\end{align*}
where the application of Fubini--Tonelli is justified due to the assumption $\mu_t\in\dom(\func)$ for all $t\in[0,T]$, which yields that $\abs{\func(\mu_t) - \func(\mu_s)} < \infty$ for all $s,t\in[0,T]$. By the Lebesgue differentiation theorem, we obtain
\begin{align*}
(\func\circ \mu)'(t) = -\abs{\partial\func}(\mu_t)
\end{align*}
for almost every $t\in(0,T)$. Furthermore, $\func \circ \mu$ is a non-increasing map and \cref{thm: curveDe} yields that
\begin{align*}
\abs{\mu'}(t) = \eta(u) -\esssup \abs{u'}(t) \leq 1
\end{align*}
for a.e. $t\in (0,T)$ and since $\mu\in\AC^\infty$ this yields that $\abs{\mu'}(t)\leq 1$ for a.e. $t\in(0,T)$.
}    
\end{proof}

\begin{remark}
In particular, those curves of maximal slope satisfy the continuity equation for the velocity field
$$ \velo_t(x)\coloneqq \int_{C(0,T;\WBan)} u'(t) d \Bar{\eta}_{x,t} \quad\text{for } \Bar{\mu} \text{-a.e. } (t,x) \in (0,T)\times\WBan.$$
If $\partial^\circ \pot(x)$ is unique, i.e., if $\|\cdot\|$ is strictly convex or $E(x)\in C^1(\WBan)$, then for $\Bar{\eta}_{x,t}$-a.e. $u\in C(0,T;\WBan)$ the derivatives $u'(t)$ lie in the closed and convex set $\partial \|\cdot\|_*(-\partial^\circ E(x))$. Thus $$\velo_t(x)\in \partial \|\cdot\|_*(-\partial^\circ E(x))\quad\text{for } \Bar{\mu} \text{-a.e. } (t,x) \in (0,T)\times\WBan.$$ 
\end{remark}

As the last \rev{result in this} section, we give an explicit setting, where the existence of curves of maximum slope is ensured. Here, we restrict ourselves to finite dimensions, mimicking \cref{cor: FinC1}.

\begin{corollary}[Existence in finite dimensions] Let $\WBan=(\mathbb{R}^d, \|\cdot\|)$ and $\pot:\R^d \to (-\infty,\infty]$ be a $C^1$-perturbation of a proper, lower semicontinuous, convex function. For every $\mu^0\in \dom(\func)$, there exists at least \rev{one} curve of maximal slope in the sense of \cref{def: CMS} with $\mu_0=\mu^0$. \rev{Further this curve satisfies the energy dissipation equality \cref{eq: EnergyDissip}.}
\end{corollary}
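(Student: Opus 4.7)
The plan is to apply the existence theorem \cref{thm: exist} to the metric space $(\Prob_\infty(\R^d), W_\infty)$ with the potential energy $\func$, taking the weaker topology $\sigma$ to be the narrow topology of probability measures. It then suffices to verify the four hypotheses \cref{asm: 01,asm: 02,asm: 03a,asm: 03b}. Compatibility (\cref{asm: 01}) follows immediately from \cref{lem: Comp}, since the narrow topology is weaker than the $W_\infty$-topology and $W_\infty$ is sequentially narrowly lower semicontinuous. For relative compactness (\cref{asm: 02}), observe that for any $W_\infty$-bounded family $\{\mu^n\}_n$ with $D\coloneqq \sup_{n,m}W_\infty(\mu^n,\mu^m)<\infty$, every support is contained in the compact set $K\coloneqq \supp(\mu^0)+\cl{B_D}(0)\subset\R^d$, yielding tightness and thus a narrowly convergent subsequence by Prokhorov's theorem (\cref{thm:prokh}).

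To verify \cref{asm: 03a}, I would restrict again to the compact $K$ containing every relevant support. Since $\dom(E)$ is closed and $\mu^n(\dom(E))=1$ for every $n$ (by finiteness of the energy), the portmanteau theorem guarantees $\mu(\dom(E))=1$ and $\supp\mu\subset K$. Combining \cref{rem:fin3} (continuity of the convex part on the interior of its domain), the $C^1$ smoothness of $\potd$, and the assumption that $E$ satisfies \cref{asm: 03a} as a functional on $\R^d$ shows that $E|_{K\cap\dom(E)}$ is continuous, hence bounded on the compact set. A Tietze extension to a bounded continuous function on $\R^d$ then gives $\func(\mu^n)\to \func(\mu)$ via narrow convergence. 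For \cref{asm: 03b}, I would use \cref{lm:PotSlope} to express $|\partial\func|(\mu)=\int|\partial E|(x)\,d\mu(x)$ for such measures, and apply \cite[Lemma 5.1.7]{ambrosio05} to the nonnegative lower semicontinuous function $|\partial E|$, whose lower semicontinuity follows from \cref{pro:C1}.

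The main obstacle is that \cref{lm:PotSlope} is stated under \cref{assm: pot}, which demands globally Lipschitz $\potd$, whereas the corollary only provides $\potd\in C^1(\R^d)$. I would bridge this gap by localization on $K$: since $D\potd$ is continuous and $K$ compact, $\potd$ is Lipschitz on a neighborhood of $K$, and a smooth cutoff produces a globally Lipschitz modification $\widetilde E=\potc+\widetilde\potd$ that coincides with $E$ on $K$ and has the same subdifferential there. Any $\infty$-curve of maximal slope for the modified energy $\widetilde\func$ starting at $\mu^0$ has metric derivative bounded by $1$, so over the interval $[0,T]$ it stays in measures supported in $K$ (by choosing $D\geq T$ in the definition of $K$), and is therefore also an $\infty$-curve of maximal slope for the original $\func$. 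Applying \cref{thm: exist} to $\widetilde\func$ then yields the desired curve.
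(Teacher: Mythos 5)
Your proposal is correct and follows essentially the same route as the paper: apply \cref{thm: exist} with $\sigma$ the narrow topology, verify \cref{asm: 01} via \cref{lem: Comp}, \cref{asm: 02} via boundedness of the supports and Prokhorov's theorem, \cref{asm: 03a} via compactness of $\overline{\cup_n\supp(\mu^n)}\cap\dom(\pot)$ and continuity of $\pot$ there, and \cref{asm: 03b} via \cref{lm:PotSlope} combined with \cite[Lemma 5.1.7]{ambrosio05}. The only differences are cosmetic: where you use a Tietze extension and an explicit cutoff of $\potd$ to meet the global Lipschitz hypothesis of \cref{assm: pot}, the paper invokes uniform integrability and simply notes that $\potd$ is Lipschitz on the bounded set carrying all relevant measures, so your localization is just a more careful rendering of the same step.
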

\begin{proof}
We simply check the conditions of \cref{thm: exist}. Choosing $\sigma$ to be the narrow topology, \cref{lem: Comp} \rev{guarantees} \cref{asm: 01}. To check \cref{asm: 02}, we know that for any sequence $\seq{\mu}$, $W_\infty(\mu^k,\mu^m)<\infty \quad \forall k,m\in\mathbb{N}$ implies that $\cup_{n} \supp(\mu^n)$ is bounded. Since we are in the finite dimensional case, we can now apply Prokhorov's Theorem to obtain relative compactness of the sequence.
\rev{We are left to check \cref{asm: lc,asm: 03b} for $\mathcal{E}$}:\\[5pt]
\textbf{\cref{asm: lc}}:\\[3pt]
Let $\mu^n\in\dom(\func)$ be a sequence converging in $W_\infty$ to $\mu$. This sequence has to be bounded in $W_\infty$ such that $\overline{\cup_n \supp(\mu^n)}$ is bounded.
Since $E$ is lower-semicontinuous 
$\dom(E)$ is closed and thus $\overline{\cup_n \supp(\mu^n)}\cap \dom(\pot)$ is compact and we obtain due to lower-semicontinuity 
\begin{align*}
\min_{x\in\overline{\cup_n \supp(\mu^n)}\cap \dom(\pot)}\pot(x)>-\infty
\end{align*}
\rev{Thus the negative part of $\pot(x)$ denoted by $\pot^-(x)$ is uniformly integrable with respect to $\{\mu^n\}_{n\in \mathbb{N}}$ and we can apply \cite[Lemma 5.1.7.]{ambrosio05} to obtain
$$\liminf_{n\rightarrow \infty} \int \pot(x)d\mu^n(x)\geq \int \pot(x) d\mu. $$}\\
\textbf{\cref{asm: 03b}:}\\[3pt]
Since $\mu$ has bounded domain the differentiable part $E^d$ satisfies a Lischitz condition and thus by \cref{lm:PotSlope}
$$|\partial \func|(\mu)=\int |\partial \pot|(x) d\mu $$
and by \cref{pro:C1} $|\partial \pot|(x)$ is lower semicontinuous and non-negative. Thus $|\partial \pot|(x)$ is uniformly integral, and we can apply \cite[Lemma 5.1.7.]{ambrosio05} to obtain
$$\liminf_{n\rightarrow \infty} \int |\partial \pot|(x)d\mu^n \geq \int |\partial \pot|(x)d\mu$$
for all $\mu^n$ converging narrowly to $\mu$.
\end{proof}

\section{Relation to adversarial attacks}
\label{ch: AdAt}

This section, explores the connection of the previous results to our initial motivation, adversarial attacks. As mentioned before, we now consider an energy defined as
\begin{align*}
\funccp(\x) := -\loss(\hyp(\x), y)
\end{align*}
for a classifier $\hyp$ and $\x\in\Inp, y\in\Oup$. The goal in \labelcref{eq: adver} is to maximize this function on the set $\cl{B_\budget}(\x^0)$, where $\x^0\in\Inp$ is the initial input. Roughly following the idea in the original paper proposing \labelcref{eq: FGSM}, we derive the scheme, via linearizing $\funccp$ around $\x^0$ and consider the linearized minimizing movement scheme in \cref{def: SeLiSch}. Assuming, that $\ell(\hyp(\cdot), y)$ is $C^1$, we consider
\begin{align*}
\funccps(\x; \xx)= -\loss(\hyp(\xx),y)-
\langle \nabla_\x \loss(\hyp(\xx),y),\x-\xx\rangle,
\end{align*}
where $\xx$ denotes the point of linearization. \cref{lem:semiexp} yields that the semi-implicit minimizing movement scheme in \cref{def: SeLiSch} can be expressed as 
\begin{align*}
\xs_{\tau}^{k+1} \in  \xs_{\tau}^{k}-\tau\partial \norm{\cdot}_*(D \funccp(\xs_{\tau}^{k})). 
\end{align*}
We note that this scheme can be understood as an explicit Euler discretization \cite{euler1794institutiones} of the differential inclusion in \cref{thm: GradFlowForm},
\begin{align}\label{eq: diffInc}
u'(t)\in \argmax_{\x\in\cl{B_1}} \langle \x, -D\funccp(u(t))\rangle = \partial \norm{\cdot}_*(D\funccp(u(t))),
\end{align}
which in turn is an equivalent characterization of $\infty$-curves of maximal slope. In this section, we consider the finite dimensional adversarial setting, i.e., the Banach space $(\Ban, \norm{\cdot}) = (\R^d, \norm{\cdot}_p)$. 
\begin{corollary}\label{cor:fgsm}
Given $\x^0\in\R^d$, the iteration
\begin{align*}
\xs^{k+1} =
\xs^k_\tau + \tau\
\sign(\nabla_x \funccp(\xs^k_\tau))\cdot \left(\frac{\abs{\nabla_x \funccp(\xs^k_\tau)}}{\norm{\nabla_x \funccp(\xs^k_\tau)}_q}\right)^{q-1}, \qquad\xs^0 = \x^0
\end{align*}
fulfills the semi-implicit minimizing movement scheme in \cref{def: SeLiSch} in the space $(\R^d, \norm{\cdot}_p)$ with $1/p + 1/q=1$. In this sense, \labelcref{eq: FGSM} is a one-step explicit Euler discretization of the differential inclusion \cref{eq: diffInc} with step size $\budget$.
\end{corollary}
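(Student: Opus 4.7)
The plan is a direct verification by substituting the explicit form of $\partial\norm{\cdot}_q$ into \cref{lem:semiexp}. In the finite-dimensional adversarial setting $\Ban = (\R^\ddim, \norm{\cdot}_p)$ is reflexive for every $p\in[1,\infty]$, and under the standing assumption that $\loss(\hyp(\cdot),y)$ is $C^1$, $\func$ is $C^1$ as well. I therefore take $\funcd = \func$ and $\funcc = 0$, so \cref{lem:semiexp} yields
\[
\xs^{k+1}_\tau \;\in\; \xs^k_\tau - \tau\, \partial \norm{\cdot}_*\bigl(\nabla_x \func(\xs^k_\tau)\bigr),
\]
with dual norm $\norm{\cdot}_* = \norm{\cdot}_q$ where $1/p + 1/q = 1$. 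It therefore suffices to exhibit a selection of $\partial \norm{\cdot}_q$ at $\xi := \nabla_x \func(\xs^k_\tau)$ that (up to sign convention) matches the closed-form expression in the statement.

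For $q\in(1,\infty)$ and $\xi\neq 0$, $\xi\mapsto \norm{\xi}_q = (\sum_i \abs{\xi_i}^q)^{1/q}$ is Fr\'echet differentiable and a direct calculation gives
\[
\nabla \norm{\xi}_q \;=\; \frac{\sign(\xi)\,\abs{\xi}^{q-1}}{\norm{\xi}_q^{q-1}} \;=\; \sign(\xi)\cdot\left(\frac{\abs{\xi}}{\norm{\xi}_q}\right)^{q-1},
\]
understood entrywise. Substituting this into the semi-implicit inclusion reproduces the iteration of the corollary. The endpoint cases $q\in\{1,\infty\}$ require a short additional check, since $\partial\norm{\cdot}_q$ becomes multi-valued: for $q=1$ the exponent $q-1=0$ collapses the fraction to the constant $1$ and the entrywise $\sign$ vector is a legitimate element of $\partial\norm{\cdot}_1(\xi)$; for $q=\infty$ the expression $(\abs{\xi}/\norm{\xi}_\infty)^{q-1}$ is to be interpreted as the pointwise limit supported on the max-modulus coordinates of $\xi$, which after normalization belongs to $\partial\norm{\cdot}_\infty(\xi)$.

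For the Euler interpretation, \cref{thm: GradFlowForm} characterizes an $\infty$-curve of maximal slope for $\func$ by the inclusion $u'(t) \in \partial\norm{\cdot}_*(-\nabla\func(u(t)))$. Rewriting the semi-implicit update above as $\xs^{k+1}_\tau = \xs^k_\tau + \tau\, w^k$ with $w^k$ drawn from this set exhibits the scheme as the forward Euler discretization of that differential inclusion with time step $\tau$. Specializing to $p=\infty$, $q=1$, and performing a single Euler step of size $\tau = \budget$ from the initial input $\x^0$ then reproduces \labelcref{eq: FGSM}. The only delicate point in the argument is the handling of the set-valuedness of $\partial\norm{\cdot}_q$ at the endpoints $q\in\{1,\infty\}$; for $q\in(1,\infty)$ the proof reduces to a straightforward differentiation of the $\ell^q$ norm and a one-line substitution.
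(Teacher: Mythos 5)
Your proposal is correct and follows essentially the same route as the paper: apply \cref{lem:semiexp} with $\funcc=0$ in the reflexive space $(\R^d,\norm{\cdot}_p)$, compute $\partial\norm{\cdot}_q$ explicitly for $q\in(1,\infty)$, and select suitable subgradients at the endpoints (the entrywise sign vector for $q=1$, the normalized max-modulus selection for $q=\infty$), exactly as the paper does. The only caveat --- the sign mismatch between the descent update $-\tau\,\partial\norm{\cdot}_*(\nabla_x\func)$ delivered by \cref{lem:semiexp} and the $+$ appearing in the stated iteration --- is already present in the paper's own statement versus its proof, so your ``up to sign convention'' remark is the right call rather than a gap.
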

\begin{remark}
We note that for $p\in\{1,\infty\}$ the expression in \cref{cor:fgsm} is to be understood in the sense of subdifferentials, as the following proof shows. However, the elements of the subdifferential we choose, can be understood as the limit cases of $p\to 1$ and $p\to\infty$, respectively.
\end{remark}
\begin{proof}
We choose $\Ban=\R^d$ with $\norm{\cdot} = \norm{\cdot}_{p}$. For $p=\infty$ we have that
\begin{align*}
\sign(\xi) \in  \partial \norm{\cdot}_{1}(\xi) =
\partial (\norm{\cdot}_{\infty})_*(\xi),
\end{align*}
for all $\xi\in\R^d$ and therefore, the following iteration fulfills the semi-implicit minimizing movement scheme,
\begin{align*}
\xs^{k+1}_\tau = \xs^{k}_\tau - \tau\sign(\nabla_\x \funccp(\xs^{k}_\tau)) = 
\xs^{k}_\tau + \budget\sign(\nabla_\x \loss(\hyp(\xs^{k}_\tau), y))\rev{,}
\end{align*}
and for $\budget=\tau$ the statement follows.
For $p=1$ we choose the following element of the subdifferential $g(\xi)$, with
\begin{align*}
g(\xi)_i := 
\#\{j:\abs{\xi_j} = \norm{\xi}_\infty\}^{-1} \cdot
\begin{cases}
\sign(\xi_i) &\text{if}  \abs{\xi_i}= \norm{\xi}_\infty,\\
0&\text{else},
\end{cases}
\end{align*}
and proceed as before. If we instead choose a finite $p\in(1,\infty)$, we obtain for $1/p + 1/q=1$,
\begin{align*}
\partial (\norm{\cdot}_p)_*(\xi) = \partial \norm{\cdot}_q(\xi) = 
\norm{\xi}_q^{1-q}
(\xi_1 \abs{\xi_1}^{q-2},\ldots, \xi_d\abs{\xi_d}^{q-2}) = 
\sign(\xi)\cdot \left(\frac{\abs{\xi}}{\norm{\xi}_q}\right)^{q-1},
\end{align*}
where the absolute value and the multiplication is to be understood entrywise. As above, this yields the statement also for $p\in(1,\infty)$. 
\end{proof}

\subsection{Convergence of IFGSM to curves of maximal slope}
Our main goal is to derive a convergence result of \labelcref{eq: IFGSM} for $\tau\to 0$. As mentioned before, \cref{lem:semiexp} yields an iteration, which can be expressed as normalized gradient descent in the finite dimensional case. The main obstacle that prohibits us from directly applying the convergence result for semi-implicit schemes (see \cref{thm:semilincvg}), is the budget constraint, $u'(t)\in \cl{B_\budget^p}(\x^0)$ for all $t$. Here and in the following, we now assume that the norm exponent of the underlying space and of the budget constraint norm are the same. In \labelcref{eq: IFGSM} this is enforced via a projection onto this set in each iteration. An easy way to circumvent this issue, is to only consider the iteration up to the step, where it would leave the constraint set. In this case, the projection never has any effect and we essentially consider signed gradient descent. Intuitively, the Lipschitz condition $\norm{u'(t)}\leq 1$ allows us to control how far $u(t)$ is away from $\x^0$. This mimicked in the discrete scheme, where we know that
\begin{align*}
\norm{\xs^{i}_\tau - \x^0}\leq \sum_{k=0}^{n-1} \norm{\xs^{k+1}_\tau - \xs^{k}_\tau}\leq n\tau = T,
\end{align*}
for every $i=0,\ldots, n$. Therefore, we can choose $T=\budget$ to ensure that $\xs^{i}_\tau\in \cl{B_\budget^p}(\x^0)$ for every $i=0,\ldots,n$. This yields the following result.
\begin{corollary}\label{cor:budgettime}
We consider the space $(\Ban, \norm{\cdot}) = (\R^d, \norm{\cdot}_{p})$ for $p\in[1,\infty]$ and $\funccp:\R^d\to\R$, a continuously differentiable energy, with a Lipschitz continuous gradient. Then for $T=\budget$, there exists a $\infty$-curve of maximal slope $u:[0,T]\to \R^d$, with respect to $\funccp$, and a subsequence of $\tau_n:=T/n$ such that
\begin{align*}
\norm{u_{\IFG, \tau_{n_i}}^{\lceil t/\tau_{n_i} \rceil} - u(t)}\xrightarrow{i\to\infty} 0\qquad\text{ for all } t\in[0,T].
\end{align*}
\end{corollary}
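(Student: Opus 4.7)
The plan is to reduce the iteration \labelcref{eq: IFGSM} to the semi-implicit minimizing movement scheme of \cref{def: SeLiSch} and then invoke the convergence theorem \cref{thm:semilincvg}. The key observation, already sketched in the paragraph preceding the corollary, is that by choosing the end time to coincide with the budget $T=\budget$, the projection $\Pi_{\cl{B^p_\budget}(\x^0)}$ never activates, so that \labelcref{eq: IFGSM} (after being rewritten via \cref{cor:fgsm} as normalized gradient descent in the $\ell^p$ norm for general $p$) agrees with the semi-implicit scheme on the whole time interval.

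Concretely, I would proceed as follows. Set $\tau_n = T/n = \budget/n$ and let $\xs^k_{\tau_n}$ denote the semi-implicit iterates starting at $\x^0$. By \cref{cor:fgsm}, each step has norm exactly $\tau_n$ (since the explicit expression realizes a point on the sphere $\partial B_{\tau_n}(\xs^k_{\tau_n})$), so by the triangle inequality
\begin{align*}
\norm{\xs^k_{\tau_n} - \x^0}_p \leq \sum_{j=0}^{k-1}\norm{\xs^{j+1}_{\tau_n} - \xs^j_{\tau_n}}_p \leq k\, \tau_n \leq n\, \tau_n = \budget
\end{align*}
for all $0\leq k\leq n$. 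Hence $\xs^k_{\tau_n}\in\cl{B^p_\budget}(\x^0)$ and the clipping in \labelcref{eq: IFGSM} is trivial, yielding $\xifg_{\tau_n}^k=\xs^k_{\tau_n}$ for every admissible $k$.

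Next I would verify the assumptions needed to apply \cref{thm:semilincvg} with the $\sigma$-topology chosen as the norm topology on $\R^\ddim$. Since $\func\in C^1(\R^\ddim)$, we write $\func = \funcc + \funcd$ with $\funcc\equiv 0$ (trivially convex, lower semicontinuous) and $\funcd=\func$, so $\func$ is a $C^1$-perturbation of a convex function with globally Lipschitz derivative, giving \cref{asm: LipDi}. \cref{asm: 01,asm: 02} hold because in finite dimensions bounded sublevels are relatively norm-compact and the norm is trivially lower semicontinuous. \cref{asm: 03a} follows from continuity of $\func$ (cf.\ \cref{rem:fin3}) and \cref{asm: 03b} from \cref{pro:C1}. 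Finally, \cref{asm: LoSe} reduces to continuity of the affine map $\x\mapsto \funcd(\xx)+\langle D\funcd(\xx),\x-\xx\rangle$, which is obvious.

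With these in place, \cref{thm:semilincvg} yields a subsequence $\tau_{n_i}$ and a Lipschitz $\infty$-curve of maximal slope $u:[0,T]\to\R^\ddim$ with $u(0)=\x^0$ such that $\sxs_{\tau_{n_i}}(t)\to u(t)$ pointwise in $t\in[0,T]$ with respect to $\sigma$; in finite dimensions this is norm convergence. Substituting the identification $\xifg_{\tau_{n_i}}^k = \xs^k_{\tau_{n_i}}$ established in the first step, and recalling that $\sxs_{\tau_{n_i}}(t) = \xs^{\lceil t/\tau_{n_i}\rceil}_{\tau_{n_i}}$ by the definition in \cref{def: SeLiSch}, gives exactly the claimed statement. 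The only delicate point, and the main thing to check, is that the step-size bound $\norm{\xs^{k+1}_{\tau_n}-\xs^k_{\tau_n}}_p = \tau_n$ really holds for the normalized-gradient explicit form in \cref{cor:fgsm} across all $p\in[1,\infty]$ (with care in the degenerate cases $p\in\{1,\infty\}$ where the subdifferential is multivalued); once this is secured, the reduction to \cref{thm:semilincvg} is immediate.
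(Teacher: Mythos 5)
Your proposal is correct and follows essentially the same route as the paper: identify the \labelcref{eq: IFGSM} iterates with the semi-implicit scheme via \cref{cor:fgsm} and \cref{lem:semiexp}, use the telescoping bound $\norm{\xs^{k}_{\tau_n}-\x^0}\leq k\tau_n\leq \budget$ to see the projection is inactive for $T=\budget$, verify \cref{asm: 01,asm: 02,asm: 03a,asm: 03b,asm: LipDi,asm: LoSe} in finite dimensions, and apply \cref{thm:semilincvg}. The only cosmetic difference is your worry about the step length being \emph{exactly} $\tau_n$: this is unnecessary (and fails at critical points), since the semi-implicit scheme minimizes over the closed ball $\cl{B_{\tau_n}}(\xs^k_{\tau_n})$, so $\norm{\xs^{k+1}_{\tau_n}-\xs^k_{\tau_n}}\leq\tau_n$ holds by construction and that is all the telescoping argument requires.
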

\begin{proof}
From \cref{lem:semiexp} and the calculation in the proof of \cref{cor:fgsm} we know that the iterates of \labelcref{eq: IFGSM} fulfill the linearized minimizing movement scheme in \cref{def: SeLiSch}. Here, we used that for $T=\budget$, the iterates do not leave the set $\cl{B_\budget^p}(\x^0)$ and therefore the projection has no effect.
\cref{asm: LipDi} is stated as an assumption of this corollary and \rev{\cref{rem:LoSe}} yields that \cref{asm: LoSe} holds true. Furthermore, using \rev{\cref{pro:C1}} , we know that \cref{asm: 01,asm: 02,asm: lc,asm: 03b} are fulfilled and therefore, we can apply \cref{thm:semilincvg} to obtain the desired result.
\end{proof}
Above, we only consider convergence up to a subsequence. While the convergence of the whole sequence for \labelcref{eq: IFGSM}, is left unanswered in this work, we note that at least for $p\in\{1,\infty\}$, this cannot be expected, since in this case $\infty$-curves of maximal slope lack uniqueness, even in the simple finite dimensional case, as the following example shows. 
\begin{example}[Non uniqueness for $p\in\{1,\infty\}$]
Let $(\Ban,\norm{\cdot})=(\R^2,\norm{\cdot}_\infty)$ and consider the energy be given by 
$$\funccp:(x_1,x_2)\in \R^2\mapsto x_1\in \R $$
then both $u_1(t)=(-t,0)$ and $u_2(t)=(-t,-t)$ are $\infty$-curves of maximal slope on $[0,T],T>0$, with $u_1(0)=u_2(0)$ since
$$u_1'(t)=(-1,0)\in -\partial\norm{\cdot}_1(1,0)=-\partial\norm{\cdot}_1(\nabla \funccp(u_1(t)))$$
and
$$u_2'(t)=(-1,-1)\in-\partial\norm{\cdot}_1(1,0)= -\partial\norm{\cdot}_1(\nabla \funccp(u_2(t))).$$
In two dimensions for $p=1$, we can simply rotate the above setup to deduce the same non-uniqueness. Namely for $\funccp(\x_1,\x_2)=\x_1+\x_2$, we have that $u_1(t) = (-t,0)$ fulfills
$$u_1'(t)=(-1,0)\in -\partial\norm{\cdot}_\infty(1,1)=-\partial\norm{\cdot}_\infty(\nabla \funccp(u_1(t))) $$
and also $u_2(t) = \frac{1}{2}(-t,-t)$ fulfills
$$u_2'(t)=\frac{1}{2}(-1,-1)\in -\partial\norm{\cdot}_\infty(1,1)=-\partial\norm{\cdot}_\infty(\nabla \funccp(u_2(t))). $$
\end{example}

In \cref{cor:budgettime}, we only allow the iteration to run until it hits the boundary. However, in practice, it is more common to also iterate beyond the time $\budget$. In order to incorporate the budget constraint in this case, we modify the energy to
\begin{align*}
\funccp(\x) := -\loss(\hyp(\x), y) +
\chara_{\cl{B_\budget^p}(\x^0)}(\x),
\end{align*}
which yields the semi-implicit energy
\begin{align*}
\funccps(\x;\xx) = -\loss(\hyp(\xx), y) -\langle \nabla_x\loss(\hyp(\rev{\xx}),y), \x-\xx\rangle + \chara_{\cl{B_\budget^p}(\x^0)}(\x). 
\end{align*}
In order to show that \labelcref{eq: IFGSM} corresponds to the minimizing movement scheme, we need to 
show that first minimizing on $\cl{B_\tau^p}(\x)$ and then projecting to $\cl{B_\budget^p}(\x^0)$ is equivalent to directly minimizing on $\cl{B_\budget^p}(\x^0)\cap \cl{B_\tau^p}(\x)$. Here, we restrict ourselves to the case $p=\infty$, which corresponds to the standard case of \labelcref{eq: IFGSM} as proposed in \cite{goodfellow2014explaining}. For $p\neq \infty$ a more refined analysis would be required, c.f. \cref{fig:projtrick}. In the following lemma, we use the projection defined componentwise as
\begin{align*}
\rev{\Clip_{\x^0,\budget} (\x)_j}:=
\Pi_{\cl{B_\budget^\infty}(\x^0)}(\x)_j= \rev{\x^0_j + }
\max\{\min\{\x_j-\rev{\x^0_j},\budget\}, -\budget\}.
\end{align*}
The proof relies on the basic intuition in the original paper \cite{goodfellow2014explaining} that maximizing the linearized energy on a hyper-cube, is a linear program \cite{sierksma2015linear,fourier1827histoire} with a solution being attained in a corner. We also note that this does not directly work for other choices of budget constraints, see \cref{fig:projtrick}
\begin{lemma}\label{lem:projtrick}
For $\x\in \cl{B^\infty_\budget}(\x^0)$ and $\tau>0$ it holds that
\begin{align*}
\Clip_{\x^0,\budget}(\x+\tau \sign(\nabla_\x \ell(\hyp(\x),y)) )\in \argmin_{\xatt\in \cl{B_\tau^\infty}(\x)} \funccps(\xatt; \x).
\end{align*}
\end{lemma}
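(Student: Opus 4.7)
The plan is to reduce the constrained minimization to a componentwise linear program that decouples because both balls are $\ell^\infty$-balls, and then to verify by case analysis that the clipped signed-gradient step hits the componentwise optimum.

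First, I would rewrite the objective. Since $-\loss(\hyp(x),y)$ is constant in $\xatt$ and $\chara_{\cl{B_\budget^\infty}(\x^0)}(\xatt)$ just imposes the budget constraint, the minimization
\[
\min_{\xatt\in \cl{B_\tau^\infty}(\x)} \funcs(\xatt;\x)
\]
is equivalent to
\[
\max_{\xatt \in \cl{B_\tau^\infty}(\x)\cap \cl{B_\budget^\infty}(\x^0)} \langle \nabla_\x \loss(\hyp(\x),y), \xatt -\x \rangle.
\]
Writing $\xatt = \x + h$ and $a = \x - \x^0 \in [-\budget,\budget]^d$ (this uses the hypothesis $\x\in \cl{B_\budget^\infty}(\x^0)$), the feasible set in $h$ becomes
\[
\{h\in\R^d \st \|h\|_\infty \leq \tau,\ a+h \in [-\budget,\budget]^d\} \;=\; \prod_{j=1}^d \bigl([-\tau,\tau]\cap[-\budget - a_j,\ \budget - a_j]\bigr).
\]

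Second, I would exploit separability. Because the feasible set is a Cartesian product of intervals and the objective $\sum_j g_j h_j$ (with $g = \nabla_\x \loss(\hyp(\x),y)$) is a sum of one-dimensional linear functions, each coordinate can be maximized independently. For each $j$, if $g_j \geq 0$ the maximizer is $h_j^\star = \min(\tau,\ \budget - a_j)$ and if $g_j < 0$ it is $h_j^\star = \max(-\tau,\ -\budget - a_j)$. These can be written uniformly as $h_j^\star = \sign(g_j)\min(\tau,\ \budget - \sign(g_j)\, a_j)$.

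Third, I would check that the candidate produced by the clipped signed-gradient step coincides with $h^\star$. For $\x\in\cl{B^\infty_\budget}(\x^0)$ and $g_j \geq 0$ one has $\x_j + \tau \geq \x_j \geq \x_j^0 - \budget$, so clipping only has a potential effect from above and
\[
\bigl(\Clip_{\x^0,\budget}(\x + \tau\sign(g))\bigr)_j - \x_j \;=\; \min(\x_j+\tau,\ \x_j^0+\budget) - \x_j \;=\; \min(\tau,\ \budget - a_j) \;=\; h_j^\star.
\]
The case $g_j < 0$ is symmetric, using $\x_j - \tau \leq \x_j \leq \x_j^0 + \budget$ so that only the lower clip can be active. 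Assembling both cases componentwise shows that $\Clip_{\x^0,\budget}(\x+\tau\sign(g))$ indeed lies in the argmin.

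The substantive content is essentially the observation that intersections of $\ell^\infty$-balls remain product sets, which makes the linear program separable; the only thing that could go wrong is a collision between the two active constraints, but the hypothesis $\x\in\cl{B^\infty_\budget}(\x^0)$ guarantees that $x+\tau\sign(g)$ can never be simultaneously clipped from above and below, so no extra case analysis is needed. This is where the argument is genuinely restricted to $p=\infty$: for $p\neq\infty$ the feasible set no longer factorizes, and clipping after a signed step need not project onto the intersection.
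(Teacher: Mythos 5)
Your proof is correct, and it takes a genuinely different route from the one in the paper. You solve the constrained problem head-on: after reducing to maximizing $\langle g, h\rangle$ over $h$, you observe that $\cl{B_\tau^\infty}(\x)\cap\cl{B_\budget^\infty}(\x^0)$ is a Cartesian product of (nonempty, since $a_j\in[-\budget,\budget]$) intervals, so the linear program separates and you can write down the coordinatewise maximizer $h_j^\star=\sign(g_j)\min(\tau,\budget-\sign(g_j)a_j)$ in closed form, then match it against the clipped signed step. The paper instead never solves the LP on the intersection: it takes the unconstrained minimizer $\x^{\mathrm d}=\x-\tau\sign(\xi)$ on $\cl{B_\tau^\infty}(\x)$, encodes the clipping as a correction vector $\delta$ with $\Clip_{\x^0,\budget}(\x^{\mathrm d})=\x^{\mathrm d}+\delta$, proves the single inclusion $\xatt-\delta\in\cl{B_\tau^\infty}(\x)$ for every feasible $\xatt$, and concludes by contradiction from the optimality of $\x^{\mathrm d}$ on the larger ball. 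Both arguments hinge on the same $\ell^\infty$ geometry and both do a componentwise sign case analysis; yours buys an explicit formula for the minimizer (and makes transparent exactly where $p\neq\infty$ fails, namely the loss of the product structure), while the paper's shift-and-contradiction argument is shorter because it only has to verify one translation inclusion rather than compute the optimum. The only cosmetic point to tidy up is the convention $\sign(0)=0$: when $g_j=0$ your clipped step gives $h_j=0$ rather than $\min(\tau,\budget-a_j)$, but since the objective is then independent of $h_j$ this is still a maximizer, so the conclusion is unaffected (the paper's proof has the same implicit convention).
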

\begin{proof}
Without loss of generality we assume that $x^0=0$. Let $\xi:= -\nabla_\x\loss(\hyp(\x), y)$, then we know that $\x^\text{d}=\x-\tau \sign(\xi)$ is a minimizer of $\xatt\mapsto \langle \xi,\xatt\rangle$ on $\cl{B^\infty_\tau}(\x)$. Furthermore, we define $\delta\in\R^n$ as
\begin{align*}
\delta_i := -\sign(\x^\text{d}_i) \max\{\abs{\x^\text{d}_i} - \budget, 0\}, 
\end{align*}
i.e., we have that $\Clip_{0,\budget}(\x^\text{d})=\x^\text{d} +\delta$. The important fact, where the choice of budget constraint matters, is that $\xatt -\delta\in \cl{B_\tau^\infty}(\x)$ for all $\xatt\in \cl{B_\tau^\infty}(\x)\cap \cl{B_\budget^\infty}(0)$, since we have
\begin{gather*}
\max\{-\budget, \x_i-\tau\} \leq \xatt_i \leq \min\{\budget, \x_i+\tau\}\\
\Rightarrow 
\begin{cases}
\delta_i = 0&:\qquad \abs{\xatt_i -\delta_i - \x_i} = \abs{\xatt_i - \x_i}\leq \tau\\
\delta_i <0&:\qquad  
\x_i\leq \budget < \x_i^\text{d} \leq \x_i+\tau \\
&\phantom{:}\qquad\Rightarrow 
\abs{\xatt_i - \delta_i-\x_i} \leq  \abs{\budget + \x^\text{d}_i - \budget - \x_i} \leq \tau\\
\delta_i >0&:\qquad \abs{\xatt_i -\delta_i - \x_i} \leq \tau,\quad \text{ analogously  to the case above.}
\end{cases}
\end{gather*}
Now assume, that there exists $\xatt\in \cl{B_\budget^\infty}(0)\cap \cl{B_\tau^\infty}(\x)$ such that $\langle \xi, \xatt\rangle  < \langle \xi, \x^\text{d} + \delta\rangle$. Then we infer that
\begin{align*}
\langle \xi, \xatt - \delta\rangle < \langle \xi, \x^\text{d} + \delta\rangle - \langle \xi, \delta\rangle = \langle \xi, \x^\text{d}\rangle
\end{align*}
and therefore $\x^\text{d}$ is not a minimizer on $\cl{B_\tau^\infty}(\x)$, which is a contradiction. Therefore, we have that
\begin{align*}
\x^\text{d} + \delta = \Clip_{0,\budget}(\x^{\text{d}})=
\Clip_{0,\budget}(\x + \tau\, \sign(\xi)) \in \argmin_{\xatt\rev{\in} \cl{B_\tau^\infty}(\x)\cap \cl{B_\budget^\infty}(0)} \langle \xi,\xatt\rangle = 
\argmin_{\xatt\rev{\in} \cl{B_\tau^\infty}(\x)} \funccps(\xatt;\x)\rev{.}
\end{align*}
\end{proof}
\begin{figure}
\begin{subfigure}[t]{.3\textwidth}
\centering%
\includegraphics[width=\textwidth]{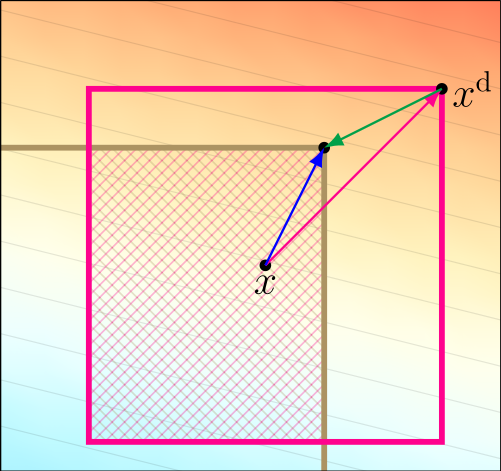}
\captionsetup{width=.9\linewidth}
\caption{Underlying space and budget constraint with $p=\infty$.}
\end{subfigure}
\hfill%
\begin{subfigure}[t]{.3\textwidth}
\centering%
\includegraphics[width=\textwidth]{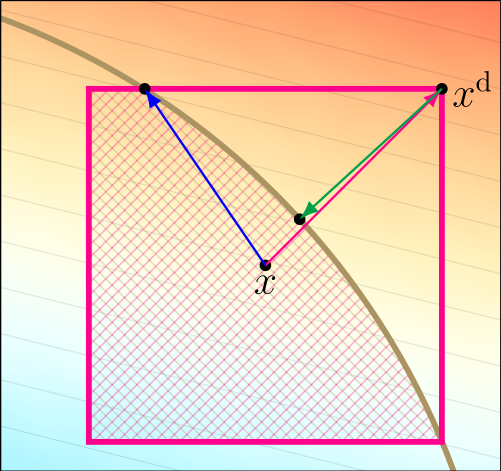}
\captionsetup{width=.9\linewidth}
\caption{Norm mismatch: underlying space with $p=\infty$ and budget constraint with $q=2$.}
\end{subfigure}
\hfill%
\begin{subfigure}[t]{.3\textwidth}
\centering%
\includegraphics[width=\textwidth]{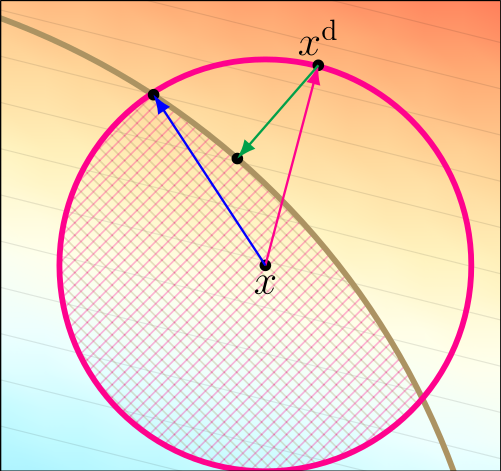}
\captionsetup{width=.9\linewidth}
\caption{Underlying space and budget constraint with $p=2$.}
\end{subfigure}
\caption{Visualization of one \labelcref{eq: IFGSM} step, employing different norm constraints and underlying norms. The beige line marks the boundary of $B_\budget^p(\x^0)$, the pink line the boundary of $B_\tau^q(\x)$  and the intersection $\cl{B_\budget^p}(\x^0) \cap \cl{B_\tau^q}(\x)$ is hatched. For the case $p=q=\infty$ minimizing a linear function on the intersection (blue arrow) is equivalent to first minimizing on $\cl{B_\tau^\infty}(\x)$ (pink arrow) and then projecting back to the intersection (green arrow). This is not true for $p=2$. Therefore, we need to choose the appropriate projection in \cref{lem:projtrick}.}\label{fig:projtrick}
\end{figure}

This result shows, that when we choose $p=\infty$ for the budget constraint \labelcref{eq: IFGSM} again fulfills the semi-implicit minimizing movement scheme, beyond the time restriction in \cref{cor:budgettime}.

\begin{theorem}\label{thm:ifgsm}
We consider the space $(\Ban, \norm{\cdot}) = (\R^d, \norm{\cdot}_{\infty})$, the energy \rev{$\funccp = \funccpd + \chara_{\cl{B_\budget^\infty}(\x^0)}$, with a continuously differentiable part $\funccpd$, which has a Lipschitz continuous gradient}. Then for $T>0$, there exists a $\infty$-curve of maximal slope $u:[0,T]\to \R^d$, with respect to $\funccp$, and a subsequence of $\tau_n:=T/n$ such that
\begin{align*}
\norm{\xifg_{ \tau_{n_i}}^{\lceil t/\tau_{n_i} \rceil} - u(t)}\xrightarrow{i\to\infty} 0\qquad\text{ for all } t\in[0,T].
\end{align*}
\end{theorem}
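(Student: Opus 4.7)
The plan is to reduce this directly to \cref{thm:semilincvg}, using \cref{lem:projtrick} as the bridge that identifies the \labelcref{eq: IFGSM} iterates with the semi-implicit minimizing movement scheme.

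First, I would encode the budget constraint into the energy by setting
\begin{align*}
\func(\x) \coloneqq \underbrace{-\loss(\hyp(\x),y)}_{\funcd(\x)} + \underbrace{\chara_{\cl{B_\budget^\infty}(\x^0)}(\x)}_{\funcc(\x)},
\end{align*}
so that $\func$ is a $C^1$-perturbation of a convex function. The corresponding semi-linearization in $\xx$ is $\funcs(\x;\xx) = \funcd(\xx)+\langle \nabla \funcd(\xx),\x-\xx\rangle + \chara_{\cl{B_\budget^\infty}(\x^0)}(\x)$. Then \cref{lem:projtrick} shows that for every $\xs^k_\tau\in\cl{B_\budget^\infty}(\x^0)$ the update
\[
\xifg_\tau^{k+1}=\Clip_{\x^0,\budget}\!\bigl(\xifg_\tau^{k}+\tau\,\sign(\nabla_\x \loss(\hyp(\xifg_\tau^k),y))\bigr)
\]
lies in $\argmin_{\xatt\in \cl{B_\tau^\infty}(\xifg_\tau^k)} \funcs(\xatt;\xifg_\tau^k)$, i.e., \labelcref{eq: IFGSM} with $p=\infty$ is exactly a realisation of the semi-implicit scheme from \cref{def: SeLiSch} for this $\func$. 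In particular the step function $\sxs_\tau$ of \cref{def: SeLiSch} coincides with $t\mapsto \xifg_\tau^{\lceil t/\tau\rceil}$ on $(0,T]$.

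Next I would verify the assumptions of \cref{thm:semilincvg}. For \cref{asm: 01,asm: 02}, I take $\sigma$ to be the norm topology on $\R^d$; both then follow from Heine--Borel together with the fact that $\dom(\func)=\cl{B_\budget^\infty}(\x^0)$ is bounded. For \cref{asm: 03a,asm: 03b}, the functional $\funcd$ is continuous with continuous gradient by hypothesis, so $\func$ is continuous on its closed domain and, by \cref{pro:C1:iii,pro:C1} together with \cref{rem:fin3}, $\abs{\partial\func}$ is lower semicontinuous; these give \cref{asm: 03a,asm: 03b} in the finite-dimensional setting. \cref{asm: LipDi} is exactly the Lipschitz-gradient hypothesis on $\funcd = -\loss(\hyp(\cdot),y)$. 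Finally \cref{asm: LoSe} follows from \cref{rem:LoSe}, since $\funcc$ is a proper lower semicontinuous convex indicator.

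With the assumptions in hand, \cref{thm:semilincvg} yields a subsequence $\tau_{n_i}$ and a $\infty$-curve of maximal slope $u:[0,T]\to\R^d$ such that $\sxs_{\tau_{n_i}}(t) \stackrel{\sigma}{\rightharpoonup} u(t)$ for every $t\in[0,T]$. Because $\sigma$ is the norm topology on $\R^d$, $\sigma$-convergence is norm convergence, and using $\sxs_{\tau_{n_i}}(t)=\xifg_{\tau_{n_i}}^{\lceil t/\tau_{n_i}\rceil}$ (with the convention $\sxs_\tau(0)=\x^0=\xifg_\tau^0$) gives the stated convergence. The only subtle point, which I would handle via \cref{lem:projtrick}, is the interplay between the clipping step and the minimization over $\cl{B_\tau^\infty}(\xifg^k_\tau)$; this is exactly where the choice $p=\infty$ is used, because for other $\ell^p$ balls the projection after an unconstrained step need not recover the argmin on the intersection (cf.\ \cref{fig:projtrick}).
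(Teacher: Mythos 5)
Your proposal is correct and follows essentially the same route as the paper: encode the budget constraint via the indicator $\chara_{\cl{B_\budget^\infty}(\x^0)}$ so that $\func$ is a $C^1$-perturbation of a convex function, invoke \cref{lem:projtrick} to identify the \labelcref{eq: IFGSM} iterates with the semi-implicit scheme of \cref{def: SeLiSch}, verify \cref{asm: 01,asm: 02,asm: 03a,asm: 03b,asm: LipDi,asm: LoSe} in the finite-dimensional setting, and conclude with \cref{thm:semilincvg}. Your write-up is in fact somewhat more explicit than the paper's (which delegates the assumption-checking to the proof of \cref{cor:budgettime}), but there is no substantive difference.
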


\begin{proof}
Since \cref{lem:projtrick} yields that \labelcref{eq: IFGSM} fulfills the semi-implicit minimizing movement scheme, we can proceed similarly as in the proof of \cref{cor:budgettime}. We note the all the necessary assumptions are fulfilled, since the \rev{indicator} function $\chara_{\cl{B_\budget^\infty}(\x^0)}$ is lower semicontinuous.
\end{proof}

\subsection{Adversarial training and distributional adversaries}\label{ch: DisAd}
As before, we assume that the underlying spaces are finite dimensional, i.e., $\Inp=\R^d, \Oup=\R^m$ with norms $\norm{\cdot}_\Inp, \norm{\cdot}_\Oup$ and $\mathcal{P}(\WBan\times \Oup)$ denotes the space of Borel \rev{p}robability measures.
We consider the adversarial training task, as proposed in \cite{kurakin2016adversarial,goodfellow2014explaining},
\begin{align}\label{eq: AdTrain}
\inf_{\hyp\in\Hyp}\int \sup_{\xatt\in\cl{B_\budget}(\x^0)} \loss(\hyp(\xatt),y)d\mu(x,y),
\end{align}
where $\mu \in \mathcal{P}(\Inp\times \Oup)$ denotes the data distribution and $\rev{\ell(h(\cdot),y)}\in C^1(\WBan\times \Oup)$. This interpretation of adversarial learning in the distributional setting has sparked a lot of interest in recent years, see e.g., \cite{staib2017distributionally, zheng2019distributionally, pydi2020adversarial, bungert2023begins,bungert2023geometry,pydi2021many,sinha2017certifying,mehrabi2021fundamental}.
In order to rewrite this task as a distributionally robust optimization problem, we equip $\mathcal{P}(\Inp\times \Oup)$ with a suitable optimal transport distance 
\begin{align*}
D(\mu,\Tilde{\mu})\coloneqq \inf_{\gamma\in \Gamma(\mu,\Tilde{\mu} )} \gamma-\esssup\ c(x,y,\xatt,\Tilde{y}),
\end{align*}
where
\begin{align}\label{eq:extdist}
c(x,y,\xatt,\Tilde{y})\coloneqq \begin{cases}
\|x-\xatt\|_\WBan &\text{if } y=\Tilde{y},\\
+\infty &\text{if } y\neq \Tilde{y},
\end{cases}
\end{align}
and $\Gamma(\mu,\Tilde{\mu})$ denotes the set of transport plans between $\mu$ and $\Tilde{\mu}$. Notably, the extended distance $c$ is not the one naturally generated by the norms of the underlying Banach spaces $\WBan$ and $\Oup$. Nonetheless, $c$ is compatible with respect to $\|\cdot\|_\WBan+\|\cdot\|_\Oup$ in the sense that 
\begin{gather*}
    \liminf_{n\rightarrow\infty} c(x_n,y_n,\tilde{x}_n,\tilde{y}_n)\geq c(x,y,\tilde{x},\tilde{y}), 
    \\ \forall (x,y),(\tilde{x},\tilde{y})\in (\WBan\times \Oup) : (x_n,y_n)\rightarrow (y,x) ,
    (\tilde{x}_n,\tilde{y}_n)\rightarrow (\tilde{x},\tilde{y}) \text{ w.r.t. }  \|\cdot\|_\WBan+\|\cdot\|_\Oup,
\end{gather*}
compare \cite[Eq. (1)]{lisini2014absolutely}. This ensures that, as we equip $\mathcal{P}(\WBan\times \Oup)$ with $D$, it is a well-defined extended distance, see \cite[Section 2.6.]{lisini2014absolutely}. The cost functional $c$ was similarly employed in \cite{bungert2023geometry,bui2022unified}, furthermore a similar setup was considered in \cite{staib2017distributionally}.

\begin{remark}\label{rem:marginal}
Assume that $\gamma\in\Gamma(\mu,\muatt)$ is a coupling, i.e., $\gamma\in \mathcal{P}(\mathcal{Z}\times\mathcal{Z})$, where $\mathcal{Z}=\Inp\times\Oup$, with $\gamma-\esssup c(\x,y,\xatt,\tilde{y}) < \infty$. Then we have that for every measurable set $A\subset\Oup$,
\begin{align*}
\gamma(\Inp\times A\times\mathcal{Z}) = 
\gamma(\Inp\times A\times\Inp\times A) =
\gamma(\mathcal{Z}\times\Inp\times A),
\end{align*}
which we see by contradiction: assume there exists a measurable set $A\subset\Oup$ s.t., for $B:=\Inp\times A\times\Inp\times(\Oup\setminus A)$ we have $\gamma(B) > 0$. Then we know that $c(\x,y,\xatt,\tilde{y}) = +\infty$ for all $(\x,y,\xatt,\tilde{y})\in B$ and since $\gamma(B) >0$ this yields that 
\begin{align*}
\gamma-\esssup c(\x,y,\xatt,\tilde{y}) \geq \gamma-\esssup_{B} c(\x,y,\xatt,\tilde{y}) = +\infty.
\end{align*}
The other identity can be proven analogously.
Therefore, if $D(\mu,\muatt)<\infty$ we know that there exists a coupling $\gamma$ fulfilling the above assumption and thus for every measurable set $A\subset\Oup$ we obtain
\begin{align*}
\mu(\Inp\times A) = 
\int_{\Inp\times A \times \mathcal{Z}} d\gamma = 
\int_{\mathcal{Z}\times \Inp\times A} d\gamma
= 
\muatt(\Inp\times A).
\end{align*}
If we now consider a disintegration of $\mu$ and $\muatt$ along the $\Inp$-axis, i.e., we obtain $d\mu = d\mu_y d\nu(y), d\muatt = d\muatt_y d\tilde{\nu}(y)$, with 
\begin{align*}
\nu(A) = \mu((\pi^y)^{-1}(A)) = \mu(\Inp \times A) = \muatt(\Inp \times A) =  \muatt((\pi^y)^{-1}(A)) = \tilde{\nu}(A)
\end{align*}
for every measurable $A\subset\Oup$, where $\pi^y(x,y):= y$ is the projection onto the $\Oup$-component.
\end{remark}

The transport distance $D$ behaves like the $\infty$-Wasserstein distance in the $\Inp$-direction (compare section \cref{sec: Wass}) and \rev{penalizes} movement of mass into the $\Oup$-direction, such that no movement in $\Oup$ can occur when $D(\mu,\Tilde{\mu})$ is finite (see \cref{rem:marginal}). Thus, all calculations done in \cref{sec: Wass} apply with minor adaptation to this case. We only state corresponding lemmas and theorems, while adapted proofs can be found in \cref{sc:ApDiAd}. The first property we prove in this section, is that the adversarial training problem \eqref{eq: AdTrain} is equivalent to the distributional robust optimization problem, \labelcref{eq:DRO}.
Note, that now we need to consider a potential defined on the space $\Inp\times\Oup$, namely $E(\x,y)\coloneqq-\loss(\hyp(\x), y)$, where the label $y$ is now also a variable argument.
\begin{corollary}\label{cor:maxsup}
It holds that
\begin{align}\label{eq: DisMax}
\int\max_{\xatt\in \cl{B_\budget}(\xatt)}  \loss(\hyp(\x),y) d\mu(x,y)= \max_{\tilde{\mu}:D(\tilde{\mu},\mu)\leq \budget}\int \loss(\hyp(\x),y) d \tilde{\mu}(\x,y)
\end{align}
where the maximizing argument is given by $\mu_{\max}=(r_\budget)_{\#}\mu$, with $r_\budget:\Inp\times\Oup\to\Inp$ being a $\mathcal{B}(\Inp\times\Oup)$-measurable selector from \cref{lm: DMin}
\end{corollary}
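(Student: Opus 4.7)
The plan is to prove both inequalities separately: first the upper bound for the RHS via any admissible coupling, and then the lower bound via an explicit construction using the measurable selector.

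For the upper bound, let $\tilde{\mu}$ be any competitor with $D(\tilde{\mu},\mu)\leq \budget$. By definition of $D$, there exists an admissible coupling $\gamma\in\Gamma(\mu,\tilde{\mu})$ with $\gamma\text{-}\esssup c\leq\budget$. In particular, $c(\x,y,\xatt,\tilde{y})<\infty$ for $\gamma$-a.e.~$(\x,y,\xatt,\tilde{y})$, which by the definition of $c$ in \labelcref{eq:extdist} forces $\tilde{y}=y$ and $\|\x-\xatt\|\leq\budget$ $\gamma$-a.e. Using $\tilde{\mu}=\pi^{3,4}_{\#}\gamma$ and the marginal property $\pi^{1,2}_{\#}\gamma=\mu$, I would then estimate
\begin{align*}
\int \loss(\hyp(\xatt),\tilde{y})\,d\tilde{\mu}(\xatt,\tilde{y})
&=\int \loss(\hyp(\xatt),y)\,d\gamma(\x,y,\xatt,\tilde{y})\\
&\leq \int \max_{\x'\in\cl{B_\budget}(\x)}\loss(\hyp(\x'),y)\,d\gamma
=\int \max_{\x'\in\cl{B_\budget}(\x)}\loss(\hyp(\x'),y)\,d\mu(\x,y).
\end{align*}
Taking the supremum over all admissible $\tilde{\mu}$ yields the ``$\leq$'' direction.

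For the reverse inequality, I would invoke the cited Lemma \emph{DMin}, which, in analogy with \cref{lem:LocMin} applied to the potential $\pot(\x,y)=-\loss(\hyp(\x),y)$ on $\Inp\times\Oup$ with the correspondence $(\x,y)\mapsto\cl{B_\budget}(\x)\times\{y\}$, produces a $\mathcal{B}(\Inp\times\Oup)$-measurable selector $r_\budget$ with $r_\budget(\x,y)\in\argmax_{\xatt\in\cl{B_\budget}(\x)}\loss(\hyp(\xatt),y)$ (paired with $y$ in the second component). Defining $\mu_{\max}\coloneqq(r_\budget)_{\#}\mu$ and using $(\mathrm{Id},r_\budget)_{\#}\mu$ as a coupling between $\mu$ and $\mu_{\max}$, I verify that this coupling is concentrated on the set where $y=\tilde{y}$ and $\|\x-\xatt\|\leq\budget$, so that its $c$-essential supremum is at most $\budget$, hence $D(\mu_{\max},\mu)\leq\budget$. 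The change-of-variables formula then gives
\begin{align*}
\int \loss(\hyp(\xatt),y)\,d\mu_{\max}(\xatt,y)
=\int \loss(\hyp(r_\budget(\x,y)),y)\,d\mu(\x,y)
=\int \max_{\xatt\in\cl{B_\budget}(\x)}\loss(\hyp(\xatt),y)\,d\mu(\x,y),
\end{align*}
which simultaneously shows the ``$\geq$'' inequality and identifies the maximizer as $\mu_{\max}=(r_\budget)_{\#}\mu$.

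The main obstacle is ensuring that the RHS supremum is actually attained and that the rearrangement in the lower-bound step is legitimate; both rely on the measurability of $r_\budget$, which is the content of Lemma \emph{DMin} and is the direct analogue of \cref{lem:LocMin} for the product potential. Once this selector is available, the two inequalities above are purely bookkeeping with marginals of couplings.
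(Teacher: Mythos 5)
Your argument is correct and follows essentially the same route as the paper: the paper's proof is the chain of equalities obtained by combining the measurable selector $r_\budget$ from \cref{lm: DMin} with the push-forward optimality \labelcref{eq:PushMax}, and your two-inequality version simply unfolds that — your upper bound reproduces the coupling/marginal argument used inside the proof of \labelcref{eq:PushMax}, and your lower bound is the change-of-variables identity for $(r_\budget)_\#\mu$. The one point where you are slightly more careful than the written proof is in explicitly verifying $D(\mu,(r_\budget)_\#\mu)\leq\budget$ via the coupling $(\mathrm{Id},r_\budget)_\#\mu$, which the paper leaves implicit.
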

\begin{proof}
We employ the $\mathcal{B}(\Inp\times\Oup)$-measurable selector $r_\budget$, from \cref{lm: DMin} and compute
\begin{align*}
\int \max_{\xatt\in\cl{B_\budget}(\x)}  \loss(\hyp(\xatt),y)\, d\mu(\x,y)
&=
\int
\max_{(\xatt, \tilde{y}): c(\x,y,\xatt,\tilde{y}) \leq \budget}
-\pot(\xatt,\tilde{y})\, d\mu(\x,y)
=
\int -\pot(r_\budget(x,y))\, d\mu(\x,y)\\ 
&= 
-\int \pot(x,y) d(r_{\budget})_\#\mu(\x, y)\overset{(i)}{=} -\min_{\muatt:D(\mu,\muatt)\leq\budget} \int \pot(x,y)\, d\muatt(\x,y)\\
&=
\max_{\muatt:D(\muatt,\mu)\leq \budget}\int \loss(\hyp(\x),y) d \muatt(\x,y),
\end{align*}
where in $(i)$ we employ \labelcref{eq:PushMax}.
\end{proof}
\rev{%
\begin{remark}
In other works considering distributional adversarial attacks, for example \cite{pydi2020adversarial,pydi2021many} the well-definedness of the expressions in \cref{cor:maxsup} is not always ensured. In \cite{bungert2023geometry} this was resolved by considering open balls for the budget constraint. However, due to our assumption that $\ell(h(\cdot), y)\in C^1(\Inp\times\Oup)$, we do not encounter similar measurability issues, as shown in \cite{meunier2021mixed}.
\end{remark}
}
For the main result in this section, we now consider the energy defined via the potential defined on $\Inp\times\Oup$, i.e.,
\begin{align*}
\ffunc(\mu)\coloneqq
\int \pot(\x,y) d\mu(\x,y) = \int -\loss(\hyp(x),y) d\mu(x,y),
\end{align*}
where the underlying \rev{extended} metric space is chosen as \rev{$\mathcal{D}=(\mathcal{P}_\infty(\Inp\times\Oup), c)$}, with \rev{$\mathcal{P}_\infty(\Inp\times\Oup)$} denoting the subset of Borel probability measures with bounded support in $\WBan$- and $\Oup$-direction. 
\begin{remark}
    \cref{thm: curveDe} also holds for extended distances, i.e., distances which take values in $[0,+\infty]$,  compare \cite[Thm. 3.1]{lisini2014absolutely}. The distance $c(\cdot,\cdot)$ introduced in \cref{eq:extdist} is such an extended distance. 
For this particular choice of extended distance, the measure $$\eta\in\mathcal{P}(C(0,T; \left((\Inp\times \Oup),\|\cdot\|_\WBan+\|\cdot\|_\Oup\right))$$ is concentrated on $ AC^\infty\left(0,T;\left((\WBan\times \Oup),c\right)\right)=AC^\infty(0,T;(\WBan,\|\cdot\|_\WBan)\times \Oup\rev{)}$. Notice that the continuous curves are continuous w.r.t. $\|\cdot\|_\WBan+\|\cdot\|_\Oup$ while absolute continuity is w.r.t. $c(\cdot,\cdot)$(compare \cite[Section 2.3.]{lisini2014absolutely}).
\end{remark}

The theorem below is a variant of \cref{thm: WInfCurve} for the adversarial setting. Namely, we show, that $\infty$-curves of maximal slope that are used to solve \labelcref{eq:DRO} can be characterized by employing a representing measure $\eta$ on $C(0,T;\Inp\times\Oup)$, where  $\eta$-a.e. curve fulfills the differential inclusion w.r.t. the potential $E$. Here, we enforce the condition $D(\mu,\muatt)\leq \budget$, by only considering the evolution until time $T=\epsilon$.

\begin{theorem} \label{thm: WInfCurveAdv}
For $T=\budget$, let $\mu \in \AC^\infty(0,T\rev{;} \mathcal{D})$ with $\eta$ from \cref{thm: curveDe}.  \rev{Let further $\ffunc\circ \mu$ be for a.e. $t\in[0,T]$ equal to a non-increasing map $\psi:[0,T] \rightarrow \R$.}
Then the following statements are equivalent:
\begin{enumerate}[label=(\roman*)]
\item%
\rev{$
    |\mu'|(t)\leq 1 \text{ and } \psi'(t)\leq -|\partial \ffunc|(u(t)) \text{ for a.e. } t\in (0,T).
$}
\item For $\eta$-a.e. curve $u\in C(0,T;\WBan\times \Oup)$ it holds, that $E\circ u$ is for a.e. $t\in(0,T)$ equal to a non-increasing map $\psi_u:\rev{[0,T]\rightarrow \R}$ and
$$ u'(t)\ \rev{\in} \  (\partial \|\cdot\|_{\WBan^*}(-\nabla_x E(u(t))),0) , \quad \text{for a.e. } t\in(0,T).$$
\end{enumerate} 
\end{theorem}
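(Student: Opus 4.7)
The plan is to reduce \cref{thm: WInfCurveAdv} to \cref{thm: WInfCurve} by exploiting the fact that the extended cost $c$ freezes the label component of every transported particle. First, I would invoke the representation result \cref{thm: curveDe}, which, as noted in the remark preceding the theorem, holds in the extended-distance setting of \cite{lisini2014absolutely}. This yields a measure $\eta$ on $C(0,T;\WBan\times\Oup)$ concentrated on curves that are absolutely continuous with respect to $c$. Since $c(u(s),u(t))<\infty$ forces $\pi^y u(s)=\pi^y u(t)$, each such curve has a constant label $y_0\in\Oup$, so we may write $u(t)=(u_x(t),y_0)$ and its $c$-metric derivative reduces to $\|u_x'(t)\|_\WBan$. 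This immediately produces the vanishing second component in the differential inclusion in $(ii)$ and reduces everything to a family of $\infty$-curves of maximal slope in the genuine Banach space $\WBan$ for the frozen-label energies $E(\cdot,y_0)$.

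Next, I would derive the adversarial analog of \cref{lm:PotSlope}, namely
\begin{align*}
|\partial \ffunc|(\mu) = \int |\partial_x E(\cdot,y)|(x)\, d\mu(x,y),
\end{align*}
where $|\partial_x E(\cdot,y)|$ denotes the metric slope of the frozen-label potential on $\WBan$. The argument of \cref{lm:PotSlope} transfers once one observes that the local minimization $\argmin_{\muatt:D(\mu,\muatt)\leq \tau}\int E\, d\muatt$ decouples pointwise into $\argmin_{\xatt\in\cl{B_\tau^\WBan}(x)} E(\xatt,y)$, because $c=+\infty$ prohibits label swaps. Measurability of the corresponding selector is exactly the content of \cref{lm: DMin} used in \cref{cor:maxsup}; the underlying correspondence $(x,y)\mto \cl{B_\tau^\WBan}(x)\times\{y\}$ is weakly measurable by the product structure and an immediate adaptation of \cref{lm:Cores}.

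With these two ingredients, the proof structure of \cref{thm: WInfCurve} carries over almost verbatim. For $(i)\Rightarrow(ii)$, I would run the energy-dissipation equality of \cref{lm: DisEq} applied to $\ffunc$, combined with the representation
\begin{align*}
\ffunc(\mu_t)-\ffunc(\mu_s)=\int_{C(0,T;\WBan\times\Oup)} \bigl(E(u(t))-E(u(s))\bigr)\, d\eta(u)
\end{align*}
and Fubini--Tonelli, to force $\eta$-a.e.\ curve to be an $\infty$-curve of maximal slope for $x\mapsto E(x,y_0)$ on $\WBan$; \cref{thm: GradFlowForm} then upgrades this to the stated differential inclusion $u_x'(t)\in \partial\|\cdot\|_{\WBan^*}(-\nabla_x E(u(t)))$. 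The reverse implication $(ii)\Rightarrow(i)$ proceeds by running the same chain rule in reverse, using lower semicontinuity of $|\partial E|$ together with the slope identity above, exactly as in the final step of the proof of \cref{thm: WInfCurve}. The constraint $D(\mu_0,\mu_t)\leq\budget$ is automatic from $T=\budget$ and the Lipschitz bound $|\mu'|\leq 1$ for $\infty$-curves of maximal slope.

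The principal technical obstacle is the slope formula: one must verify that the machinery of \cref{lm:Cores,lem:LocMin,thm:GloLo,lm:PotSlope}, which was set up for a genuine Banach space, survives when the ambient cost takes the value $+\infty$. Because $c$ forbids interactions between distinct label classes, the problem decomposes into a disintegrated family of genuine Banach-space problems in $\WBan$, one per label $y$, so the metric-slope computation is essentially label-wise; the only nontrivial point is to check joint Borel measurability of the resulting selector in $(x,y)$, which follows from the product-measurability of the correspondence noted above. Once this is secured, the remainder of the argument is a cosmetic rewrite of \cref{thm: WInfCurve}.
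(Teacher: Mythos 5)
Your proposal follows essentially the same route as the paper: the paper likewise notes (in the remark before the theorem) that the extended cost $c$ forces $\eta$ to concentrate on curves with frozen label, establishes the adversarial slope formula $|\partial \ffunc|(\mu)=\int \|\nabla_x \pot(x,y)\|_{\WBan^*}\,d\mu$ and the measurable selector in \cref{lm: GSlop,lm: DMin}, proves the dissipation equality in \cref{lem:adveq}, and then mirrors the two implications of \cref{thm: WInfCurve} via the Fenchel--Young equivalences. The only cosmetic difference is that you route the differential inclusion through \cref{thm: GradFlowForm} on the frozen-label Banach-space problems, while the paper's proof applies \cref{prop:youngiii,prop:youngiv} directly to $\langle \nabla_x \pot(u(t)),(u'(t))_x\rangle$; the substance is identical.
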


\begin{proof}\phantom{.}\\
\rev{
\textbf{Step 1} $(i)\Longrightarrow (ii)$.\\
By \cref{lm: GSlop} we know that $|\partial\ffunc|$ is a strong upper gradient such that by \cref{rm:DisE} $\mu_t$ satisfies the energy dissipation equality \cref{eq: EnergyDissip}. Similar to \cref{thm: WInfCurve} we estimate
\begin{align}
\begin{split}
    \ffunc(\mu_0)-\ffunc(\mu_T)&=  \int_{C(0,T;\WBan)} E(u(0))-E(u(T)) d\eta(u)\\
   &\leq \int_{C(0,T;\WBan)}  \int_0^T \|\nabla_x E(u(r))\|_{\WBan^*} |u'|(t) dr d\eta(u)\\
   &\leq \int_{C(0,T;\WBan)}  \int_0^T \|\nabla_x E(u(r))\|_{\WBan^*} dr d\eta(u)\\
   &=\int_0^T |\partial\ffunc|(\mu(r))dr=\ffunc(\mu_0)-\ffunc(\mu_T).
\end{split}
\end{align}
and observe that this equality can only hold if for $\eta$.a.e. $u$ 
\begin{align*}
     E(u(s))-E(u(t))= \int_s^t \|\nabla_x E(u(r))\|_{\WBan^*} |u'|(t) dr = \int_s^t \|\nabla_x E(u(r))\|_{\WBan^*} dr \quad \text{ for all } 0 \leq s < t \leq T.
\end{align*}
and thus $\psi_u \coloneqq E\circ u$ is a non-increasing absolutely continuous map for $\eta$-a.e. $u$. 
}
We use \cref{lem:timeesssup} and $ \||u'|(t)\|_{L^\infty(\eta)}=|\mu'|(t)\leq 1$ for a.e. $t\in(0,T)$ to infer that for $\eta$-a.e. curve $u\in C(0,T\rev{;}\WBan\times \Oup)$ it holds, 
$$ \abs{u'}(t)\leq 1 \quad \text{for a.e. } t\in(0,T).$$
Denoting by $(u'(t))_x$ and $(u'(t))_y$ the to $\WBan$ and $\Oup$ corresponding parts of the derivative $u'(t)$ \rev{and keeping \cref{rm:ChainR} in mind} we obtain for $\eta$-a.e. curve $u\in C(0,T\rev{;}\WBan\times \Oup)$
\begin{align*}
\langle \nabla_x E(x,y),(u'(t))_x\rangle&=\langle \nabla E(x,y), u'(t)\rangle\\
&= (\pot\circ u)'(t) \\
&= -\|\nabla_x E(u(r))\|_{\WBan^*} =-\norm{\nabla_x E(u(r))}_{\WBan^*}-\chara_{\cl{\unitb}}((u'(t))_x)
\end{align*}
%}
for a.e. $t\in(0,T)$.
 Using the equivalence of \cref{prop:youngiii} and \cref{prop:youngiv}
we obtain
\begin{align*}
u'(t)\ \rev{\in} \  (\partial \|\cdot\|_{\WBan^*}(-\nabla_x E(x,y)),0) \quad \text{for a.e. } t\in(0,T)
\end{align*}
for $\eta$-a.e. curve $u$.\\[1em]
\noindent%
\textbf{Step 2} $(ii)\Longrightarrow (i)$.\\
For $\eta$-a.e. $u\in C(0,T;\WBan \times \Oup)$
we know by \cref{rm:DisE} that  \rev{that the energy dissipation equality 
\begin{align*}
    E(u(s))-E(u(t)) = \int_s^t \|\nabla_x E(u(r))\|_{\WBan^*} dr \quad \text{ for all } 0 \leq s < t \leq T.
\end{align*}
holds.} In particular, it is absolutely continuous such that \cref{rm:RadDer} applies.
 We calculate,
\begin{align*}
\ffunc(\mu_t) &- \ffunc(\mu_s) = 
\int_{C(0,T\rev{;}\WBan)} E(u(t))-E(u(s)) d\eta(u) = 
\int_{C(0,T\rev{;}\WBan)}  \int_s^t (E\circ u)'(r) dr d\eta(u)\\ &= 
\int_{C(0,T\rev{;}\WBan)}  \int_s^t \langle \nabla_x E(u(r)),(u'(r))_x\rangle dr d\eta(u)\overset{(i)}{=}\int_{C(0,T\rev{;}\WBan)}  \int_s^t -\|\nabla_xE(u(r))\|_{\WBan^*} dr d\eta(u)\\
&=\int_{C(0,T\rev{;}\WBan)}  \int_s^t -\|\nabla_xE(u(r))\|_{\WBan^*}  d\eta(u)dr\overset{(ii)}{=}\int_s^t -|\partial \ffunc|(\mu_r) dr,
\end{align*}
Where for (i) we use the equivalence of \cref{prop:youngiii} and \cref{prop:youngiv}, while for (ii) we use \cref{lm: GSlop}. This implies $\ffunc \circ \mu_t$ is monotone non-increasing and $(\ffunc \circ \mu)'(t)\leq -|\partial \ffunc|(\mu_t)$ for a.e. $t\in(0,T)$. Further, by \cref{thm: curveDe} we have
\begin{align*}
|\mu'|(t)=\eta(u)-\esssup |u'|(t) =\eta(u)-\esssup \|u'(t)\|_\WBan \leq 1,   
\end{align*}
since all elements in $\partial \|\cdot\|_{\WBan^*}(-\nabla_x E(x,y))$ have norm smaller than $1$.

\end{proof}
\section{Conclusion and outlook}
In this work, we considered the limit case $p\to\infty$ of the \rev{well-known} $p$-curves of maximum slope, which yield a versatile gradient flow framework in metric spaces, \cite{ambrosio05}. In the abstract setting, we proved existence by employing the minimizing movement scheme, adapted to the case $p=\infty$. Assuming that the underlying space is Banach, we were able to characterize $\infty$-curves of maximum slope via differential inclusions. Furthermore, we also demonstrated the convergence of a semi-implicit scheme to the continuum flow. This insight constitutes the interface to the field of adversarial attacks. Namely, we showed that the \rev{well-known} fast gradient sign method, and its iterative variant, correspond to the semi-implicit scheme and therefore converge to the flow, when sending the step size to zero. More generally, this result holds true for a whole class of normalized gradient descent algorithms. Furthermore, we also considered Wasserstein gradient flows, where we first used the theory developed in \cite{lisini2014absolutely} to derive an alternative characterization of absolutely continuous curves via \rev{the continuity equation}. As our main result in this section, we prove that being an $\infty$-curve of maximal slope is equivalent to the existence of a representing measure on the space of continuous curves, where almost every curve, fulfills a differential inclusion on the underlying Banach space. This finally allowed us to generate distributional adversaries, in an adapted $\infty$-Wasserstein distance, via curves of maximum slope.
Similar to \cref{ch: AdAt} we could also consider the energy
\begin{align*}
    \ffunc(\mu)\coloneqq\int_{\WBan\times \Oup} E(x,y) d\mu+\chi_{B^D_\epsilon(\mu_0)}(\mu) 
\end{align*}
to generate distributional adversarial attacks.
We strongly suspect that corresponding $\infty$-curves of maximal slope in $\mathcal{D}$ would take the following form:
Let $\mu\in AC^\infty(0,T;\WBan)$ be a $\infty$-curve of maximal slope and $\eta$ its corresponding probability measure over the space $C(0,T;\WBan\times \Oup)$, then for $\eta$-a.e. $u\in C(0,T;\WBan\times \Oup)$
\begin{align*}
    u'(t)\ \rev{\in}\  (\partial \|\cdot\|_{\WBan^*}(-\nabla_x E_{u_0}(u(t))),0) , \quad \text{for a.e. } t\in(0,T),
\end{align*}
where 
\begin{align*}
    E_{u_0}(x,y)=E(x,y)+\chi_{B_\epsilon((u_0)_x)}(x).
\end{align*}
In \cite{wong2020fast} the authors suggested to combine \textit{FGSM} with stochastic elements. They proposed to use a single step
\begin{gather*}
    \sigma\sim \text{Uniform}\left(\cl{B_\epsilon^\infty}(x_0)\right),\\
    x_\frac{1}{2}=x_0+\sigma,\\
    x_1=\Clip_{0,\budget}\left(x_\frac{1}{2}+\sign(\nabla \loss(\hyp(x_\frac{1}{2}), y))\right).
\end{gather*}
This is reminiscent of the classical Langevin algorithm, therefore it would be interesting if this stochasticity could be incorporated into our framework.

\section*{Acknowledgements}
%MB, TR acknowledge support of the European Union's Horizon 2020 research and innovation programme under the Marie Sk{\l}odowska-Curie grant agreement No 777826 (NoMADS). 
MB, TR and LW acknowledge support from DESY (Hamburg, Germany), a member of the Helmholtz Association HGF. This research was supported in part through the Maxwell computational resources operated at Deutsches Elektronen-Synchrotron DESY, Hamburg, Germany. MB and LW acknowledge support  from the German Research Foundation, project BU 2327/20-1. \rev{MB and
TR acknowledge funding by the German Ministry of Science and Technology (BMBF) under grant agreement
No. 01IS24072A (COMFORT). TR further wants to thank Samira Kabri for many insightful discussions.} Parts of this study were carried out while LW and TR were affiliated with the Friedrich-Alexander-Universität Erlangen-Nürnberg.\\

Competing interests: The authors declare none.
\appendix
\section{Convex analysis}
This section gives an overview over well-known definitions and statements in convex analysis. In the following $\Inp$ denotes a Banach space and $\Inp^*$ its dual.
\begin{definition}[Subdifferential]
For a convex function $f:\Inp\to (-\infty, \infty]$, we denote by
\begin{align*}
\partial f(\x) := \{\xi\in \Inp^*: f(\xx) - f(\x) \geq
\langle \xi, \xx - \x \rangle\quad\forall \xx\in \Inp\} 
\subset \Inp^* 
\end{align*}
the subdifferential of $f$ at $x\in \Inp$.
\end{definition}
If $f(\cdot)=\|\cdot\|$ then the subdifferential is given by 
\begin{align}\label{eq: SubN}
\partial \|\cdot\|(x)=\{ \xi\in \Inp^*|\langle \xi, x\rangle=\|x\|,\|\xi\|_*\leq 1\}
\end{align}

\begin{definition}[Fenchel conjugate]
For a function $f:\Inp \rightarrow [-\infty,+\infty]$, we denote by $f^*: \Inp^* \rightarrow [-\infty,+\infty]$,
\begin{align*}
f^*(\xi)\coloneqq \sup_{x\in \Inp} \langle \xi, \x\rangle -f(\x) \quad \text{for } \xi \in \Inp^*
\end{align*}
 the \emph{Fenchel conjugate} of $f$.
\end{definition}
A direct consequence of this definition is the so called Fenchel--Young inequality 
\begin{align}\label{eq:Young}
    \langle \xi,\x\rangle \leq f(\x)+f^*(\xi).
\end{align}
The next proposition yields the conditions under which the equality in \eqref{eq:Young} is obtained.
\begin{proposition}[{\cite[Prop. 2.33]{barbu2012convexity}}] \label{pro:Young}
Let $f:\Inp\rightarrow ]-\infty,+\infty]$ be a proper convex function. Then for $x\in \Inp$, the following three properties are equivalent:
\begin{propenum}
\item\label{prop:youngi} $\xi\in \partial f(x)$.
\item\label{prop:youngii} $f(x)+f^*(\xi)\leq\langle \xi,x\rangle$.
\item\label{prop:youngiii} $f(x)+f^*(\xi)=\langle \xi,x\rangle$.
\end{propenum}
If, in addition, $f$ is lower-semicontinuous, then all of these properties are equivalent to the following one.
\begin{propenum}[resume]
\item\label{prop:youngiv} $x\in\partial f^*(\xi)$.
\end{propenum}
\end{proposition}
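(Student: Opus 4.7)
The plan is to establish the chain (ii) $\Rightarrow$ (iii) $\Rightarrow$ (i) $\Rightarrow$ (ii) first, using only the definition of $f^*$ and the Fenchel--Young inequality \labelcref{eq:Young}, and then to derive the extra equivalence (iii) $\Leftrightarrow$ (iv) via Fenchel--Moreau biconjugation.

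First I would note that (ii) $\Rightarrow$ (iii) is immediate: Fenchel--Young always gives $f(x)+f^*(\xi)\geq \langle \xi,x\rangle$, so combined with (ii) we get equality. Next I would prove (iii) $\Rightarrow$ (i). Writing (iii) as $\langle \xi,x\rangle - f(x) = f^*(\xi) = \sup_{y}\{\langle \xi,y\rangle - f(y)\}$, we read off that for every $y\in \Inp$, $\langle \xi,y\rangle - f(y)\leq \langle \xi,x\rangle - f(x)$, equivalently $f(y)\geq f(x)+\langle \xi, y-x\rangle$, which is exactly $\xi\in\partial f(x)$. Finally (i) $\Rightarrow$ (ii) follows by taking the supremum in $y$ of the defining inequality $f(y) - f(x)\geq \langle \xi,y-x\rangle$: rearranging gives $\langle \xi,y\rangle - f(y)\leq \langle \xi,x\rangle - f(x)$ for all $y$, hence $f^*(\xi)\leq \langle \xi,x\rangle - f(x)$. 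This closes the cycle and establishes the equivalence of (i)--(iii) under the sole assumption that $f$ is proper and convex (properness only serves to guarantee $f(x)\in\R$ and $f^*\not\equiv -\infty$, so the inequalities make sense).

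For the last equivalence, assume in addition that $f$ is lower semicontinuous. Then by the Fenchel--Moreau theorem, $f^{**}=f$ (this is the step where lower semicontinuity is essential and, in the Banach space setting, uses a Hahn--Banach separation of the epigraph of $f$ from points below it). Now apply the already-proven equivalence (i)$\Leftrightarrow$(iii) to the convex function $f^*: \Inp^*\to (-\infty,+\infty]$ at the point $\xi$, with candidate subgradient $x$ viewed as an element of $\Inp^{**}$: $x\in\partial f^*(\xi)$ is equivalent to $f^*(\xi) + f^{**}(x)=\langle x,\xi\rangle$, which by $f^{**}=f$ is exactly (iii). (One has to verify that the subgradient, which a priori lives in $\Inp^{**}$, can indeed be realized by the element $x\in\Inp$; this is automatic here since we are checking equality of a given pair, not producing a new subgradient.) This yields (iii)$\Leftrightarrow$(iv), completing the proof.

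The main obstacle is really just the last step: everything up to and including the equivalence of (i), (ii), (iii) is elementary and purely algebraic, but (iv) requires the biconjugation identity $f^{**}=f$, which is the only nontrivial ingredient and the reason lower semicontinuity is needed. Everything else is bookkeeping with the definitions of $f^*$ and of the subdifferential.
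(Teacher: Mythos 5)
Your proof is correct. Note that the paper itself gives no proof of this proposition --- it is quoted verbatim from \cite[Prop.~2.33]{barbu2012convexity} --- so there is nothing to compare against; your argument is the standard one: the cycle (ii) $\Rightarrow$ (iii) $\Rightarrow$ (i) $\Rightarrow$ (ii) is pure bookkeeping with the definition of $f^*$ and the Fenchel--Young inequality, and the equivalence with \labelcref{prop:youngiv} is obtained by applying the same equivalence to $f^*$ and invoking Fenchel--Moreau biconjugation, with the correct observation that one only needs to test a given pair $(x,\xi)$ rather than produce a subgradient in $\Inp^{**}$. One small refinement worth noting: the implication (iii) $\Rightarrow$ (iv) does not actually require lower semicontinuity, since $f^*(\zeta)\geq\langle\zeta,x\rangle-f(x)$ for all $\zeta$ combined with equality at $\zeta=\xi$ gives $f^*(\zeta)-f^*(\xi)\geq\langle\zeta-\xi,x\rangle$ directly; it is only the converse (iv) $\Rightarrow$ (iii) that needs $f^{**}=f$.
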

\begin{remark}
In \cref{prop:youngiv}, we use the canonical embedding to obtain the subspace relation $\Inp\subset\Inp^{**}$. Following \cite[Rem. 2.35]{barbu2012convexity}, if $\Inp$ is reflexive, i.e. $\Inp^{**}=\Inp$, then it follows from Proposition \ref{pro:Young} 
that
\begin{align*}
    x\in \partial f^*(\xi) \Longleftrightarrow \xi\in \partial f(x),
\end{align*}
which yields
\begin{align*}
(\partial f)^{-1}(\xi) = 
\{x\in\Inp\st \xi\in \partial f(x)\} = 
\{x\in\Inp\st x\in \partial f^*(\xi)\} = 
\partial f^*(\xi)
\end{align*}
In the non-reflexive case one can not argue as above, and we do not obtain the simple relation between $\partial f^*$ and $\partial f$, see, e.g., \cite{rockafellar1970maximal}.
\end{remark}
An important corollary of \cref{pro:Young} is its application to the indicator function of the closed unit ball $f=\chara_{\cl{\unitb}}$, where its convex conjugate for $\xi\in\Ban^*$ is given by
\begin{align*}
\chara_{\cl{\unitb}}^*(\xi)=
\sup_{\x\in\Ban}\langle \xi, \x\rangle-\chara_{\cl{\unitb}}(\x)=
\sup_{\x\in \cl{\unitb}}\langle \xi, \x\rangle=\|\xi\|_*.
\end{align*}

\begin{corollary}\label{cor:charainclusion}
For a Banach space $\Ban$, and $\xi\in\Ban^*$ we have that
\begin{align}\label{eq:indicator}
\partial \norm{\cdot}_*(\xi)\cap \WBan = \argmax_{\x\in\cl{\unitb}} \langle \xi, \x\rangle.
\end{align}

\end{corollary}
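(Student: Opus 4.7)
The plan is to deduce the corollary directly from Proposition~\ref{pro:Young} applied to the indicator function $f=\chara_{\cl{\unitb}}$. This function is proper, convex, and lower semicontinuous (since $\cl{\unitb}$ is closed and convex), so all four items of the proposition are available. The key observation recorded immediately before the corollary is the conjugate identity $f^*=\norm{\cdot}_*$, which is precisely the definition of the dual norm.

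First, I would spell out what the equivalence \cref{prop:youngiii} $\Leftrightarrow$ \cref{prop:youngiv} says for this specific $f$: for $\x\in\Ban$ (viewed in $\Ban^{**}$ via the canonical embedding),
\begin{align*}
\x\in \partial \norm{\cdot}_*(\xi)\cap \Ban
\quad\Longleftrightarrow\quad
\chara_{\cl{\unitb}}(\x) + \norm{\xi}_* = \langle \xi,\x\rangle.
\end{align*}
The right-hand side forces $\x\in\cl{\unitb}$ (otherwise the left side is $+\infty$, while the right side is finite), and on $\cl{\unitb}$ it reduces to $\langle\xi,\x\rangle=\norm{\xi}_*$. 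Since $\norm{\xi}_*=\sup_{\xatt\in\cl{\unitb}}\langle\xi,\xatt\rangle$, this is exactly the condition $\x\in\argmax_{\xatt\in\cl{\unitb}}\langle\xi,\xatt\rangle$.

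Both inclusions then follow from this chain of equivalences: if $\x\in\partial\norm{\cdot}_*(\xi)\cap\Ban$, then \cref{prop:youngiv}$\Rightarrow$\cref{prop:youngiii} yields $\x\in\argmax_{\xatt\in\cl{\unitb}}\langle\xi,\xatt\rangle$; conversely, if $\x$ realizes the maximum, then $\chara_{\cl{\unitb}}(\x)+\norm{\xi}_*=\langle\xi,\x\rangle$, hence \cref{prop:youngiii}$\Rightarrow$\cref{prop:youngiv} gives $\x\in\partial\norm{\cdot}_*(\xi)$, and $\x\in\Ban$ by construction.

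There is no genuine obstacle here; the only thing that needs a small amount of care is that $\norm{\cdot}_*$ is a function on $\Ban^*$, so its subdifferential naturally lives in $\Ban^{**}$, which is exactly why the intersection with $\Ban$ appears in the statement. The application of Proposition~\ref{pro:Young} is legitimate without assuming reflexivity because $\chara_{\cl{\unitb}}$ is lower semicontinuous, so item \cref{prop:youngiv} is available.
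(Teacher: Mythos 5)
Your proposal is correct and follows essentially the same route as the paper: apply \cref{pro:Young} to $f=\chara_{\cl{\unitb}}$ with $f^*=\norm{\cdot}_*$, use the equivalence of \cref{prop:youngiii} and \cref{prop:youngiv} (available by lower semicontinuity of the indicator), and observe that the Fenchel--Young equality forces $\x\in\cl{\unitb}$ and $\langle\xi,\x\rangle=\norm{\xi}_*$, i.e.\ membership in the argmax. Your write-up is in fact slightly more careful than the paper's, notably in making both inclusions and the $\Ban$ versus $\Ban^{**}$ point explicit.
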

\begin{proof}
Since $\chara_{\cl{B_1}}$ is lower semicontinuous, we can 
use the equivalence of \cref{prop:youngiii} and \cref{prop:youngiv}, to infer
\begin{align*}
\x\in\partial \norm{\cdot}_*(\xi)
\quad\Leftrightarrow\quad%
\norm{\xi}_* = \langle\xi,\x\rangle - \chara_{\cl{\unitb}}(\x).
\end{align*}
In the second statement, using the definition of $\norm{\xi}_*$ as in \cref{eq:indicator}, therefore yields that each $\x$ above realizes the supremum, which concludes the proof.
\end{proof}

\section{Refined version of Ascoli--Arzelà}

\begin{proposition}[{\cite[Prop. 3.3.1]{ambrosio05}}]\label{prop:AA}
Let $u^n:[0,T]\to\CMS$ be a sequence of curves, that fulfills the following conditions:
\begin{enumerate}[label=(AA-\roman*), align=left,]
\item\label{def:AA1} There is a $\sigma$-sequentially compact set $K\subset\CMS$, such that 
 \begin{align*}
 u^n(t)\in K\quad \text{for every}\quad t\in[0,T]\quad \text{and every}\quad n\in\N.
 \end{align*}
\item\label{def:AA2} There is a symmetric function $\omega:[0,T]\times[0,T]\to [0,+\infty)$ with $\lim_{(s,t)\to(r,r)} \omega(s,t) = 0$ for all $r\in [0,T]\setminus C$, where $C$ is an at most countable set, such that
\begin{align*}
\limsup_{n\to\infty} d(u^n(s), u^n(t))\leq \omega(s,t)\quad\text{for all}\quad s,t\in[0,T].
\end{align*}
\end{enumerate}%
Then there exists a subsequence $u^{n_k}$ and a limit curve $u:[0,T]\to\CMS$, which is $d$-continuous in $[0,T]\setminus C$, such that
\begin{align*}
u^{n_k}(t) \overset{\sigma}{\rightharpoonup} u(t)\quad\text{for all}\quad t\in [0,T].
\end{align*}
\end{proposition}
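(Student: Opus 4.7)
The proof follows the classical Ascoli--Arzelà philosophy adapted to the weaker topology $\sigma$: extract a subsequence converging on a countable dense set by Cantor diagonalization, then extend and control the limit on the rest of $[0,T]$ via condition \ref{def:AA2}.

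First, I would fix a countable dense subset $D\subset [0,T]$ with $C\subset D$; since $C$ is at most countable this is possible. By \ref{def:AA1}, for every $t\in D$ the sequence $\{u^n(t)\}_{n\in\N}$ lies in the $\sigma$-sequentially compact set $K$. A standard Cantor diagonal argument over an enumeration of $D$ then yields a subsequence $u^{n_k}$ and a function $u\colon D\to K$ with $u^{n_k}(t)\stackrel{\sigma}{\rightharpoonup}u(t)$ for all $t\in D$. By the sequential $\sigma$-lower semicontinuity of $d$ (\cref{asm: 01}) and \ref{def:AA2}, for every $s,t\in D$ we have
\begin{equation*}
d(u(s),u(t))\leq \liminf_{k\to\infty} d(u^{n_k}(s),u^{n_k}(t)) \leq \omega(s,t).
\end{equation*}

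Second, I would extend $u$ to $[0,T]\setminus C$. Given $t\in [0,T]\setminus C$ and any sequence $t_m\in D$ with $t_m\to t$, the previous estimate applied to $u(t_m),u(t_{m'})$ together with $\omega(t_m,t_{m'})\to 0$ (since $t\notin C$) shows that $\{u(t_m)\}$ is $d$-Cauchy, hence remains in a $d$-bounded subset of $K$. By $\sigma$-sequential compactness, a subsequence converges in $\sigma$ to some $u^*\in K$. I would then define $u(t):=u^*$ and check independence of the approximating sequence by interleaving: two choices $t_m,t_m'\to t$ merge into one sequence in $D$ to which the Cauchy estimate still applies, so both limits agree.

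Third, I would promote this to $\sigma$-convergence $u^{n_k}(t)\stackrel{\sigma}{\rightharpoonup}u(t)$ for every $t\in [0,T]\setminus C$. Given any further subsequence of $\{u^{n_k}(t)\}_k$, \ref{def:AA1} furnishes a $\sigma$-convergent sub-subsequence with some limit $v\in K$, and \ref{def:AA2} together with $\sigma$-lower semicontinuity of $d$ yields
\begin{equation*}
d(v,u(t_m)) \leq \liminf_{k\to\infty} d(u^{n_k}(t),u^{n_k}(t_m)) \leq \omega(t,t_m)
\end{equation*}
for every $t_m\in D$ approaching $t$. Letting $m\to\infty$ and using $t\notin C$ forces $d(v,u(t))=0$, i.e.\ $v=u(t)$, so the subsequence principle gives convergence of the whole sequence. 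For the remaining points $t\in C$, a further (countable) diagonal extraction within the already-chosen subsequence produces a $\sigma$-limit, which we take as definition of $u(t)$. Finally, $d$-continuity of $u$ on $[0,T]\setminus C$ follows directly from the estimate $d(u(s),u(t))\leq \omega(s,t)$ valid for all $s,t$ in the domain, combined with the assumption $\omega(s,t)\to 0$ as $(s,t)\to(r,r)$ for $r\notin C$.

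The main obstacle is the non-metrizability of $\sigma$: one cannot simply take $\sigma$-Cauchy sequences, so uniqueness of $\sigma$-limits along arbitrary sub-subsequences must be established by coupling the compactness in \ref{def:AA1} with the quantitative modulus \ref{def:AA2} through the $\sigma$-lower semicontinuity of $d$. Bookkeeping the exceptional set $C$ is the other delicate point, handled here cleanly by absorbing $C$ into the countable dense set $D$ from the outset.
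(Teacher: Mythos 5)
This proposition is not proved in the paper at all---it is quoted verbatim from \cite[Prop.~3.3.1]{ambrosio05}---and your argument is essentially the standard proof of that result: diagonal extraction on a countable dense set containing $C$, transfer of the modulus $\omega$ to the limit via the sequential $\sigma$-lower semicontinuity of $d$, a Cauchy/uniqueness argument at points off $C$, and the subsequence principle in the Hausdorff topology $\sigma$; it is correct. Two cosmetic remarks: the final extraction \enquote{for the remaining points $t\in C$} is redundant, since you already arranged $C\subset D$ in the first step, and Step~2 can be shortened by invoking completeness of $(\CMS,d)$ (a $d$-Cauchy sequence $d$-converges, hence $\sigma$-converges, since $\sigma$ is weaker than $d$) instead of re-extracting via $\sigma$-compactness and interleaving.
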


\section{Taylor's formula in Banach spaces}
\begin{theorem}\label{thm:Taylor}
Suppose $E$, $F$ are real Banach spaces, $U\subset E$ an open and nonempty subset, and $f\in C^n(U,F)$. Given $x_0\in U$ choose $r>0$ such that $x_0+B_r \subset U$, where $B_r$ is the open ball in $E$ with center $0$ and radius $r$. Then for all $h\in B_r$ we have, using the abbreviation $h^k=(h,...,h)$, $k$ terms,
\begin{align*}
    f(x_0 +h)=\sum_{k=0}^n \frac{1}{k!} f(k) (x_0)(h)^k +R_n(x_0,h),
\end{align*}
 where the remainder $R_n$ is of form
 \begin{align*}
     R_n(x_0,h)= \frac{1}{(n-1)!} \int_0^1 (1-t)^{n-1} [f^{(n)}(x_0+th)-f^{(n)}(x_0)] (h)^n dt.
 \end{align*}
\end{theorem}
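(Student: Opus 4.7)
The plan is to reduce the Banach-valued statement to the one-dimensional Taylor expansion with integral remainder by composing with the straight line segment from $x_0$ to $x_0+h$. Concretely, fix $h \in B_r$ and define $g: [0,1] \to F$ by $g(t) := f(x_0 + th)$. Since $x_0 + B_r \subset U$ and $f \in C^n(U,F)$, the map $g$ is well-defined and $n$ times continuously differentiable on an open neighborhood of $[0,1]$. The chain rule for Fréchet derivatives (applied inductively) gives
\begin{equation*}
g^{(k)}(t) \;=\; f^{(k)}(x_0 + th)\,(h)^k \qquad \text{for } 0 \le k \le n, \; t \in [0,1],
\end{equation*}
where $(h)^k = (h,\dots,h)$.

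Next, apply the one-dimensional Taylor formula with Bochner integral remainder to $g$: for each fixed $h$, the $F$-valued function $g$ satisfies
\begin{equation*}
g(1) \;=\; \sum_{k=0}^{n-1} \frac{1}{k!} g^{(k)}(0) \;+\; \frac{1}{(n-1)!} \int_0^1 (1-t)^{n-1} g^{(n)}(t)\, dt.
\end{equation*}
This identity follows by repeated integration by parts starting from the fundamental theorem of calculus for Bochner-integrable functions, and is valid in a Banach space $F$ because $g^{(n)}$ is continuous (hence Bochner-integrable) on $[0,1]$. Substituting the chain-rule formula for $g^{(k)}$ yields
\begin{equation*}
f(x_0+h) \;=\; \sum_{k=0}^{n-1} \frac{1}{k!} f^{(k)}(x_0)(h)^k \;+\; \frac{1}{(n-1)!} \int_0^1 (1-t)^{n-1} f^{(n)}(x_0+th)(h)^n\, dt.
\end{equation*}

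To obtain the stated form of the remainder, I would split the integral by adding and subtracting $f^{(n)}(x_0)(h)^n$ inside the integrand and use the elementary identity $\int_0^1 (1-t)^{n-1}\,dt = \frac{1}{n}$. This gives
\begin{equation*}
\frac{1}{(n-1)!}\int_0^1 (1-t)^{n-1} f^{(n)}(x_0+th)(h)^n\, dt \;=\; \frac{1}{n!} f^{(n)}(x_0)(h)^n \;+\; R_n(x_0,h),
\end{equation*}
which, after moving the $k=n$ term to the sum, delivers the claimed expansion.

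The main technical obstacle is the Banach-space integration by parts underlying the one-dimensional formula, which requires Bochner integrability and the absolute continuity of $t \mapsto g^{(k)}(t)$ on $[0,1]$; both are guaranteed by continuity of $g^{(n)}$ and the fundamental theorem of calculus for Bochner integrals of continuous maps $[0,1]\to F$. A secondary (routine) point is the inductive verification of the chain rule $g^{(k)}(t) = f^{(k)}(x_0 + th)(h)^k$, which follows by differentiating the multilinear expression $f^{(k-1)}(x_0+th)(h)^{k-1}$ in $t$ and using that the evaluation map on the space of bounded multilinear forms is linear in the argument being differentiated.
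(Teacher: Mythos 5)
Your proposal is correct and complete: the reduction to the one--variable function $g(t)=f(x_0+th)$, the one--dimensional Taylor formula with Bochner--integral remainder obtained by repeated integration by parts, and the final splitting of the integrand using $\int_0^1(1-t)^{n-1}\,dt=1/n$ to peel off the $k=n$ term exactly reproduce the stated remainder. Note that the paper does not prove this theorem itself but only cites \cite[Thm.~30.1.3]{blanchard2015mathematical}; your argument is the standard proof given in such references, so there is nothing to compare beyond observing that you have supplied the details the paper omits.
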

\begin{proof}
A proof for this statement can be found, e.g., in \cite[Thm. 30.1.3]{blanchard2015mathematical}.
\end{proof}

\section{Prokhorov's theorem}

\begin{theorem}[Prokhorov {\cite[Theorem 5.1-5.2]{billingsley2013convergence}}]\label{thm:prokh} If a set $\mathcal{K}\subset \Prob(\WBan)$ is tight, i.e.,
\begin{align}\label{eq: tight}
    \forall \epsilon>0\quad \exists K_\epsilon \text{ compact in } \WBan \text{ such that } \mu(\WBan\setminus K_\epsilon)\leq \epsilon \quad \forall \mu\in \mathcal{K},
\end{align}
then $\mathcal{K}$  is relatively compact in $\Prob(\WBan)$. Conversely, if $\WBan$ is a Polish space, every relatively compact subset of $\Prob(\WBan)$ is tight.
\end{theorem}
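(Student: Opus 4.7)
The statement splits into two implications, and I would treat them separately.

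For the direct implication (tightness $\Rightarrow$ relative compactness), I would argue by sequential compactness of the narrow topology. Given a sequence $(\mu_n)\subset\mathcal{K}$, the goal is to extract a narrowly convergent subsequence. By tightness, choose compacts $K_k\subset\WBan$ with $\mu_n(\WBan\setminus K_k)\leq 1/k$ for all $n,k$; replacing $K_k$ by $\bigcup_{j\leq k}K_j$ I may assume the sequence nested, so $K_\infty:=\bigcup_k K_k$ is $\sigma$-compact. Each $C(K_k)$ is separable, so fix a countable dense family and extend its members to bounded continuous functions on $\WBan$ via Tietze. A Cantor diagonal extraction yields a subsequence $(\mu_{n_l})$ along which $\int f\,d\mu_{n_l}$ converges for every $f$ in a countable family separating Radon measures on $K_\infty$. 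The pointwise limit defines a positive linear functional on $C_c(K_\infty)$; Riesz--Markov--Kakutani applied to each $K_k$ produces consistent Radon measures that glue to a Borel measure $\mu$ on $\WBan$. Tightness is then used twice: first, to show $\mu(\WBan)=1$ (no mass escapes to infinity), and second, to upgrade convergence on test functions to narrow convergence on all of $C_b(\WBan)$ via the standard $\varepsilon$-splitting: any $f\in C_b(\WBan)$ is approximated up to $\varepsilon\|f\|_\infty$ on $K_k$ (for $k$ large) and the remainder is uniformly small under $\mu_n$ and $\mu$.

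For the converse implication, assume $\WBan$ is Polish and $\mathcal{K}$ is relatively compact. Fix $\varepsilon>0$. By separability, for each $n\in\N$ cover $\WBan$ by countably many open balls $B_{n,i}$ of radius $1/n$, and set $U_{n,m}:=\bigcup_{i\leq m}B_{n,i}$, an increasing family of open sets with union $\WBan$. The key claim is that, for every $n$, there exists $m_n$ with $\inf_{\mu\in\mathcal{K}}\mu(U_{n,m_n})>1-\varepsilon/2^n$. If this failed, one could pick $\mu_m\in\mathcal{K}$ with $\mu_m(U_{n,m})\leq 1-\varepsilon/2^n$; extracting a narrowly convergent subsequence $\mu_{m_k}\to\mu^*\in\Prob(\WBan)$ and using $U_{n,M}\subset U_{n,m_k}$ for $m_k\geq M$ together with the Portmanteau lower-semicontinuity $\mu^*(U_{n,M})\leq\liminf_k\mu_{m_k}(U_{n,M})$ would yield $\mu^*(U_{n,M})\leq 1-\varepsilon/2^n$ for every $M$; letting $M\to\infty$ then contradicts $\mu^*(\WBan)=1$. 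Finally, set $K:=\bigcap_n\overline{U_{n,m_n}}$: this set is closed and, since for each $n$ it is covered by the finitely many closed balls $\overline{B_{n,i}}$, $i\leq m_n$, of radius $1/n$, it is totally bounded and hence compact by completeness of $\WBan$. The union bound gives $\mu(\WBan\setminus K)\leq\sum_n\varepsilon/2^n=\varepsilon$ uniformly over $\mathcal{K}$, proving tightness.

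\textbf{Main obstacle.} The delicate step is in the direct implication: showing that the limit functional extracted by the diagonal procedure is represented by an honest probability measure, rather than a finitely additive functional or a sub-probability. The mechanism that rules out both pathologies is precisely the uniform tightness assumption, which simultaneously controls the mass near infinity and supplies the equicontinuity needed to pass from compactly supported test functions to arbitrary $f\in C_b(\WBan)$. Keeping careful track of this bookkeeping (consistency of the measures constructed on each $K_k$, and the $\varepsilon$-splitting for $C_b(\WBan)$) is where most of the work lies; the converse, by contrast, is a comparatively short covering-plus-Portmanteau argument that relies only on separability, completeness, and sequential closure of $\overline{\mathcal{K}}$.
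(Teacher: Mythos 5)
The paper does not prove this statement at all: it is quoted verbatim from \textcite{billingsley2013convergence} as a classical prerequisite, so there is no in-paper argument to compare against. Your sketch is the standard textbook proof of Prokhorov's theorem and is correct in both directions: the diagonal extraction plus Riesz representation on the exhausting compacts $K_k$, with tightness used to prevent mass loss and to pass from compactly supported test functions to all of $C_b(\WBan)$, is essentially Billingsley's argument for sufficiency, and the covering/Portmanteau contradiction is the usual proof of necessity on Polish spaces. The only place requiring real care, which you correctly flag, is that the functionals on $C(K_k)$ obtained via Tietze extensions are only determined up to an $O(1/k)$ ambiguity coming from the mass outside $K_k$, so the consistency and gluing of the resulting Radon measures must be handled with that error budget in mind; this is bookkeeping rather than a gap.
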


\section{Helpful lemmas and supplementary proofs}\label{app:help}
\rev{%
In the following, we provide the proof of \cref{lm: EqiSlope}, which is a particular case of \cite[Lem. Lemma 3.1.5]{ambrosio05}. For completeness, we provide a version of the proof that is specifically adapted to the case $p=\infty$.

\begin{proof}[Proof of \cref{lm: EqiSlope}]
Let us suppose that for all $\tau>0$, $\func_\tau(\x)< \func(\x)$ else $|\partial\func|(\x)=0$ and equality \labelcref{eq:AltSlop} holds trivially.
We calculate
\begin{align*}
    \limsup_{\tau \rightarrow 0^+} \frac{\func(\x)-\func_\tau (\x)}{\tau} &= \limsup_{\tau \rightarrow 0^+}\sup_{\xx:0<d(\x,\xx)\leq \tau} \frac{\func(\x)-\func(\xx)}{\tau}\\
    &= \inf_{\epsilon>0} \sup_{0<\tau\leq \epsilon} \sup_{\xx:0<d(\x,\xx)\leq \tau} \frac{\func(\x)-\func(\xx)}{\tau}\\
    &= \inf_{\epsilon>0} \sup_{\xx,\tau:0<d(\x,\xx)\leq \tau\leq \epsilon} \frac{\func(\x)-\func(\xx)}{\tau}\\
    &=\inf_{\epsilon>0} \sup_{\xx:0<d(\x,\xx)\leq \epsilon} \sup_{\tau:d(\x,\xx)\leq \tau \leq \epsilon} \frac{\func(\x)-\func(\xx)}{\tau}\\
    &\rev{=\inf_{\epsilon>0} \sup_{\xx:0<d(\x,\xx)\leq \epsilon} \frac{(\mathcal{E}(x)-\mathcal{E}(z))^+}{d(x,z)}-\frac{(\mathcal{E}(x)-\mathcal{E}(z))^-}{\epsilon}}\\
    &\stackrel{(*)}{=}\inf_{\epsilon>0} \sup_{\xx:0<d(\x,\xx) \leq \epsilon} \frac{\func(\x)-\func(\xx)}{d(\x,\xx)}\\
    &=\limsup_{\xx \rightarrow \x} \frac{\func(\x)-\func(\xx)}{d(\x,\xx)}\\
    &= |\partial \func|(\x).
\end{align*}
\rev{Equality  $(*)$ can be verified by the observation that $\func_\tau(\x)< \func(\x)$ for all $\tau>0$ ensures the existence of at least one $\xx$ with $d(\x,\xx)\leq \epsilon$ such that $\func(\x)-\func(\xx)\geq 0$.}
\end{proof}%
}
\begin{lemma}\label{lm: help1}
Let $\phi:[0,T]\rightarrow \R$ be continuous and $\nimap:[0,T]\rightarrow \R$ be non-increasing. If $\phi(t)=\nimap(t)$ for a.e. $t\in [0,T]$, then $\phi(t)=\nimap(t)$ for all $t\in (0,T)$.
\end{lemma}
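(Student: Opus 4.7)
The plan is to argue that the non-increasing $\psi$ must actually coincide with the continuous $\phi$ at every point in $(0,T)$ by sandwiching $\psi(t)$ between left and right limits of $\phi$ taken along the full-measure set of agreement.

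First I would fix an arbitrary $t \in (0,T)$. Since $\phi = \psi$ holds on a set $A \subset [0,T]$ of full Lebesgue measure, the complement $[0,T] \setminus A$ is a null set and in particular cannot contain any interval. Therefore $A$ is dense in $[0,T]$, and I can choose sequences $\{s_n\}_{n\in\N}, \{u_n\}_{n\in\N} \subset A$ with $s_n < t < u_n$, $s_n \uparrow t$, and $u_n \downarrow t$. Such sequences exist because $t \in (0,T)$ lies in the interior.

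Next I would exploit the two regularity properties separately. Continuity of $\phi$ at $t$ gives $\phi(s_n) \to \phi(t)$ and $\phi(u_n) \to \phi(t)$ as $n \to \infty$. Since $s_n, u_n \in A$, we can replace $\phi$ by $\psi$ at these points: $\psi(s_n) = \phi(s_n) \to \phi(t)$ and $\psi(u_n) = \phi(u_n) \to \phi(t)$. Now the monotonicity of $\psi$ enters: because $\psi$ is non-increasing and $s_n < t < u_n$, we have the sandwich
\begin{align*}
\psi(u_n) \leq \psi(t) \leq \psi(s_n) \quad \text{for every } n \in \N.
\end{align*}
Passing to the limit $n \to \infty$ on both sides yields $\phi(t) \leq \psi(t) \leq \phi(t)$, i.e., $\psi(t) = \phi(t)$.

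Since $t \in (0,T)$ was arbitrary, the claim follows. There is no real obstacle here: the only subtlety is to guarantee that sequences in $A$ approaching $t$ from both sides exist, which follows from the density of any full-measure subset of an interval. The argument needs no additional hypotheses beyond those in the statement, and it implicitly shows that the non-increasing $\psi$ is automatically continuous on $(0,T)$ under these assumptions.
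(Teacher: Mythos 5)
Your proof is correct and uses essentially the same ingredients as the paper's: density of the full-measure agreement set, continuity of $\phi$, and monotonicity of $\psi$ applied to sequences approaching $t$ from both sides. The only difference is cosmetic — you give a direct two-sided sandwich, while the paper argues by contradiction using a one-sided sequence chosen according to the sign of $\phi(t)-\psi(t)$.
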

\begin{proof}
Assume there is a $t\in(0,T)$ such that $\phi(t)\not=\nimap(t)$. Without loss of generality $\phi(t)>\nimap(t)$. Then we can take a sequence $t_n$ with $t_n\rightarrow t$ and $\phi(t_n)=\nimap(t_n)$ and $t_n>t$. The continuity of $\phi$ implies that for any $\epsilon>0$ we can choose $t_n$ small enough, such that $|\phi(t)-\phi(t_n)|<\epsilon$. This contradicts the monotonicity of $\nimap$, since if we choose $\epsilon < \phi(t)-\nimap(t)$, we obtain a $t_n>t$ with $\nimap(t)<\nimap(t_n)=\phi(t_n)$. 
In the case $\phi(t)<\nimap(t)$, we can make the same argument with sequences $t_n<t$.
\end{proof}
\rev{%
\rev{In the following we show that the arguments of \cite[Thm. 7]{Lisini07} can indeed be adapted to the case $p=\infty$. We closely follow the arguments in \cite[Thm. 7]{Lisini07}, where it was proven for $p\in(1,\infty)$. For convenience, we copy the relevant steps and show how to adapt them to the case $p=\infty$. 
\begin{proof}[Proof of \cref{thm: ECp}]
Let $\mathcal{L}^1_{(0,T)}$ denote the Lebesgue measure on $(0,T)$, then for $\eta$ from \cref{thm: curveDe}, we define $\Bar{\eta}\coloneqq \frac{1}{T} \mathcal{L}^1_{(0,T)}\otimes \eta$ and the evaluation map $e: [0,T]\times C(0,T;\WBan) \rightarrow [0,T]\times \WBan$ by 
$$ e(t,u)=(t,e_t(u))=(t,u(t)).$$ 
We observe that $ e_{\#}\eta=\Bar{\mu}$ and denote by $\Bar{\eta}_{x,t}$ the Borel family of probability measures on $C(0,T;\WBan)$ obtained by disintegration of $\Bar{\eta}$ with respect to $e$, such that $d\Bar{\eta}_{x,t}(u) d \Bar{\mu}(t,x) =d\bar{\eta}(t,u) $. Notably $\Bar{\eta}_{x,t}$ is concentrated on $\{u: e_t(u)=x\} \subset C(0,T;\WBan)$. 
Since $\WBan$ is assumed to satisfy the Radon--Nikodým property the pointwise derivative $u'(t)=\lim_{h\rightarrow 0}\frac{u(t+h)-u(t)}{h}$ is defined a.e. for an absolutely continuous curve $u$.
We now show that 
$$ A\coloneqq \{(t,u)\in [0,T]\times C(0,T;\WBan): u'(t) \text{ exists}\}$$
is a Borel set and $\Bar{\eta}(A^c)=0$.
For every $h\not=0$, we define the continuous function $g_h: [0,T]\times C(0,T;\WBan) \rightarrow \WBan$ by $g_h(t,u)=\frac{u(t+h)-u(t)}{h}$, where we extend the function $u$ outside of $[0,T]$ by $u(s)=u(0)$ for $s<0$ and $u(s)=u(T)$ for $s>T$.
By completeness of $\WBan$ 
$$ A^c \coloneqq \{ (t,u):\limsup_{(h,k)\rightarrow (0,0)}\| g_h(t,u)-g_k(t,u)\|>0 \} $$
and because of the continuity of the function $(t,u) \mapsto \| g_h(t,u)-g_k(t,u)\|$, $A^c$ and $A$ are Borel sets. Since $\Bar{\eta}$ is concentrated on $[0,T]\times \AC^\infty (0,T; \WBan)$ and $u'(t)$ exists a.e. for an absolutely continuous curve $u$, by Fubini's theorem $\Bar{\eta}(A^c)=0$.
Thus, for $\bar{\eta}$-a.e. $(t,u)$ the map 
$$\psi(t,u)=u'(t)$$
is well-defined. For every $x^*\in \WBan^*$, we define $\psi_{x^*}(t,h)\coloneqq \langle x^*, u'(t)\rangle$ on $(t,u)\in A$. As a limit of continuous functions $\psi_{x^*}$ is a Borel function on $A$ and thus $\Bar{\eta}$ measurable. Since $\WBan$ is separable, Pettis theorem ensures that $\psi$ is a $\Bar{\eta}$-measurable function. Now we can define the vector field 
$$ \velo_t(x)\coloneqq \int_{C(0,T;\WBan)} u'(t) d\Bar{\eta}_{x,t} \quad \text{ for 
 } \Bar{\mu}-a.e. (t,x)\in (0,T)\times \WBan.$$
\rev{For clarity, we now indicate the varibles over which the $\esssup$ is taken in brackets after the respective measure. Using this notation we estimate}
\begin{align*}
\Bar{\mu}-\esssup  \| \bm{v}\| &= \Bar{\mu}(x,t)-\esssup  \left\| \int_{C(0,T;\WBan)} u'(t) d\Bar{\eta}_{x,t}\rev{(u)}\right\|\\
&\leq
\Bar{\mu}(x,t)-\esssup   \int_{C(0,T;\WBan)} \left\|u'(t)\right\| d\Bar{\eta}_{x,t}\rev{(u)}\\
&\leq
\rev{
\Bar{\mu}(x,t)-\esssup\,  \big(\Bar{\eta}_{x,t}\rev{(u)}-\esssup \left\|u'(t)\right\|\big) 
}\\
&\leq \Bar{\eta}(u,t)-\esssup \left\|u'(t)\right\| <+\infty,
\end{align*}
and thus $\bm{v}\in L^\infty (\Bar{\mu};\WBan)$, where the last inequality follows from \cref{lem:disintesssup}. By Jensen's inequality we have for every $[a,b]\subset [0,T]$,
\begin{align}\label{eq: JensEst}
\begin{aligned}
\int_a^b \|\velo_t\|_{\rev{L^\infty}(\mu_t;\WBan)}dt&=\int_a^b \mu_t(\x)-\esssup \|\velo_t(x)\| dt\\&=\int_a^b \mu_t(\x)-\esssup \left\|\int_{C(0,T;\WBan)} u'(t) d\Bar{\eta}_{x,t}\right\| dt\\
&\leq\int_a^b \mu_t(\x)-\esssup \int_{C(0,T;\WBan)} \left\|u'(t)\right\| d\Bar{\eta}_{x,t} dt\\
&\leq \int_a^b \mu_t(\x)-\esssup\,  \Bar{\eta}_{x,t}(u)-\esssup\left\|u'(t)\right\|  dt
\\
&=\int_a^b \eta(u)-\esssup \| u'(t)\| dt=\int_a^b |\mu'|(t)dt.
\end{aligned}
\end{align}
such that $\|\velo_t\|_{L^p(\mu_t;\WBan)}\leq |\mu|'(t)$ for a.e.  $t\in(0,T)$. In the last inequality we used the fact, that $d\eta =  d\bar{\eta}_{x,t} d\mu_t$ holds for a.e. $t\in(0,T)$ together with \cref{lem:disintesssup}.
For more rigorous justifications regarding measurably and  integrability of all involved quantities, we refer to \cite[Theorem 7 ]{Lisini07}.
To show that $(\mu,\bm{v})\in \EC^\infty(\WBan)$  we take $\varphi\in C^1_b(\WBan)$ and observe that  $t\rightarrow \int_{\WBan} \varphi(x) d\mu_t(x)$ is absolutely continuous, since for $\gamma\in\Gamma_0(\mu_t,\mu_s)$
\begin{align*}
  \left|\int_\WBan \varphi d\mu_t -\int_\WBan \varphi d\mu_s\right|\leq \int_{\WBan\times\WBan} |\varphi(x)-\varphi(\Tilde{x})|d\gamma\leq\\
  \sup_{x\in\WBan} \|D\varphi(x)\|  \int_{\WBan\times\WBan} \|x-\Tilde{x}\|d\gamma\leq \sup_{x\in\WBan} \|D\varphi(x)\| W_\infty(\mu_t,\mu_s).  
\end{align*}
Further
\begin{align*}
\int_\WBan \varphi d\mu_t&-\int_\WBan \varphi  d\mu_s=\int_{C(0,T;\WBan)} \varphi(u(t))-\varphi(u(s))d\eta(u)\\&=
\int_{C(0,T;\WBan)} \langle D\varphi(u(s)),u(t)-u(s)\rangle d\eta(u)+\int_{C(0,T;\WBan)} \|u(t)-u(s)\| \omega_{u(s)}(u(t)) d\eta(u)\\&=
\int_{C(0,T;\WBan)} \langle D\varphi(u(s)),\int_s^t u'(r)dr\rangle d\eta(u)+\int_{C(0,T;\WBan)} \|u(t)-u(s)\| \omega_{u(s)}(u(t)) d\eta(u)
\end{align*}
where 
$$ \omega_x(y)=\frac{\varphi(y)-\varphi(x)-\langle D\varphi(u(x)),y-x\rangle}{\|y-x\|}.$$
We observe
$$\frac{1}{t-s}\langle D\varphi(u(s)),\int_s^t u'(r)dr\rangle \rightarrow \langle D\varphi(u(s)), u'(s)\rangle \quad \text{for } \eta \text{-a.e. } u$$ 
and 
$$
\frac{\|u(t)-u(s)\|}{t-s} \omega_{u(s)}(u(t)) \rightarrow 0 \quad \text{for } \eta \text{-a.e. } u
$$
and have for $\eta$-a.e. $u$ the upper bounds
\begin{align*}
\frac{1}{|t-s|}|\langle D\varphi(u(s)),\int_s^t u'(r)dr\rangle| &\leq \sup_{x\in \WBan} \|D\varphi(x)\|_*  \frac{\left\|\int_s^t u'(r)dr\right\|}{|s-t|}\\&\leq  \sup_{x\in \WBan} \|D\varphi(x)\|_* \esssup_{r\in[0,T]} |\mu'|(r)<+\infty  
\end{align*}
and
\begin{align*}
\frac{\|u(t)-u(s)\|}{|t-s|} |\omega_{u(s)}(u(t))|&\leq \esssup_{r\in[0,T]}|\mu'|(r) \left(\frac{|\varphi(u(t))-\varphi(u(s))|}{\|u(t)-u(s)\|}+\frac{|\langle D  \varphi(u(s)),u(t)-u(s)\rangle|}{\|u(t)-u(s)\|}\right)\\&\leq \esssup_{r\in[0,T]}|\mu'|(r)\,  2\Lip(\varphi)<+\infty.
\end{align*}
Dividing by $t-s$ and passing to the limit $t\rightarrow s$  by using Lebesgue theorem, we obtain
\begin{align*}
\frac{d}{ds} \int_\WBan \varphi d\mu_s=\int_{C(0,T;\WBan)} \langle D \varphi(u(s)),u'(s)\rangle d\eta(u)=\int \langle D \varphi, \velo_t\rangle d\mu_t \quad \text{for a.e. } s\in(0,T).
\end{align*}
This pointwise derivative corresponds to the distributional derivative and we obtain $(\mu,\bm{v})\in \EC^\infty(\WBan)$. 
\end{proof}
}%
Similarly, we can adapt \cite[Thm. 8]{Lisini07} to the case $p=\infty$, which we again show by reusing most of the arguments from the corresponding proof in \cite{Lisini07}.
\begin{proof}[Proof of \cref{thm: ECRp}]
This theorem was proven in \cite[Theorem 8]{Lisini07} for $p\in (1,+\infty)$ and can easily be extended to the case $p=+\infty$. Let $(\mu_t)_{t\in[0,T]}$ be 
a family of measures in $\Prob_\infty(\WBan)$ and for each $t$ we have a velocity field $\velo_t\in L^\infty(\mu_t;\mathbb{R}^d)$ with $ \esssup \|\velo_t\|_{L^\infty(\mu_t)}  <\infty$ , solving the continuity equation in the sense of distributions. Since 
\begin{align*}
\|\velo\|_{L^p(\Bar{\mu};\WBan)} \leq T^{1/p} \esssup \|\velo_t\|_{L^\infty(\mu_t)}  <\infty
\end{align*}
we can apply \cite[Theorem 8]{Lisini07} (i.e., the statement of \cref{thm: ECRp}) for all $p\in (1,\infty)$ and get
\begin{align*}
|\mu'|_{(p)}(t)\leq\|\velo_t\|_{L^p(\mu_t;\WBan)}   \text{ for  a.e.} \ t\in (0,T) \text{ and all } p\in(1,\infty).
\end{align*}
Therefore 
\begin{align*}
W_p(\mu_t, \mu_s)\leq \int_t^s |\mu'|_{(p)}(\Tilde{t}) d\Tilde{t}\leq \int_t^s \|\velo_t\|_{L^p(\mu_{\Tilde{t}};\WBan)} d\Tilde{t}\leq \int_t^s \|\velo_t\|_{L^\infty(\mu_{\Tilde{t}};\WBan)} d\Tilde{t}
\end{align*}
for all $t,s\in [0,T]$ with $t\leq s$ and $p\in(1,\infty)$, where $|\mu'|_{(p)}$ denotes the metric derivative of $\mu$ in $W_p$. Taking the limit $p\to\infty$, we get
\begin{align*}
W_\infty(\mu_t, \mu_s)=\lim_{p\rightarrow \infty}W_p(\mu_t, \mu_s)\leq \int_t^s \|\velo_t\|_{L^\infty(\mu_{\Tilde{t}};\WBan)} d\Tilde{t}
\end{align*}
 for all $t,s\in [0,T]$ with $t\leq s$ and thus by the minimality of the metric derivative, see \cref{rem:minmetric}, 
\begin{align*}
    |\mu'|_{(\infty)}(t)\leq \|\velo_t\|_{L^\infty(\mu_t;\WBan)} \text{ for a.e. }t \in (0,T).
\end{align*}
\end{proof}
}%
\begin{lemma} \label{lm: MeasHelp}
    Let $\mu$ be a Borel probability measure on $\WBan$ and $v: \WBan \rightarrow \WBan$, $\tilde{v}:\WBan\rightarrow \WBan$ be two $\mu$-measurable functions with
    \begin{align*}
        \int \langle D\varphi(x),v(x)\rangle d\mu(x)=\int \langle D\varphi(x),\tilde{v}(x)\rangle d\mu(x) \quad \forall \varphi\in C^1_b(\WBan)
    \end{align*}
    then
    \begin{align}\label{eq: DMeas}
        \int \langle \xi, v(x)\rangle d\mu(x) =\int \langle \xi, \tilde{v}(x)\rangle d\mu(x) \quad \forall \xi\in \WBan^*.
    \end{align}
\end{lemma}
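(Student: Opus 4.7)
The plan is to reduce the identity to the hypothesis by constructing test functions in $C_b^1(\WBan)$ whose Fréchet derivatives approximate the constant vector $\xi$. The trick is to avoid building smooth cutoffs on the whole Banach space: instead, I would compose a one-dimensional smooth truncation with the continuous linear functional $x \mapsto \langle \xi, x\rangle$.

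Concretely, fix $\xi\in\WBan^*$, and for each $R>0$ choose a smooth $\eta_R \in C^1(\R)$ with $\eta_R(t)=t$ for $|t|\le R$, $|\eta_R(t)|\le R+1$, and $0\le \eta_R'\le 1$ with $\eta_R'(t)=1$ for $|t|\le R$ (such functions are trivial to construct by smoothing a piecewise-linear truncation). Define
\begin{align*}
\varphi_R(x) := \eta_R(\langle \xi, x\rangle).
\end{align*}
Since $x\mapsto \langle \xi, x\rangle$ is linear and continuous on $\WBan$, the chain rule yields $\varphi_R \in C^1(\WBan)$ with
\begin{align*}
D\varphi_R(x) = \eta_R'(\langle \xi, x\rangle)\, \xi,
\end{align*}
and both $\varphi_R$ and $D\varphi_R$ are bounded, so $\varphi_R \in C_b^1(\WBan)$ is an admissible test function.

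Plugging $\varphi_R$ into the hypothesis gives
\begin{align*}
\int \eta_R'(\langle \xi, x\rangle)\, \langle \xi, v(x)\rangle\, d\mu(x)
= \int \eta_R'(\langle \xi, x\rangle)\, \langle \xi, \tilde v(x)\rangle\, d\mu(x)
\end{align*}
for every $R>0$. Now I would pass to the limit $R\to\infty$: pointwise $\eta_R'(\langle \xi, x\rangle) \to 1$ and $|\eta_R'(\langle \xi, x\rangle)\langle \xi, v(x)\rangle| \le \|\xi\|_* \|v(x)\|$ (analogously for $\tilde v$), so under the implicit integrability assumption $v,\tilde v \in L^1(\mu;\WBan)$ — which holds in all invocations of this lemma in the paper, since the relevant measures have bounded support and the velocity fields lie in $L^p(\bar\mu;\WBan)$ — the dominated convergence theorem yields \labelcref{eq: DMeas}.

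The only delicate point is the integrability/domination step, which is why one cannot simply test against $x\mapsto \langle \xi, x\rangle$ directly (this map fails to be bounded and hence is not in $C_b^1$). The truncation $\eta_R$ elegantly sidesteps the need for genuine smooth cutoffs on $\WBan$ — a nontrivial matter in general Banach spaces — by transferring the truncation to the scalar argument, and the resulting dominated convergence is routine.
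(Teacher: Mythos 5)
Your proof is correct and is essentially identical to the paper's: the paper likewise composes a $C^1$ truncation of the identity on $\R$ (its $g_n$, with $g_n'=1$ on $[-n,n]$ and tapering to $0$ on $[n,n+1]$) with the linear functional $x\mapsto\langle\xi,x\rangle$ to obtain admissible test functions in $C^1_b(\WBan)$, and then passes to the limit by dominated convergence with $|\langle\xi,v(x)\rangle|$ and $|\langle\xi,\tilde v(x)\rangle|$ as dominating functions. Your remark about the implicit integrability of $v,\tilde v$ is the same (unstated) assumption the paper makes, so there is no substantive difference between the two arguments.
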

\begin{proof}
    Let $g_n:\mathbb{R}\rightarrow \mathbb{R}$ be the function with $g(0)=0$ and
    \begin{align*}
        g'(x)=\begin{cases}
            0 & \text{for } |x|>n+1,\\
            1 & \text{for } |x|<n ,\\
            n+1-x & \text{for } x\in[n,n+1],\\
            n+1+x& \text{for } x \in [-(n+1),-n].
        \end{cases}
    \end{align*}
Then for each $\xi\in \WBan^*$ we get $G_n:x\mapsto g_n(\langle \xi, x\rangle)\in C^1_b(\WBan)$ with $DG_n(x)=g_n'(\langle \xi,x\rangle)\  \xi$ and
\begin{align*}
\int g_n'(\langle \xi,x\rangle)\langle \xi,v(x)\rangle d\mu
&=\int \langle D G_n(x),v(x)\rangle d\mu= \int \langle D G_n(x),\tilde{v}(x)\rangle d\mu\\&=\int g'_n(\langle \xi,x\rangle)\langle \xi,\tilde{v}(x)\rangle d\mu
\end{align*}
Since for $n\rightarrow \infty$ we have $g'(\langle \xi,x\rangle)\rightarrow 1$ pointwise we can apply the Lebesgue dominated convergence theorem (with the functions $|\langle \xi, v(x)\rangle|$ and $|\langle \xi, \tilde{v}(x)\rangle|$ as bound) to obtain \eqref{eq: DMeas}.
\end{proof}

The following lemma, shows that the disintegration property can be transferred to an inequality for essential suprema. For more details on disintegration, we refer to \cite[Ch. 5.3]{ambrosio05} and \cite[Ch. III-70]{dellacherie1978probabilities}. The proof strategy is taken from \cite[Lem. 2]{roith2023continuum} and amounts to controlling the null sets of the measures involved.
\begin{lemma}\label{lem:disintesssup}
Given $\Inp, \Inpp$ Radon separable metric spaces, a measure $\mu\in\mathcal{P}(\Inp)$, a Borel map $\pi:\Inp\to\Inpp$ and a disintegration $d\mu = d\mu_\xx d\nu$, with $\nu=\pi_\#\mu$ and $\{\mu_\xx\}_{\xx\in\Inpp}\subset \mathcal{P}(\Inp)$ being a family of probability measures, then we have that
\begin{align*}
\mu(\x)-\esssup f(x) \geq \nu(\xx)-\esssup\, \mu_\xx(\x)-\esssup f(\x)
\end{align*}
for every Borel map $f:\Inp\to[0,\infty]$.
\end{lemma}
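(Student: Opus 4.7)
The plan is to show that any value exceeding the left-hand side is a $\mu_\xx$-essential upper bound for $f$ for $\nu$-almost every $\xx$, which then forces the right-hand side to be bounded by that value.

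Set $M := \mu(\x)-\esssup f(\x)$. If $M = +\infty$, the claimed inequality holds trivially, so assume $M < \infty$. By definition of the essential supremum, the Borel set
\[
A := \{\x \in \Inp \st f(\x) > M\}
\]
has $\mu(A)=0$. Invoking the disintegration formula applied to the nonnegative Borel function $\mathbf{1}_A$, I obtain
\[
0 = \mu(A) = \int_\Inpp \mu_\xx(A)\, d\nu(\xx).
\]
Since $\xx \mapsto \mu_\xx(A) \in [0,1]$ is nonnegative and $\nu$-measurable, this forces $\mu_\xx(A) = 0$ for $\nu$-a.e.\ $\xx \in \Inpp$. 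Equivalently, for $\nu$-a.e.\ $\xx$, the inequality $f(\x) \leq M$ holds for $\mu_\xx$-a.e.\ $\x$, so $\mu_\xx(\x)-\esssup f(\x) \leq M$. Taking the $\nu$-essential supremum over $\xx$ then yields
\[
\nu(\xx)-\esssup\, \mu_\xx(\x)-\esssup f(\x) \leq M = \mu(\x)-\esssup f(\x),
\]
which is exactly the claimed inequality.

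The only ingredient beyond elementary measure theory is the integration formula $\mu(A) = \int \mu_\xx(A)\, d\nu(\xx)$ furnished by the disintegration (applicable here since $\Inp, \Inpp$ are Radon separable metric spaces and $\nu = \pi_\# \mu$), together with the measurability of $\xx \mapsto \mu_\xx(A)$, both of which are standard; see \cite[Ch. 5.3]{ambrosio05} or \cite[Ch. III-70]{dellacherie1978probabilities}. There is no genuine obstacle: once the $\mu$-null set where $f$ exceeds $M$ is identified, the disintegration immediately propagates this nullity to the fibers, and the essential-supremum comparison follows.
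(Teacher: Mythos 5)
Your proof is correct and follows essentially the same route as the paper's: both arguments rest on the single fact that a $\mu$-null set has $\mu_\xx$-measure zero for $\nu$-a.e.\ $\xx$, which you obtain from the integral formula $\mu(A)=\int\mu_\xx(A)\,d\nu$ and the paper states directly as the disintegration property. The only cosmetic difference is that you apply this to the specific null set $\{f>M\}$ (using that the essential supremum is itself an essential upper bound), whereas the paper runs the comparison for an arbitrary $\mu$-null set $A$ and then takes the infimum over all such $A$; both are valid and of comparable length.
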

\begin{proof}
Using the disintegration property, for every Borel set $A$ we obtain 
\begin{align*}
\mu(A) = 0 \quad\Leftrightarrow\quad \mu_\xx(A) \text{ for } \nu-\text{a.e. }  \xx\in\Inpp.
\end{align*}
Now assume that $\mu(A) = 0$, then we know that there exists a Borel set $B\subset\Inpp$ with $\nu(B)=0$ and $\mu_\xx(A) =0$ for all $\xx\in \Inpp\setminus B$. 
Therefore,
\begin{align*}
\sup_{x\in \Inp\setminus A} f(\x) &\geq \inf_{\tilde{A}:\mu_\xx(\tilde{A})=0} \sup_{x\in \Inp\setminus \tilde{A}}  f(\x) = 
\mu_\xx(\x)-\esssup f(\x)
\qquad\text{ for all } \xx\in\Inpp\setminus B\\
\Rightarrow
\sup_{x\in \Inp\setminus A} f(\x) &\geq 
\sup_{\xx\in\Inpp\setminus B}\, 
\mu_\xx(\x)-\esssup f(\x)\\
&\geq 
\inf_{\tilde{B}:\nu(\tilde{B})=0}\sup_{\xx\in\Inpp\setminus \tilde{B}}
\mu_\xx(\x)-\esssup f(\x)\\ 
&= 
\nu(\xx)-\esssup\, 
\mu_\xx(\x)-\esssup f(\x)
\end{align*}
and since this holds for every $\mu$-null set $A$, we can take the infimum to obtain
\begin{align*}
\mu(\x)-\esssup f(\x) = \inf_{A:\mu(A)=0}\sup_{x\in \Inp\setminus A} f(\x) \geq \nu(\xx)-\esssup\, 
\mu_\xx(\x)-\esssup f(\x).
\end{align*}
\end{proof}

\begin{lemma}\label{lem:timeesssup}
Let $\eta\in\mathcal{P}(C(0,T;\Inp))$, then we have that 
\begin{align*}
\eta(u)-\esssup \abs{u'}(t) \leq 1\quad &\text{for a.e. }t\in(0,T)\Longleftrightarrow
\esssup_{t\in(0,T)}\, \abs{u'}(t)\leq 1\quad &\text{for } \eta \text{ a.e. } u \in C(0,T;\Inp).
\end{align*}
\end{lemma}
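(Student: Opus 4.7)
The plan is to reduce both conditions to the same joint statement on the product space $(0,T)\times C(0,T;\Inp)$ equipped with the product measure $\bar\eta := \mathcal{L}^1_{(0,T)}\otimes\eta$, and then invoke Tonelli's theorem. Specifically, I would introduce the set
\begin{equation*}
E := \{(t,u)\in(0,T)\times C(0,T;\Inp)\st |u'|(t)\text{ exists and }|u'|(t)>1\},
\end{equation*}
and show that each of the two conditions in the lemma is equivalent to $\bar\eta(E)=0$.

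The first technical step is joint measurability of $(t,u)\mapsto |u'|(t)$. This is handled exactly as in the proof of \cref{thm: ECp}: for each $h\neq 0$ the difference quotient $(t,u)\mapsto \|u(t+h)-u(t)\|/|h|$ is continuous on $(0,T)\times C(0,T;\Inp)$, the set $A$ on which $|u'|(t)$ exists as a limit of these difference quotients is Borel, and on $A$ the metric derivative is Borel measurable. Hence $E$ is a Borel subset of $(0,T)\times C(0,T;\Inp)$.

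Next I use the elementary reformulation that for any (sigma-finite) measure $\nu$ and nonnegative measurable $f$, the condition $\nu\text{-}\esssup f\leq 1$ is equivalent to $\nu(\{f>1\})=0$. Applying this to the $t$-slice $E_t=\{u:|u'|(t)>1\}$, the left-hand condition $\eta\text{-}\esssup|u'|(t)\leq 1$ for a.e.\ $t$ says $\eta(E_t)=0$ for a.e.\ $t\in(0,T)$, hence by Tonelli
\begin{equation*}
\bar\eta(E)=\int_0^T\eta(E_t)\,dt=0.
\end{equation*}
Conversely, applying Tonelli to the $u$-slice $E^u=\{t:|u'|(t)>1\}$, the right-hand condition $\esssup_{t\in(0,T)}|u'|(t)\leq 1$ for $\eta$-a.e.\ $u$ says $\mathcal{L}^1(E^u)=0$ for $\eta$-a.e.\ $u$, which again gives $\bar\eta(E)=0$. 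Running both Tonelli identities in the reverse direction from $\bar\eta(E)=0$ recovers each of the one-sided conditions (noting that a nonnegative integral vanishes iff the integrand vanishes a.e.).

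The only real subtlety is the measurability step, which is why I would reuse the argument already employed in the proof of \cref{thm: ECp} rather than try to prove measurability of $|u'|$ abstractly; once that is in place the equivalence is a direct application of Tonelli combined with the standard $\esssup$/null-set dictionary, and no additional structure on $\Inp$ or on $\eta$ is needed.
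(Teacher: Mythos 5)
Your proposal is correct and follows essentially the same route as the paper: the paper integrates $\chara_{[0,1]}(|u'|(t))$ over the product space and swaps the order of integration by Fubini--Tonelli, which is exactly your reduction of both sides to $\bar\eta(E)=0$ for the super-level set $E=\{(t,u)\st |u'|(t)>1\}$. The only cosmetic difference is the justification of joint measurability (the paper cites \cite[Eq. 55]{lisini2014absolutely}, while you reuse the difference-quotient argument from the proof of \cref{thm: ECp}), which is the same underlying fact.
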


\begin{proof}
Choosing $\psi=\chara_{[0,1]}$, and observing that $\psi(|u'|(t))$ is $\bar{\eta}$-measurable (see \cite[Eq. 55]{lisini2014absolutely}) implies

\begin{align*}
\eta(u)-\esssup \abs{u'}(t) \leq 1\quad &\text{for a.e. }t\in(0,T)
\\&\Longleftrightarrow
\int_{C(0,T,\WBan)} \psi(\abs{u'}(t)) d\eta(u) = 0\quad \text{for a.e. }t\in(0,T)\\
\Longleftrightarrow
\int_0^T \int_{C(0,T,\WBan)} \psi(\abs{u'}(t)) d\eta(u)dt = 0
&\Longleftrightarrow
\int_{C(0,T,\WBan)} \int_0^T  \psi(\abs{u'}(t)) d\eta(u)dt = 0
\\ \Longleftrightarrow
\int_{C(0,T,\WBan)} \int_0^T  \psi(\abs{u'}(t))dt d\eta(u) = 0
&\Longleftrightarrow\int_0^T  \psi(\abs{u'}(t))dt = 0\quad \text{for }\eta \text{ a.e. } u \in C(0,T;\Inp)\\
\Longleftrightarrow
\esssup_{t\in(0,T)}\, \abs{u'}(t)\leq 1\quad &\text{for } \eta \text{ a.e. } u \in C(0,T;\Inp),
\end{align*}
where we use Fubini--Tonelli theorem to change the order of integration.
\end{proof}

\section{Multivalued correspondences}
\label{sec: corr}
For multivalued correspondences, generalizations of continuity and  measurability can be defined. We use the definitions from \cite{guide2006infinite}. In the following, we write $\varphi:\Inp  \mto \Inpp$ to denote a mapping $\varphi:\Inp\to2^\Inpp$.

\begin{definition}[Weak measurability]\label{def:weakmeas}
Let $(S,\Sigma)$ be a measurable space and $\Inp$ be a topological space. We say that a correspondence $\varphi: S \mto \Inp$ is weakly measurable, if 
$$ \varphi^l(G)\in \Sigma \text{ for all open sets } G \text{ of }\Inp,$$
where
\rev{%
\begin{equation}\label{eq:lowinv}
\varphi^l(G)\coloneqq \{s\in S| \varphi(s) \cap G\neq \emptyset\} 
\end{equation}}
is the so-called lower inverse.
\end{definition}

\begin{definition}[Measurability] 
Let $(S,\Sigma)$ be a measurable space and $\Inp$ a topological space. We say that a correspondence $\varphi: S \mto \Inp$ is measurable, if 
$$ \varphi^l(F)\in \Sigma \text{ for all closed sets } F \text{ of }\Inp.$$
\end{definition}

The next theorem is known as the measurable maximum theorem, where we refer to \cite[Thm. 18.19]{guide2006infinite} for the proof of this statement.
\begin{theorem}[Measurable Maximum Theorem]\label{thm:MeaMax}
Let $\WBan$ be a separable metrizable space
and $(S,\Sigma)$ a measurable space. Let $\varphi: S \mto \WBan$ be a weakly measurable correspondence with nonempty compact values, and suppose $f:S\times \WBan\rightarrow \mathbb{R}$ is a
Carathéodory function. Define the value function $m:S\rightarrow \mathbb{R}$ by
$$m(s)=\max_{x\in \varphi(s)}f(s,x),$$
and the correspondence $\mu: S\mto \WBan$ of maximizers by
$$\mu(s)=\{x\in \varphi(s): f(s,x)=m(s)\}.$$
Then
\begin{itemize}
    \item The value function $m$ is measurable.
    \item The \enquote{argmax} correspondence $\mu$ has nonempty and compact values.
    \item The \enquote{argmax} correspondence $\mu$ is measurable and admits a measurable selector.
\end{itemize} 
\end{theorem}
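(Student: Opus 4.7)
The plan is to follow the standard measurable-selection machinery, which hinges on Castaing's representation theorem for the value function and on projecting through the graph for the argmax correspondence. Throughout, separability and metrizability of $\WBan$ give us a countable dense subset to work with, and weak measurability of $\varphi$ with nonempty compact values is exactly the hypothesis needed to invoke Castaing.

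\textbf{Step 1 (Measurability of $m$).} First I would apply Castaing's representation: since $\varphi$ is weakly measurable with nonempty closed values in the separable metrizable space $\WBan$, there exist countably many $\Sigma$-measurable selectors $\sigma_n\colon S\to \WBan$ with $\sigma_n(s)\in\varphi(s)$ and $\varphi(s)=\overline{\{\sigma_n(s)\}_{n\in\mathbb{N}}}$ for every $s\in S$. Compactness of $\varphi(s)$ together with continuity of $f(s,\cdot)$ implies that the maximum is attained, and density of $\{\sigma_n(s)\}_n$ in $\varphi(s)$ gives
\[
m(s)=\max_{x\in\varphi(s)} f(s,x)=\sup_{n\in\mathbb{N}} f(s,\sigma_n(s)).
\]
Each map $s\mapsto f(s,\sigma_n(s))$ is measurable because $f$ is Carathéodory (measurable in $s$, continuous in $x$) and $\sigma_n$ is measurable, so the countable supremum $m$ is measurable.

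\textbf{Step 2 (Nonempty compact values of $\mu$).} This is purely topological. For each $s$, the map $f(s,\cdot)$ is continuous on the nonempty compact set $\varphi(s)$, so by the classical extreme value theorem $\mu(s)\neq\emptyset$. Moreover
\[
\mu(s)=\varphi(s)\cap \{x\in\WBan\st f(s,x)=m(s)\},
\]
which is the intersection of a compact set with the preimage of a closed singleton under a continuous function, hence closed in $\varphi(s)$ and therefore compact.

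\textbf{Step 3 (Measurability of $\mu$ and selection).} This is where the real work lies. I would argue by considering the graph
\[
\mathrm{Gr}(\mu)=\mathrm{Gr}(\varphi)\cap\{(s,x)\in S\times\WBan\st f(s,x)-m(s)=0\}.
\]
The function $(s,x)\mapsto f(s,x)-m(s)$ is jointly $\Sigma\otimes\mathcal{B}(\WBan)$-measurable by the Carathéodory property of $f$ combined with Step~1. For the graph of $\varphi$, weak measurability together with compact values in a separable metric space is known to give $\mathrm{Gr}(\varphi)\in\Sigma\otimes\mathcal{B}(\WBan)$ (using the Castaing representation from Step~1: $\mathrm{Gr}(\varphi)=\bigcap_k \bigcup_n \{(s,x)\st d(x,\sigma_n(s))<1/k\}$). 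Hence $\mathrm{Gr}(\mu)$ is product-measurable. To conclude weak measurability of $\mu$, I would use the alternative Castaing-type characterization: restrict the $\sigma_n$ to those indices where they hit $\mu(s)$, or equivalently build new measurable selectors directly by applying the Kuratowski--Ryll-Nardzewski selection theorem to $\mu$ (it has nonempty closed values in the Polish-like space $\WBan$, and its graph is measurable, which via standard arguments implies the weak measurability required). The existence of a single measurable selector is then an immediate corollary, and a countable family of such selectors realizes $\mu$ as a Castaing representation, giving measurability.

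\textbf{Main obstacle.} Step~3 is the crux. The subtle point is upgrading joint measurability of the graph to (weak) measurability of the correspondence and producing the measurable selector; this usually rests on projection theorems for analytic sets or the Kuratowski--Ryll-Nardzewski theorem, and the precise hypotheses (Polishness versus mere separable-metrizability, completeness of $\Sigma$, etc.) must be handled carefully. In the application within the paper, $\WBan$ is reflexive and separable, so Polishness of $(\WBan,\|\cdot\|_\omega)$ holds and the selection theorem applies cleanly.
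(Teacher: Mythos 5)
The paper does not prove this statement at all: it is quoted verbatim from \textcite[Theorem 18.19]{guide2006infinite} and used as a black box (in \cref{lem:LocMin} and \cref{lm: DMin}), so there is no in-paper proof to compare against; I am therefore judging your argument against the standard proof in that reference. Your Steps 1 and 2 are correct and essentially the textbook argument. One small caveat in Step 1: the Castaing representation for a weakly measurable correspondence with merely \emph{closed} values requires the codomain to be Polish; here it is only separable metrizable, and it is the \emph{compactness} of the values that rescues the representation (embed $\WBan$ into the Hilbert cube, note that $\varphi(s)$ remains closed there, and apply Kuratowski--Ryll-Nardzewski in the cube). You should say this, since you invoke Castaing under the ``closed values in a separable metrizable space'' hypothesis, which as stated is not a correct form of the theorem.

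The genuine gap is in Step 3. Your route is: show $\mathrm{Gr}(\mu)\in\Sigma\otimes\mathcal{B}(\WBan)$ and then assert that graph measurability ``via standard arguments implies the weak measurability required'' so that a selection theorem applies. Over an arbitrary measurable space $(S,\Sigma)$ this implication is false: passing from a measurable graph to weak measurability of the correspondence requires a projection theorem, which needs $\Sigma$ to be complete with respect to some $\sigma$-finite measure (or an analytic-sets argument), and the theorem as stated makes no such assumption. The standard proof avoids the graph entirely: one sets
\begin{align*}
\varphi_n(s)\coloneqq \cl{\set{x\in\varphi(s)\st f(s,x)>m(s)-1/n}},
\end{align*}
verifies weak measurability of each $\varphi_n$ \emph{directly} by computing the lower inverse of an open set $G$ through the Castaing selectors from Step 1, namely $\varphi_n^l(G)=\bigcup_k\set{s\st \sigma_k(s)\in G,\ f(s,\sigma_k(s))>m(s)-1/n}\in\Sigma$, and then uses that $\mu(s)=\bigcap_n\varphi_n(s)$ is a decreasing intersection of measurable correspondences with nonempty compact values, which is again measurable with nonempty compact values; the measurable selector then comes from the compact-valued selection theorem. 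Replacing your graph-plus-projection step by this approximation argument closes the gap without any completeness hypothesis on $\Sigma$.
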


\section{Calculations for distributional adversaries }\label{sc:ApDiAd}
For completeness, we state all lemmas used in \cref{ch: DisAd} here. Those lemmas correspond to a lemma proven in section \cref{sc: CurvPot} and are only adapted to the setting of the transport distance $D$.

\begin{lemma}\label{lm: DMin}
Let $\Inp\times \Oup = \R^d\times\R^m$, then the correspondence 
\begin{align*}
\bm{r}_\budget(x,y)=\argmin_{(\xatt, \tilde{y}):c(x,y,\xatt,\Tilde{y})\leq \budget} E(\Tilde{x},\Tilde{y})=\left(\argmin_{\xatt\in\cl{B_\budget}(\x)}E(\Tilde{x},y),y\right)
\end{align*}
is measurable and admits a $\mathcal{B}(\WBan\times\Oup)$-measurable selector.
Further, for each measurable selector $r_\budget:\Inp\times\Oup\to \Inp$ we have the following, 
\begin{align}\label{eq:PushMax}
     (r_\budget)_\#(\mu)\in \argmin_{D(\mu,\Tilde{\mu})\leq \budget} \int E(x,y) d\mu(x,y).
\end{align}
\end{lemma}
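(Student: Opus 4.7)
\textbf{Proof proposal for \cref{lm: DMin}.}

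The plan is to follow the pattern of \cref{lem:LocMin} and \cref{thm:GloLo}, adapted to the extended cost $c$. First, for the measurability claim, I observe that because $c(x,y,\xatt,\tilde y) = +\infty$ whenever $y\neq \tilde y$, the admissibility constraint $c(x,y,\xatt,\tilde y)\leq \budget$ collapses to $\tilde y = y$ and $\xatt\in \cl{B_\budget}(x)$. Hence the correspondence factors as $\bm r_\budget(x,y) = \argmin_{\xatt \in \cl{B_\budget}(x)}E(\xatt,y) \times \{y\}$. I would introduce the auxiliary correspondence $\psi_\budget:\Inp\times\Oup \mto \Inp\times\Oup$, $\psi_\budget(x,y) := \cl{B_\budget}(x)\times\{y\}$, and verify weak measurability exactly as in \cref{lm:Cores}: the lower inverse of an open product set $G\times H$ is the open set $\{(x,y)\st \cl{B_\budget}(x)\cap G\neq\emptyset,\ y\in H\}$, which lies in $\mathcal{B}(\Inp\times\Oup)$. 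Applying the measurable maximum theorem \cref{thm:MeaMax} with $f((x,y),(\xatt,\tilde y)) := -E(\xatt,\tilde y)$ (which is Carathéodory because $E$ is continuous on its domain by \cref{assm: pota}), I obtain that $\bm r_\budget$ has nonempty compact values, is measurable, and admits a Borel measurable selector $r_\budget$.

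Second, for the variational identity, let $\tilde\mu$ with $D(\mu,\tilde\mu)\leq \budget$ be arbitrary and pick an optimal transport plan $\gamma\in\Gamma(\mu,\tilde\mu)$ with $\gamma\text{-}\esssup c \leq \budget$; existence of such a plan is standard by the direct method, and since the set $\{c\leq \budget\}$ is closed and $\gamma$ is a Borel probability, we have $\supp \gamma \subset \{c \leq \budget\}$. By \cref{rem:marginal}, the $\Oup$-marginals agree, so disintegration along the first $\Inp\times\Oup$-coordinate gives $d\gamma(x,y,\xatt,\tilde y) = d\rho_{(x,y)}(\xatt,\tilde y)\, d\mu(x,y)$ where each $\rho_{(x,y)}$ is concentrated on $\{c(x,y,\cdot,\cdot)\leq \budget\} = \cl{B_\budget}(x)\times\{y\}$. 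Then
\[
\int E\, d\tilde\mu = \int\!\!\int E(\xatt,\tilde y)\, d\rho_{(x,y)}(\xatt,\tilde y)\, d\mu(x,y) \geq \int E(r_\budget(x,y),y)\, d\mu(x,y) = \int E\, d(r_\budget)_\# \mu,
\]
using the defining property of $r_\budget$ as a selector of the argmin. Since $D(\mu,(r_\budget)_\#\mu)\leq \budget$ via the coupling $(\mathrm{Id},r_\budget,\pi^y)_\#\mu$, the push-forward is admissible, and this proves \labelcref{eq:PushMax}.

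The main obstacle I anticipate is the careful bookkeeping around the extended cost $c$: the non-symmetry $c\in[0,\infty]$ and the product structure need to be handled simultaneously when verifying weak measurability of $\psi_\budget$ and when disintegrating. In particular, one must be sure that the disintegration measures $\rho_{(x,y)}$ actually sit inside the right fibre $\cl{B_\budget}(x)\times\{y\}$ (rather than just $\gamma$-a.e.), which is precisely the content of the first identity in \cref{rem:marginal}. The remainder of the argument is then a direct transcription of the unconstrained Wasserstein case treated in \cref{thm:GloLo}.
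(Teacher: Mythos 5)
Your proposal is correct and follows essentially the same route as the paper's proof: the same product correspondence $(x,y)\mapsto \cl{B_\budget}(x)\times\{y\}$ checked for weak measurability as in \cref{lm:Cores}, the same application of \cref{thm:MeaMax} with $f=-E$, and the same disintegration of an optimal plan $\gamma\in\Gamma_0(\mu,\tilde\mu)$ to compare $\int E\,d\tilde\mu$ with $\int E\circ r_\budget\,d\mu$. Your explicit verification that $(r_\budget)_\#\mu$ is itself admissible via the coupling $(\mathrm{Id},r_\budget,\pi^y)_\#\mu$ is a small point the paper leaves implicit, and is a welcome addition.
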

\begin{proof}
We consider the correspondence $\varphi:\Inp\times\Oup \mto \Inp \times \Oup$ given by
\begin{align*}
(x,y)\mapsto \left(\cl{B_\budget}(x), y\right)
\end{align*}
where on the input space we use the topology induced by $\|\cdot\|_{\WBan}+\|\cdot\|_{\Oup}$ and the output space is interpreted as the standard Euclidean space. Then we have that for every open set $G\in\WBan$ that
\begin{align*}
\varphi^l(G)=\{ (x,y)\in \WBan\times \Oup \st \left(\cl{B_\budget}(x),y\right)\cap G\neq \emptyset\} \subset \mathcal{B}(\WBan\times \Oup),
\end{align*}
is open, which implies weak measurability, according to \cref{def:weakmeas}. Furthermore, we define the map $f((\x,y), (\Tilde{x},\Tilde{y})):= -\pot(\Tilde{x}, \Tilde{y})$
which is a Carathéodory function, since $\pot$ is continuous, with 
\begin{align*}
\max_{(\Tilde{x},\Tilde{y}) \in \varphi(x,y)} f((\x,y),(\Tilde{x},\Tilde{y})) =
\min_{(\Tilde{x},\Tilde{y}):c(x,y, \xatt,\Tilde{y})\leq \epsilon} -\pot(\Tilde{x},\Tilde{y})
\end{align*}
Then \cref{thm:MeaMax} ensures the existence of a measurable selector.
To prove \cref{eq:PushMax} we observe that if $D(\mu,\tilde{\mu})\leq \budget$, then for an optimal transport plan $\gamma\in\Gamma_0(\mu,\Tilde{\mu})$, we know that $y=\Tilde{y}$ and $\|\x-\xatt\|\leq \budget$, $\gamma$-a.e. Thus, using the disintegration $d\gamma(x,y, \xatt,\tilde{y}) = d\psi_{x,y}(\xatt,\tilde{y}) d\mu(x,y)$, for every $\tilde{\mu}$ with $D(\Tilde{\mu},\mu)\leq \budget $, we calculate
\begin{align*}
\int E(\Tilde{x},\Tilde{y}) d\Tilde{\mu}(\Tilde{x},\Tilde{y})=\int E(\Tilde{x},\Tilde{y}) d\gamma(x,y,\Tilde{x},\Tilde{y})=\int \int E(\Tilde{x},\Tilde{y}) d\psi_{x,y}(\Tilde{x},\Tilde{y}) d\mu(x,y)\\
=\int \int E(\Tilde{x},y) d\psi_{x,y}(\Tilde{x},\Tilde{y}) d\mu(x,y)\geq \int \int E(r_\budget(x,y)) d\psi_{x,y}(\Tilde{x},\Tilde{y}) d\mu(x,y)=\int E(r_\budget(x,y)) d\mu(x,y),
\end{align*}
and \cref{eq:PushMax} follows.
\end{proof}
\begin{lemma} \label{lm: GSlop} 
Let $\Inp\times \Oup = \R^d\times\R^m$, $E: \Inp\times \Oup \rightarrow \R$ be in $C^1(\Inp\times \Oup )$ and $\mu\in \mathcal{P}_\infty(\Inp\times \Oup)$ , then the metric slope with respect to $D$ is given by
\begin{align*}
|\partial \ffunc|(\mu) = \int \| \nabla_x E(x,y) \|_{\WBan^*} d\mu.
\end{align*}
\rev{and $|\partial\ffunc|$ is a strong upper gardient.}
\end{lemma}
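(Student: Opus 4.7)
The plan is to mimic the proof of \cref{lm:PotSlope}, adapted to the restricted transport distance $D$. Since $E\in C^1$ we may regard $E$ as a $C^1$-perturbation of a (trivially) convex function, so \cref{lm: EqiSlope} and \cref{lm:Lim} apply on the metric space $(\Prob_\infty(\WBan\times \Oup), D)$, yielding
\[
|\partial \ffunc|(\mu) = \lim_{\tau\to 0^+} \frac{\ffunc(\mu)-\ffunc_\tau(\mu)}{\tau},
\qquad \ffunc_\tau(\mu) \coloneqq \min_{\muatt : D(\muatt,\mu)\leq \tau}\ffunc(\muatt).
\]
By \cref{lm: DMin} this minimum is attained at $(r_\tau)_\#\mu$, where $r_\tau(x,y) = (\bar r_\tau(x,y), y)$ is a $\mathcal{B}(\WBan\times\Oup)$-measurable selector of the correspondence $\bm{r}_\tau$ that leaves the $\Oup$-component untouched. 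Hence
\[
\frac{\ffunc(\mu)-\ffunc_\tau(\mu)}{\tau} = \int \frac{E(x,y) - E(\bar r_\tau(x,y),y)}{\tau}\,d\mu(x,y).
\]

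Next I would establish the pointwise limit: for every $(x,y)\in\supp(\mu)$,
\[
\lim_{\tau\to 0^+}\frac{E(x,y) - \min_{\xatt\in \cl{B_\tau}(x)} E(\xatt,y)}{\tau} \;=\; \|\nabla_x E(x,y)\|_{\WBan^*}.
\]
This is the single-variable analogue of \cref{lm:Lim} applied in the $\WBan$-slice $\xatt\mapsto E(\xatt,y)$; it follows directly from the $C^1$-Taylor expansion $E(\xatt,y) - E(x,y) = \langle \nabla_x E(x,y), \xatt-x\rangle + o(\|\xatt-x\|)$, so that maximizing $-\langle \nabla_x E(x,y),\xatt-x\rangle$ over $\xatt\in \cl{B_\tau}(x)$ yields $\tau\,\|\nabla_x E(x,y)\|_{\WBan^*} + o(\tau)$.

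To pass the limit under the integral sign, I would use that $\mu\in\Prob_\infty(\WBan\times\Oup)$ has bounded support, so $\bar r_\tau(x,y)$ stays in a fixed bounded set $K$ for $\tau\le \tau_0$; on the compact closure of $K$ the continuous map $\nabla_x E$ is bounded, which gives a uniform Lipschitz estimate
\[
\frac{|E(x,y)-E(\bar r_\tau(x,y),y)|}{\tau} \;\leq\; \sup_{(z,y)\in \cl{K}} \|\nabla_x E(z,y)\|_{\WBan^*} \;<\;\infty
\]
for $\mu$-a.e.\ $(x,y)$ and all $\tau\le\tau_0$. Dominated convergence then yields
\[
|\partial \ffunc|(\mu) = \int \|\nabla_x E(x,y)\|_{\WBan^*}\,d\mu(x,y),
\]
which is the claim.

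The only mildly delicate step is the domination bound: one must observe that the $D$-constraint forces $r_\tau$ to move only in $x$, so the Lipschitz control of $\nabla_x E$ on a bounded $x$-neighborhood of $\supp(\mu)$ (and not on a neighborhood in $\Oup$) is enough. Everything else is a direct transcription of \cref{lm:PotSlope}, with the sub\-differential of the convex part dropped since $E^{\mathrm c}\equiv 0$, so that $|\partial E|(x,y)$ in the $x$-slice reduces to $\|\nabla_x E(x,y)\|_{\WBan^*}$ by \cref{pro:C1:iii}.
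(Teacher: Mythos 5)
Your proposal is correct and follows essentially the same route as the paper's proof: characterize the slope via the difference quotient of \cref{lm: EqiSlope}, use the measurable selector from \cref{lm: DMin} to write $\ffunc_\tau(\mu)$ as an integral of $E\circ r_\tau$, identify the pointwise limit through \cref{lm:Lim} and \cref{pro:C1:iii}, and pass to the limit by dominated convergence. Your explicit domination bound via the bounded support of $\mu$ and continuity of $\nabla_x E$ is a welcome detail the paper leaves implicit; the only slight imprecision is phrasing \cref{lm:Lim} as applying on $(\Prob_\infty(\WBan\times\Oup),D)$ itself rather than slice-wise to $E(\cdot,y)$, which is how you actually use it.
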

\begin{proof}
We follow the arguments of \cref{lm:PotSlope}.
    By \cref{lm: DMin} we obtain
    \begin{align*}
    |\partial \ffunc|(\mu)=\limsup_{\tau \rightarrow 0} \frac{\ffunc(\mu)-\ffunc_\tau(\mu)}{\tau}=\limsup_{\tau \rightarrow 0} \int \frac{E(x,y)-E(r_\tau(x,y))}{\tau} d\mu \\
    =\int \lim_{\tau \rightarrow 0} \frac{E(x,y)-E(r_\tau(x,y))}{\tau} d\mu=\int \|\nabla_x E(x,y)\|_{\WBan^*} d\mu,
    \end{align*}
    where dominated convergence together with \cref{lm:Lim} and \cref{pro:C1:iii} was used to draw the limit into the integral.
\rev{
    To prove that $|\partial \ffunc|$ is a strong upper gradient we observe that $\| \nabla_x E(x,y) \|_{\WBan^*}$ is continuous and in particular lower semicontinuous such that we can use \cite[Lemma 5.1.7.]{ambrosio05} to prove that the map $t\mapsto|\partial \ffunc|(\mu_t)$ is lower semicontinuous
    and thus Borel for every absolutely continuous curve $\mu_t$.  
Assume that $\int_s^t \int_{\Inp\times\Oup} \| \nabla_x E(x,y) \|_{\WBan^*} d\mu_r(\x,y) |\mu'|(r) dr
= \int_s^t \abs{\partial\ffunc}(\mu_r) |\mu'|(r)dr<+\infty$, otherwise \cref{eq: StUpGrad} holds trivially. 
By \cref{thm: curveDe} we can estimate
    \begin{align*}
        |\ffunc(\mu_t)-\ffunc(\mu_s)|&=\left|\int_{\Inp\times\Oup} \pot(x,y)d\mu_t(x,y)-\int_{\Inp\times\Oup} \pot(x,y)d\mu_s(x,y)\right|\\
    &=\left|\int_{C(0,T;\Inp\times\Oup)} \pot(u(t))d\eta(u)-\int_{C(0,T;\WBan)} \pot(u(s))d \eta(u)\right|\\
        &\leq \int_{C(0,T;\Inp\times\Oup)}\left| \pot(u(t))- \pot(u(s))\right|d \eta(u)\\
        &\leq \int_{C(0,T;\Inp\times\Oup)} \int_s^t \| \nabla_x E(u(r)) \|_{\WBan^*} \, |u'|(r)dr d \eta(u)\\
        &\leq \int_s^t\int_{\Inp\times \Oup}   \| \nabla_x E(x,y) \|_{\WBan^*} d\mu_r(x,y) |\mu'|(r)dr<+\infty.
    \end{align*}
    Here we use that $\eta$ is concentrated on $AC^\infty(0,T;\Inp\times\Oup)$ and by the definition of the the extended distance $c(x,\tilde{x},y,\tilde{y})$ on $\Inp\times\Oup$ a curve $u(t)\in AC^\infty(0,T;\Inp\times\Oup)$ only moves in $\Inp$-direction and for those curves  $\| \nabla_x E(u(r)) \|_{\WBan^*}$ acts like a strong upper gradient. 
}
\end{proof}

\section{Details on numerical examples}\label{sec:num}

Here, we give some details on the experiment that produces \cref{fig:IFGSM}, the source code is provided at \href{https://github.com/TimRoith/AdversarialFlows}{github.com/TimRoith/AdversarialFlows}.

\subsection{Training the neural network}
We sample $K=1000$ labeled data points $((\x_1,y_1),\ldots, (\x_K,y_K))$, with $\x_k\in\R^2, y_k\in\{0,1\}$, from the two moons data set using the \texttt{sci-kit} package \cite{scikit-learn}, see \cref{fig:twomoon}.
\begin{figure}
\centering
\begin{subfigure}[t]{.32\textwidth}
\includegraphics[width=\textwidth]{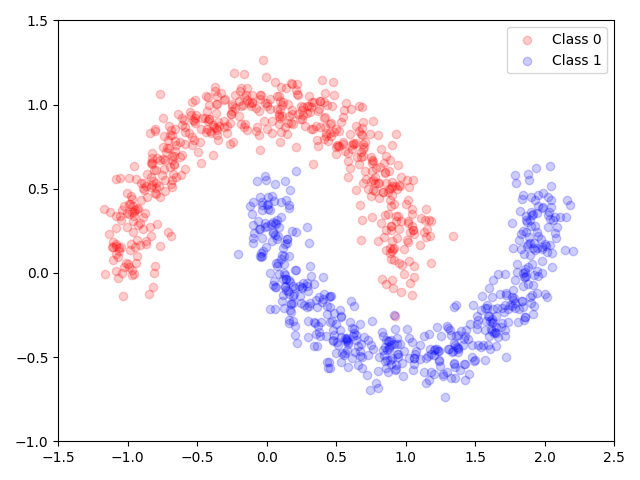}
\captionsetup{width=.9\linewidth}
\caption{The dataset used to train a binary classifier.}\label{fig:twomoon}  
\end{subfigure}
\begin{subfigure}[t]{.32\textwidth}
\captionsetup{width=.9\linewidth}
\includegraphics[width=\textwidth]{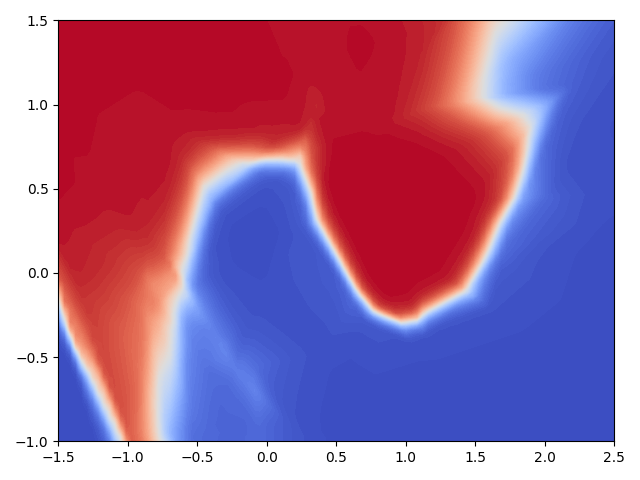}
\caption{The trained classifier, with ReLU activation, evaluated on the domain $[-1.5, 2.5]\times [-1., 1.5]$.}\label{fig:netvis}  
\end{subfigure}
\begin{subfigure}[t]{.32\textwidth}
\captionsetup{width=.9\linewidth}
\includegraphics[width=\textwidth]{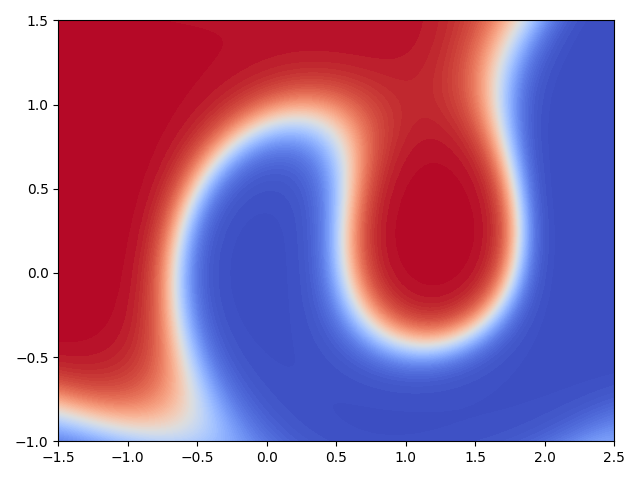}
\caption{The trained classifier, with GeLU activation, evaluated on the domain $[-1.5, 2.5]\times [-1., 1.5]$.}\label{fig:netvis_gelu}  
\end{subfigure}
\end{figure}
Using \texttt{PyTorch} \cite{paszke2017automatic}, we then train a neural network using the architecture displayed in \cref{fig:net} as proposed in \cite{Howard2022}, to obtain a mapping $h_\theta:\R^2\to [0,1]$, parametrized by $\theta$. Here \enquote{Linear $d^l\to d^{l+1}$} in the $l$th layer, denotes an \textit{affine} linear mapping \cite{rosenblatt1958perceptron} given by
\begin{align*}
z\mapsto W z + b, \qquad \text{with learnable parameters} \quad W\in\R^{d^{l+1}\times d^{l}}, b \in\R^{d^{l+1}}
\end{align*}
the activation functions \enquote{ReLU} \cite{fukushima1980neocognitron}, \enquote{GeLU} \cite{hendrycks2023gaussian} and \enquote{Sigmoid} are defined entry-wise for $i=1,\ldots, n$, as
\begin{align*}
\text{ReLU}(z_i) := \max\{0, z_i\},\qquad
\text{GeLU}(z_i) := z_i\cdot \Phi(z_i),\qquad
\text{Sigmoid}(z_i) := \frac{1}{1 + \exp(-z_i)},
\end{align*}
where $\Phi$ denotes the cumulative distribution function of the standard normal distribution. Here, we included both ReLU and GeLU (as a smooth approximation) to have an activation function, typically used in practice and a differentiable approximation fitting into the framework of \cref{ch: LinTi}. 
During training, we process batches of inputs $\vec{z} = (z^1,\ldots, z^B)$, with $z_i\in\R^{d^l}$, where \enquote{Batch Norm ($B$)}, as proposed in \cite{ioffe2015batch}, uses the entry-wise mean $\mu(\vec{z})_i := \frac{1}{B}\sum_{b=1}^B z_i^b$ and variance $\sigma(\vec{z})_i := \frac{1}{B}\sum_{b=1}^B (z_i^b - \mu(\vec z))^2$ and is defined as
\begin{align*}
z_i^b \mapsto \frac{z_i^b - \mu(\vec z)_i}{\sqrt{\sigma(\vec{z})_i^2 + \epsilon}} \cdot \gamma_i + \beta_k, \qquad\text{with learnable parameters}\quad \gamma, \beta \in\R^{d^l},
\end{align*}
where $\epsilon=10^{-5}$ is a small constant, added for numerical stability. During inference, the mean and variance are replaced by an estimate over the whole dataset, such that the output does not depend on the batch it is given.
In total, $\theta$ denotes the collection of weights $W$, biases $b$ and batch norm parameters $\gamma, \beta$ of all layers. For training, we consider the loss function
\begin{align}\label{eq:dataloss}
\mathcal{L}(\theta)= \frac{1}{2K} \sum_{k=1}^K 
\abs{h_\theta(\x_k) - y_k}^2,
\end{align}
where we employ the Adam optimizer \cite{kingma2017adam}, with standard learning rates, to approximate a minimizer. In each step, we employ a batched version of the function in \cref{eq:dataloss}, i.e., instead of using all data points at once, in each so-called \textit{epoch}, we randomly sample disjoint subsets of $\{1,\ldots, K\}$, of size $B=100$ and only sum over these points. We run this training process for a total of $100$ epochs, to obtain a set of parameters $\theta^*$, with a train loss of approximately $\mathcal{L}(\theta^*)\approx 0.002$ for ReLU and $\mathcal{L}(\theta^*)\approx 0.009$ for GeLU. The trained mappings $h_\theta$ are visualized in \cref{fig:netvis,fig:netvis_gelu}.

\begin{figure}
\centering%
\begin{tikzpicture}[block/.style={draw,minimum height=2.5em,rotate=90,minimum width=9em},>=stealth]
\node[block,fill=blue!20](L){Linear $2\to20$};
\node[block,fill=red!20] at ($(L)+(0:2.5em)$) (R){ReLU/GeLU};
\node[block,fill=green!20]at ($(R)+(0:2.5em)$) (B) {Batch Norm ($20$)};
\node[block,fill=blue!20]at ($(B)+(0:5em)$) (L){Linear $20\to20$};
\draw[->,thick] (B) -- (L);
\node[block,fill=red!20] at ($(L)+(0:2.5em)$) (R){ReLU/GeLU};
\node[block,fill=green!20]at ($(R)+(0:2.5em)$) (B) {Batch Norm ($20$)};
\node[block,fill=blue!20]at ($(B)+(0:5em)$) (L){Linear $20\to20$};
\draw[->,thick] (B) -- (L);
\node[block,fill=red!20] at ($(L)+(0:2.5em)$) (R){ReLU/GeLU};
\node[block,fill=green!20]at ($(R)+(0:2.5em)$) (B){Batch Norm ($20$)};
\node[block,fill=blue!20]at ($(B)+(0:5em)$) (L){Linear $20\to 1$};
\draw[->,thick] (B) -- (L);
\node[block,fill=red!20] at ($(L)+(0:2.5em)$) (R){Sigmoid};
\end{tikzpicture}
\caption{The network architecture used in the examples.}\label{fig:net}
\end{figure}
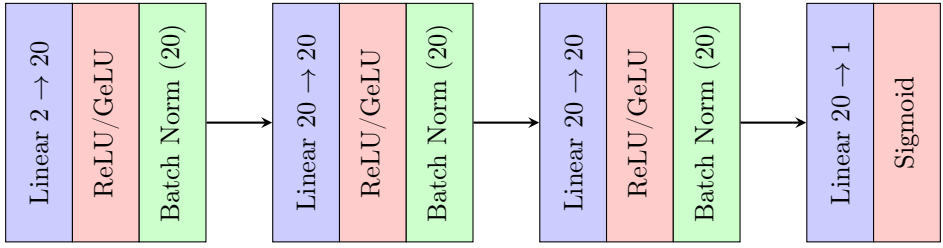

\subsection{Computing IGFSM and the minimizing movement scheme}

We now detail the iteration as displayed in \cref{fig:IFGSM}, first for ReLU. Here, we choose the initial value $\x^0 = (0.1, 0.55)$, as it is close to the decision boundary, with $h_{\theta^*}(\x^0)\approx 0.97$, an adversarial budget of $\budget=0.2$ and the energy
\begin{align*}
\funccp(\x):= \abs{h_{\theta^*}(\x) - 1}^2  + \chara_{B_\budget^\infty(\x^0)}.
\end{align*}
\ref{eq: IFGSM} is an explicit iteration and can therefore be implemented directly, where the gradient is computed with the automatic differentiation tools of \texttt{PyTorch}. For the minimizing movement scheme \ref{eq: MinMov}, we need to solve the problem
\begin{align*}
\x_\tau^{k+1}\in \argmin_{\x\in \cl{B_\tau^\infty}(\x_\tau^k)\cap \cl{B_\budget^\infty}(\x^0)} \funccp(\x) ,
\end{align*}
in each step. In order to avoid local minima, we do not employ a gradient based method here, but rather a particle based method, which allows exploring the full rectangle $\cl{B_\tau^\infty}(\x_\tau^k)$. We use consensus based optimization (CBO) as proposed in \cite{pinnau2017}, using the \texttt{CBXPy} package \cite{bailo2024cbx}. Concerning the hyperparameters, we choose $N=30$ particles, a noise scaling of $\sigma=2$, with standard isotropic noise, a time discretization parameter $dt=0.01$, $\alpha=10^8$ and perform $30$ update steps in each inner iteration. In order to ensure the budget constraint and the local restriction given by the step size $\tau$, we project the ensemble of the CBO iteration to the set
\begin{align*}
\cl{B^\infty_\budget}(\x^0)\cap  \cl{B^\infty_\tau}(\x^k_\tau)
\end{align*}
using the $\ell^\infty$ projection, i.e., a clipping operation. \rev{We refer to \cite{bungert2025mirrorcbo} for a more detailed numerical study considering projections in CBO schemes, which also suggests the validity of our method here.} We repeat the experiment for GeLU with a different initial value $\x^0=(0.45,0.3), \hyp_{\theta^*}(\x^0)\approx 0.74$ and budget $\budget=0.25$, which is displayed in \cref{fig:IFGSMgelu}.

\begin{figure}[t]
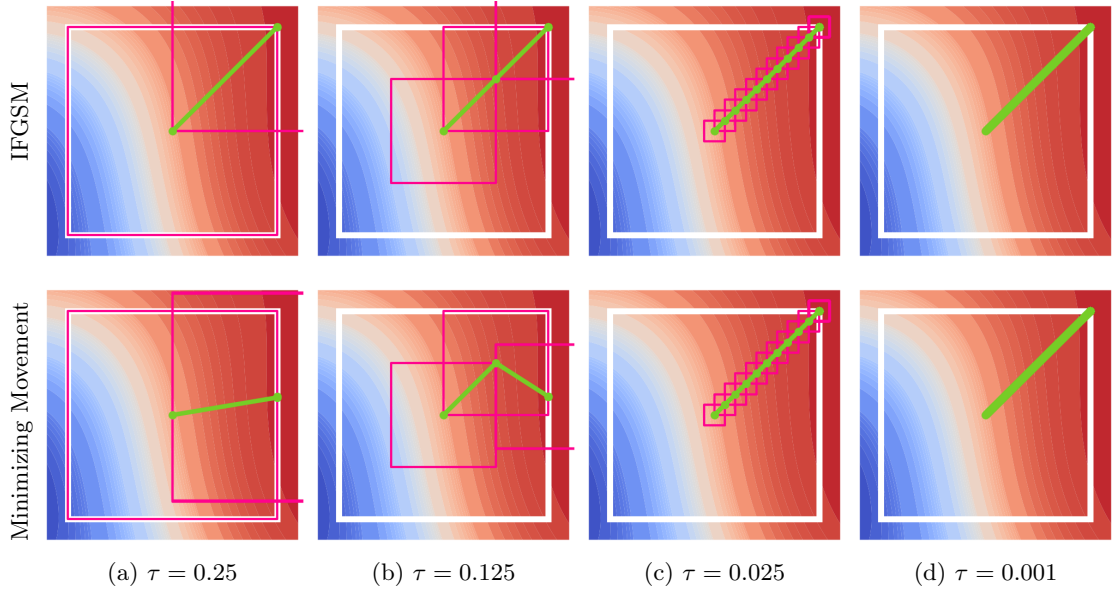

\centering
\rotatebox[origin=c]{90}{\makebox[4cm]{\small \phantom{g} IFGSM}}
\foreach \n in {0.25,0.125,0.025,0.001}{%
\begin{subfigure}[c]{.23\textwidth}
\includegraphics[width=\textwidth, trim={2cm 0cm 2cm 0cm},clip]{results/Flow_GeLU_fgsm\n.png}%
\end{subfigure}
}%

\rotatebox[origin=c]{90}{\makebox[4cm]{\small \phantom{---} Minimizing Movement}}
\foreach \n in {0.25,0.125,0.025,0.001}{%
\begin{subfigure}[c]{.23\textwidth}
\includegraphics[width=\textwidth, trim={2cm 0cm 2cm 0cm},clip]{results/Flow_GeLU_MinMove\n.png}%
\caption{$\tau=\n$}%
\end{subfigure}
}%
\caption{The same experiment as in \cref{fig:IFGSM}, but using a net employing the GeLU activation function.}
\label{fig:IFGSMgelu}
\end{figure}

\subsection{Convergence of the standard and semi-implicit scheme}

In this section, we consider the error between the standard and the semi-implicit minimizing movement, which serves as a very basic validation of the numerical schemes. Our theoretical framework shows that both iterations converge to a $\infty$-curve of maximum slope, which however is not available numerically. Instead, for $n\in\N$ and $k\leq n$, we can consider
\begin{align*}
\norm{\xs^k_{\tau_n} - \x^k_{\tau_n}}_\infty \leq 
\norm{\xs^k_{\tau_n} - u(k\cdot \tau_n)}_\infty +
\norm{\x^k_{\tau_n} - u(k\cdot \tau_n)}_\infty,
\end{align*}
where $\x^k_{\tau_n}$ fulfills the standard minimizing movement scheme and $\xs^k_{\tau_n}=\xifg^{k}_\tau$ is given by \labelcref{eq: IFGSM}, i.e., fulfills the semi-implicit scheme. Although, our theory does not provide concrete estimates or rates of the error between IFGSM and the minimizing movement scheme, we perform a small numerical experiment using the setup from above. For each choice of $\tau$ we sample $S=50$ different initial values $\x^{0,s}$ and compute the iterates $\xifg^{k,s}$ and $\x^{k,s}_{\tau}$ for $k\in\{1,\ldots, \lfloor 1/\tau\rfloor\}$ and compute the averaged maximal distance 
\begin{align}\label{eq:error}
e_\tau := 
\frac{1}{S}\sum_{s=1}^S \max_{k}\norm{\xifg^{k,s} - \x^{k,s}_\tau}_\infty.
\end{align}
The errors are plotted in \cref{fig:rates}. In both cases the errors converge to zero, however we observe that the order of convergence is higher for the GeLU function. We note that our theoretical results only provide a convergence statement for the differentiable case, therefore these results are in line with the analysis. In particular \cref{lem:semiexp} requires a Lipschitz differentiable gradient.
%However, we do not provide a concrete result, that shows non-convergence for the ReLU case and therefore, we refrain from using this experiment as an indicator for such a statement. 
However, we hypothesize that the slower convergence in the ReLU case, actually comes from the non-implicit error as visualized in \cref{fig:relu}. There we mimic a situation enforced by the ReLU activation function. For $\tau>0.1$, the minimizing movement scheme always \enquote{jumps} across the non-differentiable line $x_1=0.1$, to the corner where the minimum on $\cl{B_\tau^\infty}(\x^0)$ is attained, which leads the following iterates to follow the gradient into the direction $(1,1)$. However, in this case the actual flow is given as $u(t):= (t,0)$, which, in this case, is more accurately prescribed by \labelcref{eq: FGSM}. In this regard, a more exhaustive study, both empirically and theoretically is required, which is left for future work.

\begin{figure}
\begin{subfigure}[t]{.5\textwidth}
\centering
\begin{tikzpicture}[scale=.8]
\begin{axis}[xmode=log,ymode=log, x dir=reverse, legend pos=south west, 
xlabel=Stepsize $\tau$,ylabel=Error $e_\tau$]
\addplot[color=apple, line width=3pt] table {
0.20000 0.16122
0.10000 0.07620
0.05000 0.06092
0.02500 0.03220
0.01250 0.02559
0.00625 0.02148
0.00313 0.01895
0.00156 0.01777
0.00078 0.01728
0.00039 0.01708
};
\addplot[color=sky, line width=3pt] table {
0.20000 0.15146
0.10000 0.07174
0.05000 0.05489
0.02500 0.03695
0.01250 0.02354
0.00625 0.01947
0.00313 0.00227
0.00156 0.00118
0.00078 0.00067
0.00039 0.00046
};
\legend{ReLU,GeLU}
\end{axis}
\end{tikzpicture}
\captionsetup{width=.9\linewidth}
\caption{Difference between IFGSM and the minimizing movement scheme, as defined in \cref{eq:error}, for different values of $\tau$, using ReLU and GeLU activation functions in the network architecture.}
\label{fig:rates}
\end{subfigure}
\begin{subfigure}[t]{.46\textwidth}
\centering%
\raisebox{.5cm}{%
\includegraphics[width=.73\textwidth]{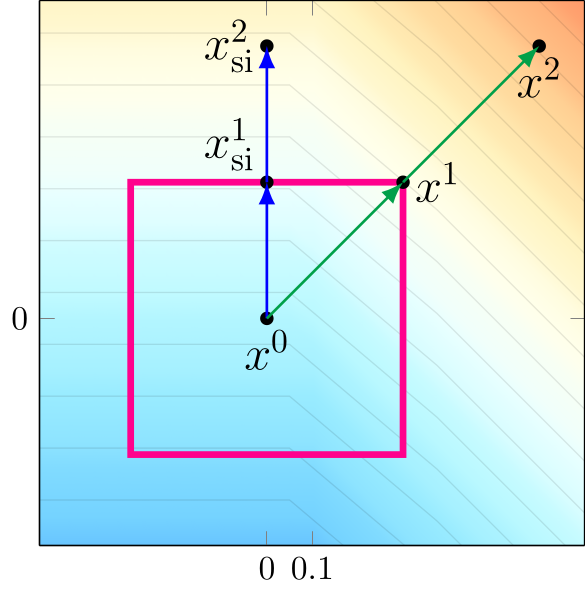}
}%
\captionsetup{width=.9\linewidth}
\caption{Minimizing movement scheme (blue arrows) and semi-implicit version (green arrows) for the function $\funccp(\x):= -\x_2 - \max\{\x_1, 0.1\}$ starting from $\x^0=(0,0)$.}\label{fig:relu}
\end{subfigure}
\caption{}
\end{figure}

\printbibliography
\end{document}